\newtheoremstyle{myassumption}
  {\topsep}
  {\topsep}
  {\normalfont}
  {}
  {\bfseries}
  {}
  {.5em}
  {\thmname{#1}\thmnumber{ #2}\thmnote{ (#3)}\normalfont}
\newtheoremstyle{myremark}
  {\topsep}
  {\topsep}
  {\normalfont}
  {}
  {\bfseries}
  {}
  {.5em}
  {\thmname{#1}\thmnumber{ #2}\thmnote{ (#3)}\normalfont}
\theoremstyle{myassumption}
\newtheorem{myassumption}{Assumption}
\newtheorem{subassumption}{Assumption}[myassumption]
\theoremstyle{myremark}
\newtheorem{myremark}{Remark}
\theoremstyle{plain}
\newtheorem{theorem}{Theorem}
\newtheorem{lemma}{Lemma}
\newtheorem{proposition}[theorem]{Proposition}
\newtheorem{corollary}{Corollary}
\newtheorem{definition}{Definition}
\newenvironment{thma}[1]{\par\noindent{\bf Theorem #1.\ }\em}{\em}
\newenvironment{lma}[1]{\par\noindent{\bf Lemma #1.\ }\em}{\em}
\newenvironment{cora}[1]{\par\noindent{\bf Corollary #1.\ }\em}{\em}
\setlist[itemize,1]{left=1em}
\newcommand{\indep}{\perp \!\!\! \perp}
\DeclareMathOperator*{\argmin}{arg\,min}
\title{Evaluation of Active Feature Acquisition Methods for Time-varying Feature Settings
}
\author{
\name \hspace{-5pt}  Henrik von Kleist$^{1,2,3}$ \email henrik.vonkleist@helmholtz-munich.de \\
       \name Alireza Zamanian$^{2,4}$ \email alireza.zamanian@iks.fraunhofer.de \\
      \name Ilya Shpitser$^{3}$ \email ishpits1@jhu.edu
      \\
        \name Narges Ahmidi$^{1,3,4}$ \email narges.ahmidi@helmholtz-munich.de
      \\
      \addr $^1$Institute of AI for Health, Helmholtz Munich - German Research Center for Environmental Health,
Neuherberg, Germany
\\
\addr $^2$TUM School of Computation, Information and Technology, Technical University of Munich, 
Garching, Germany
\\
\addr $^3$Department of Computer Science, 
Johns Hopkins University
Baltimore, Baltimore, MD, 
USA
\\
\addr $^4$Fraunhofer Institute for Cognitive Systems IKS, 
Munich, Germany
      }
\begin{document}

\maketitle

\begin{abstract}%
Machine learning methods often assume that input features are available at no cost. 
However, in domains like healthcare, where acquiring features could be expensive or harmful, it is necessary to balance a feature's acquisition cost against its predictive value. 
The task of training an AI agent to decide which features to acquire is called active feature acquisition (AFA). 
By deploying an AFA agent, we effectively alter the acquisition strategy and trigger a distribution shift. 
To safely deploy AFA agents under this distribution shift, we present the problem of active feature acquisition performance evaluation (AFAPE).
 We examine AFAPE under i) a no direct effect (NDE) assumption, stating that acquisitions do not affect the underlying feature values; and ii) a no unobserved confounding (NUC) assumption, stating that retrospective feature acquisition decisions were only based on observed features. 
We show that one can apply missing data methods under the NDE assumption and 
offline reinforcement learning under the NUC assumption. When NUC and NDE hold, we propose a novel semi-offline reinforcement learning framework.
This framework requires a weaker positivity assumption and introduces three new estimators: A direct method (DM), an inverse probability weighting (IPW), and a double reinforcement learning (DRL) estimator.
\end{abstract}

\keywords{active feature acquisition \and semi-offline reinforcement learning \and dynamic testing regimes \and missing data \and causal inference \and  semiparametric theory
}

\section{Introduction}
\label{sec_introduction}

Machine learning methods typically assume that the full set of input features will be readily available after deployment, with little to no cost. This is, however, not always the case, as acquiring features may impose a significant cost. In such situations, the predictive value of a feature should be balanced against its acquisition cost. In the medical diagnostics context, the cost of feature acquisition (e.g., for a biopsy test)  may include not only monetary cost but also the potential adverse harm to patients. This is why physicians acquire certain features, e.g., via biopsies, MRI scans, or lab tests, only when their diagnostic values outweigh their costs or risks. The challenge is exacerbated when prediction must be made regarding a large number of diverse outcomes with different sets of informative features. Going back to the medical example, a typical emergency department is able to diagnose thousands of different diseases based on a large set of possible observations. For every new emergency patient with ambiguous symptoms, clinicians must narrow down their search for a proper diagnosis via step-by-step feature acquisitions.

Active feature acquisition (AFA) addresses this problem by designing two AI systems: i) a so-called \textit{AFA agent}, deciding which features must be observed while balancing information gain vs. feature acquisition cost; ii) an ML prediction model, often a classifier, that solves the prediction task based on the acquired set of features. To elucidate the AFA process, we present a hypothetical and simplified scenario of diagnosing heart attacks.

\begin{figure}[ht]
\centering
\includegraphics[width= 0.7\textwidth]{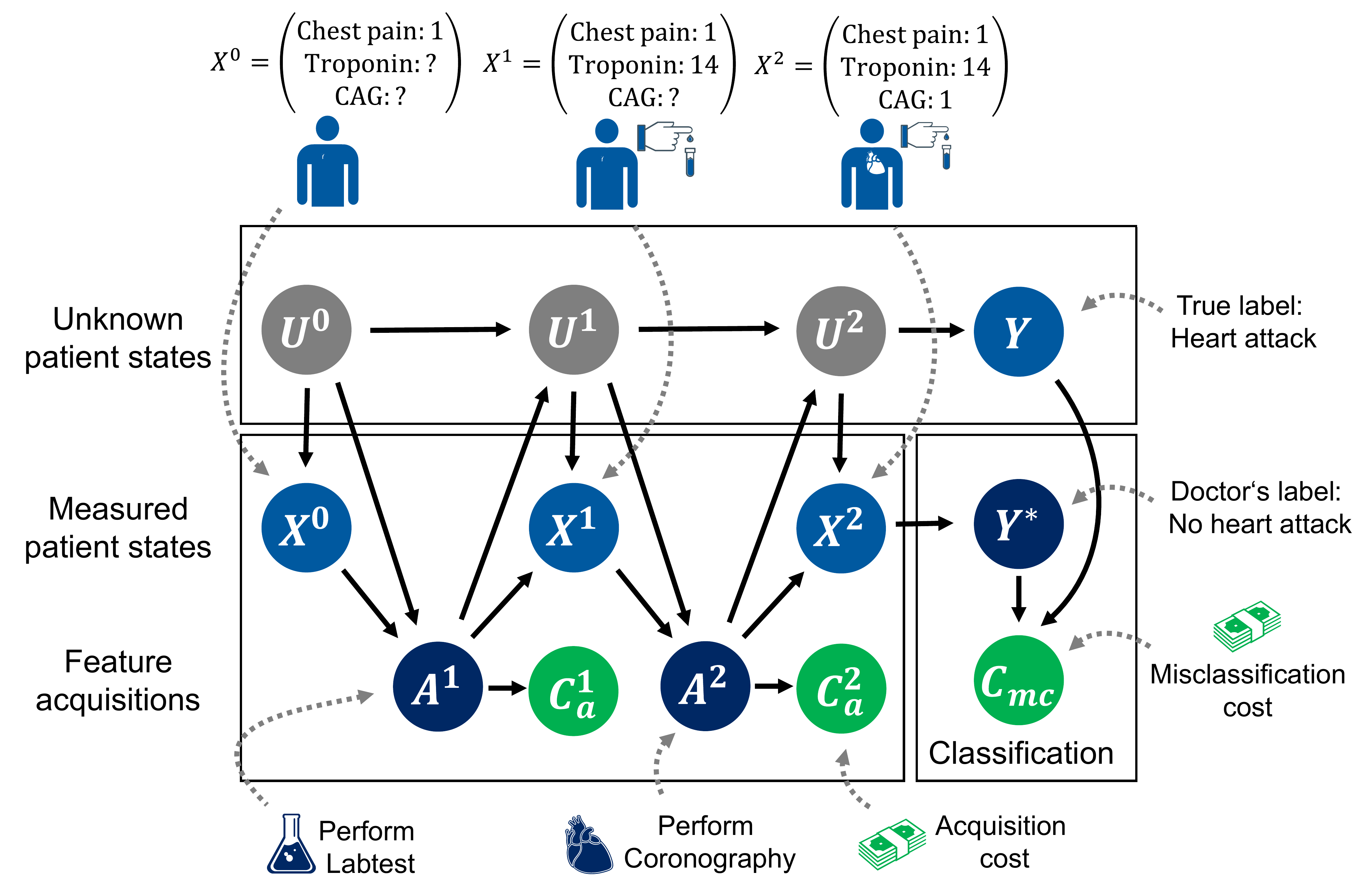}
\caption{ AFA process for a simplified hypothetical heart attack diagnosis example. A patient with chest pain ($X^0$) prompts the doctor to first order a troponin lab test ($A^1$) and, upon reviewing the result ($X^1$), to also order a coronography (CAG) ($A^2$). The feature acquisitions $A^1$ and $A^2$ produce feature acquisition costs $C_a^1$ and  $C_a^2$. After the acquisition process concludes, the doctor makes a diagnosis $Y^*$, which, if different from the true underlying condition $Y$, produces a misclassification cost $C_{mc}$.
}
\vspace{-10pt}
\label{graph_AFA_example}
\end{figure}

\subsection{Heart Attack Diagnosis Example} 
Figure \ref{graph_AFA_example} presents the partially observable decision process that encapsulates the sequential decision-making aspect of the AFA problem for a heart attack diagnosis example. 
Upon arrival at the hospital, a patient with an unknown health state ($U^0$) exhibits the symptom of chest pain ($X^0 = $ "chest pain"). At this stage, no additional information is available. The attending doctor decides to order a troponin lab test ($A^1$ = "acquire troponin") as part of the feature acquisition process. The laboratory test incurs a feature acquisition cost ($C_a^1 = "\$100"$).
Subsequently, upon reviewing the results of the lab test ($X^2$), the doctor decides that a coronography ($A^2$), an invasive imaging procedure, is necessary. Notably, the feature acquisition cost ($C_a^2$) for this procedure may be substantially higher due to the potential harm to the patient. After the completion of the feature acquisition process, a diagnosis of whether the patient is experiencing a heart attack is performed. A (hypothetical) misclassification cost $C_{mc}$ arises if the diagnosis $Y^*$ and the true condition $Y$ differ.

In general, medical tests may also impact the patient's health (illustrated by the edges $A^t \rightarrow 
U^t$). The invasive coronography from our example may, for example, cause bleeding, infections, hypotension or other problems \cite{tavakol_risks_2012}.   
Such an effect is denoted as a \textit{direct effect}, and we refer to its absence as the \textit{no direct effect (NDE)} assumption. 

Furthermore, the decision to perform a clinical test $A^t$ may not solely rely on past observed variables $X^{\tau}$ ($\tau < t$) but can also depend on past unobserved variables $U^{\tau}$ (illustrated by the edges $U^{t-1} \rightarrow 
A^t$) or even other factors. For example, a doctor might base acquisition decisions on information that is partially not recorded in the medical database. We refer to the assumption that acquisitions are only determined by past observed variables as the \textit{no unobserved confounding (NUC)} assumption.

\subsection{Paper Goal} 
We investigate the evaluation of AFA agents under the distribution shift that occurs since the AFA agent makes different acquisition decisions than the doctors who were responsible for collecting the retrospective data set. 
The focus of the paper is thus not to design new AFA agents and classifiers but to estimate the performance of \textit{any} AFA agent and classifier at deployment. 
This means the doctor should be informed, for example, how many wrong diagnoses are to be expected or how much acquisition cost will be incurred on average if an AFA system is deployed.

The paper has two primary objectives:
i) Identification, which involves determining the assumptions that enable the unbiased
representation of costs from the retrospective data distribution and
ii) Estimation, which focuses on
turning the obtained identification strategy into point and interval estimators following
well-grounded statistical principles.
We specifically analyze scenarios that involve both adherence to and violation of the NDE and NUC assumptions.
We formulate this problem of \textit{active feature acquisition performance evaluation (AFAPE)} as the problem of estimating the expected counterfactual acquisition and misclassification costs using retrospective data.

\subsection{Paper Outline and Contributions} 
The remainder of this paper is organized as follows. 
After reviewing the necessary background and related methods in Section \ref{sec_related_methods}, we formulate the AFAPE problem in Section \ref{sec_afape}. The general AFAPE problem is not identified—that is, it is not possible to estimate the counterfactual acquisition and misclassification costs from retrospective data when both the NDE and NUC assumptions are violated.

Therefore, we begin Section \ref{sec_RL_view} by employing the NUC  assumption and show that this makes AFAPE amenable to an offline reinforcement learning (RL) / dynamic treatment regimes (DTR) view. This allows the application of known identification and estimation theory from the offline RL / DTR literature. 

In Section \ref{sec_missing_data}, we make instead the NDE assumption and assume the NUC assumption can be violated. We demonstrate that under the NDE assumption, the AFA decision process depicted in Figure \ref{graph_AFA_example} transforms into a missing data graph (m-graph) \cite{mohan_graphical_2013, shpitser_missing_2015}, a recognized graphical framework in the missing data literature. 
This enables us to apply established identification and estimation theory from the missing data literature. 
After solving the missing data problem, the AFAPE problem is transformed into an online RL setting where one can simulate different acquisition trajectories, leading to a trivial solution for AFAPE. 

In Section \ref{sec_semi_offline_view}, we assume both the NUC and the NDE assumptions hold. 
In this setting, one can apply either offline RL or missing data methods to solve AFAPE, but both require strong positivity assumptions and do not utilize the data optimally. Therefore, we propose a new viewpoint on AFA, which we denote as \textit{semi-offline reinforcement learning}.
Under the semi-offline RL viewpoint, the AFA agent engages with the environment in an online manner, but certain actions (where the underlying feature values are missing in the retrospective data) cannot be explored. The positivity assumption required for identification is drastically reduced under the new semi-offline RL viewpoint. 
We derive three novel estimators that can be denoted as semi-offline RL versions of known offline RL estimators, including the Q-function based direct method (DM) \cite{levine_offline_2020}, inverse probability weighting (IPW) \cite{levine_offline_2020}, and the double reinforcement learning (DRL) estimator \cite{kallus_double_2020}.
Notably, our DRL estimator is doubly robust, exhibiting consistency even if either the underlying Q-function or the propensity score model is misspecified. 

In Section \ref{sec_semiparametrics}, we explore the estimation of AFAPE under the NUC and NDE assumptions using semiparametric theory. We demonstrate how all three viewpoints—the offline RL view (under the NDE assumption), the missing data view (under the NUC assumption), and the semi-offline RL view—interconnect within this theoretical framework. 
Unfortunately, there is no closed-form solution for an efficient estimator. However, we can enhance the efficiency of all estimators by applying established semiparametric techniques from related fields, such as standard missing data problems \cite{tsiatis_semiparametric_2006} and dynamic testing and treatment regimes \cite{liu_efficient_2021}. These methods, though, come with significant computational costs, are challenging to implement, and require strong positivity assumptions.

In Section \ref{sec_experiments}, we present synthetic data experiments that exemplify the improved data efficiency and reduced positivity requirements of the semi-offline RL estimators. Our experiments also show that biased evaluation methods commonly used in the AFA literature can lead to detrimental conclusions regarding the performance of AFA agents. Deploying such methods without caution may pose significant risks to patients' lives.
We end the paper with a Discussion (Section \ref{sec_discussion}) and Conclusion (Section \ref{sec_conclusion}).

\section{Background and Related Methods}
\label{sec_related_methods}


In the following, we review some of the literature about AFA and provide some background on offline RL/ DTR, missing data, and semi-parametric theory. 

\subsection{Active Feature Acquisition (AFA)} 

Research on active feature acquisition (AFA) and related problem formulations have been published under various different names and in different, largely disjoint, research communities. Early research in economics and decision science literature 
addressed the problem of "Value of Information" (VoI) \cite{lavalle_cash_1968,lavalle_cash_1968-1,gould_risk_1974,hilton_determinants_1979,hess_risk_1982,keisler_value_2014}. Similar methods have also been applied in the medical field, often in terms of cost-effectiveness analysis of screening policies \cite{mushlin_is_1992,krahn_screening_1994,botteman_health_2003,force_screening_2009}.
AFA has further been studied under the name of "dynamic testing regimes" \cite{liu_efficient_2021, robins_estimation_2008} or "dynamic monitoring regimes" \cite{neugebauer_identification_2017,kreif_exploiting_2021} in the causal inference literature, often in combination with dynamic treatment regimes. In these settings, the goal of the feature acquisitions is not to enable better predictions/diagnoses but to enable better treatment decisions.  

The name "active feature acquisition" (AFA) \cite{an_active_2006,li_active_2021, li_towards_2021, chang_dynamic_2019, shim_joint_2018, yin_reinforcement_2020} is common in the machine learning literature, but other names are also frequently used. These include, but are not limited to, "active sensing" \cite{yoon_asac_2019, yoon_deep_2018, tang_adversarial_2020, jarrett_inverse_2020}, "active feature elicitation" \cite{natarajan_whom_2018,das_clustering_2021}, "dynamic feature acquisition" \cite{li_dynamic_2021}, "dynamic active feature selection" \cite{zhang_novel_2019}, "element-wise efficient information acquisition" \cite{gong_icebreaker_2019}, "classification with costly features" \cite{janisch_classification_2020} and "test-cost sensitive classification" \cite{xiaoyong_chai_test-cost_2004}.

These diverse research fields share a common characteristic, which involves designing an agent to selectively acquire a subset of features to balance acquisition cost and information gain. The approaches used to design such agents range from simple greedy acquisition strategies to more complex RL-based strategies. However, the focus of this work is not on any specific AFA method but rather on evaluating the performance of \textit{any} AFA method under the acquisition distribution shift. For a more comprehensive literature review of existing AFA methods and a distinction between AFA and other related fields, we direct interested readers to Appendix \ref{app_AFA_methods}.

\subsection{(Offline) Reinforcement Learning (RL) / Dynamic Treatment Regimes (DTR)}
We show in Section \ref{sec_RL_view} 
that AFA can be analyzed from an offline RL/ DTR viewpoint. In Section \ref{sec_missing_data}, we show that AFAPE can also be analyzed from an online RL viewpoint (if NDE holds and after missingness has been resolved). Online RL allows the interaction of an agent with the environment and thus the simulation of outcomes under any desired policy, thereby leading to a trivial solution for the AFAPE problem. 
In offline RL, however, such a simulation is not possible due to missing knowledge about the environment.  The AFAPE problem then becomes equivalent to the problem of off-policy policy evaluation (OPE) \cite{dudik_doubly_2011, thomas_data-efficient_2016, kallus_double_2020}, in which the goal is to evaluate the performance of a "target" policy (here the AFA policy) from data collected under a different "behavior" policy (here the retrospective acquisition policy of, for example, a doctor).
Several estimators have been developed for the OPE problem. These include the plug-in based on the G-formula \cite{robins_new_1986} (also referred to as model-based evaluation \cite{levine_offline_2020}), inverse probability weighting (IPW) \cite{levine_offline_2020} (also known as importance sampling or the Horvitz-Thompson estimator \cite{horvitz_generalization_1952}), the direct method (DM) \cite{levine_offline_2020}, and double reinforcement learning (DRL) \cite{kallus_double_2020}.

\subsection{Missing Data}
In this paper, we show that AFAPE can be viewed as a missing data problem (+ a trivial online RL problem). Thus, known identification and estimation techniques from the missing data literature can be employed. We show that the NUC assumption described in this paper corresponds under NDE to a \textit{missing-at-random (MAR)} assumption. Violations of the NUC assumption correspond, in our setting, to a special, identified \textit{missing-not-at-random (MNAR)} scenario. 
Estimation strategies generally include inverse probability weighting (IPW)  \cite{seaman_review_2013}, and multiple imputation (MI) \cite{sterne_multiple_2009} (a special case of the plug-in of the G-formula).

\subsection{Semiparametric Theory}

The goal of AFAPE is to estimate the expected acquisition and misclassification costs that would arise when following the AFA system's decisions. In more general terms, this corresponds to estimating a target parameter $J = J(p)$ of some unknown distribution $p$ given a set of observed samples from $p$ (the retrospective data set). 
The goal in semi-parametric theory is to find suitable estimators for such a target parameter $J$ while leaving at least part of the data-generating process $p$ unrestricted/ unspecified, thereby imposing fewer assumptions, which can lead to more credible estimates. Assumptions that can be taken with a reasonable level of confidence (such as in many AFA settings the NUC or NDE assumptions), can, however, be leveraged to derive more efficient estimators. 

A key focus of semiparametric theory is the identification of \textit{influence functions}, which are used to construct estimators with desirable properties, such as
a consistently estimable asymptotic variance. The estimator associated with the influence function that has the smallest asymptotic variance is the most efficient one. Often, these influence functions include nuisance functions—unknown components of the model that must be estimated from the data. For example, a nuisance function might model the probability of a doctor acquiring a particular feature. Estimating these nuisance functions typically involves parametric assumptions (e.g., using a logistic regression model). Consequently, the resulting estimator is only  \textit{locally efficient}, meaning it achieves efficiency only if the parametric assumptions for the nuisance function hold true.

Even if the parametric assumptions are incorrect, many influence function-based estimators remain consistent due to a property known as \textit{multiple robustness}
\cite{rotnitzky_multiply_2017}. 
For instance, the DRL estimators in this paper exhibit a form of double robustness \cite{scharfstein_adjusting_1999,chernozhukov_doubledebiased_2018}. 
These estimators rely on learning two nuisance functions and remain consistent as long as the parametric assumption holds for at least one of these functions.

For a more detailed review, please see Appendix \ref{app_semiparametric_theory}, which covers both the general principles of semiparametric theory and specific insights related to missing data problems. A thorough understanding of semiparametric theory is primarily necessary for Section \ref{sec_semiparametrics}, which is intended for interested readers.

\subsection{Active Feature Acquisition Performance Evaluation (AFAPE)}
Although we believe to be the first to explicitly formulate and analyze the AFAPE problem, other AFA papers have reported performance metrics that can be seen as attempts to address it. The reported results, however, often lack assumption statements and justification for the chosen evaluation framework and are, in general, biased or inefficiently estimated. We categorize these results based on the viewpoints analyzed in this paper:

\textit{Offline RL view: } The offline RL view has been utilized in the AFA context \cite{chang_dynamic_2019,cheng_optimal_2018}. As we show in this paper, this approach is only valid under the NUC and strong positivity assumptions.

\textit{Missing data + online RL view:}
We show in this paper that one can apply, under the NDE assumption, a missing data + online RL viewpoint to solve AFAPE. While this viewpoint has been taken in the AFA literature before, the missing data part of it has, to our knowledge, only been solved  
using  (conditional) mean imputation \cite{an_reinforcement_2022, erion_coai_2021, janisch_classification_2020}. (Conditional) mean imputation leads, however, to biased estimation results, as we illustrate in Section \ref{sec_missing_data}.

\textit{Semi-offline RL view:}
Some AFA papers \cite{janisch_classification_2020, yoon_deep_2018} have addressed the problem of missing data during the online RL simulations by simply blocking the corresponding feature acquisitions. This approach is similar to our proposed semi-offline RL view. However, unlike our approach, these papers did not correct for the distribution shift caused by blocking feature acquisitions, resulting in biased estimation results.

\subsection{No Direct Effect (NDE) Assumption}
The only work that, to the best of our knowledge, leverages the NDE assumption in a similar way to our semi-offline RL viewpoint is a series of publications from the causal inference literature around the slightly different problem of evaluation of joint dynamic testing and treatment regimes \cite{robins_estimation_2008,caniglia_emulating_2019,liu_efficient_2021,neugebauer_identification_2017,kreif_exploiting_2021}. In this setting, the agent is not only tasked with deciding which features to acquire, but also which treatments to give to the patient. The treatment assignment replaces the need for classification/diagnosis in these settings. Robins et al. \cite{robins_estimation_2008} introduced within this setting for the first time the term "no direct effect" (NDE) assumption. NDE stated
that the feature acquisition decisions have no direct effect (or no long-term direct effect \cite{liu_efficient_2021}) on the health status of the patient, except through their effect on the treatment decisions.  

Caniglia et al. \cite{caniglia_emulating_2019} derived an IPW estimator for this context, which demonstrated a 50-fold increase in data efficiency compared to the offline RL IPW estimator, signaling the enormous benefits that can be achieved by leveraging the NDE assumption. 
We adapt this estimator to the AFA setting and show that it is equivalent to our proposed IPW estimator for a simple setting and a special positivity assumption. 
However, our IPW estimator can be applied in more general settings under weaker positivity assumptions and combined with our DM method to form the novel DRL estimator for semi-offline RL. 

Liu et al. \cite{liu_efficient_2021} developed nearly semiparametrically efficient estimators for this problem by modifying the DRL estimator from the offline RL perspective to enhance efficiency under the NDE assumption. In Section \ref{sec_semiparametrics}, we demonstrate that this approach can be adapted for AFA, where it becomes a specialized method within established semiparametric techniques for missing data problems. Our proposed semi-offline RL estimators can also achieve greater data efficiency using this framework. However, this approach has limitations: it requires strong positivity assumptions, is only applicable in very simple settings with one acquisition action per time point, and the resulting augmentation necessitates complex approximations of function spaces, making implementation challenging and resulting in estimators that are only "nearly" efficient. Additionally, it has only been tested in an extremely simplified context with one acquisition action and one time step \cite{liu_efficient_2021}.

\subsection{Distribution Shift Robust ML Models}
Lastly, this work also relates to the general literature on distribution shift-robust ML models. A common problem with the deployment of ML models occurs if the model is trained, for example, on data from hospital 1 but should be deployed to hospital 2.
The related literature aims at building robust models that retain their performances across deployment environments \cite{rockenschaub_generalisability_2023,rockenschaub_single-hospital_2023}.  One part of the distribution that might change between hospital 1 and hospital 2 is the feature acquisition policy. If this is the case, and if the acquisition policy at hospital 2 is known, one may directly apply our methods to this scenario and treat the acquisition policy at hospital 2 as the AFA policy that is to be evaluated. However, we will not go into more detail about this scenario and will focus on the AFA setting.

\section{Active Feature Acquisition Performance Evaluation (AFAPE) Problem Definition}
\label{sec_afape}

We begin the section by introducing the mathematical notation for the AFA setting and AFAPE problem. A glossary containing all the variables and important terms can be found in Appendix \ref{Appendix_glossary}.

\begin{figure}[ht]
\centering
    \includegraphics[width=0.8\textwidth]
    {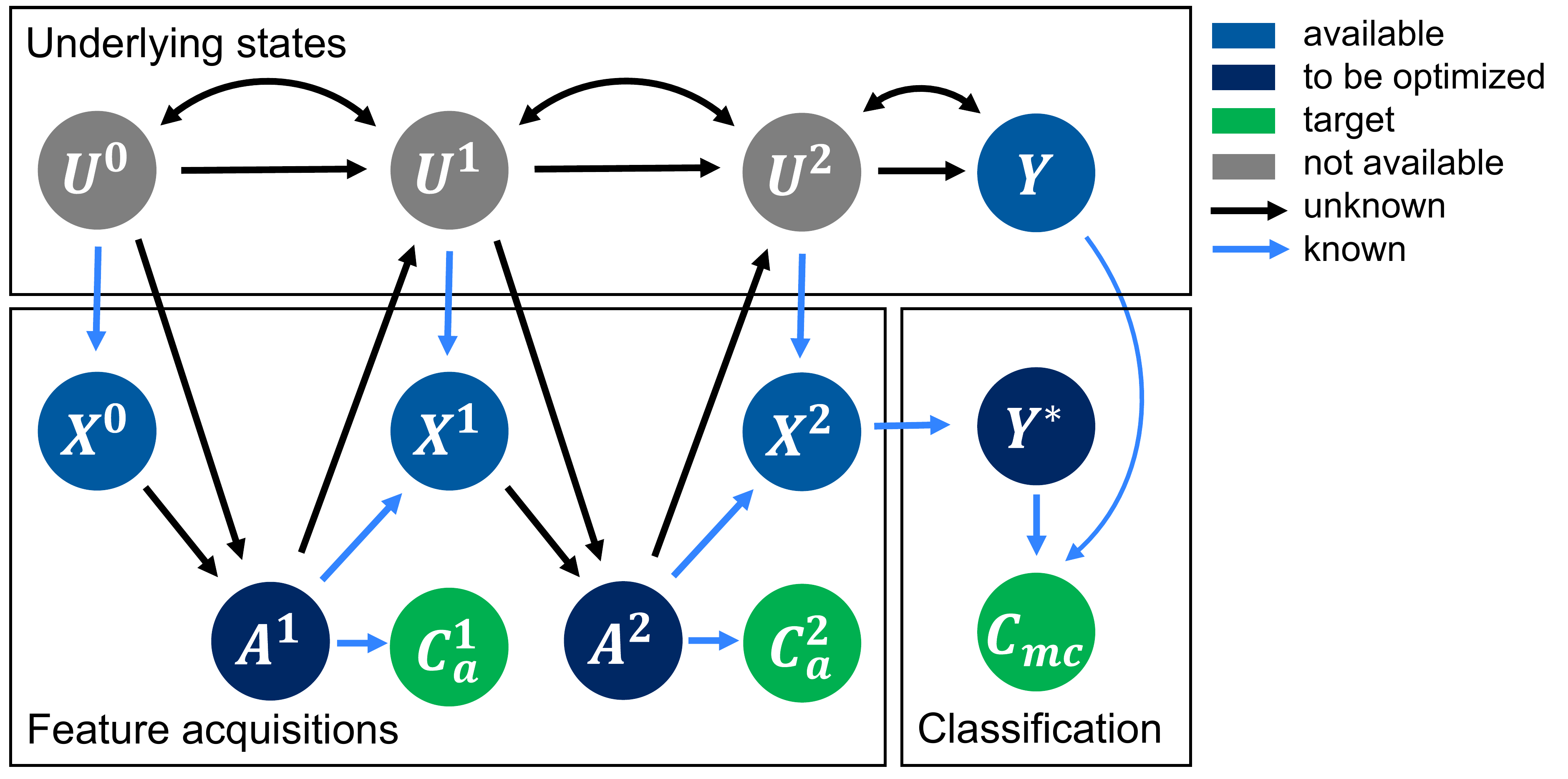}
    \caption{ 
    The causal graph depicting the AFA setting as a partially observable decision process
    consisting of unobserved underlying features $U^t$, feature acquisition actions $A^t$, feature measurements $X^t = G_{A^t}(U^t)$, and associated acquisition costs $C_a^t$. 
     After a number of acquisition steps $T$ (here $T=2$), a classification $Y^*$ is to be performed. In the case of misclassification ($Y^*$ is not equal to the true label $Y$), a misclassification cost $C_{mc}$ is produced.
    Edges showing long-term dependencies are omitted from the graph for visual clarity. These include: 
    $\underline{U}^{t-1}, \underline{X}^{t-1}, \underline{A}^{t-1} \rightarrow  A^{t}$; $\underline{X}^{T}, \underline{A}^{T} \rightarrow  Y^*$; $A^{t} \rightarrow \overline{U}^t$; $\underline{U}^{t-1} \leftrightarrow U^{t}$; $\underline{U}^{t-1} \rightarrow U^{t}$; $\underline{U}^{T} \leftrightarrow Y$
    and $\underline{U}^{T} \rightarrow Y$ (where $\leftrightarrow$ denotes unobserved confounding). }
    \label{graph_general}
\end{figure}

\subsection{Feature Acquisition Process}

The feature acquisition process, as illustrated in Figure \ref{graph_general}, is modeled using the following variables:

\begin{itemize}
    \item \textbf{Unobserved underlying features}: \(U^t \in \mathbb{R}^{d}\) for \(t \in \{0, \ldots, T\}\).
    The features are dynamic, meaning they change over time, so generally, $U^t_i \neq U^{t+1}_i$.
    \item \textbf{Feature acquisition decisions}: \(A^t \in \mathcal{A}^t = \{0,1\}^{d}\) for \(t \in \{1, \ldots, T\}\), where \(A_i^t = 1\) indicates that feature \(U_i^t\) will be acquired.
    \item \textbf{Observed feature values}: \(X^t \in (\mathbb{R} \cup \{ "?" \})^{d}\) for \(t \in \{0, \ldots, T\}\), where \( "?" \) denotes a missing feature value that was not acquired. We assume no measurement error, so \(X^t\) is deterministically determined by \(A^t\) and \(U^t\) according to:
    \begin{align}
    X_i^t = G_{A_i^t}(U_i^t) = 
    \begin{cases}
    U_i^t & \text{if } A_i^t = 1,\\
    "?" & \text{if } A_i^t = 0.
    \end{cases}
    \label{eq:measurement_process}
    \end{align}
    $G_A$ denotes the observation function for a given feature acquisition decision $A$. Further, we assume for simplicity $X^0 = U^0$.
    \item \textbf{Acquisition costs}: \(C_{a}^t \in \mathbb{R}\) represents the known feature acquisition cost associated with \(A^t\).
\end{itemize}

\noindent 
Additionally, let \(\underline{U}^t\) and \(\overline{U}^t\) denote the complete past and complete future of \(U^t\), respectively, where \(\underline{U}^t = \{U^0, \ldots, U^t\}\) and \(\overline{U}^t = \{U^t, \ldots, U^T\}\). Similarly, define \(\underline{A}^t\), \(\overline{A}^t\), \(\underline{X}^t = G_{\underline{A}^t}(\underline{U}^t)\), and \(\overline{X}^t= G_{\overline{A}^t}(\overline{U}^t)\) for the variables \(A^t\) and \(X^t\). Let \(U = \overline{U}^0\), \(A = \overline{A}^1\), and \(X = \overline{X}^0\), and denote the space of \(A\) by \(\mathcal{A}\).
The retrospective acquisition policy is given by \(\pi_\beta^t(A^t \mid \underline{X}^{t-1}, \underline{U}^{t-1}, \underline{A}^{t-1})\).

\subsection{Classification Process}

At time \(T\), the feature acquisition process concludes, and the classification of an underlying label is performed based on the acquired information. The classification process includes:

\begin{itemize}
    \item \textbf{Underlying categorical label}: \(Y\), the true label to be classified.
    \item \textbf{Predicted label}: \(Y^*\), obtained using a deterministic classifier $Y^* = f_\textit{cl}(\underline{A}^T , \underline{X}^T)$.
    \item \textbf{Misclassification cost}: \(C_{mc} \in \mathbb{R}\), the cost incurred when \(Y^*\) differs from \(Y\), defined by a predefined cost function \(C_{mc} = f_C(Y^*, Y)\). We alternatively write $C_{mc} = f_C(f_\textit{cl}( \underline{A}^T , \underline{X}^T), Y) \equiv f_C(\underline{A}^T, \underline{X}^T, Y)$ directly as a function of $\underline{A}^T, \underline{X}^T$ and $Y$. 
\end{itemize}
\noindent 
We let \(g\) represent known deterministic distributions or densities, distinct from \(p\) used for other distributions or densities. For example, we let \(g_\textit{cl}(Y^* \mid \underline{A}^T, \underline{X}^T)\) denote the deterministic distribution of the classifier $f_\textit{cl}$ and 
$g_\textit{C}(C_{mc}| Y^*, Y)$ denote the deterministic distribution of the cost function $f_\textit{C}$.  

We also assume \(Y\) is always available in the retrospective data set and allow for unobserved confounding among the unobserved underlying features and the label (represented by edges \(\underline{U}^t \leftrightarrow U^{t+1}, Y\)), but no additional confounding with \(A^t\). We denote the retrospective data set consisting of the variables $A$, $X$, and $Y$ as $\mathcal{D}$.

\subsection{Problem Definition: Active Feature Acquisition Performance Evaluation (AFAPE)}

Given a target AFA policy $\pi_{\alpha}^t(A^t|\underline{X}^{t-1},\underline{A}^{t-1})$ (which is not allowed to depend on the unobserved underlying features $\underline{U}^{t-1}$) and a target classifier $f_\textit{cl}(\underline{X}^T, \underline{A}^T)$, the goal of AFAPE is to estimate the expected acquisition and misclassification costs that would arise, had the target policy $\pi_{\alpha}$ and classifier $f_\textit{cl}$ been deployed. The estimation problem for this expected counterfactual cost can be expressed as estimating
\begin{equation}
\label{eq:AFAPE_objective}
J_a = \mathbb{E}\left[\sum_{t=1}^T C_{a, (\pi_\alpha)}^t  \right], \text{ and } J_{mc} = \mathbb{E}\left[ C_{mc, (\pi_\alpha)}  \right],
\end{equation}
\noindent
where $C_{a, (\pi_\alpha)}^t$ and $C_{mc, (\pi_\alpha)}$ denote the potential outcomes of the acquisition and misclassification costs under the AFA policy $\pi_\alpha$. 
Therefore, $J_a$ and $J_{mc}$ represent the expected acquisition and misclassification costs under a distribution induced by $\pi_{\alpha}$ rather than by the retrospective acquisition policy $\pi_\beta$.
Note that this assumes 
$\pi_\alpha$
can be followed perfectly, which may not always hold in practice, for example, if patients refuse certain medical tests or miss appointments.

The goal of this paper is to i) perform identification, i.e., to determine under which assumptions it is possible to resolve this distribution shift and to obtain an unbiased estimate of $J_a$ and $J_{mc}$; and ii) to derive such unbiased estimators.

As the AFAPE problem is similar for $J_a$ and $J_{mc}$, we will focus on $J_{mc}$ throughout the main part of the paper. We abbreviate $J_{mc} \equiv J$ and $C_{mc} \equiv C$. We provide the estimation formulas for $J_a$ and for $J_{mc}$ when a prediction is to be performed at each time step in the relevant appendices.

\subsection{Problem Definition: Optimization of Active Feature Acquisition Methods}
\label{sec_afa_optimization_problem}

While the focus of the paper is on the AFAPE problem, we provide the definition of the AFA optimization problem for completeness. The goal in AFA is to find the optimal AFA policy $\pi_{\alpha}^t(A^t|\underline{X}^{t-1},\underline{A}^{t-1};\phi_1^*)$ parameterized by $\phi_1^*$, and the optimal classifier $f_\textit{cl}(\underline{X}^T, \underline{A}^T; \phi_2^*)$ parameterized by $\phi_2^*$, such that their joint application minimizes the expected sum of counterfactual acquisition and misclassification costs: 
\begin{align*}
   \phi_1^*, \phi_2^* = \argmin_{\phi_1,\phi_2}  J_\text{total}(\phi_1, \phi_2)= \argmin_{\phi_1,\phi_2}  \mathbb{E}\left[
   \sum_{t=1}^T C_{a,(\pi_\alpha)}^t + C_{mc,(\pi_\alpha)} \Big\vert \phi_1, \phi_2  \right].
\end{align*}

\subsection{Assumptions}

Here, we provide an overview of the key assumptions in this paper. 
We start by stating the fixed assumptions that hold throughout the paper before stating assumptions that we vary within different sections.

\subsubsection{Fixed Assumptions}

\noindent
We make the following assumptions throughout the paper:

\begin{myassumption}[No measurement noise]
There is no noise in feature measurements, as expressed by Eq. \ref{eq:measurement_process}.
\label{assump:measurement_noise}
\end{myassumption}

\begin{myassumption}[Consistency]
If $\underline{A}^t = \underline{a}^t$, then $U^t_{(\underline{a}^t)} = U^t$.
\label{assump:consistency}
\end{myassumption}

\noindent
This standard consistency assumption from the causal inference literature states
that an individual's observed outcomes align with their potential outcomes under the observed acquisition decisions. Here, $U^t_{(\underline{A}^t = \underline{a}^t)}$ represents the potential outcome of $U^t$ under potential acquisition decisions $\underline{A}^t = \underline{a}^t$.

\begin{myassumption}[No interference]
The acquisition decisions for one individual do not affect other individuals. 
\label{assump:interference}
\end{myassumption}  

\noindent
One prominent example of interference in medical settings is allocation interference, which can occur when a hospital's resources or staff are overwhelmed by a high volume of medical test requests for multiple patients simultaneously, resulting in the inability to fulfill all feature acquisition requests.

\subsubsection{Investigated Assumptions}
In this paper, we analyze how the following assumptions affect identification and estimation of the target $J$ in the AFAPE problem. 

\begin{myassumption}[No direct effect (NDE)]
The unobserved underlying features are not influenced by feature acquisitions (i.e., $A^t \not \rightarrow \overline{U}^t$). 
Equivalently:  $U^t \indep \underline{A}^t \;|\; \underline{U}^{t-1}$.
\label{assump:nde}
\end{myassumption}

\noindent
This is a standard assumption in missing data problems, but it may not hold in all medical settings, as some medical tests can alter certain features of the patient. 
The NDE assumption is relaxed in Section \ref{sec_RL_view}, and made in Sections \ref{sec_missing_data}, 
\ref{sec_semi_offline_view} and \ref{sec_semiparametrics}.

\begin{myassumption}[No unobserved confounding (NUC)]
Acquisition decisions are independent of the unobserved underlying features given past acquisition decisions and measured features: $A^t \indep \underline{U}^{t-1} \;|\; \underline{X}^{t-1}, \underline{A}^{t-1}$. This is graphically expressed by the missing arrow $\underline{U}^{t-1} \not \rightarrow A^{t}$.
\label{assump:nuc}
\end{myassumption}

\noindent 
The no unobserved confounding assumption may, for example, be violated in medical settings if certain feature values are seen by the physician and influence their decision-making for further tests but are not recorded in the database. 
We assume NUC in Sections \ref{sec_RL_view}, \ref{sec_semi_offline_view} and \ref{sec_semiparametrics}
and allow certain violations in Section \ref{sec_missing_data}. Note that when referring to NUC, we only assume no unobserved confounding of the acquisition actions. Potential unobserved confounding within $U$ and between $U$ and $Y$ is allowed throughout the paper.

\begin{myassumption}[Positivity/ experimental treatment assignment/ overlap]
Certain feature sets have a positive probability of being acquired under the retrospective acquisition policy  $\pi_\beta$. 
\label{assump:positivity}
\end{myassumption}

\noindent 
The positivity assumption (also known as experimental treatment assignment assumption or overlap assumption) relates to how much exploration was done under the retrospective acquisition policy  $\pi_\beta$. Positivity requirements are crucial for identification and vary between the discussed views. Hence, we derive and discuss them separately for each view.

\section{Offline Reinforcement Learning View}
\label{sec_RL_view}

\textit{Assumptions in this section: Assumption \ref{assump:nuc} (NUC) }

\noindent 
Firstly, we consider the scenario where the NUC assumption (Assumption \ref{assump:nuc}) holds (i.e. $\underline{U}^{t-1} \not \rightarrow A^t$), but the NDE assumption (Assumption \ref{assump:nde}) does not necessarily hold (i.e. $A^{t} \rightarrow \overline{U}^t$). This scenario can be addressed using the offline reinforcement learning (RL) view. The NUC assumption allows us to perform a latent projection \cite{verma_equivalence_1990} to project out the unknown variables $U^t$ (along with $Y$ and $Y^*$) from the causal graph in Figure \ref{graph_general} and obtain the graph in Figure \ref{graph_offline_RL} which contains only observed variables. 
The projected graph allows us to apply established identification and estimation methods from the offline RL literature.

\begin{figure}[ht]
\centering
\includegraphics[width=0.60 \textwidth]{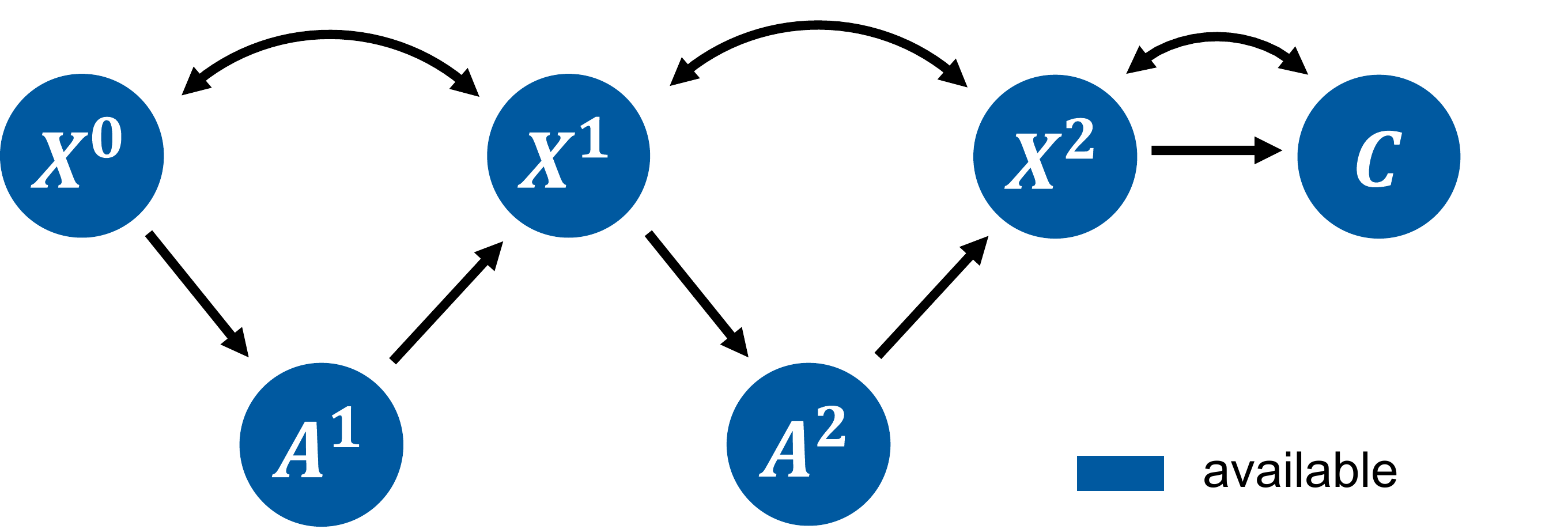}
\caption{ Updated causal graph of the AFA setting under the NUC assumption (Assumption \ref{assump:nuc}) and a latent projection. 
The graph depicts a standard, identified offline RL setting. Long-term dependencies are omitted from the graph for visual clarity. These include edges $\underline{X}^{t-1}, \underline{A}^{t-1} \rightarrow A^t$; $\underline{X}^{T}, \underline{A}^{T} \rightarrow C$; $\underline{X}^{t-1}  \leftrightarrow X^{t}$ and $\underline{X}^{T} \leftrightarrow C$. 
}
\vspace{-1 pt}
\label{graph_offline_RL}
\end{figure}

\subsection{Identification}

Under the offline RL view, solving the AFAPE problem is equivalent to performing off-policy policy evaluation (OPE). Identification for OPE requires sequential exchangeability (also known as sequential ignorability), which implies that adjusting for $\underline{A}^{t-1}$ and $\underline{X}^{t-1}$ eliminates any confounding factors affecting $A^{t}$. 
The graph in Figure \ref{graph_offline_RL} satisfies this requirement. Note that sequential exchangeability would not hold under violations of the NUC assumption (Assumption \ref{assump:nuc}) because the latent projection under the presence of edges $\underline{U}^{t-1} \rightarrow A^t$ would produce confounding edges $\overline{X}^t \leftrightarrow A^{t}$ and $C \leftrightarrow A^{t}$. 

Identification of $J$ further requires Assumptions \ref{assump:measurement_noise}-\ref{assump:interference} (no measurement noise, consistency, no interference) and the following (sequential) positivity assumption: 

\vspace{5pt}
\noindent
\setcounter{myassumption}{6}
\setcounter{subassumption}{0}
\begin{subassumption}[Positivity for offline RL]
\label{assump:positivity_offline_RL}
\begin{flalign*}
\text{if }  \hspace{14 pt} \quad \quad \quad \quad \quad \quad
& p( 
\underline{X}^{t-1}_{(\pi_\alpha)} = \underline{x}^{t-1}, 
\underline{A}^{t-1}_{(\pi_\alpha)} = \underline{a}^{t-1}  
) \pi_\alpha^t(a^t|     
\underline{x}^{t-1}, 
\underline{a}^{t-1} )
> 0,
\nonumber
&&
\\
\text{then } \quad \quad \quad \quad \quad \quad
& p(
\underline{x}^{t-1}, 
\underline{a}^{t-1}
)
\pi_{\beta}^t(a^t| 
\underline{x}^{t-1}, 
\underline{a}^{t-1})
\geq  \mathcal{O} 
\nonumber
&&
\\ 
&
\forall  t, a^t,
\underline{x}^{t-1}, \underline{a}^{t-1}, \text{ and some constant } \mathcal{O} > 0   &&
\end{flalign*}
\end{subassumption}

where we introduced the following notation:   $\pi_\alpha^t(a^t|     
\underline{x}^{t-1}, 
\underline{a}^{t-1} ) 
\equiv 
 \pi_\alpha^t(A^t=a^t| \underline{X}^{t-1} = \underline{x}^{t-1}, 
\underline{A}^{t-1} = \underline{a}^{t-1} )$. 

The positivity assumption states that, for every set of actions and observations 
$\underline{a}^{t-1}, \underline{x}^{t-1}$ reachable under $\pi_\alpha$ and a desired next action $a^{t}$ 
(i.e., an action with positive support under $\pi_\alpha$), we require also positive support for $a^{t}$ under $\pi_\beta$. 
A violation of this assumption may occur if the acquisition decisions under the AFA policy $\pi_{\alpha}$  differ significantly from the decisions made by doctors ($\pi_\beta$).

\subsection{Estimation}

Estimation can be performed using well-known techniques from the offline RL / DTR literature. The following are common estimators:

\vspace{10pt}
\noindent
\textit{1) Plug-in of the G-formula: } 

\noindent
The model-based estimator, also known in the causal inference literature as the noniterative conditional expectation (NICE) estimator of the G-formula \cite{rein_deep_2024,wen_parametric_2021}, estimates the target cost 
$J$ as follows:
\begin{align}
    \hat{J}_{\textit{MB-Off}} = 
    \sum_{X, A} 
    \hat{\mathbb{E}}[C
    |\underline{X}^T, \underline{A}^T] 
    \prod_{t=1}^T\hat{p}(X^{t}| \underline{X}^{t-1}, \underline{A}^{t}) 
    \pi_{\alpha}^t(A^{t}| \underline{X}^{t-1}, \underline{A}^{t-1}) 
\end{align}
where the integration over $X$ and $A$ can be solved using Monte Carlo integration. Note that we use sums to denote the integration over $X$. All results in this paper do, however, also hold for continuous $X$, by replacing the sums with proper integrals. 

This estimator 
requires learning the state transition function $p(X^{t}| \underline{X}^{t-1}, \underline{A}^{t})$ and the expected cost $\mathbb{E}[C|\underline{X}^T, \underline{A}^T]$. We denote the learned nuisance functions as $\hat{p}(X^{t}| \underline{X}^{t-1}, \underline{A}^{t})$ and $\hat{\mathbb{E}}[C|\underline{X}^T, \underline{A}^T]$. 

\vspace{10pt}
\noindent
\textit{2) Inverse probability weighting (IPW):}
\nopagebreak

\noindent
The target cost that is estimated by IPW \cite{levine_offline_2020} is 
\begin{align}
    \hat{J}_{\textit{IPW-Off}} = 
    \hat{\mathbb{E}}_{n}\left[
    \hat{\rho}_{\textit{Off}}^T \text{ }C\right], \text{where  } \hat{\rho}_{\textit{Off}}^T = \prod_{t=1}^T \frac{\pi_{\alpha}^t(A^{t}| \underline{X}^{t-1}, \underline{A}^{t-1}) }{\hat{\pi}_{\beta}^t(A^{t}| \underline{X}^{t-1}, \underline{A}^{t-1})}. 
\end{align}
where $\hat{\mathbb{E}}_{n}\left[.\right]$ denotes the empirical average over the data set $\mathcal{D}$.
This estimator requires learning the retrospective acquisition policy/propensity score model $\pi_{\beta}^t(A^{t}| \underline{X}^{t-1}, \underline{A}^{t-1})$. 
\begin{myremark}[Cross-fitting estimators]
    When nuisance functions are estimated using flexible machine learning methods, the training of these functions and the evaluation of the estimator must be conducted on separate data splits to avoid introducing bias. To enhance efficiency, this process can be performed by alternating the training and evaluation splits, a method known as cross-fitting \cite{chernozhukov_doubledebiased_2018}. Throughout this paper, we assume that cross-fitting is used, though this is not explicitly noted in the proposed estimators' notation, leading to a slight abuse of notation.
\end{myremark}

\vspace{5pt}
\noindent
\textit{3) Direct method (DM):}

\noindent
The DM estimator, also known as the iterative conditional expectation (ICE) estimator of the G-formula \cite{wen_parametric_2021}, estimates the target cost $J$ as:
\begin{align}
    \hat{J}_{\textit{DM-Off}} = \hat{\mathbb{E}}_n[\hat{V}_{\textit{Off}}^0]. 
\end{align}
This estimator relies on learning a state-action value function $Q_{\textit{Off}}^{t}$ or state value function
$V_{\textit{Off}}^{t}$:
\begin{align*}
  Q_{\textit{Off}}^{t}  
  & \equiv Q_{\textit{Off}}(\underline{X}^{t-1},\underline{A}^{t}) 
  \equiv \mathbb{E}[C_{(\overline{\pi}_\alpha^{t+1})}|\underline{X}^{t-1},\underline{A}^{t}],  \\
  V_{\textit{Off}}^{t} & \equiv V_{\textit{Off}}(
  \underline{X}^t,
  \underline{A}^t) \equiv \mathbb{E}[
  C_{(\overline{\pi}_\alpha^{t+1})}|
  \underline{X}^t,
  \underline{A}^t].
\end{align*}
where $C_{(\overline{\pi}_\alpha^{t+1})}$ denotes the potential outcome of $C$ under a policy intervention $\pi_\alpha$ applied only from time step $t+1$ onwards. 
 $Q_{\textit{Off}}$ and $V_{\textit{Off}}$ can be learned using, for example, the dynamic programming (DP) algorithm, which is based on the recursive property of the Bellman equation \cite{bertsekas_dynamic_2012}:
\begin{align}
Q_{\textit{Off}}(\underline{X}^{t-1} , \underline{A}^{t} ) &  = 
\sum_{X^t}                               V_{\textit{Off}}( \underline{X}^{t},    \underline{A}^{t})            
p(X^t|       \underline{X}^{t-1},                 \underline{A}^{t} ) 
\\
    V_{\textit{Off}}( 
    \underline{X}^{t},    
    \underline{A}^{t})           
    & = 
    \sum_{A^{t+1}}
    Q_{\textit{Off}}(            \underline{X}^{t} , \underline{A}^{t+1} )
    \pi_{\alpha}^{t+1}(A^{t+1}|\underline{X}^{t}, \underline{A}^{t})
\end{align}
In practice, one only needs to learn
$Q_{\textit{Off}}$ such that $V_{\textit{Off}}$ can be simply computed as 
$V_{\textit{Off}}^t = \mathbb{E}_{\pi_{\alpha}}[Q_{\textit{Off}}^{t+1}] 
$ using, for example, Monte Carlo integration over the known AFA policy $\pi_{\alpha}$. 

\vspace{10pt}
\noindent
\textit{4) Double reinforcement learning (DRL):}

\noindent
The target cost that is estimated by DRL \cite{kallus_double_2020} is 
\begin{align}
    \hat{J}_{\textit{DRL-Off}} = \hat{\mathbb{E}}_n\left[\hat{\rho}_{\textit{Off}}^T C + 
 \sum_{t=1}^{T} \left(-\hat{\rho}_{\textit{Off }}^t  \hat{Q}_{\textit{Off}}^t +\hat{\rho}_{\textit{Off}}^{t-1}  \hat{V}_{\textit{Off}}^{t-1} \right) \right].
\end{align}
The DRL estimator combines approaches 2) and 3) by using both the learned propensity score $\hat{\pi}_\beta$ and the state action value function $\hat{Q}_{\textit{Off}}$ (and the derived $\hat{V}_{\textit{Off}}$). This estimator is (locally) efficient and doubly robust, in the sense that it is consistent if either the propensity score model $\hat{\pi}_\beta$, or the state action value function $\hat{Q}_{\textit{Off}}$ is correctly specified \cite{kallus_double_2020}.

\section{Missing Data (+ Online Reinforcement Learning) View}
\label{sec_missing_data}
\textit{Assumptions in this section: Assumption \ref{assump:nde} (NDE)}

\noindent
In this section, we assume that the NDE assumption holds  (i.e., $A^{t} \not \rightarrow \overline{U}^t$), but do not require the NUC assumption (Assumption \ref{assump:nuc}). 
We observe that the general AFA graph from Figure  \ref{graph_general} transforms under NDE into the graph shown in Figure \ref{graph_missing_data}A). This new graph represents a temporal missing data graph (m-graph) \cite{mohan_graphical_2013,shpitser_missing_2015} from the missing data literature. 
The unobserved underlying feature values $U^t$ are replaced by counterfactuals of the measured feature values $X_{(1)}^t$ since 
$U^t_{(\underline{a}^t)} = U^t_{(a^t)} = U^t_{(1)} = X^t_{(1)}$ for all potential acquisitions $\underline{a}^t$ and thus also $a^t = \vec{1}$. 
Due to the temporal restrictions $\overline{X}_{(1)}^{t} \not \rightarrow A^t$, the shown graph can be more precisely specified as the known block-conditional missing data model 
\cite{zhou_block-conditional_2010}. The graph depicting the counterfactual distribution 
is shown in Figure \ref{graph_missing_data}B).

\begin{figure}[ht]
\centering \includegraphics[width=1 \textwidth]{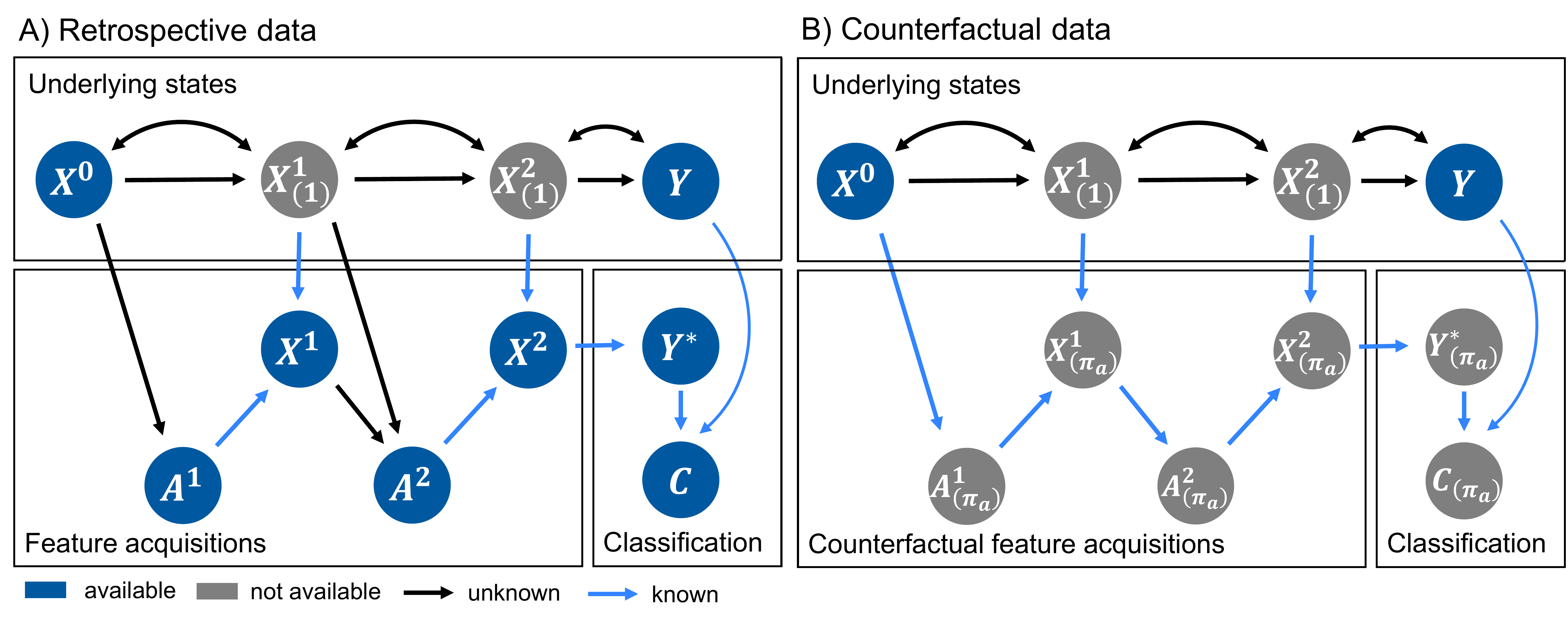}
    \caption{ 
    A) Updated causal graph of the AFA process under the NDE assumption (Assumption \ref{assump:nde}). Unknown state variables $U^t$ are replaced with the counterfactual feature values $X_{(1)}^t$, which represent the values $X^t$ would have taken if $A^t$ was $\vec{1}$ (i.e., the decision to observe all feature values). This graph describing the feature acquisition process is known as a missing data graph (m-graph). B) Graph showing the counterfactual distribution under $\pi_\alpha$.  
    Edges showing long-term dependencies are omitted for visual clarity. These include for both graphs $\underline{X}_{(1)}^{t-1} \leftrightarrow X_{(1)}^{t}$ and $\underline{X}_{(1)}^{T} \leftrightarrow Y$; for A) $\underline{X}^{t-1}, 
    \underline{X}_{(1)}^{t-1},
    \underline{A}^{t-1} \rightarrow  A^{t}$, and $\underline{X}^{T}, \underline{A}^{T} \rightarrow  Y^*$; and for B) 
    $X^0,\underline{X}_{(\pi_{\alpha})}^{t-1}, 
    \underline{A}_{(\pi_{\alpha})}^{t-1} \rightarrow  A_{(\pi_{\alpha})}^{t}$ and $X^0,\underline{X}_{(\pi_{\alpha})}^{T}, \underline{A}_{(\pi_{\alpha})}^{T} \rightarrow  Y_{(\pi_{\alpha})}^*$.
    }
    \label{graph_missing_data}
\end{figure}

\subsection{Problem Reformulation}

Now, we can establish the following theorem. Under the NDE assumption, the AFAPE problem becomes equivalent to a standard missing data problem, for which one can apply known identification and estimation theory.

\begin{theorem}
\label{theorem_problem_missing} 
(AFAPE problem reformulation and identification under the missing data view).
The AFAPE problem of estimating $J$ (Equation \ref{eq:AFAPE_objective})
is under Assumption \ref{assump:measurement_noise} (no measurement noise),  Assumption \ref{assump:consistency} (consistency),  Assumption \ref{assump:interference} (no interference), and Assumption \ref{assump:nde} (NDE) 
equivalent to estimating 

\begin{align}
\label{eq:AFAPE_objective_miss}
    J  
    = 
    \sum_{X_{(1)}, Y} 
    \underbrace{
    \mathbb{E}
    \left[
    C_{(\pi_{\alpha})}|X_{(1)},Y
    \right]}_{\text{online RL}}
    \underbrace{p(X_{(1)}, Y)}_{\text{missing data}}.
\end{align}
Furthermore, $J$ is identified if $p(X_{(1)}, Y)$ is identified. 
\end{theorem}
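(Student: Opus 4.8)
The plan is to prove the two assertions in turn: first that $J = \mathbb{E}[C_{(\pi_\alpha)}]$ admits the claimed decomposition into a known ``online RL'' conditional expectation times the ``missing data'' marginal $p(X_{(1)}, Y)$, and second that this reduces identification of $J$ to identification of $p(X_{(1)}, Y)$ alone. The whole argument hinges on showing that, once we condition on the full counterfactual feature path $X_{(1)}$ and the label $Y$, the remaining cost is a \emph{known} functional requiring no estimation.

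The key structural fact I would establish first is that, under the NDE assumption (Assumption~\ref{assump:nde}) together with consistency (Assumption~\ref{assump:consistency}), the unobserved underlying features coincide with the counterfactual ``observe-everything'' values: $U^t = X^t_{(1)}$ for every $t$. Indeed, NDE gives $U^t_{(\underline{a}^t)} = U^t_{(1)}$ for all acquisition sequences $\underline{a}^t$, so the factual $U^t$ (via consistency) equals the counterfactual obtained under $a^t = \vec{1}$, which by the noiseless measurement equation (Assumption~\ref{assump:measurement_noise}, Eq.~\ref{eq:measurement_process}) is exactly $X^t_{(1)}$. This is the heart of the argument: NDE ``freezes'' the feature path, so that any acquisition trajectory merely reveals a subset of the fixed values $X_{(1)} = \{X^0_{(1)}, \ldots, X^T_{(1)}\}$, while the acquisition decisions themselves have no effect on those values.

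With this in hand, I would next argue that the counterfactual cost $C_{(\pi_\alpha)}$ is, conditionally on $(X_{(1)}, Y)$, a known functional whose only remaining source of randomness is the (known) policy $\pi_\alpha$. Concretely, I would unroll the counterfactual trajectory recursively, invoking no interference (Assumption~\ref{assump:interference}) to reason at the individual level: each acquisition $A^t_{(\pi_\alpha)}$ is drawn from the known $\pi_\alpha^t(\cdot \mid \underline{X}^{t-1}_{(\pi_\alpha)}, \underline{A}^{t-1}_{(\pi_\alpha)})$; each observation is the deterministic reveal $X^t_{(\pi_\alpha)} = G_{A^t_{(\pi_\alpha)}}(X^t_{(1)})$ (using $U^t = X^t_{(1)}$ and no measurement noise); the prediction is $Y^*_{(\pi_\alpha)} = f_\textit{cl}(\underline{A}^T_{(\pi_\alpha)}, \underline{X}^T_{(\pi_\alpha)})$; and the cost is $C_{(\pi_\alpha)} = f_C(Y^*_{(\pi_\alpha)}, Y)$, where $f_\textit{cl}$, $f_C$ and $G$ are all known. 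Taking the expectation over the policy randomness only, $\mathbb{E}[C_{(\pi_\alpha)} \mid X_{(1)}, Y]$ is therefore fully computable --- the ``online RL'' simulation --- requiring no estimation from $\mathcal{D}$.

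Finally, the decomposition follows from the law of iterated expectations, conditioning on $(X_{(1)}, Y)$:
\[ J = \mathbb{E}[C_{(\pi_\alpha)}] = \sum_{X_{(1)}, Y} \mathbb{E}[C_{(\pi_\alpha)} \mid X_{(1)}, Y]\, p(X_{(1)}, Y), \]
which is exactly Eq.~\ref{eq:AFAPE_objective_miss}. Since the conditional-expectation factor is known, $J$ is a known functional of $p(X_{(1)}, Y)$ alone, so identifiability of $p(X_{(1)}, Y)$ immediately yields identifiability of $J$. I expect the main obstacle to be the careful bookkeeping in the third step: justifying that conditioning on the counterfactual path $X_{(1)}$ legitimately ``pins down'' the environment for the \emph{separate} policy intervention $\pi_\alpha$, so that the nested counterfactual $\mathbb{E}[C_{(\pi_\alpha)} \mid X_{(1)}, Y]$ collapses to a deterministic-environment rollout. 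This is precisely where NDE, consistency, no interference, and the noiseless-measurement assumption must be invoked jointly, and where a loose treatment of counterfactuals-within-counterfactuals could hide a gap.
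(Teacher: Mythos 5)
Your proposal is correct and follows essentially the same route as the paper's proof: the law of iterated expectations plus the observation that, under NDE, consistency, no interference, and noiseless measurement, the inner conditional expectation $\mathbb{E}[C_{(\pi_\alpha)}\mid X_{(1)},Y]$ is a rollout through entirely known components ($\pi_\alpha$, $G$, $f_\textit{cl}$, $f_C$), so identification reduces to that of $p(X_{(1)},Y)$. The only cosmetic difference is that you make the identity $U^t = X^t_{(1)}$ an explicit first step of the proof, whereas the paper establishes it in the text preceding the theorem and writes the inner expectation as an explicit factorized identifying distribution $q$.
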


\begin{proof}
The decomposition of $J$ into the two expected values follows from the law of iterated expectations and the independence of $X_{(1)}, Y$ from a policy intervention $\pi_\alpha$. 
The fact that the inner expected value
is identified can be easily verified by examining the graph representing the counterfactual distribution shown in Figure \ref{graph_missing_data}B). The graph shows that all functional relationships between variables that are part of the feature acquisition and classification processes are known (represented as blue edges). In particular, this implies the following factorization: 
\begin{align}
\label{eq_identification_online_RL}
 \mathbb{E}
    [
    C_{(\pi_{\alpha})}|X_{(1)},Y
    ]
     =  \sum_{
     \underline{X}^T_{(\pi_{\alpha})}
     ,\underline{A}^T_{(\pi_{\alpha})}
     ,Y^*_{(\pi_{\alpha})}
     ,C_{(\pi_{\alpha})}
     }
    C_{(\pi_{\alpha})}
    q(C_{(\pi_{\alpha})}
    , Y^*_{(\pi_{\alpha})}
    , X_{(\pi_{\alpha})}
    , A_{(\pi_{\alpha})}|
    X_{(1)},Y) 
\end{align}
with the identifying distribution
\begin{align*}
    q&(C_{(\pi_{\alpha})},  Y^*_{(\pi_{\alpha})}, X_{(\pi_{\alpha})}, A_{(\pi_{\alpha})}|
    X_{(1)},Y) = 
    \\ &  = 
    \prod_{t=0}^T
    \underbrace{
    g(
    X_{(\pi_{\alpha})}^t
    |
    A_{(\pi_{\alpha})}^{t},
    X_{(1)}^t
    ) 
    }_{\text{feature revelations}}
    \prod_{t=1}^T
    \underbrace{
    \pi_{\alpha}^t(
    A_{(\pi_{\alpha})}^t|
    \underline{X}_{(\pi_{\alpha})}^{t-1},
    \underline{A}_{(\pi_{\alpha})}^{t-1}) 
    }_{\text{acquisition decisions}}
    \underbrace{
    g_\textit{cl}(Y_{(\pi_{\alpha})}^*|
    \underline{X}_{(\pi_{\alpha})}^T, \underline{A}_{(\pi_{\alpha})}^T 
    )}
    _{\text{label prediction}}
    \underbrace{
    g_C(C_{(\pi_{\alpha})}|
    Y, 
    Y_{(\pi_{\alpha})}^* 
    )}
    _{\text{cost computation}}
\end{align*}

\noindent
which is identified since all (deterministic) distributions $g(.)$ and $\pi_{\alpha}$ are known functions. 
\end{proof}

\vspace{5pt}
\noindent 
Since all densities $g()$ are deterministic and known, we can further simplify Eq. \ref{eq_identification_online_RL} using simpler notation. We use $X = G_A(X_{(1)})$, $\pi_{\alpha}(
    a|
    G_{a}(X_{(1)})) \equiv \prod_{t=0}^T
     \pi_{\alpha}^t(
    a^{t}|
    G_{\underline{a}^{t-1}}(X_{(1)}),
    \underline{a}^{t-1})$  and $C = f_C(A, G_{A}(X_{(1)}), Y)$ to obtain: 
\begin{align}
\label{eq_full_data_if}
 \mathbb{E}
    [
    C_{(\pi_{\alpha})}|X_{(1)},Y
    ]
    &  =  
     \sum_{a \in \mathcal{A}}
     f_C( \underline{a}^T, G_{\underline{a}^T}(X_{(1)}), Y ) 
     \prod_{t=0}^T
     \pi_{\alpha}^t(
    a^{t}|
    G_{\underline{a}^{t-1}}(X_{(1)}),
    \underline{a}^{t-1})
     \\
 & =
      \sum_{a \in \mathcal{A}}
     f_C(a, G_{a}(X_{(1)}), Y ) 
     \pi_{\alpha}(
    a|
    G_{a}(X_{(1)})).  
         \nonumber
\end{align}
\hfill  

\vspace{5pt}
\noindent 
We denote $\mathbb{E}
[
C_{(\pi_{\alpha})}|X_{(1)},Y
]$ as the online RL part of the problem because it involves the evaluation of a policy in a known environment. 
We refer to the outer expected value of $p(X_{(1)}, Y)$ as the missing data problem, as it requires the identification of the counterfactual feature distribution.  

\subsection{Identification}

As established in Theorem \ref{theorem_problem_missing}, the AFAPE problem is identified if the missing data problem (i.e., $p(X_{(1)}, Y)$) is identified. 
The following positivity assumption is required to allow identification of $p(X_{(1)}, Y)$ and therefore for the target parameter $J$ from Eq. \ref{eq:AFAPE_objective_miss}:

\vspace{10pt}
\noindent
\setcounter{myassumption}{6}
\setcounter{subassumption}{1}
\begin{subassumption}[Positivity for missing data]
\label{assump:positivity_missing_data}
\begin{flalign*}
\text{if }  \hspace{14 pt} \quad  \quad  \quad  \quad  \quad  \quad  \quad \quad  \quad  \quad  \quad
& p( 
\underline{X}^{t-1}_{(1)}= \underline{x}^{t-1}, 
\underline{A}^{t-1} = \vec{1}  
) 
> 0,
\nonumber
&&
\\
\text{then } \quad \quad  \quad  \quad  \quad  \quad  \quad \quad  \quad  \quad  \quad
& 
\pi_{\beta}^t(A^t=\vec{1}| 
\underline{X}_{(1)}^{t-1} = \underline{x}^{t-1}, 
\underline{A}^{t-1} = \vec{1})
\geq  \mathcal{O}  
\nonumber
&&
\\ 
&
\forall  t, 
\underline{x}^{t-1}, \text{ and some constant } \mathcal{O} > 0  &&
\end{flalign*}
\end{subassumption}

\noindent
This positivity assumption is very different from the positivity assumption assumed under the offline RL view (Assumption \ref{assump:positivity_offline_RL}). It requires the "acquire everything" action trajectory $A^t=\vec{1}$ $\forall t$ to have positive support for all possible feature values. In other words, this is a requirement for complete cases among all subpopulations.

Given the positivity assumption, the block-conditional model describing $p(X_{(1)}, Y)$ is identified. We show how identification can be achieved in Appendix \ref{app_mnar}. 

Note that Theorem \ref{theorem_problem_missing} holds even in the more general case without the temporal restriction $\overline{X}_{(1)}^{t} \not \rightarrow A^t$. In this case, $p(X_{(1)}, Y)$ may or may not be identified, depending on what assumptions can be made. There exists a vast literature on identification theory for missing data problems \cite{bhattacharya_identification_2020,nabi_full_2020, mohan_graphical_2021} that can be applied. 

Here, we briefly discuss how the common missing data scenarios of missing-completely-at-random (MCAR), missing-at-random (MAR), and missing-not-at-random (MNAR) apply within the AFA framework.
MNAR scenarios arise when the missingness of a feature is influenced by the value of another feature which may itself be missing. This situation occurs when the NUC assumption is violated, indicated by the presence of arrows $X_{(1)} \rightarrow A$. On the other hand, MAR scenarios occur when the missingness of a feature is dependent only on observed features. This scenario aligns with our setting if the NUC assumption (Assumption \ref{assump:nuc}) holds. Lastly, MCAR represents the simplest case, where the missingness of a feature is independent of all other feature values. In our framework, this corresponds to the absence of any edges $X \rightarrow A$.

\subsection{Estimation}

An estimate of \(\mathbb{E}\left[C_{(\pi_{\alpha})}|X_{(1)},Y\right]\), denoted as \(\hat{\mathbb{E}}\left[C_{(\pi_{\alpha})}|X_{(1)},Y\right]\), can be readily computed from Eq. \ref{eq_full_data_if}
using Monte Carlo integration: 
\begin{flalign}
\label{eq:online_rl_mc}
     \hat{\mathbb{E}}[C|X_{(1)},Y] 
    = 
    \sum_{i}^{n_{\textit{MC}}} 
     f_C(a_i, G_{a_i}(X_{(1)}), Y )
\end{flalign}
with $n_{\textit{MC}}$ samples 
$
a_i
\sim \prod_{t=1}^T \pi_\alpha(A^{t}|G_{\underline{A}^ {{t-1}}}(X_{(1)}),\underline{A}^ {{t-1}})$.

 This approach is common in online RL settings, where the agent interacts with the environment.
Monte Carlo integration is performed in the following way: First, one takes a sample $x,y$ from \(p(X_{(1)},Y)\) (obtained through either of the methods shown below) and reveals the initial features \(x^0\) to the agent. The agent's first action, denoted by \(a'^1\), is then sampled from \(\pi_\alpha(A^1|x^0)\). Depending on \(a'^1\), the corresponding feature set amongst \(x^1\) is revealed to the agent and the next action \(a'^2\) is sampled. This continues until step \(T\), when the classifier is applied (using the acquired subset of features from the simulation as input) and a resulting misclassification cost \(c'\) is computed. This process is repeated multiple times, and the costs are averaged to obtain \(\hat{\mathbb{E}}\left[C_{(\pi_{\alpha})}|X_{(1)},Y\right]\).

This online RL process requires samples from \(p(X_{(1)},Y)\). These samples can be obtained using standard missing data estimation methods, which result in the following estimators: 

\vspace{10pt}
\noindent
\textit{1) Inverse probability weighting (IPW):}

\noindent
The target cost that is estimated by IPW \cite{seaman_review_2013} is
\begin{align}
    \hat{J}_{\textit{IPW-Miss}} = \hat{\mathbb{E}}_n \left[ \hat{\rho}_{\textit{Miss}}\text{ } \hat{\mathbb{E}}\left[C_{(\pi_{\alpha})}|X_{(1)},Y\right]\right], \text{  where  } \hat{\rho}_{\textit{Miss}} =  
    \prod_{t=1}^T
    \frac{\mathbb{I}(A^t= \vec{1}) }{
    \hat{\pi}_{\beta}^t(
    A^t=\vec{1}| 
    \underline{X}_{(1)}^{t-1},
    \underline{A}^{t-1}= \vec{1})}, 
\end{align}
and where $\mathbb{I}(.)$ denotes the indicator function.
This IPW approach requires learning the propensity score $\pi_{\beta}$, but only for the scenario of full data acquisition (where $A = \vec{1}$). Because of the indicator function $\mathbb{I}(A^t = \vec{1})$, only the complete cases are selected for reweighting. 

\vspace{10pt}
\noindent
\textit{2) Multiple Imputation (MI): }

\noindent
The target cost that is estimated by the multiple imputation (MI)  \cite{sterne_multiple_2009} estimator is: 
\vspace{6pt}

\begin{align} \label{eq:MI}
    \hat{J}_{\textit{MI-Miss}} = \hat{\mathbb{E}}_n \left[
    \sum_{\tikzmarknode{Xm}{X_{m}}} \hat{\mathbb{E}}\left[C_{(\pi_{\alpha})}|X_{(1)},Y\right]  \hat{p}(\tikzmarknode{Xm2}{X_{m}}| \tikzmarknode{Xo}{X_{o}}, Y) \right].
\end{align}

\begin{tikzpicture}[remember picture, overlay]
    \draw[<-, thick] (Xm2) ++(0cm, 0.3cm) -- ++(0cm, 0.4cm) node[above] {\footnotesize Missing part of \(X_{(1)}\)};
    
    \draw[<-, thick] (Xo) ++(0cm, -0.3cm) -- ++(0cm, -0.4cm) node[below] {\footnotesize Observed part of \(X_{(1)}\)};
\end{tikzpicture}

\vspace{0.5cm}

\noindent 
This estimator is based on the G-formula, which requires the counterfactual data distribution $p(X_{(1)},Y)$. In the MI estimator, this density is not modeled fully, but the empirical distribution of the available data is augmented with samples (i.e., imputations) from a model for the missing data. 
The MI estimator is based on the decomposition $\hat{p}(X_{(1)},Y) = \hat{p}(X_{m}|X_{o},Y)p(X_{o},Y)$, where $X_{o}$ denotes the observed part and $X_{m}$ the missing part of $X_{(1)}$.
The sampling of the missing part is then usually repeated multiple times to increase the precision of the estimate, hence the name "multiple imputation". 

In certain scenarios, MI can outperform the estimators from the semi-offline RL view, which will be described next. Appendix \ref{app_missing_data_estimators} discusses the advantages and disadvantages of the MI estimator in more detail. The appendix also highlights why using (conditional) mean imputation, which has been previously employed in AFA settings \cite{an_reinforcement_2022, erion_coai_2021, janisch_classification_2020} generally leads to biased estimation results.

\section{Semi-offline Reinforcement Learning View}
\label{sec_semi_offline_view}

\textit{Assumptions in this section: Assumption \ref{assump:nde} (NDE), Assumption \ref{assump:nuc} (NUC)}

\noindent 
In this section, we assume both the NDE and NUC assumptions hold. In this context, either the offline RL or the missing data view can be applied to solve AFAPE, but both have limitations, as illustrated in the following scenarios.

We assume an available data point without $X_{(1)}^0$, but with two time-steps and univariate $X_{(1)}^1 = 0.6$ and $X_{(1)}^2 = 0.9$. Initially, we assume the retrospective acquisition decisions were $A^1 = 1$ and $A^2 = 1$ (denoted as scenario 1). The data point in scenario 1 corresponds to a complete case where both feature values $X_{(1)}^1$ and $X_{(1)}^2$ are known.

\begin{figure}[ht]
\centering 
\hspace*{-0.5cm}
\includegraphics[width=1.05 \textwidth]{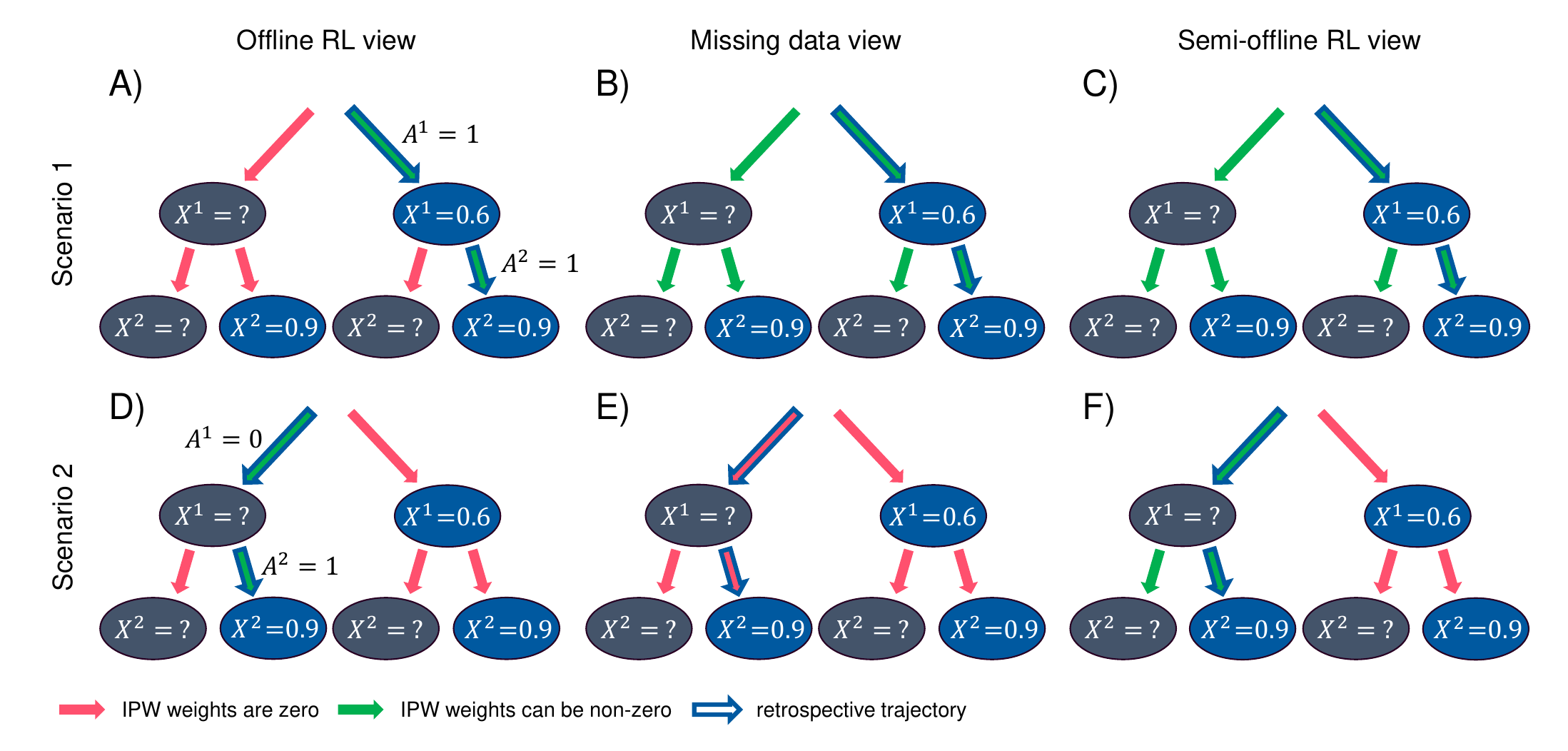}
    \caption{ 
    Visualization of data utilization by IPW estimators under different views. Each graph shows the four possible target acquisition trajectories for two exemplary retrospective acquisition scenarios and highlights which target trajectories can receive non-zero IPW weights under the respective views.
    A), D) The IPW estimator from the offline RL viewpoint: only target trajectories that match the retrospective trajectory can be evaluated. B), E) The IPW estimator from the missing data viewpoint: all trajectories can be evaluated if the datapoint is a complete case; otherwise, no evaluation is possible. C), F) The IPW estimator from the semi-offline RL viewpoint: all trajectories with equal or fewer acquisitions than the retrospective trajectory can be evaluated.
    }
    \label{figure_trajectories}
\end{figure}

We analyze how this data point is utilized by the IPW estimators within different views. Figures \ref{figure_trajectories}A) and \ref{figure_trajectories}B) show the four possible target trajectories: $a^1 = 0, a^2 = 0$; $a^1 = 1, a^2 = 0$; $a^1 = 0, a^2 = 1$; and $a^1 = 1, a^2 = 1$ under the offline RL and missing data views, respectively. In the offline RL view, this data point can only be used to evaluate the target trajectory $a^1 = 1$ and $a^2 = 1$, matching the retrospective trajectory. The missing data IPW estimator, however, can evaluate any of the four target trajectories using this data point since it is a complete case.

Now consider scenario 2 with hypothetical retrospective acquisition decisions $A^1 = 0$ and $A^2 = 1$. Figures \ref{figure_trajectories}D) and \ref{figure_trajectories}E) depict the corresponding trajectories for both views. The offline RL IPW estimator can use this datapoint to evaluate the matching trajectory $a^1 = 0$ and $a^2 = 1$ but none others. The missing data IPW estimator cannot evaluate any target trajectories as this is not a complete case.

In scenario 2, we only know the value of $X_{(1)}^2$, not $X_{(1)}^1$, since $A^1 = 0$. However, the target trajectories $a^1 = 0$, $a^2 = 0$ and 
$a^1 = 0$, $a^2 = 1$
do not require the value of $X_{(1)}^1$ and can still be simulated. This motivates our novel semi-offline RL viewpoint, where we consider all target trajectories with the same or fewer acquisitions, compared with the retrospective data point, as simulatable.

Figures \ref{figure_trajectories}C) and \ref{figure_trajectories}F) show the simulatable trajectories under the semi-offline RL view for both scenarios. Similar to online RL, a policy can sample different trajectories, but under the semi-offline RL view only among the simulatable ones, while trajectories involving non-simulatable actions (like $a^1 = 1$ in scenario 2) should not be sampled.

Since not all trajectories are simulatable, we restrict the simulation policy to block actions that would result in non-simulatable trajectories. A simple Monte Carlo estimator averaging the costs within the simulated trajectories would be biased due to the blocking of actions, necessitating post-simulation bias correction.

The remainder of this section formalizes the semi-offline RL viewpoint and is organized as follows. First, we introduce the simulation policy, referred to as the semi-offline sampling policy, and illustrate that simulations using this policy do not require information about $X_{(1)}$ not already contained in $X = G_A(X_{(1)})$. Next, we explain how the AFAPE target $J$ can be equivalently formulated based on the semi-offline sampling distribution. We then prove that this reformulated $J$ is identified under a new, weaker positivity assumption, and finally, we derive novel estimators for $J$.

\subsection{The Semi-offline Sampling Policy}
To introduce the semi-offline RL view, we first revisit the problem formulation from the missing data view (Eqs. \ref{eq:AFAPE_objective_miss} and \ref{eq_full_data_if}):
\begin{align}
\label{eq:AFAPE_objective_miss_repeated}
    J  
    & = 
    \sum_{X_{(1)}, Y} 
    \underbrace{
     \sum_{a \in \mathcal{A}}
     f_C(a, G_{a}(X_{(1)}), Y) 
     \pi_{\alpha}(
     a|
     G_{a}(X_{(1)}))
     }_{\text{online RL}}
    \underbrace{p(X_{(1)}, Y)}_{\text{missing data}}.
\end{align}
The online RL part involves integrating over all possible trajectories $a \in \mathcal{A}$, representing subsets of the features.
Notably, many terms in this inner integral do not require complete knowledge of $X_{(1)}$. Specifically, any summand with $a \leq A$ (where we let $\leq$ denote an element-wise comparison) does not need information from $X_{(1)}$ beyond what is already in $X = G_A(X_{(1)})$.

This missing data + online RL approach effectively says: "Solve all missingness, then integrate over all possible feature subsets, including many that didn't require addressing the missingness to begin with." This approach is clearly suboptimal. Instead, we propose a novel semi-offline RL viewpoint: "Integrate over all subsets of the observed data and adjust for the bias introduced by excluding subsets where data was missing."
We call this the semi-offline RL viewpoint because some subsets/trajectories can be simulated (the online part), while others cannot (the offline part).

We define the semi-offline sampling distribution to include the observed data trajectories. To avoid integrating over unobserved feature subsets, we replace $\pi_\alpha$ in the distribution of Eq. \ref{eq:AFAPE_objective_miss_repeated} with a policy $\pi_{\textit{sim}}^\prime$ that has no support for trajectories where the corresponding $X_{(1)}$ are missing. A policy $\pi^\prime$ that enforces this exclusion is defined formally as a blocked policy:

\begin{definition}
\label{def_blocked_policy}
(Blocked Policy)
A policy $\pi^{\prime t}(A'^t|G_{\underline{A}'^{t-1}}(X_{(1)}),\underline{A}'^{t-1}, A^t)$ is called a 'blocked policy' of the policy $\pi^t(A'^t|G_{\underline{A}'^{t-1}}(X_{(1)}),\underline{A}'^{t-1})$
if it satisfies the following conditions: 

\noindent 
1) Blocking of acquisitions of non-available features:
\begin{flalign*}
\text{if }  
& a'^t \not \leq a^t, 
\quad \quad \quad \quad \quad  \quad     \quad\text{then } 
\pi^{\prime t}(
a'^t|
G_{\underline{a}'^{t-1}}(X_{(1)}) = \underline{x}^{\prime t-1},
\underline{a}'^{t-1},
a^t) = 0
\nonumber
\quad 
&&
\forall  t,a'^t,a^t, \underline{x}'^{t-1}, \underline{a}'^{t-1}  
\end{flalign*}

\noindent 
2) No blocking of acquisitions of available features:
\begin{flalign*}
\text{if }  
& a'^t \leq a^t  
\text{ and }
\pi^t(
a'^t|
\underline{x}'^{t-1},
\underline{a}'^{t-1}) > 0,  
\quad 
\text{then } 
\pi^{\prime t}(
a'^t|
\underline{x}'^{t-1},
\underline{a}'^{t-1}, 
a^t) > 0
&&  
\forall  t,a'^t,a^t, \underline{x}'^{t-1}, \underline{a}'^{t-1} 
\end{flalign*}
\end{definition}
\noindent 
Condition 1 ensures that sampling does not depend on values of $X_{(1)}$ that are not contained in $X = G_A(X_{(1)})$.
Condition 2 ensures that the online exploration part is utilized, by forcing the blocked policy $\pi'$ to have positive support whenever $\pi$ has positive support and the desired features are available. A practical choice for $\pi'$ just sets $\pi'(A'^t=a'^t|G_{\underline{A}'^{t-1}}(X_{(1)}),\underline{A}'^{t-1}, A^t = a^t) = 0$ if $a'^t \not \leq a^t$ and rescales the other probabilities accordingly.

Having defined the blocked policy, we can now introduce the semi-offline sampling distribution: 
\begin{flalign}
\label{eq:semi_offline_sampling_distribution}
    p'(A', G_A(X_{(1)}), Y, A) & =
    \prod_{t=1}^T \pi_\textit{sim}^{\prime t}(A'^t| G_{\underline{A}^{\prime t-1}}(X_{(1)}), \underline{A}^{\prime t-1}, A^t) 
    p(A,G_A(X_{(1)}), Y)
    \\ & \equiv
    \underbrace{\pi_\textit{sim}^{\prime}(A'|G_{A'}(X_{(1)}),A)}_{\text{semi-offline RL}} \underbrace{p(A,G_A(X_{(1)}), Y)}_{\text{observed data}}
     \nonumber
\end{flalign} 
 which only involves observed data because $A' \leq A$ due to the blocking. 

Similar to online RL, we can sample from this distribution to construct a simulated dataset $\mathcal{D}^{\prime}$, consisting of $ X = G_{A}(X_{(1)}), Y, A, A'$ and $C' = f_C(A',G_{A'}(X_{(1)}), Y)$. In Figure \ref{graph_semi_offline_RL}, we show the full causal graph of how such sampling is performed.

\begin{figure}
\centering
\vspace{-1 pt}
\includegraphics[width=0.60\textwidth]
    {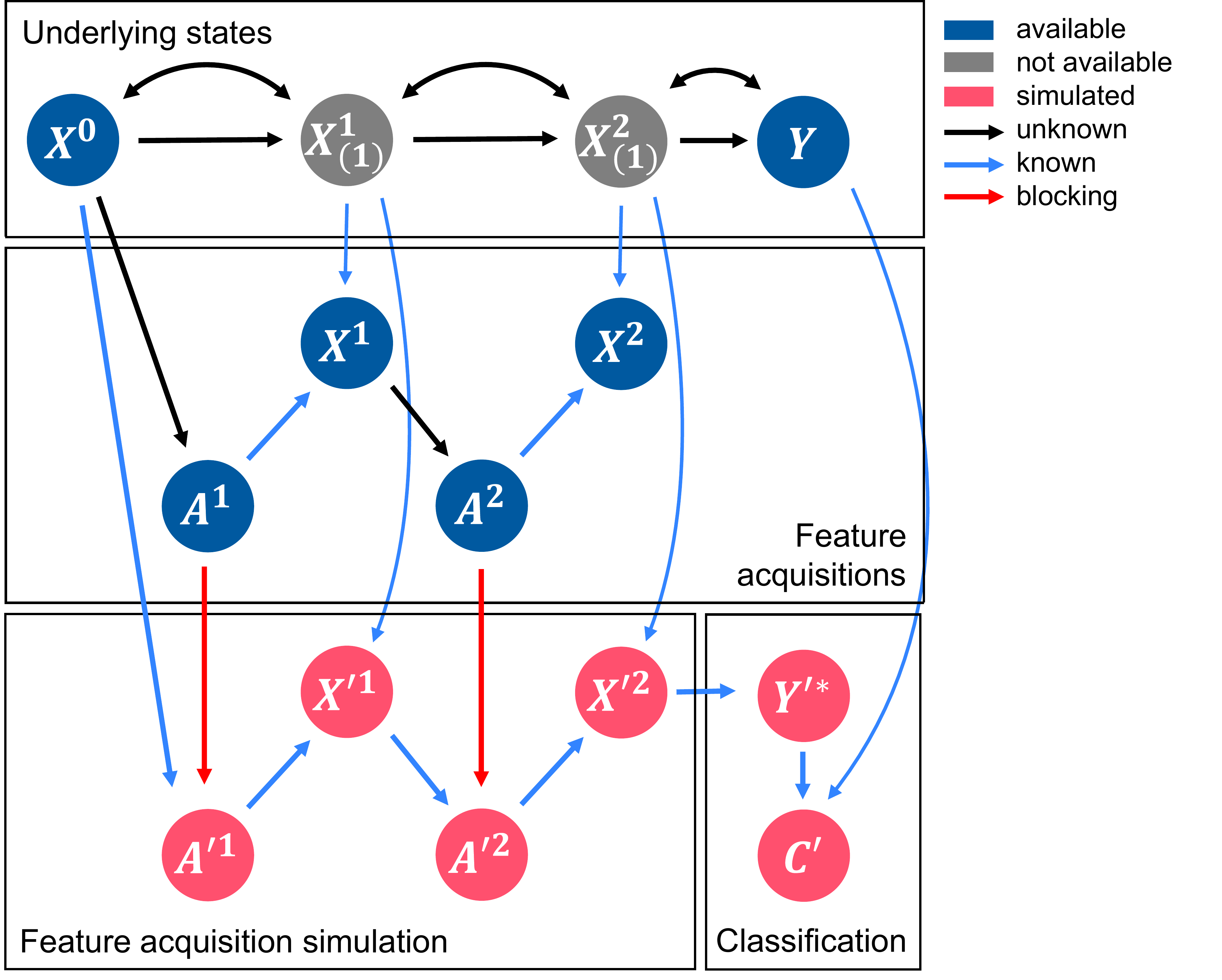}
\vspace{-1 pt}
    \caption{ 
    Causal graph for the semi-offline sampling distribution $p'$. 
    Simulated acquisition actions $A'^t$ and observations $X'^t = G_{A^{\prime t}}(X_{(1)}^t)$  follow a blocked simulation policy $\pi'_{sim}$. The simulation policy is restricted by $A^t$ such that actions $A'^t$ are blocked if $A'^t_i > A_i^t$ for any $i$.
    The simulated cost $C'$ can be computed from $A', X'$ and $Y$: $C' = f_C( Y^{\prime*},Y)$ with $Y^{\prime *} = f_\textit{cl}(X',A')$ being the predicted label under the simulated acquisitions. 
    Edges showing long-term dependencies are omitted from the graph for visual clarity. These include $ \underline{X}^{t-1}, \underline{A}^{t-1} \rightarrow  A^{t}$; $ \underline{X}'^{t-1}, \underline{A}'^{t-1} \rightarrow  A'^{t}$; $\underline{X}'^{T}, \underline{A}'^{T} 
    \rightarrow  Y^*$; $\underline{X}_{(1)}^{t-1} 
    \leftrightarrow X_{(1)}^{t}$; and $\underline{X}_{(1)}^{T}
    \leftrightarrow Y$. 
    }
    \label{graph_semi_offline_RL}
\vspace{-1 pt}
\end{figure}

Note that although the number of observed data samples is limited by the dataset size $n$, the semi-offline RL variables $A'$ and $C'$ can be sampled multiple times beyond this constraint.
While $\pi_\textit{sim}^{\prime}$ can be chosen to be a blocked AFA policy $\pi_\alpha^\prime$, this is not required; different choices of $\pi_\textit{sim}^{\prime}$ introduce an off-policy aspect. The only requirement is that $\pi_{sim}$ (the unblocked version of $\pi_\textit{sim}^{\prime}$) meets the positivity assumption from the offline RL view (Assumption \ref{assump:positivity_offline_RL}).

Since we replace $\pi_\alpha$ with $\pi_\textit{sim}^\prime$, the resulting cost samples $C'$ cannot simply be averaged to estimate $J$. Instead, we reformulate the AFAPE problem as a causal inference problem by intervening on the semi-offline sampling distribution $p^\prime$, reversing $\pi_\textit{sim}^\prime$ back to $\pi_\alpha$. This reformulation is formalized as follows.

\subsection{Problem Reformulation}

The AFAPE problem can be reformulated under the semi-offline RL view (i.e. under the proposed distribution $p'$) as the following theorem states.

\begin{theorem}
\label{theorem_problem_semi_offline_RL} 
(AFAPE problem reformulation under the semi-offline RL view).
The AFAPE problem of estimating $J$ (Eq. \ref{eq:AFAPE_objective} or Eq. \ref{eq:AFAPE_objective_miss})
is under Assumption \ref{assump:measurement_noise} (no measurement noise),  Assumption \ref{assump:consistency} (consistency),  Assumption \ref{assump:interference} (no interference), Assumption \ref{assump:nde} (NDE) and Assumption \ref{assump:nuc} (NUC) 
equivalent to estimating 
\begin{align}
\label{eq:AFAPE_objective_semi_offline_RL}
    J  
    = 
    \mathbb{E}_{p'}[C'_{(\pi_\alpha)}].
\end{align}
\end{theorem}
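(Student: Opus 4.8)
The plan is to reduce the claim to the missing-data reformulation already established in Theorem \ref{theorem_problem_missing} and then to recognize $\mathbb{E}_{p'}[C'_{(\pi_\alpha)}]$ as exactly that quantity after an intervention that undoes the blocking. Concretely, I would start from the form of $J$ in Eq. \ref{eq:AFAPE_objective_miss_repeated}, namely $J = \sum_{X_{(1)},Y} \big( \sum_{a \in \mathcal{A}} f_C(a, G_a(X_{(1)}), Y)\, \pi_\alpha(a \mid G_a(X_{(1)})) \big)\, p(X_{(1)}, Y)$, which holds under Assumptions \ref{assump:measurement_noise}--\ref{assump:nde}. The goal is then to show that the right-hand side of Eq. \ref{eq:AFAPE_objective_semi_offline_RL} reproduces this expression.

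The key conceptual step is to read the potential outcome $C'_{(\pi_\alpha)}$ as the simulated cost obtained by intervening on the semi-offline sampling distribution $p'$ of Eq. \ref{eq:semi_offline_sampling_distribution}, replacing the blocked simulation policy $\pi_\textit{sim}'$ with the unblocked target policy $\pi_\alpha$. Here NDE (Assumption \ref{assump:nde}) does the essential work: because $U^t \indep \underline{A}^t \mid \underline{U}^{t-1}$, the counterfactual feature vector $X_{(1)}$ is a fixed quantity that does not react to the acquisition decisions, so $G_{a}(X_{(1)})$ and hence $C'_{(\pi_\alpha)} = f_C(A'_{(\pi_\alpha)}, G_{A'_{(\pi_\alpha)}}(X_{(1)}), Y)$ are well defined for \emph{every} action sequence $A'_{(\pi_\alpha)} \sim \pi_\alpha$, including sequences with $A'_{(\pi_\alpha)} \not\leq A$ that would be forbidden by the blocking of Definition \ref{def_blocked_policy}. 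Consistency (Assumption \ref{assump:consistency}) links the simulated cost to this potential outcome, while NUC (Assumption \ref{assump:nuc}) is what validates the causal structure of Figure \ref{graph_semi_offline_RL} (no confounding edge into the acquisitions) under which replacing $\pi_\textit{sim}'$ by $\pi_\alpha$ is a legitimate policy intervention admitting the usual g-formula representation.

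Carrying this out, I would substitute $\pi_\alpha(A' \mid G_{A'}(X_{(1)}))$ for the factor $\pi_\textit{sim}'(A' \mid G_{A'}(X_{(1)}), A)$ in $p'$, obtaining the interventional joint $p(A, X_{(1)}, Y)\, \pi_\alpha(A' \mid G_{A'}(X_{(1)}))$. Taking the expectation of $C' = f_C(A', G_{A'}(X_{(1)}), Y)$ under this joint and then marginalizing over the retrospective actions $A$ — which is legitimate precisely because the intervention has severed the blocking dependence of $A'$ on $A$, so that $\sum_A p(A, X_{(1)}, Y) = p(X_{(1)}, Y)$ — collapses the expression to $\sum_{X_{(1)},Y} p(X_{(1)}, Y) \sum_{A'} f_C(A', G_{A'}(X_{(1)}), Y)\, \pi_\alpha(A' \mid G_{A'}(X_{(1)}))$, which is identical to the missing-data form of $J$ recalled above.

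I expect the main obstacle to be the careful bookkeeping of the intervention semantics rather than any hard calculation. One must argue that undoing the blocking is a legitimate operation even though $\pi_\textit{sim}'$ had zero support on trajectories with $A' \not\leq A$, so that the intervention genuinely \emph{extends} the support rather than merely reweighting an existing one, and that the resulting potential outcome remains well defined on this extended support. This is exactly where NDE is indispensable, since it is what makes $C'_{(\pi_\alpha)}$ meaningful for acquisitions that were never made in the retrospective data. A secondary point to verify is that the final expectation does not depend on the particular choice of $\pi_\textit{sim}'$ (only the positivity of its unblocked version matters), which follows automatically once the intervention removes every dependence on $\pi_\textit{sim}'$.
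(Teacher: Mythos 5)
Your proposal is correct and follows essentially the same route as the paper's proof: both start from the missing-data form of $J$ in Eq.~\ref{eq:AFAPE_objective_miss_repeated}, recognize $\mathbb{E}_{p'}[C'_{(\pi_\alpha)}]$ as the expectation under the interventional version of $p'$ in which the blocked $\pi'_{\textit{sim}}$ is replaced by $\pi_\alpha$ (so that the joint factorizes as $p(A, X_{(1)}, Y)\,\pi_\alpha(A'\mid G_{A'}(X_{(1)}))$), and marginalize out the retrospective actions $A$, which is exactly the chain of equalities in the paper written in the opposite direction. Your explicit remarks on why NDE makes $C'_{(\pi_\alpha)}$ well defined on trajectories with $A'\not\leq A$, and why the result is independent of the choice of $\pi'_{\textit{sim}}$, are points the paper leaves implicit but do not change the argument.
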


\noindent 
$C'_{(\pi_\alpha)}$ denotes the potential outcome of $C'$, had, instead of the blocked simulation policy $\pi'_{sim}$, the AFA policy $\pi_\alpha$ been employed. 

\vspace{10pt}
\begin{proof}
Starting from Eq. \ref{eq:AFAPE_objective_miss_repeated}, we find: 
\begin{align*}
    J  
    & = 
    \sum_{X_{(1)}, Y} 
     \sum_{a \in \mathcal{A}}
     f_C(a, G_{a}(X_{(1)}), Y) 
     \pi_{\alpha}(
     a|
     G_{a}(X_{(1)}))
    p(X_{(1)}, Y)
    \\ & = 
    \sum_{ X_{(1)}, Y, A_{(\pi_\alpha)}}
    f_C(A_{(\pi_\alpha)}, G_{A_{(\pi_\alpha)}}(X_{(1)}), Y) 
    p(A_{(\pi_\alpha)}, X_{(1)},Y) 
    \\ & = 
    \sum_{X_{(1)}, Y,A_{(\pi_\alpha)}, A}
    f_C(A_{(\pi_\alpha)}, G_{A_{(\pi_\alpha)}}(X_{(1)}), Y) 
    p(A_{(\pi_\alpha)}, X_{(1)},Y, A) 
    \\ & = 
    \sum_{X_{(1)}, Y,A_{(\pi_\alpha)}, A}
    f_C(A^\prime_{(\pi_\alpha)}, G_{A^\prime_{(\pi_\alpha)}}(X_{(1)}), Y) 
    p^\prime(A^\prime_{(\pi_\alpha)}, X_{(1)},Y, A) 
    = 
    \mathbb{E}_{p'}[C'_{(\pi_\alpha)}].
\end{align*}
\end{proof}

\begin{myremark}[Comparison of AFAPE under offline RL and semi-offline RL] 
The AFAPE problem formulation under the semi-offline RL view (Eq. \ref{eq:AFAPE_objective_semi_offline_RL}) closely resembles the original AFAPE formulation under the offline RL view (Eq. \ref{eq:AFAPE_objective}): $J = \mathbb{E}[C_{(\pi_\alpha)}]$. This similarity might raise the question of what is gained by this reformulation if one still needs to adjust for the intervention $\pi_\alpha$. The key difference is that, in the offline RL view, one must account for a significant distribution shift from $\pi_\beta$ to $\pi_\alpha$. In contrast, under the semi-offline RL view, the distribution shift is much smaller, from a blocked $\pi_\textit{sim}^\prime$ to $\pi_\alpha$. 
\end{myremark}

\subsection{Identification}

\noindent 
In the following section, we focus on identifying the reformulated target $J$ from Eq. \ref{eq:AFAPE_objective_semi_offline_RL}. Similar to the other views, identification under the semi-offline RL view requires a positivity assumption. However, under the semi-offline RL view, this assumption is considerably less stringent compared to the positivity requirements in the offline RL (Assumption \ref{assump:positivity_offline_RL}) and missing data views (Assumption \ref{assump:positivity_missing_data}).

In the offline RL view, the positivity assumption requires support for retrospective trajectories (i.e., action sequences under $\pi_\beta$) that match any target trajectory (i.e., action sequences under $\pi_\alpha$). In the missing data view, it demands support for "acquire all" retrospective trajectories, meaning complete cases. In contrast, the semi-offline RL view only necessitates that for any target trajectory, there is positive support for at least one retrospective trajectory with equal or more acquisitions. We now formalize this positivity assumption rigorously. Due to its inherent complexity, we divide the formalization into multiple definitions:

\begin{definition}
\label{def_local_positivity}
(Local positivity assumption and local admissible set  $\mathcal{A}_\textit{adm}$ for semi-offline RL).
Let the local admissible set
$\mathcal{A}_\textit{adm}^t(
\underline{x}^{t-1}, \underline{a}^{t-1}, 
a'^t)$, 
defined for all $a'^t$ and all  $\underline{x}^{t-1}, \underline{a}^{t-1}$ s.t. $p(G_{\underline{a}^{t-1}}(X_{(1)}) \equiv \underline{X}^{t-1} = \underline{x}^{t-1}, \underline{A}^{t-1} = \underline{a}^{t-1})>0$, 
be the non-empty set containing all values of $a^t$ for which 
\begin{flalign*}
&(1)  \quad \quad \quad   a^t \geq  a'^t
\\ &(2) \quad \quad \quad  \pi_{\beta}^t(
 a^t | \underline{x}^{t-1},\underline{a}^{t-1})   
\geq  \mathcal{O}
\end{flalign*}
for some constant $\mathcal{O} > 0$.
We further say that the local positivity assumption holds at $\underline{x}^{t-1}, \underline{a}^{t-1}, a'^t$ if $\mathcal{A}_\textit{adm}^t(
\underline{x}^{t-1}, \underline{a}^{t-1}, 
a'^t)$ exists. 
\end{definition}

\noindent 
The local positivity assumption states that the observed data allows the simulation of a desired action $a'^t$ (i.e. there is positive support for at least one value $a^t$ s.t.  $a^t\geq a'^t$). The local positivity assumption is, however, not enough, which leads to the following definition of regional positivity.

\begin{definition}
\label{def_regional_positivity}
(Regional positivity assumption and regional admissible set $\tilde{\mathcal{A}}_\textit{adm}$ for semi-offline RL).
Let the regional admissible set $\mathcal{\tilde{A}}_\textit{adm}^t(
\underline{x}^{t-1},
\underline{a}^{t-1},
\underline{a}'^{t})
\subseteq 
\mathcal{A}_\textit{adm}^t(
\underline{x}^{t-1},
\underline{a}^{t-1},
a'^{t})$,
defined for all 
$
\underline{x}^{t-1},
\underline{a}^{t-1},
\underline{a}'^{t}
$
s.t. 
$p'(
G_{\underline{a}^{t-1}}{(X_{(1)}}) = \underline{x}^{t-1},
\underline{A}^{t-1} = \underline{a}^{t-1}, 
\underline{A}'^{t} = \underline{a}'^{t}) > 0$, 
be the non-empty set containing all values of $a^t$ such that $\mathcal{\tilde{A}}_\textit{adm}^{t+1}
(
\underline{x}^{t},
\underline{a}^{t},
\underline{a}'^{t+1})$ exists 
for all $
\underline{x}^{t},
\underline{a}^{t},
\underline{a}'^{t+1}$ such that 
the following holds for $ 
a'^{t+1}$, and $x^t$:
\begin{align}
p(
x^{t}|
\underline{x}^{t-1},
\underline{a}^{t}) 
\pi_{\alpha}^t(
a'^{t+1} | 
\underline{x}'^{t-1},
\underline{a}'^{t
})   
> 0
\end{align}
where $\underline{x}'^{t-1}$ denotes the acquired features under $\underline{a}'^{t-1}$ and thus a subset of $\underline{x}^{t-1}$.
We further say that the regional positivity assumption holds at $
\underline{x}^{t-1},
\underline{a}^{t-1},
\underline{a}'^{t}$ if \newline $\mathcal{\tilde{A}}_\textit{adm}^t(
\underline{x}^{t-1},
\underline{a}^{t-1},
\underline{a}'^{t})$ exists. 
\end{definition}

\noindent 
Regional positivity states that there is not only a value $a^t$ with positive support in the observed data that allows the simulation of a desired action $a'^t$ (i.e. local positivity), but it also ensures for such an $a^t$, that the simulations of all possible future desired actions are also possible. 
As regional positivity is still limited to a given datapoint, we also make the following global positivity assumption.

\noindent
\setcounter{myassumption}{6}
\setcounter{subassumption}{2}
\begin{subassumption}[Global positivity assumption for semi-offline RL]
\label{assump:positivity_global_semi_offline_RL}
We say that the global positivity assumption holds if the regional positivity assumption holds for all datapoints $a'^1, x^0$ s.t. 
$p(x^0) \pi_\alpha(a'^{1}|
x^0)>0$.
\end{subassumption}

\noindent
The following lemma (proved in Appendix \ref{app_proof_lemma_comparison_positivity}) establishes that the global positivity assumption for semi-offline RL is weaker than both the positivity assumptions required by the offline RL and the missing data views:

\begin{lemma}
\label{lemma_comparison_positivity}
    (Sufficiency conditions for global positivity). The global positivity assumption for semi-offline RL (Assumption \ref{assump:positivity_global_semi_offline_RL}) holds if the positivity assumption from offline RL (Assumption \ref{assump:positivity_offline_RL}) or from missing data (Assumption \ref{assump:positivity_missing_data}) holds. 
\end{lemma}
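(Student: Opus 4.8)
The plan is to prove the two stated sufficiency conditions separately: first that Assumption~\ref{assump:positivity_offline_RL} (offline RL positivity) implies Assumption~\ref{assump:positivity_global_semi_offline_RL} (global semi-offline positivity), and then that Assumption~\ref{assump:positivity_missing_data} (missing data positivity) does the same. In each case I would exhibit an explicit \emph{witness} retrospective action at every time step and verify, by backward induction on $t$ from $T$ down to $1$, that these witnesses make the regional admissible sets of Definition~\ref{def_regional_positivity} non-empty. The structural observation that makes this tractable is that in the recursive definition of regional positivity the conditioning retrospective trajectory $\underline{a}^{t-1}$ is itself \emph{built up} from the witnesses chosen at earlier steps; hence I never have to supply a witness for an arbitrary retrospective trajectory, only for the one generated by my own construction. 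Global positivity then follows because it merely asks that regional positivity hold at the top level $(x^0, a'^1)$ for every target datapoint with $p(x^0)\pi_\alpha^1(a'^1 \mid x^0) > 0$.

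For the \textit{offline RL} case the witness is the \emph{matched} trajectory $a^t = a'^t$. The key steps are: (i) show that the matched target trajectory is reachable under $\pi_\alpha$, using NDE (and the standing NUC) to identify the policy-intervened feature law $p(\underline{X}^{t-1}_{(\pi_\alpha)} = \cdot)$ with the observational law $p(\underline{X}^{t-1} = \cdot \mid \underline{A}^{t-1} = \underline{a}'^{t-1})$, so that the left-hand hypothesis of Assumption~\ref{assump:positivity_offline_RL} is met; (ii) invoke Assumption~\ref{assump:positivity_offline_RL} to obtain a strictly positive lower bound on $\pi_\beta^t(a'^t \mid \underline{x}^{t-1}, \underline{a}'^{t-1})$, after dividing out the bounded marginal $p(\underline{x}^{t-1}, \underline{a}'^{t-1}) \leq 1$, which together with $a'^t \geq a'^t$ places $a^t = a'^t$ in the local admissible set $\mathcal{A}_\textit{adm}^t$ of Definition~\ref{def_local_positivity}; and (iii) note that with $a^t = a'^t$ the acquired features coincide, $\underline{x}^t = \underline{x}'^t$, so every reachable continuation $(\underline{x}^t, \underline{a}^t = \underline{a}'^t, \underline{a}'^{t+1})$ is again a matched configuration to which the inductive hypothesis at $t+1$ applies. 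This closes the backward induction.

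For the missing data case the witness is the \emph{acquire-all} trajectory $a^t = \vec{1}$. Here condition (1) of Definition~\ref{def_local_positivity}, $\vec{1} \geq a'^t$, is immediate since actions are binary, and condition (2) is exactly the conclusion $\pi_\beta^t(A^t = \vec{1} \mid \underline{X}^{t-1}_{(1)}, \underline{A}^{t-1} = \vec{1}) \geq \mathcal{O}$ supplied by Assumption~\ref{assump:positivity_missing_data}, whose hypothesis $p(\underline{X}^{t-1}_{(1)} = \underline{x}^{t-1}, \underline{A}^{t-1} = \vec{1}) > 0$ I would maintain inductively along the complete-case path. Acquiring everything reveals the full $X_{(1)}^t$, so the conditioning events remain inside the positive-probability complete-case region and the same witness $\vec{1}$ is available at $t+1$; backward induction again concludes.

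The main obstacle I anticipate is not any single inequality but the bookkeeping required to keep all three nested objects --- local, regional, and global positivity --- simultaneously well defined. At each recursion step I must verify that the conditioning event under $p'$ has positive probability (otherwise the admissible set is undefined rather than empty), align the potential-outcome reachability hypotheses of the two positivity assumptions with the $\pi_\alpha$-weighted reachability quantifier appearing inside Definition~\ref{def_regional_positivity}, and track the positive constant $\mathcal{O}$ through the products of transition densities and policy factors (possibly relabeling it between the assumptions). Discharging these conditions cleanly relies throughout on NDE and NUC, which let me equate the policy-intervened feature distributions with their observational counterparts and thereby move freely between hypotheses stated in terms of $\pi_\alpha$ and those stated in terms of $\pi_\beta$.
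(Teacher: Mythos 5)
Your proposal is correct and follows essentially the same route as the paper's proof in Appendix F: split into the two sufficiency claims, exhibit the matched witness $a^t = a'^t$ for the offline RL case and the acquire-all witness $a^t = \vec{1}$ for the missing data case, verify local admissibility from the respective positivity assumption, and propagate the witness recursively through the time steps to establish regional and hence global positivity. The only differences are presentational (the paper iterates forward from $t=1$ while you phrase the recursion as backward induction, and you flag the reachability-quantifier alignment via NDE/NUC more explicitly than the paper does), not substantive.
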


\noindent
After having defined positivity for semi-offline RL, we can now perform identification for $J$: 

\begin{theorem}
\label{theorem_identification_semi_offline_RL} 
(Identification of $J$ for the semi-offline RL view).
The reformulated AFAPE problem of estimating $J$ under the semi-offline RL view  (Eq. \ref{eq:AFAPE_objective_semi_offline_RL}) is under Assumption \ref{assump:measurement_noise} (no measurement noise),  Assumption \ref{assump:consistency} (consistency),  Assumption \ref{assump:interference} (no interference), Assumption \ref{assump:nde} (NDE), Assumption \ref{assump:nuc} (NUC) and Assumption \ref{assump:positivity_global_semi_offline_RL} (global positivity)
identified by
\begin{align}
\label{eq_identificiation_semi_offline_RL_1}
    J  
     = 
    \mathbb{E}_{p'}[C'_{(\pi_\alpha)}]
     = 
    \sum_{A',A,G_A(X_{(1)}),Y} 
    f_C(             
A', 
X',        
Y) 
q'(            
A',         
A, 
X, 
Y) 
\end{align}
\noindent
with the distribution
\begin{align}
\label{eq_identificiation_semi_offline_RL_2}
q'(
A',A ,   
X, 
Y)
& = 
\prod_{t=1}^{T}  
\underbrace{\pi_{\text{id}}^t(
A^{t}| 
\underline{A}'^{t},
\underline{X}^{t-1}, 
\underline{A}^{t-1})}_{\text{distr. subject to constraints}}
\underbrace{
\pi_{\alpha}^t(A'^t|            
\underline{X}'^{t-1},            
\underline{A}'^{t-1})
}_{\text{target policy}}  
\prod_{t=0}^{T}
p(
X^{t}| 
\underline{X}^{t-1}, 
\underline{A}^{t}, Y) 
p(Y)
\end{align}

\noindent 
where 
\begin{flalign}
\label{eq_pi_id}
\pi_{id}^t(
A^t|  
\underline{A}'^t,&
\underline{X}^{t-1},
\underline{A}^{t-1}) 
=
\underbrace{\mathbb{I}(A^t \in 
\mathcal{
\tilde{A}}_\textit{adm}^t( 
\underline{X}^{t-1},
\underline{A}^{t-1}, 
\underline{A}'^t
))}
_{\text{support restriction}}
f_{id}^t(
\underline{A}'^t,
\underline{X}^{t-1},
\underline{A}^{t-1}
) 
\end{flalign}
for any function $f_{id}^t$
s.t. $\pi_{id}^t$ is a valid density.
\end{theorem}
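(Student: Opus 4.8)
The plan is to begin not from the counterfactual $\mathbb{E}_{p'}[C'_{(\pi_\alpha)}]$ directly but from the missing-data reformulation of $J$, since Theorem~\ref{theorem_problem_semi_offline_RL} already equates the two. Thus I would start from (cf.\ Eqs.~\ref{eq:AFAPE_objective_miss} and~\ref{eq_full_data_if})
\begin{align*}
J = \sum_{X_{(1)},Y} \sum_{a \in \mathcal{A}} f_C(a, G_a(X_{(1)}), Y)\, \pi_\alpha(a | G_a(X_{(1)}))\, p(X_{(1)}, Y),
\end{align*}
relabel the target trajectory as $A'$, and factorize both $\pi_\alpha$ and $p(X_{(1)},Y)$ sequentially in $t$. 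The objective is to rewrite this so that only the observed (acquired) features $X = G_A(X_{(1)})$ and known quantities appear, yielding the claimed factorization of $q'$ in Eq.~\ref{eq_identificiation_semi_offline_RL_2}.

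The central observation is that for any admissible retrospective trajectory $a \geq a'$ the simulated observation $X' = G_{a'}(X_{(1)})$ is a subset of the observed $X = G_a(X_{(1)})$, so the cost $f_C(a', X', Y)$ is computable from observed data and is constant in $a$ beyond the constraint $a \geq a'$. I would therefore insert, at each time step, a sum over a retrospective action $A^t$ weighted by any valid density $\pi_{id}^t$ supported on the regional admissible set $\tilde{\mathcal{A}}_\textit{adm}^t$; since the summand does not depend on $a^t$ and $\pi_{id}^t$ integrates to one, this insertion leaves the value unchanged. The NDE and NUC assumptions (Assumptions~\ref{assump:nde} and~\ref{assump:nuc}) are then invoked to pass from the counterfactual feature law to its observational counterpart: NDE ensures the underlying value $X_{(1)}^t$ is unaffected by the acquisition, while NUC ($A^t \indep \underline{X}_{(1)}^{t-1} \mid \underline{X}^{t-1}, \underline{A}^{t-1}$) removes selection bias, so that $p(X^t \mid \underline{X}^{t-1}, \underline{A}^t, Y)$ is invariant to \emph{how} $A^t$ was chosen within the admissible set. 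This invariance is exactly what licenses replacing the retrospective propensity $\pi_\beta^t$ by the free density $\pi_{id}^t$.

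The argument is then carried out as a forward decomposition in $t$, peeling off one time step at a time and matching the emerging factors to $q'$. The main obstacle is the temporal bookkeeping: after committing to a retrospective action $A^t \in \tilde{\mathcal{A}}_\textit{adm}^t$, one must guarantee that \emph{every} future target action $A'^{t+1},\dots,A'^T$ that $\pi_\alpha$ can still demand remains simulatable, i.e.\ that $\tilde{\mathcal{A}}_\textit{adm}^{t+1}$ stays non-empty along all reachable continuations. This is precisely the content of the regional positivity assumption (Definition~\ref{def_regional_positivity}), whose recursive definition bakes in future admissibility, while the global positivity assumption (Assumption~\ref{assump:positivity_global_semi_offline_RL}) seeds the decomposition at $t=1$; local positivity (Definition~\ref{def_local_positivity}) alone would not suffice, as it secures only the current step. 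Two secondary checks complete the proof: that the non-emptiness built into $\tilde{\mathcal{A}}_\textit{adm}^t$ makes $\pi_{id}^t$ a genuine density so the insertion step is legitimate, and that the final value is invariant to the particular choice of $\pi_{id}^t$, confirming the ``for any function $f_{id}^t$'' clause of the statement.
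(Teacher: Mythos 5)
Your route is essentially correct but genuinely different from the paper's. The paper proves Theorem~\ref{theorem_identification_semi_offline_RL} by working directly on the simulated distribution $p'$: it factorizes the counterfactual $p'(C'_{(\overline{\pi}^{t+1}_\alpha)})$ one time step at a time, at each step introducing the intervention $\pi_\alpha^{t+1}$ on $A'^{t+1}$ and an auxiliary intervention $\pi_{id}^{t+1}$ on $A^{t+1}$, and justifying each swap by exchangeability (from NUC), consistency, and a conditional independence $C'_{(\overline{\pi}^{t+2}_\alpha)} \indep A^{t+1} \mid \underline{A}'^t, a'^{t+1}, \underline{X}^t, \underline{A}^t, \Xi$; the support restrictions on $\pi_{id}$ and the local/regional/global positivity hierarchy then fall out of comparing this counterfactual factorization with the observational factorization of $p'$ (which contains $\pi'_{\textit{sim}}$ and $\pi_\beta$), and the two halves of each step are exactly the semi-offline Bellman equation of Theorem~\ref{theorem_Bellman_equation}. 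You instead start from the full-data functional in Eq.~\ref{eq_full_data_if}, insert $\pi_{id}^t$ as a unity-summing density over the admissible set, and invoke MAR-type invariance of $p(X^t\mid \underline{X}^{t-1},\underline{A}^t,Y)$ to collapse $p(X_{(1)},Y)$ onto the observed coordinates. That is a legitimate alternative: it avoids potential-outcome manipulations on $p'$ altogether and makes the ``for any $f_{id}^t$'' clause nearly immediate, at the price of (i) passing through the non-identified object $p(X_{(1)},Y)$ as an intermediate representation, (ii) not producing the Bellman recursion or the explicit origin of each positivity restriction as byproducts, and (iii) requiring more care than your sketch states at the insertion step --- since $\pi_{id}^t$ and the later $\pi_\alpha^\tau$ factors depend on $\underline{X}^{t-1}$ and $\underline{X}'^{\tau-1}$, the ``multiply by one then marginalize the unacquired coordinates'' argument must be carried out as an ordered (backward) induction interleaving the sums over $X^t$ and $A^t$, which is precisely where NUC is consumed. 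You flag this bookkeeping and the role of regional versus local positivity correctly, so I regard the proposal as a sound alternative outline rather than a gap.
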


\noindent 
Here, we used again $X \equiv G_A(X_{(1)})$ and $X' \equiv G_{A'}(X_{(1)})$ to simplify notation. In fact, since $A' \leq A$, we can also write $X' =  G_{A'}(X)$.
Additionally, we can define semi-offline RL versions of the Bellman equation:

\begin{theorem}
\label{theorem_Bellman_equation} 
(Bellman equation for semi-offline RL). 
The semi-offline RL view admits under 
Assumption \ref{assump:measurement_noise} (no measurement noise),  Assumption \ref{assump:consistency} (consistency),  Assumption \ref{assump:interference} (no interference), Assumption \ref{assump:nde} (NDE), Assumption \ref{assump:nuc} (NUC) and the local positivity assumption at datapoint $\underline{x}^{t-1},\underline{a}^{t-1},a'^t$ (from Definition \ref{def_local_positivity}), the following semi-offline RL version of the Bellman equation:   
\begin{align}
\label{eq_bellman_1}
Q_{\textit{Semi}}( &
\underline{A}'^{t},         
\underline{X}^{t-1},
\underline{A}^{t-1}, 
\Xi) 
 = 
\sum_{\mathclap{X^{t}}}                
V_{\textit{Semi}}
(    
\underline{A}'^{t}, 
\underline{X}^{t},
\underline{A}^{t-1}, 
A^{t}=a^t, 
\Xi) 
p(
X^{t}
|                   
\underline{X}^{t-1},
\underline{A}^{t-1}, 
A^t = a^t, 
\Xi)
\\
& \text{ for any }
a^t 
\in \mathcal{A}^t_\textit{adm}
(          
\underline{X}^{t-1}, 
\underline{A}^{t-1},
A'^t )  
\nonumber 
\\
 V_{\textit{Semi}}&
(
\underline{A}'^{t},
\underline{X}^{t},
\underline{A}^{t}, 
\Xi)  
=
\sum_{A'^{t+1}}  
Q_{\textit{Semi}}(
\underline{A}'^{t+1},    
\underline{X}^{t},
\underline{A}^{t}, 
\Xi
)
\pi_{\alpha}^{t+1}(
A'^{t+1}|
\underline{X}'^{t},
\underline{A}'^{t})
\label{eq_bellman_2}
\end{align}
with semi-offline RL versions of the state-action value function $Q_{\textit{Semi}}$ and state value function $V_{\textit{Semi}}$:
\begin{align*}
 Q_{\textit{Semi}}^t 
 & \equiv 
 Q_{\textit{Semi}}(
\underline{A}'^{t},
\underline{X}^{t-1},
\underline{A}^{t-1}, 
\Xi) 
\equiv  
\mathbb{E}_{p'}[
C'_{(\overline{\pi}^{t+1}_\alpha)}|
\underline{A}'^{t},
\underline{X}^{t-1},
\underline{A}^{t-1}, 
\Xi]  
\\
V_{\textit{Semi}}^t
& \equiv 
V_{\textit{Semi}}(
\underline{A}'^{t},
\underline{X}^{t},
\underline{A}^{t}, 
\Xi) 
\equiv 
\mathbb{E}_{p'}[C'_{(\overline{\pi}^{t+1}_\alpha)}|
\underline{A}'^{t},
\underline{X}^t,
\underline{A}^{t},
\Xi]
\end{align*}
where $C'_{(\overline{\pi}^{t+1}_\alpha)}$ denotes the potential outcome of $C'$ under interventions from time step $t+1$ onwards. 
$\Xi \subseteq \{ Y, O\}$, with $O$ denoting all features that are always available, denotes an optional subset of additional variables that can be conditioned on. 
Furthermore, $Q_{\textit{Semi}}^t$ and $V_{\textit{Semi}}^t$ are identified if the regional positivity assumption (from Definition \ref{def_regional_positivity}) holds at 
$
\underline{X}^{t-1},
\underline{A}^{t-1}, 
\underline{A}'^{t}$ and $a^t \in \mathcal{\tilde{A}}_\textit{adm}^t(
\underline{X}^{t-1},
\underline{A}^{t-1}, 
\underline{A}'^{t})$.
\end{theorem}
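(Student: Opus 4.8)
The plan is to establish the two recursions in Eqs.~\ref{eq_bellman_1}--\ref{eq_bellman_2} separately by the law of iterated expectations, using the causal structure to control which conditioning variables the relevant potential outcome actually depends on, and then to obtain identification by a backward induction on $t$ that closes exactly when regional positivity holds. Throughout, the two ingredients I would lean on are consistency (Assumption~\ref{assump:consistency}), which lets me replace a $\pi_\alpha$-drawn action by a conditioning event, and NDE together with NUC (Assumptions~\ref{assump:nde} and \ref{assump:nuc}), which render the conditional law of the counterfactual features invariant to the observed acquisitions.

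I would begin with the $V$-to-$Q$ step (Eq.~\ref{eq_bellman_2}), which is the more standard of the two. Starting from $V_{\textit{Semi}}^t = \mathbb{E}_{p'}[C'_{(\overline{\pi}^{t+1}_\alpha)} \mid \underline{A}'^t, \underline{X}^t, \underline{A}^t, \Xi]$, I would condition on the next simulated action $A'^{t+1}$. Under the intervention $\overline{\pi}^{t+1}_\alpha$ this action is drawn from $\pi_\alpha^{t+1}(A'^{t+1}\mid \underline{X}'^t, \underline{A}'^t)$, and since $A'^t \le A^t$ gives $\underline{X}'^t = G_{\underline{A}'^t}(\underline{X}^t)$, this sampling distribution is a function of the conditioning set only. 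Iterated expectation then splits $V_{\textit{Semi}}^t$ into a sum over $A'^{t+1}$ weighted by $\pi_\alpha^{t+1}$, and the key observation is that once $A'^{t+1}$ is fixed to its $\pi_\alpha$-drawn value, the remaining intervention reduces from $\overline{\pi}^{t+1}_\alpha$ to $\overline{\pi}^{t+2}_\alpha$; by consistency the conditional expectation of $C'_{(\overline{\pi}^{t+1}_\alpha)}$ given $\underline{A}'^{t+1}$ equals that of $C'_{(\overline{\pi}^{t+2}_\alpha)}$, which is precisely $Q_{\textit{Semi}}^{t+1}$, yielding Eq.~\ref{eq_bellman_2}.

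For the $Q$-to-$V$ step (Eq.~\ref{eq_bellman_1}) I would again apply iterated expectations, this time introducing the pair $(X^t, A^t)$ into the conditioning set of $Q_{\textit{Semi}}^t$, turning it into $\sum_{X^t, A^t} V_{\textit{Semi}}(\underline{A}'^t, \underline{X}^t, \underline{A}^t, \Xi)\, p'(X^t, A^t \mid \underline{A}'^t, \underline{X}^{t-1}, \underline{A}^{t-1}, \Xi)$. The crux is to show that $V_{\textit{Semi}}^t$ is invariant to the observed action $A^t$ once $\underline{X}^t$ and $\underline{A}'^t$ are fixed: because $C'$ is a function of $A'$ and $X' = G_{A'}(X_{(1)})$ with $A' \le A$, the observed action $A^t$ enters only through how much of $X_{(1)}^t$ it reveals, which is already pinned down by conditioning on $X^t$, and under NDE and NUC the conditional law of the not-yet-revealed and future counterfactual feature values given $\underline{X}^t$ does not depend on $A^t$. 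Having established this invariance, the marginalization over $A^t$ collapses: I may fix $A^t$ to any single admissible value $a^t \in \mathcal{A}_\textit{adm}^t(\underline{X}^{t-1}, \underline{A}^{t-1}, A'^t)$, which is non-empty precisely by the local positivity assumption, and what remains is the marginalization over $X^t$ against the transition $p(X^t \mid \underline{X}^{t-1}, \underline{A}^{t-1}, A^t = a^t, \Xi)$. This $A^t$-invariance is the step I expect to be the main obstacle: it requires cleanly separating the deterministic measurement map $X^t = G_{A^t}(X_{(1)}^t)$ from the probabilistic feature dynamics and then invoking NDE and NUC to argue that conditioning on the revealed $X^t$ makes $C'_{(\overline{\pi}^{t+1}_\alpha)}$ independent of $A^t$, all while keeping careful track of which intervention ($\overline{\pi}^{t+1}_\alpha$ versus $\overline{\pi}^{t+2}_\alpha$) is active at each stage.

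Finally, for identification I would run a backward induction on $t$. At $t = T$ the value function is the known deterministic cost $f_C(A', X', Y)$ evaluated on observed quantities, so it is trivially identified. For the inductive step, each application of Eqs.~\ref{eq_bellman_2} and \ref{eq_bellman_1} expresses $Q_{\textit{Semi}}^t$ and $V_{\textit{Semi}}^t$ through the known target policy $\pi_\alpha$, an observed-data transition kernel $p(X^t \mid \underline{X}^{t-1}, \underline{A}^{t-1}, A^t=a^t, \Xi)$, and the already-identified value functions at $t+1$; this kernel is estimable from $\mathcal{D}$ only for actions $a^t$ with $\pi_\beta^t(a^t \mid \cdots) \ge \mathcal{O} > 0$. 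Regional positivity (Definition~\ref{def_regional_positivity}) is exactly the condition guaranteeing that, at every point reachable under this recursion, such an admissible $a^t$ exists whose continuation again admits the full set of $\pi_\alpha$-reachable future actions, so the induction never stalls and the value functions end up expressed entirely in observed-data functionals.
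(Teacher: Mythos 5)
Your proposal is correct and follows essentially the same route as the paper's proof in Appendix G: both halves of the recursion come from iterated expectations combined with consistency and NUC-based exchangeability to turn the $\pi_\alpha$-intervention on $A'^{t+1}$ into conditioning; the freedom to pick any admissible $a^t$ rests on the conditional independence $C'_{(\overline{\pi}^{t+1}_\alpha)} \indep A^t \mid \underline{A}'^t, \underline{X}^{t-1}, \underline{A}^{t-1}, \Xi$ (which the paper formalizes by intervening with an auxiliary distribution $\pi_{id}$ supported on $\mathcal{A}^t_{\textit{adm}}$, whereas you marginalize over $A^t \sim \pi_\beta$ and collapse by invariance); and identification closes via exactly the recursive regional-positivity bookkeeping you describe. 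One caution: your invariance claim should be stated at the level of $(\underline{X}^{t-1}, \underline{A}^{t-1})$, i.e.\ as the conditional independence above, rather than ``for fixed $\underline{X}^t$,'' since different admissible $a^t$ reveal different components of $X^t_{(1)}$ and hence yield incomparable realizations of $X^t$ -- though the displayed formula you arrive at is the correct one.
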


\noindent 
The proofs for Theorems \ref{theorem_identification_semi_offline_RL} 
and \ref{theorem_Bellman_equation} 
are shown in Appendix \ref{app_theorem_identification}.

The functions $Q_{\textit{Semi}}$ and $V_{\textit{Semi}}$ are very similar to their counterparts from the offline RL view ($Q_{\textit{Off}}$ and $V_{\textit{Off}}$), with two differences: i) they are learned from a curated dataset $\mathcal{D'}$ which arises from sampling $p'$
; and ii) they contain the simulated acquisitions $
\underline{A}'^{t}$, but also the real features and actions $\underline{X}^t,
\underline{A}^{t}$ which are needed to adjust for confounding of the blocking operation. 

The identification steps so far have been very specific to knowledge about $\mathcal{\tilde{A}}_\textit{adm}^t$ that needs to be assessed from the data. 
We now look more closely at a specific, stronger positivity assumption (where $\mathcal{A}_\textit{adm} = \mathcal{\tilde{A}}_\textit{adm}$), that allows the use of the maximum amount of datapoints and therefore leads to easier to use estimators.

\begin{definition}
\label{def_maximal_regional_positivity}
(Maximal regional positivity assumption for semi-offline RL).
We say that the maximal regional positivity assumption holds for a datapoint 
$
\underline{x}^{t-1},
\underline{a}^{t-1}, 
\underline{a}'^{t}$
if 
$\mathcal{\tilde{A}}_\textit{adm}^t(
\underline{x}^{t-1},
\underline{a}^{t-1}, 
\underline{a}'^{t})
=  
\mathcal{A}^t_\textit{adm}(
\underline{x}^{t-1},
\underline{a}^{t-1},
a'^{t})$ 
and the maximal regional positivity assumption further holds for all 
$
\underline{x}^{t},
\underline{a}^{t}, 
\underline{a}'^{t+1}$ such that $a^{t} \in \mathcal{A}^t_\textit{adm}(
\underline{x}^{t-1},
\underline{a}^{t-1},
a'^{t})$ and the following holds for $ 
a'^{t+1}$, and $x^t$: 
\begin{align*}
p(x^{t}|
\underline{x}^{t-1},
\underline{a}^{t}) 
\pi_{\alpha}(
a'^{t+1} | \underline{x}'^{t
},\underline{a}'^{t})   
> 0.
\end{align*}
\end{definition}

\noindent
\setcounter{myassumption}{6}
\setcounter{subassumption}{3}
\begin{subassumption}[Maximal global positivity assumption for semi-offline RL]
\label{assump:max_positivity_global_semi_offline_RL}
We say that the maximal global positivity assumption holds if the maximal regional positivity assumption holds for all datapoints $ x^0, a'^1$ s.t. 
$p(x^0)\pi_\alpha(a'^{1}|
\underline{x}'^{0})>0$.
\end{subassumption}

\noindent 
The maximal regional positivity and maximal global positivity assumptions ensure that we can use all available data points where $A^t \geq A'^t$ without running into positivity problems in later time steps. This makes the identification and estimation  significantly  easier as shown next.

We can now propose the following corollary of Theorem \ref{theorem_identification_semi_offline_RL}, which states identificiation under the maximal global positivity assumption. 

\begin{corollary}
\label{corollary_identification_semi_offline_RL} 
(Identification of $J$ for the semi-offline RL view under maximal global positivity).
The reformulated AFAPE problem of estimating $J$ under the semi-offline RL view  (Eq. \ref{eq:AFAPE_objective_semi_offline_RL}) is under
Assumption \ref{assump:measurement_noise} (no measurement noise),  Assumption \ref{assump:consistency} (consistency),  Assumption \ref{assump:interference} (no interference), Assumption \ref{assump:nde} (NDE), Assumption \ref{assump:nuc} (NUC) and Assumption \ref{assump:max_positivity_global_semi_offline_RL} (maximum global positivity)
identified
by Eqs. \ref{eq_identificiation_semi_offline_RL_1} and \ref{eq_identificiation_semi_offline_RL_2} where 
\begin{align*}
\pi_{id}^t(A^t|
\underline{A}'^{t-1},
A'^t = a'^t,&
\underline{X}^{t-1},
\underline{A}^{t-1}
) 
 = 
\mathbb{I}(A^t \geq a'^t)
\pi_\beta^t(
A^t
|
\underline{X}^{t-1}, 
\underline{A}^{t-1}
)  
f_{id}^t(
\underline{A}'^{t-1},
A'^t = a'^t,
\underline{X}^{t-1},
\underline{A}^{t-1}
) 
\end{align*}
for any function $f_{id}^t$
s.t. $\pi_{id}^t$ is a valid density. This holds in particular for the choice of a truncated $\pi_\beta$: 
\begin{align*}
\pi_{id}^t(A^t|
\underline{A}'^{t-1},
A'^t = a'^t,
\underline{X}^{t-1},
\underline{A}^{t-1}
) 
& = 
\pi_\beta^t(A^t
|
A^t \geq a'^t, 
\underline{X}^{t-1},
\underline{A}^{t-1}
)
=\\& = 
\frac{\mathbb{I}(A^t \geq a'^t)
\pi_\beta^t(A^t
|
\underline{X}^{t-1},
\underline{A}^{t-1}
)}{
\pi_\beta^t(A^t\geq a'^t
|
\underline{X}^{t-1},
\underline{A}^{t-1}
)}.
\end{align*}
\end{corollary}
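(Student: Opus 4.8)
The plan is to obtain the corollary as a specialization of Theorem~\ref{theorem_identification_semi_offline_RL} to the stronger positivity regime, and then to check that the two displayed choices of $\pi_\textit{id}^t$ are admissible in that regime. Theorem~\ref{theorem_identification_semi_offline_RL} already establishes that Eqs.~\ref{eq_identificiation_semi_offline_RL_1}--\ref{eq_identificiation_semi_offline_RL_2} identify $J$ whenever $\pi_\textit{id}^t$ is a valid conditional density supported inside the regional admissible set $\mathcal{\tilde{A}}_\textit{adm}^t$. So the whole task reduces to (i) collapsing $\mathcal{\tilde{A}}_\textit{adm}^t$ to the local set $\mathcal{A}_\textit{adm}^t$, and (ii) verifying that the proposed $\pi_\beta$-shaped densities live inside that set and still produce the correct $J$.

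First I would invoke Assumption~\ref{assump:max_positivity_global_semi_offline_RL} together with Definition~\ref{def_maximal_regional_positivity}: by construction the maximal regional positivity assumption forces $\mathcal{\tilde{A}}_\textit{adm}^t(\underline{x}^{t-1},\underline{a}^{t-1},\underline{a}'^t) = \mathcal{A}_\textit{adm}^t(\underline{x}^{t-1},\underline{a}^{t-1},a'^t)$ at every datapoint reachable under $\pi_\alpha$. Substituting this equality into Eq.~\ref{eq_pi_id} turns the support restriction $\mathbb{I}(A^t \in \mathcal{\tilde{A}}_\textit{adm}^t)$ into $\mathbb{I}(A^t \in \mathcal{A}_\textit{adm}^t)$, which by Definition~\ref{def_local_positivity} is exactly $\mathbb{I}(A^t \ge a'^t)\,\mathbb{I}(\pi_\beta^t(A^t\mid\underline{x}^{t-1},\underline{a}^{t-1})\ge\mathcal{O})$. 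For a finite action space the threshold $\mathcal{O}$ can be taken as the smallest non-zero value of $\pi_\beta^t$, so the second indicator is equivalent to $\mathbb{I}(\pi_\beta^t>0)$ and can be absorbed into the factor $\pi_\beta^t(A^t\mid\ldots)$ itself; this is what lets me rewrite the support restriction in the product form $\mathbb{I}(A^t\ge a'^t)\,\pi_\beta^t(A^t\mid\ldots)\,f_\textit{id}^t$ claimed in the corollary.

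The substantive step is to justify that weighting the covering actions by $\pi_\beta^t$, rather than by the $A^t$-independent weight of Theorem~\ref{theorem_identification_semi_offline_RL}, leaves $J$ unchanged. Here I would prove the underlying invariance directly: for a fixed simulated trajectory $A'$ and fixed revealed features $X'=G_{A'}(X_{(1)})$, the summand $f_C(A',X',Y)$ does not depend on the observed action $A$, so the only $A$-dependence in $q'$ sits in $\prod_t \pi_\textit{id}^t$ and in the feature factors $p(X^t\mid \underline{X}^{t-1},\underline{A}^t,Y)$. Summing over every covering $A\ge A'$ and over the coordinates of $X$ not revealed by $A'$ marginalizes those extra coordinates out, and under Assumptions~\ref{assump:nde} and~\ref{assump:nuc} (the MAR structure) the resulting law of the $A'$-coordinates given $A\ge A'$ and $Y$ is independent of which extra features were acquired. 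Since $\prod_t\pi_\textit{id}^t$ integrates to one over $\{A\ge A'\}$ for \emph{any} valid within-support density, the covering action integrates out and $J$ is recovered regardless of the within-support weighting (made rigorous by a backward induction over $t$ using the semi-offline Bellman recursion of Theorem~\ref{theorem_Bellman_equation}). This is the main obstacle, since it is precisely where the corollary moves beyond the $A^t$-independent family of Theorem~\ref{theorem_identification_semi_offline_RL}; the maximality of the positivity assumption is essential, because it is what guarantees that $\pi_\beta$ never places weight on an action that is locally but not regionally admissible.

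Finally, for the displayed truncated-$\pi_\beta$ choice I would verify normalization: requiring $\pi_\textit{id}^t$ to be a valid density over $\{A^t\ge a'^t\}$ fixes the still $A^t$-independent factor to $f_\textit{id}^t = 1/\sum_{b\ge a'^t}\pi_\beta^t(b\mid\underline{x}^{t-1},\underline{a}^{t-1}) = 1/\pi_\beta^t(A^t\ge a'^t\mid\ldots)$, which yields $\pi_\textit{id}^t = \pi_\beta^t(A^t\mid A^t\ge a'^t,\ldots)$. Substituting back into Eq.~\ref{eq_identificiation_semi_offline_RL_2} I would note the convenient collapse $q'(A',A,X,Y) = p(A,X,Y)\prod_{t=1}^T \frac{\mathbb{I}(A^t\ge a'^t)\,\pi_\alpha^t(A'^t\mid\ldots)}{\pi_\beta^t(A^t\ge a'^t\mid\ldots)}$, using that $\prod_t\pi_\beta^t\prod_t p(X^t\mid\ldots)\,p(Y)$ is exactly the observed-data factorization under NUC. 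This exhibits the identification as an IPW reweighting of the observed law and confirms the corollary.
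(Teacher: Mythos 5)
Your proof is correct and follows essentially the same route as the paper's: Appendix \ref{app_proof_corollary_identification_semi_offline_RL} likewise starts from Theorem \ref{theorem_identification_semi_offline_RL}, uses Assumption \ref{assump:max_positivity_global_semi_offline_RL} to replace $\mathcal{\tilde{A}}_\textit{adm}^t$ by $\mathcal{A}_\textit{adm}^t$ in Eq. \ref{eq_pi_id}, and then rewrites the indicator $\mathbb{I}(A^t \in \mathcal{A}_\textit{adm}^t)$ as $\mathbb{I}(A^t \geq a'^t)\,\pi_\beta^t(A^t\mid\cdot)\,f_{id}^{*t}$, with the truncated-$\pi_\beta$ choice obtained by normalization exactly as you do. The one place you go beyond the paper is your ``substantive step'': you notice that the resulting $\pi_{id}^t$ is proportional to $\pi_\beta^t$ on its support and hence is not literally of the form $\mathbb{I}(\cdot)\times(\text{an }A^t\text{-free factor})$ displayed in Eq. \ref{eq_pi_id}, and you supply a backward-induction argument that the identification is invariant to the within-support weighting. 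The paper's proof elides this and simply writes the rewriting as an equality that absorbs $\pi_\beta^t$ into an arbitrary factor; the justification it implicitly leans on is that the derivation of Theorem \ref{theorem_identification_semi_offline_RL} in Appendix \ref{app_theorem_identification} treats $\pi_{id}^t$ as an arbitrary valid density subject only to the support restrictions (the conditional $p'(C'_{(\overline{\pi}^{t+1}_\alpha)}\mid\ldots,a^t,\Xi)$ is constant over admissible $a^t$, so any normalized within-support weighting integrates out), which is exactly the invariance you prove. Your extra step is therefore sound and, if anything, makes explicit a point the paper's one-line manipulation leaves implicit; you are also right that maximality is what licenses replacing the regional by the local admissible set so that $\pi_\beta$ never weights a locally but not regionally admissible action, and your closing normalization and IPW-collapse observations match Corollary \ref{corollary_identification_semi_offline_RL} and Eq. \ref{eq_ipw_semi-offline_RL_weights_2}.
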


\noindent
The proof for Corollary \ref{corollary_identification_semi_offline_RL} 
is shown in Appendix \ref{app_proof_corollary_identification_semi_offline_RL}.

Lastly, we also provide the following Remark that brings the factorization of the observational distribution $p'$ from Eq. \ref{eq:semi_offline_sampling_distribution} into a comparable form to the identifying distribution $q'$. 

\begin{myremark}[Factorization of the semi-offline sampling distribution]
\label{remark_semi_offline_RL_observational}
The semi-offline sampling distribution $p'$ can be alternatively written as: 
\begin{flalign}
\label{eq:semi_offline_sampling_distribution_v2}
    p'(A', G_A(X_{(1)}), Y, A)
     & =
    \prod_{t=1}^T \pi_\textit{sim}^{\prime t}(A'^t| G_{\underline{A}^{\prime t-1}}(X_{(1)}), \underline{A}^{\prime t-1}, A^t) 
    p(A,G_A(X_{(1)}), Y)
    \nonumber
    \\ & 
    =
    \prod_{t=1}^{T}  
    \underbrace{
    \pi_{sim}'^t(
    A'^{t} | 
    \underline{X}'^{t-1},
    \underline{A}'^{t-1}, 
    A^{t})
    }_{\text{known simulation policy}}
    \underbrace{
    \pi_{\beta}(
    A^{t} | 
    \underline{X}^{t-1},
    \underline{A}^{t-1})
    }_{\text{retro. acquisition policy}}
    \prod_{t=0}^{T}
    p(X^t|                 
    \underline{X}^{t-1},        
    \underline{A}^{t}, Y ) 
    p(Y) 
\end{flalign} 
\end{myremark}

\subsection{Estimation}
\label{sec_semi_offline_RL_estimation}
We propose the following novel estimators for $J$ which arise from the semi-offline RL viewpoint. 
We differentiate between estimators derived under the global positivity assumption (Assumption \ref{assump:positivity_global_semi_offline_RL}) and under the (stronger) maximal global positivity assumption (Assumption \ref{assump:max_positivity_global_semi_offline_RL}).

\vspace{10pt}
\noindent 
\textit{1) Inverse probability weighting (IPW):} 

\noindent 
The target cost that is estimated by the semi-offline IPW estimator is
\begin{align} 
\label{eq:semi-off-ipw_pi_id}
    \hat{J}_{\textit{IPW-Semi}}
    & = 
    \hat{\mathbb{E}}_n\left[
    \hat{\mathbb{E}}_{n'}
    \left[ 
    \hat{\rho}_{
    \textit{Semi}}^T
    \text{ } 
    C'
    \big| 
    A, X,Y
    \right]
    \right]. 
\end{align} 
\noindent
The inner expectation, denoted by $\hat{\mathbb{E}}_{n'}[.]$, represents the empirical average over the simulated values of $A'$, which can involve many more samples compared to the outer expectation, taken over the observed data.
The inverse probability weights are under the global positivity assumption:  
 \begin{align}
\label{eq_ipw_semi-offline_RL_weights_1}
    \hat{\rho}^T_{\textit{Semi}} = 
    \hat{\rho}^T_{\textit{Semi}}(\pi_{id}) & = 
    \prod_{t=1}^T 
    \frac{
    \pi_\alpha^t(A'^{t}| \underline{X}'^{t-1}, \underline{A}'^{t-1}) 
    }
    {
    \pi'^t_{\textit{sim}}(
    A'^{t}| 
    \underline{X}'^{t-1}, \underline{A}'^{t-1},
    A^t
    )
    }
    \frac{
    \pi_{id}^t(
    A^{t } | 
    \underline{A}'^{t},
    \underline{X}^{t-1},
    \underline{A}^{t-1})
    }
    {
    \hat{\pi}_\beta^t(
    A^t| 
    \underline{X}^{t-1},
    \underline{A}^{t-1}
    )
    }
\end{align}
or under the maximal global positivity assumption (by choosing 
\newline 
$\pi_{id}^t = \pi_\beta^t(A^t
|
A^t \geq a'^t, 
\underline{X}^{t-1},
\underline{A}^{t-1}
)$: 
\begin{align} 
\label{eq_ipw_semi-offline_RL_weights_2}
    \hat{\rho}^T_{\textit{Semi}} & = 
    \prod_{t=1}^T 
    \frac{
    \pi_\alpha^t(A'^{t}| \underline{X}'^{t-1}, \underline{A}'^{t-1}) 
    }
    {
    \pi'^t_{\textit{sim}}(
    A'^{t}| \underline{X}'^{t-1}, \underline{A}'^{t-1},
    A^t
    )
    }
    \frac{
    \mathbb{I}(A^t \geq a'^t)
    }
    {
    \hat{\pi}_\beta^t(
    A^t \geq a'^t| 
    \underline{X}^{t-1},
    \underline{A}^{t-1}
    )
    }.
\end{align}

\noindent 
The following remarks state that the IPW estimators from the offline RL and missing data viewpoints are special cases of the proposed estimator: 

\begin{myremark}
    [Offline RL IPW estimator as a special version of the semi-offline RL IPW estimator] 
    The IPW estimator from the offline RL view, $\hat{J}_{\textit{IPW-Off}}$, is, for the choice $\pi'_{\textit{sim}} = \pi'_\alpha$, equal to $\hat{J}_{\textit{IPW-Semi}}$ with: 
    \begin{align*}
    \pi_{id}^t(
    A^{t} | 
    \underline{A}'^{t-1},A'^t = a'^t,
    \underline{X}^{t-1},
    \underline{A}^{t-1})
    = \mathbb{I}(A^t = a'^t).
    \end{align*} 
\end{myremark}

\begin{myremark}
    [Missing data IPW estimator as a special version of the semi-offline RL IPW estimator]
    The IPW estimator from the missing data view, $\hat{J}_{\textit{IPW-Miss}}$, is, for the choice $\pi'_{\textit{sim}} = \pi'_\alpha$, equal to $\hat{J}_{\textit{IPW-Semi}}$ with: 
    \begin{align*}
    \pi_{id}(
    A^{t} | 
    \underline{A}'^{t},
    \underline{X}^{t-1},
    \underline{A}^{t-1})
    = \mathbb{I}(A^t = \vec{1}).
    \end{align*}
\end{myremark}

\noindent 
The IPW estimator $\hat{J}_{\textit{IPW-Semi}}$ under the maximal global positivity assumption demonstrates the large benefits of the semi-offline RL view over both the offline RL and missing data views. Its second fraction shows that not only datapoints where $A^t = A'^t$ are used (i.e. have positive weight), as in the offline RL view, neither only datapoints where $A^t = \vec{1}$ are used, as in the missing data view, but all datapoints where $A^t \geq A'^t$ can be used. However, this benefit diminishes as the target policy becomes more "data-hungry"—that is, as it acquires more features.  In fact, there is no difference between all three estimators for an "acquire all features" policy: 
\begin{myremark}
    [Equality of IPW estimators 
    for an "acquire all features" policy]  
    The IPW estimators from the missing data view, $\hat{J}_{\textit{IPW-Miss}}$, from the offline RL view $\hat{J}_{\textit{IPW-Miss}}$, and from the semi-offline RL view $\hat{J}_{\textit{IPW-Semi}}$ are identical if $\pi_\alpha^t(A^t = \vec{1}|\underline{X}^{t-1} = \underline{x}^{t-1}, \underline{A}^{t-1} =  \vec{1}) = 1$ $\forall t, \underline{x}^{t-1}$. 
\end{myremark}

\noindent 
We show in Appendix \ref{app_comparison_caniglia} that $\hat{J}_{\textit{IPW-Semi}}$  (under maximal global positivity) is equivalent in simple AFA settings to an adapted version of the IPW estimator by \cite{caniglia_emulating_2019}. 
Our IPW estimators $\hat{J}_{\textit{IPW-Semi}}$ can, however, be applied in more general AFA settings. 

\vspace{10pt}
\noindent
\textit{2) Direct method (DM):} 

\nopagebreak

\noindent
The target cost that is estimated by the semi-offline DM estimator is 
\begin{align} \label{eq:semi-off-dm}
    \hat{J}_{\textit{DM-Semi}} = 
    \hat{\mathbb{E}}_{n}[\hat{V}_{\textit{Semi}}^0]
\end{align}

\noindent 
This estimator is based on learning a semi-offline RL version of the state-action value function $Q_{\textit{Semi}}$ using the semi-offline version of the Bellman equation (Eqs. \ref{eq_bellman_1} and \ref{eq_bellman_2}). Using  $Q_{\textit{Semi}}$, one can compute the state value function: $V_{\textit{Semi}}^t = \mathbb{E}_{\pi_{\alpha}}[Q_{\textit{Semi}}^{t+1}]$. 


The training process of $Q_\textit{Semi}$ can benefit from using the off-policy aspect of the proposed semi-offline sampling distribution $p'$ (i.e. from using a simulation policy $\pi'_{sim}$ that is different from $\pi'_{\alpha}$). This is because a deterministic AFA policy, for example,  will only generate one exact trajectory of simulated actions $A'$ and costs $C'$ per datapoint $X,Y,A$. A stochastic simulation policy $\pi'_{\textit{sim}}$ can instead be used to generate multiple such trajectories. 
Usually, a parametric working model, for example a multi-layer perceptron (MLP), is assumed for the nuisance function $\hat{Q}_\textit{Semi}$. The estimation under such a working model will benefit from the additional datapoints generated under a policy $\pi'_{\textit{sim}} \neq \pi'_\alpha$.

\vspace{10pt}
\noindent 
\textit{3) Double reinforcement learning (DRL):} 

\noindent
The target cost that is estimated by the semi-offline DRL estimator is 
\begin{align}
    \hat{J}_{\textit{DRL-Semi}} = 
     \hat{\mathbb{E}}_{n}
     \left[
     \hat{\mathbb{E}}_{n'}
    \left[
    \hat{\rho}_\textit{Semi}^T C' + 
 \sum_{t=1}^{T} 
 \left(- \hat{\rho}_{\textit{Semi}}^{t}  
 \hat{Q}_\textit{Semi}^t
 +
 \hat{\rho}_{\textit{Semi}}^{t-1}  
 \hat{V}_\textit{Semi}^{t-1}
 \right) 
 \Big| 
 A,X,Y
 \right]
 \right].
\end{align}

\noindent 
which holds for both choices for $\rho_\textit{Semi}$, given that the respective positivity assumption holds. 
Similar to the DRL estimator from the offline RL view, this approach combines the other two estimators  (Eqs. \ref{eq:semi-off-ipw_pi_id} and \ref{eq:semi-off-dm}). 

The following theorems state some notable facts about these estimators. 

\begin{theorem}
\label{theorem_consistency_IPW} (Consistency of $\hat{J}_{\textit{IPW-Semi}}$). 
The estimator $\hat{J}_{\textit{IPW-Semi}}$ is consistent if the propensity score model $\hat{\pi}_\beta$ is correctly specified.
\end{theorem}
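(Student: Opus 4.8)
The plan is to recognize $\hat{J}_{\textit{IPW-Semi}}$ as an importance-sampling estimator that reweights draws from the semi-offline sampling distribution $p'$ so as to target the identifying distribution $q'$, and then to combine the identification result of Theorem~\ref{theorem_identification_semi_offline_RL} with a law-of-large-numbers argument and a plug-in step for the estimated nuisance.

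First I would establish that the weight $\rho^T_{\textit{Semi}}$ is exactly the density ratio $q'/p'$. Comparing the factorization of $q'$ in Eq.~\ref{eq_identificiation_semi_offline_RL_2} with that of $p'$ in Eq.~\ref{eq:semi_offline_sampling_distribution_v2} (Remark~\ref{remark_semi_offline_RL_observational}), the feature-emission factors $\prod_{t} p(X^t \mid \underline{X}^{t-1}, \underline{A}^t, Y)$ and the label factor $p(Y)$ cancel, leaving precisely the product of the two fractions in Eq.~\ref{eq_ipw_semi-offline_RL_weights_1}. Hence, with the true propensity score $\pi_\beta$ in the denominator, $\rho^T_{\textit{Semi}} = q'(A',A,X,Y)/p'(A',A,X,Y)$ wherever $p' > 0$, and global positivity (Assumption~\ref{assump:positivity_global_semi_offline_RL}) guarantees that the support of $q'$ lies within that of $p'$, so the ratio is well-defined and, since each denominator is bounded below by $\mathcal{O} > 0$, bounded.

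Given this, the population importance-sampling identity $\mathbb{E}_{p'}[\rho^T_{\textit{Semi}} C'] = \sum f_C(A',X',Y)\, q'(A',A,X,Y) = J$ follows immediately from Theorem~\ref{theorem_identification_semi_offline_RL}. It then remains to show that the empirical, nuisance-plugged version converges to this quantity. The estimator is a nested average: the outer $\hat{\mathbb{E}}_n$ draws $(A,X,Y)$ from $p(A,X,Y)$ and the inner $\hat{\mathbb{E}}_{n'}$ draws $A'$ from $\pi'_{\textit{sim}}$, so together they sample from $p'$. Applying the law of large numbers at both levels as $n,n' \to \infty$, and using cross-fitting (so that $\hat{\pi}_\beta$ is trained on an independent split and introduces no bias), the estimator converges in probability to $\mathbb{E}_{p'}[\hat{\rho}^T_{\textit{Semi}} C']$, where $\hat{\rho}^T_{\textit{Semi}}$ uses the estimated $\hat{\pi}_\beta$.

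The final and most delicate step is passing the consistency of the nuisance through to the estimator. Under correct specification, $\hat{\pi}_\beta \to \pi_\beta$ in probability; because positivity bounds each denominator away from zero and $C'$ takes finitely many values (hence is bounded), the weight $\hat{\rho}^T_{\textit{Semi}}$ is a continuous, bounded function of the finitely many propensity factors, so $\hat{\rho}^T_{\textit{Semi}} C' \to \rho^T_{\textit{Semi}} C'$ and dominated convergence gives $\mathbb{E}_{p'}[\hat{\rho}^T_{\textit{Semi}} C'] \to J$. A Slutsky-type combination of the two convergences yields $\hat{J}_{\textit{IPW-Semi}} \to J$. The main obstacle I anticipate is exactly this interchange: one must use the positivity lower bound $\mathcal{O}$ to obtain uniform control of the estimated weights so that the plug-in error vanishes, and then verify that the same argument goes through for the maximal-positivity weights in Eq.~\ref{eq_ipw_semi-offline_RL_weights_2}, where the relevant denominator is the aggregated probability $\hat{\pi}_\beta^t(A^t \geq a'^t \mid \cdot)$ rather than a single propensity factor.
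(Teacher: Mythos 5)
Your proposal is correct and follows essentially the same route as the paper's proof: the paper likewise derives the weights $\rho^T_{\textit{Semi}}$ as the density ratio $q'/p'$ by comparing the factorizations in Eq.~\ref{eq_identificiation_semi_offline_RL_2} and Eq.~\ref{eq:semi_offline_sampling_distribution_v2} and then invokes the standard identity $\mathbb{E}_{q'}[C']=\mathbb{E}_{p'}[\tfrac{q'}{p'}C']$ together with Theorem~\ref{theorem_identification_semi_offline_RL}, with the maximal-positivity weights obtained by substituting the truncated $\pi_{id}$. The only difference is that you spell out the law-of-large-numbers and nuisance plug-in arguments that the paper leaves implicit, which is a sound elaboration rather than a departure.
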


\begin{proof}
We apply the standard inverse probability weighting approach $\mathbb{E}_{q'}[C']=\mathbb{E}_{p'}[\frac{q'}{p'}C']$ and use the factorizations for $q'$ and $p'$ from Eqs. \ref{eq_identificiation_semi_offline_RL_2} (Theorem \ref{theorem_identification_semi_offline_RL}) and 
\ref{eq:semi_offline_sampling_distribution_v2}
(Remark \ref{remark_semi_offline_RL_observational}), respectively, to obtain Eq. \ref{eq_ipw_semi-offline_RL_weights_1} for the weights. The weights from Eq. \ref{eq_ipw_semi-offline_RL_weights_2} arise from inserting the special choice for $\pi_{id}$.
\end{proof}
\begin{theorem}
\label{theorem_consistency_DM} (Consistency of $\hat{J}_{\textit{DM-Semi}}$). 
The estimator $\hat{J}_{\textit{DM-Semi}}$ is consistent if the Q-function $\hat{Q}_\textit{Semi}$ is correctly specified.
\end{theorem}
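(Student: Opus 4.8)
The plan is to recognize $\hat{J}_{\textit{DM-Semi}}$ as the plug-in estimator associated with the identification formula of Theorem \ref{theorem_identification_semi_offline_RL}, and then to let consistency of $\hat{Q}_\textit{Semi}$ propagate through the (known) construction of $\hat{V}_\textit{Semi}^0$ and the outer empirical average. The crucial observation is that the true value function at the initial time step already equals the target in expectation. By the definition of $V_\textit{Semi}^t$, and since $\underline{A}'^0$ and $\underline{A}^0$ are empty (actions start at $t=1$), we have $V_\textit{Semi}^0 = \mathbb{E}_{p'}[C'_{(\overline{\pi}^1_\alpha)}\mid X^0,\Xi] = \mathbb{E}_{p'}[C'_{(\pi_\alpha)}\mid X^0,\Xi]$, because intervening from time step $1$ onward coincides with the full intervention $\pi_\alpha$. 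Taking the expectation over the observed data and applying the law of iterated expectations together with the identification result $J = \mathbb{E}_{p'}[C'_{(\pi_\alpha)}]$ (Theorem \ref{theorem_identification_semi_offline_RL}) gives $\mathbb{E}[V_\textit{Semi}^0] = J$.

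First I would use the Bellman recursion of Theorem \ref{theorem_Bellman_equation} to confirm that the true $Q_\textit{Semi}^t$, and hence $V_\textit{Semi}^t$, are identified at every time step, so that the population version of the estimator, $\mathbb{E}[V_\textit{Semi}^0]$, is exactly $J$. This reduces the claim to showing that the learned objects converge to their population counterparts. Next I would invoke the hypothesis that $\hat{Q}_\textit{Semi}$ is correctly specified, meaning the working model contains the true $Q_\textit{Semi}$ and the fitting procedure returns $\hat{Q}_\textit{Semi}^t \xrightarrow{p} Q_\textit{Semi}^t$. Because $V_\textit{Semi}^{t}$ is obtained from $Q_\textit{Semi}^{t+1}$ purely through the known target policy via $V_\textit{Semi}^t = \mathbb{E}_{\pi_{\alpha}}[Q_\textit{Semi}^{t+1}]$ (a deterministic integration, carried out exactly or by Monte Carlo over the known $\pi_\alpha$), the continuous mapping theorem yields $\hat{V}_\textit{Semi}^0 \xrightarrow{p} V_\textit{Semi}^0$.

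Finally I would combine these with a law-of-large-numbers argument for the outer empirical average. Writing $\hat{J}_{\textit{DM-Semi}} = \hat{\mathbb{E}}_n[\hat{V}_\textit{Semi}^0]$, I would decompose $\hat{\mathbb{E}}_n[\hat{V}_\textit{Semi}^0] - J = \big(\hat{\mathbb{E}}_n[\hat{V}_\textit{Semi}^0] - \hat{\mathbb{E}}_n[V_\textit{Semi}^0]\big) + \big(\hat{\mathbb{E}}_n[V_\textit{Semi}^0] - \mathbb{E}[V_\textit{Semi}^0]\big)$. The second term vanishes in probability by the law of large numbers and the first because $\hat{V}_\textit{Semi}^0 \xrightarrow{p} V_\textit{Semi}^0$; here the cross-fitting assumed throughout the paper decouples the estimation of $\hat{Q}_\textit{Semi}$ from the evaluation fold and keeps this step clean.

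The main obstacle I anticipate is the backward-induction step: $\hat{Q}_\textit{Semi}$ is fitted by iterating the semi-offline Bellman equation (Eqs. \ref{eq_bellman_1}--\ref{eq_bellman_2}) on the simulated dataset $\mathcal{D}'$, so one must argue that estimation errors do not compound uncontrollably across the $T$ recursion steps and that the support restrictions encoded in $\mathcal{A}^t_\textit{adm}$ hold wherever the recursion evaluates $Q_\textit{Semi}$. Under correct specification together with the assumed regional/global positivity, each conditional expectation in the recursion is itself identified by Theorem \ref{theorem_Bellman_equation}, so the compounding is benign; nonetheless, making this convergence fully rigorous is the delicate part, whereas the reduction of $\mathbb{E}[V_\textit{Semi}^0]$ to $J$ follows immediately from the earlier theorems.
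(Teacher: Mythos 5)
Your proposal is correct and follows essentially the same route as the paper, whose proof is a one-line appeal to the semi-offline Bellman equation (Theorem \ref{theorem_Bellman_equation}) and the law of total expectation: your observation that $V_\textit{Semi}^0 = \mathbb{E}_{p'}[C'_{(\pi_\alpha)}\mid X^0,\Xi]$ so that $\mathbb{E}[V_\textit{Semi}^0]=J$ is exactly that argument, spelled out. The additional material on the law of large numbers, continuous mapping, cross-fitting, and error propagation through the backward recursion is a reasonable elaboration of standard details the paper leaves implicit, not a different proof strategy.
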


\begin{proof}
    The proof of the consistency of $\hat{J}_\textit{DM-Semi}$ follows simply from the semi-offline Bellman equation (Theorem \ref{theorem_Bellman_equation}) 
    and the law of total expectation. 
\end{proof}

\begin{theorem}
\label{theorem_double_robustness}
(Double robustness of $\hat{J}_{\textit{DRL-Semi}}$).
The estimator $\hat{J}_{\textit{DRL-Semi}}$ is 
doubly robust, in the sense that it is consistent if either the Q-function $\hat{Q}_\textit{Semi}$ or the propensity score model $\hat{\pi}_\beta$ is correctly specified. 
\end{theorem}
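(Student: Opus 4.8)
The plan is to prove consistency separately in the two regimes covered by the claim --- (i) $\hat\pi_\beta$ correctly specified (so the semi-offline weights $\hat\rho_\textit{Semi}$ are asymptotically correct, while $\hat Q_\textit{Semi}$ may be arbitrary), and (ii) $\hat Q_\textit{Semi}$ correctly specified (so $\hat Q_\textit{Semi}\to Q_\textit{Semi}$ while the weights may be arbitrary) --- and in each case to reduce $\hat J_{\textit{DRL-Semi}}$ in expectation to one of the two already-established consistent estimators. Throughout I write $\hat\rho^0_\textit{Semi}\equiv 1$, so that the $t=1$ summand contributes $\hat V^0_\textit{Semi}$ and $\hat{\mathbb{E}}_n[\hat V^0_\textit{Semi}]=\hat J_{\textit{DM-Semi}}$. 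The argument mirrors the standard doubly robust / augmented-IPW telescoping proof for offline RL, with the semi-offline Bellman equation (Theorem \ref{theorem_Bellman_equation}) and the factorizations of $q'$ and $p'$ (Eqs. \ref{eq_identificiation_semi_offline_RL_2} and \ref{eq:semi_offline_sampling_distribution_v2}) supplying the recursive identities that make the cancellations go through.

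For the first regime I would show that every augmentation pair $-\hat\rho^t_\textit{Semi}\hat Q^t_\textit{Semi}+\hat\rho^{t-1}_\textit{Semi}\hat V^{t-1}_\textit{Semi}$ has mean zero under $p'$, irrespective of $\hat Q_\textit{Semi}$. This follows because $\hat V^{t-1}_\textit{Semi}=\mathbb{E}_{\pi_\alpha}[\hat Q^t_\textit{Semi}]$ by construction (Eq. \ref{eq_bellman_2}), and because the incremental weight $\hat\rho^t_\textit{Semi}/\hat\rho^{t-1}_\textit{Semi}$ carries exactly the ratio $\pi_\alpha^t/\pi'^t_\textit{sim}$ together with $\pi_{id}^t/\hat\pi_\beta^t$; taking the inner conditional expectation over the simulated action $A'^t$ and the real action $A^t$ under $p'$ and using $\hat\pi_\beta=\pi_\beta$ converts $\mathbb{E}_{p'}[\hat\rho^t_\textit{Semi}\hat Q^t_\textit{Semi}\mid\text{past}]$ into $\hat\rho^{t-1}_\textit{Semi}\hat V^{t-1}_\textit{Semi}$. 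Summing over $t$ annihilates the entire augmentation, leaving $\mathbb{E}_{p'}[\hat\rho^T_\textit{Semi}C']$, which is consistent for $J$ by Theorem \ref{theorem_consistency_IPW}.

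For the second regime I would first regroup the summand by telescoping into
\begin{align*}
\hat V^0_\textit{Semi}+\hat\rho^T_\textit{Semi}\bigl(C'-\hat Q^T_\textit{Semi}\bigr)+\sum_{t=1}^{T-1}\hat\rho^t_\textit{Semi}\bigl(\hat V^t_\textit{Semi}-\hat Q^t_\textit{Semi}\bigr).
\end{align*}
With $\hat Q_\textit{Semi}=Q_\textit{Semi}$, each residual becomes a martingale difference: the semi-offline Bellman equation (Eq. \ref{eq_bellman_1}) gives $Q^t_\textit{Semi}=\mathbb{E}_{p'}[V^t_\textit{Semi}\mid\underline{A}'^t,\underline{X}^{t-1},\underline{A}^{t-1}]$ for admissible $a^t$, so $\hat V^t_\textit{Semi}-\hat Q^t_\textit{Semi}$ has conditional mean zero given the history on which $\hat Q^t_\textit{Semi}$ is defined, and likewise $C'-Q^T_\textit{Semi}$ has conditional mean zero given the terminal history. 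Since each weight $\hat\rho^t_\textit{Semi}$ is measurable with respect to that conditioning history, every weighted residual has mean zero under $p'$ regardless of whether the weights are correct, and the expectation collapses to $\mathbb{E}_{p'}[\hat V^0_\textit{Semi}]$, i.e. the DM target, which is consistent for $J$ by Theorem \ref{theorem_consistency_DM}.

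The main obstacle is the bookkeeping of the conditioning in the semi-offline setting, where each weight factor splits into a simulation ratio $\pi_\alpha^t/\pi'^t_\textit{sim}$ over $A'^t$ and a real-data ratio $\pi_{id}^t/\pi_\beta^t$ over $A^t$, and the value functions condition jointly on the simulated trajectory $\underline{A}'^t$ and the real trajectory $\underline{X}^t,\underline{A}^t$. I would need to verify that the nested inner/outer structure of $\hat J_{\textit{DRL-Semi}}$ (the $\hat{\mathbb{E}}_{n'}$ over $A'$ inside the $\hat{\mathbb{E}}_n$ over observed $A,X,Y$) aligns with the filtration used in both arguments, and that the support restrictions encoded in $\pi_{id}^t$ through $\mathcal{\tilde{A}}^t_\textit{adm}$ do not break the measurability required for the martingale-difference cancellation --- in particular that the positivity assumption underlying $\hat\rho_\textit{Semi}$ keeps the weights well-defined wherever the residuals are nonzero. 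Once measurability and the Bellman/IPW identities are lined up, both reductions are routine.
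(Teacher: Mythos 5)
Your proposal is correct and follows essentially the same route as the paper's proof in Appendix \ref{app_theorem_double_robustness}: for the correctly-specified-$\hat{\pi}_\beta$ case you reduce the estimator to the semi-offline IPW term by showing each augmentation pair $-\hat{\rho}^t_\textit{Semi}\hat{Q}^t_\textit{Semi}+\hat{\rho}^{t-1}_\textit{Semi}\hat{V}^{t-1}_\textit{Semi}$ integrates to zero via $\hat{V}^{t-1}_\textit{Semi}=\mathbb{E}_{\pi_\alpha}[\hat{Q}^t_\textit{Semi}]$, and for the correctly-specified-$\hat{Q}_\textit{Semi}$ case you use exactly the paper's regrouping $\hat{V}^0_\textit{Semi}+\hat{\rho}^T_\textit{Semi}(C'-\hat{Q}^T_\textit{Semi})+\sum_{t=1}^{T-1}\hat{\rho}^t_\textit{Semi}(\hat{V}^t_\textit{Semi}-\hat{Q}^t_\textit{Semi})$ and kill the residuals with the semi-offline Bellman equation. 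The measurability and $\pi_{id}$-support caveats you flag are precisely the points the paper's proof handles (the Bellman identity holding for any admissible $a^t$ is what makes the $A^t$-dependent weight factor harmless), so nothing in your plan would fail.
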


\noindent 
The proof is shown in Appendix \ref{app_theorem_double_robustness}. 
The estimator $\hat{J}_{\textit{DRL-Semi}}$ is a 1-step estimator based on an influence function derived for $J$ under $p'$. Therefore, the DRL estimator is regular and asymptotically linear (RAL). The influence function is given by the following theorem: 

\begin{theorem}
\label{theorem_efficient_if} 
(An influence function under the semi-offline RL view). An influence function of $J$ 
is:  
\begin{align}
     \varphi_\textit{Semi}
     = 
     -J
     +
    \mathbb{E}
    \left[
      \rho_\textit{Semi}^T C' + 
 \sum_{t=1}^{T} 
 \left(
 - \rho_\textit{Semi}^t  Q_\textit{Semi}^t
+
 \rho_\textit{Semi}^{t-1}  
 V_\textit{Semi}^{t-1}
 \right)
 \Big| 
 A,X,Y
 \right].
\end{align}
\end{theorem}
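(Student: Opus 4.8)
The plan is to verify that $\varphi_\textit{Semi}$ satisfies the two defining properties of an influence function (gradient) of $J$: that it is mean zero, $\mathbb{E}_p[\varphi_\textit{Semi}] = 0$, and that it is pathwise-aligned with $J$, i.e. for every regular parametric submodel $\{p_\epsilon\}$ through the truth at $\epsilon=0$ with score $S$, one has $\frac{d}{d\epsilon}J(p_\epsilon)\big|_{\epsilon=0} = \mathbb{E}_p[\varphi_\textit{Semi}\,S]$. The mean-zero property is essentially immediate: the bracketed quantity in $\varphi_\textit{Semi}$ is exactly the DRL estimand, and its population expectation equals $J$ by the double-robustness identity already used in Theorem \ref{theorem_double_robustness} (equivalently, by telescoping via the semi-offline Bellman equation of Theorem \ref{theorem_Bellman_equation}, where the conditional expectations of $\rho_\textit{Semi}^t Q_\textit{Semi}^t$ and $\rho_\textit{Semi}^{t-1} V_\textit{Semi}^{t-1}$ cancel because $V_\textit{Semi}^{t-1} = \mathbb{E}_{\pi_\alpha}[Q_\textit{Semi}^t]$ and the step-$t$ weight converts the $\pi_\beta$/$\pi_\textit{sim}'$ sampling into $\pi_\alpha$ sampling). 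Hence $\mathbb{E}_p[\varphi_\textit{Semi}] = -J + J = 0$.

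The substance of the proof is the gradient condition. By the identification result (Theorem \ref{theorem_identification_semi_offline_RL}) and its full-data form in Eq. \ref{eq_full_data_if}, $J$ depends on $p$ only through the full-data law $p(X_{(1)},Y)$, with a \emph{known} inner value $h(X_{(1)},Y) \equiv \mathbb{E}[C'_{(\pi_\alpha)}\mid X_{(1)},Y]$. Thus the full-data gradient of $J$ is simply $h(X_{(1)},Y) - J$, and the task reduces to mapping this full-data gradient to a valid observed-data gradient under the coarsening induced by the acquisition process — exactly the coarsened-data construction from missing-data semiparametric theory that will be invoked in Section \ref{sec_semiparametrics}. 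Concretely, I would decompose the submodel score along the observed-data likelihood factorization into the $p(Y)$ score and the transition-kernel scores $S_{X^t}$ for $p(X^t\mid \underline{X}^{t-1},\underline{A}^t,Y)$, differentiate $J(p_\epsilon)$ term by term, and match each piece against $\mathbb{E}_p[\varphi_\textit{Semi}\,S_\bullet]$. The matching for the outcome/transition directions uses the tower property together with the definitions of $Q_\textit{Semi}^t$ and $V_\textit{Semi}^t$ as the conditional expectations of $C'$ under continued $\pi_\alpha$-intervention; the weights $\rho_\textit{Semi}^t$ supply precisely the change of measure so that the telescoping $-\rho_\textit{Semi}^t Q_\textit{Semi}^t + \rho_\textit{Semi}^{t-1} V_\textit{Semi}^{t-1}$ augmentation reproduces the derivative of the sequential outcome regression, i.e. it is the projection of the full-data gradient onto the observed-data tangent space.

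The hard part will be the propensity directions. Since $J$ is a functional of $p(X_{(1)},Y)$ alone and does \emph{not} depend on $\pi_\beta$, any submodel perturbing only the acquisition mechanism yields $\frac{d}{d\epsilon}J(p_\epsilon)=0$; yet $\varphi_\textit{Semi}$ contains $\pi_\beta$ through the weights $\rho_\textit{Semi}^t$ (via $\pi_{id}^t$, or the truncated $\pi_\beta(A^t\geq a'^t\mid\cdot)$ denominator of Eq. \ref{eq_ipw_semi-offline_RL_weights_2}). I therefore must show that $\mathbb{E}_p[\varphi_\textit{Semi}\,S_{A^t}]=0$ for every propensity-score direction $S_{A^t}$, so both sides vanish consistently. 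This orthogonality is the analytic heart of the double robustness of Theorem \ref{theorem_double_robustness}, and I would establish it by the same telescoping cancellation as in the mean-zero step, now carried through with the score factor attached, using $V_\textit{Semi}^{t-1}=\mathbb{E}_{\pi_\alpha}[Q_\textit{Semi}^t]$ to annihilate the step-$t$ propensity contribution. The remaining delicate bookkeeping is ensuring that the blocking constraints and support restriction $\mathbb{I}(A^t\in\tilde{\mathcal{A}}_\textit{adm}^t)$ encoded in $\pi_{id}^t$ keep the change-of-measure weights well defined and differentiable along the submodel (Definitions \ref{def_local_positivity}--\ref{def_regional_positivity} guarantee the relevant denominators stay bounded away from zero), which is what makes this derivation more involved than the standard offline-RL DRL case. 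Once the $p(Y)$, transition, and propensity score components are all matched, $\varphi_\textit{Semi}$ is confirmed to be a gradient; combined with the mean-zero property, this establishes that it is an influence function of $J$.
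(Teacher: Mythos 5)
Your proposal is correct in substance but takes a different route from the paper. The paper does not verify the pathwise-derivative identity directly; it works entirely inside the coarsened-data semiparametric framework of Tsiatis (Theorem 8.3), whose characterization of observed-data influence functions under MAR it takes as given: it shows that $h_\textit{Semi}=\mathbb{E}_{n'}[\rho_\textit{Semi}^T C'\mid A,G_A(X_{(1)}),Y]-J$ lies in the IPW space $\Lambda_\textit{IPW}$, i.e.\ $\mathbb{E}[h_\textit{Semi}\mid X_{(1)},Y]=\varphi^F$ (Lemma \ref{lemma_h_semioffline_RL}, Appendix \ref{app:derivation_semioffline_RL_semiparametrics}), and then computes $\varphi_\textit{Semi}=h_\textit{Semi}-\Pi(h_\textit{Semi}\mid\Lambda_{2,\textit{Semi}}(\Xi))$ in closed form (Appendix \ref{app:semioffline_RL_projection}), using $\Lambda_\psi\subseteq\Lambda_{2,\textit{Semi}}(\Xi)\subset\Lambda_2$ to conclude membership in the class of Eq.~\ref{eq:space_of_ifs}. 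Your plan instead verifies the definition (mean zero plus $\tfrac{d}{d\epsilon}J(p_\epsilon)\vert_0=\mathbb{E}[\varphi S]$); once the scores are organized correctly this collapses to exactly the same two facts the paper checks --- $\mathbb{E}[\varphi_\textit{Semi}\mid X_{(1)},Y]=\varphi^F$ and orthogonality to the acquisition-process scores --- so the computational core (the conditional-expectation identities that turn the weighted terms into $\rho_\textit{Semi}^tQ_\textit{Semi}^t$ and $\rho_\textit{Semi}^{t-1}V_\textit{Semi}^{t-1}$, and the telescoping cancellation against propensity directions) is identical. What your route buys is self-containedness; what the paper's route buys is that orthogonality to $\Lambda_\psi$ is automatic by construction rather than something to be checked. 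One caution: your proposed decomposition of the submodel score into a $p(Y)$ score and separate transition-kernel scores $S_{X^t}$ for each $p(X^t\mid\underline{X}^{t-1},\underline{A}^t,Y)$ is hazardous, because under the model these kernels are not variationally independent --- they are all margins of the single unrestricted law $p(X_{(1)},Y)$, and the admissible observed-data scores in those directions are only the conditional expectations $\mathbb{E}[S^F\mid A,G_A(X_{(1)}),Y]$ of full-data scores. You should match $\mathbb{E}[\varphi_\textit{Semi}S]$ against $\mathbb{E}[\varphi^F S^F]$ for full-data directions and against zero for acquisition directions, rather than kernel by kernel. Also, the concern about differentiability of the weights along the submodel is moot: the influence function is a fixed function of the data evaluated at the true $\pi_\beta$ and $\pi_{id}$, so only the boundedness guaranteed by the positivity definitions matters.
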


\noindent 
We will prove this theorem in the next section, addressing semiparametric estimation from all three views.

\begin{myremark}[Efficiency of $\varphi_\textit{Semi}$ as a function of $\Xi$] 
\label{remark_efficiency_as_a_function_of_xi}
In Theorem \ref{theorem_Bellman_equation}, we showed that the semi-offline RL version of the Bellman equation holds for any subset $\Xi \subseteq \{Y, O\}$, and the same applies to Theorem \ref{theorem_efficient_if}. The choice of $\Xi$ affects the efficiency of the DRL estimator: a bigger set $\Xi$ corresponds to a higher efficiency of the corresponding DRL estimator. However, the class of influence functions presented here does not include the efficient influence function. In fact, as we will discuss in the next section, the efficient influence function lacks a closed-form expression. 
\end{myremark} 

\noindent
In Appendix \ref{app_other_variants}, we extend the estimators discussed in this section to other settings. These include: i) the estimation of $J_a$, and ii) scenarios where a prediction $Y^{*t}$ is required at each time step $t$.

\section{Semiparametric Theory under NUC and NDE} \label{sec_semiparametrics}

\noindent 
\textit{Assumptions in this section: Assumption \ref{assump:nde} (NDE), Assumption \ref{assump:nuc} (NUC)}

\noindent
In this section, we explore semiparametric estimation approaches for $J$ under both NDE and NUC assumptions.  Readers not interested in the detailed semiparametric theory can skip this section.  We demonstrate that all three views can be unified within an established semiparametric theory framework for MAR missing data problems. Using this framework, we prove Theorem \ref{theorem_efficient_if}, which defines a class of influence functions derived under the semi-offline RL view. Although no closed-form efficient influence function exists in this setting, efficiency improvements from Tsiatis et al. \cite{tsiatis_semiparametric_2006} and Liu et al. \cite{liu_efficient_2021} can be applied. However, these methods pose significant challenges, including strong positivity assumptions, applicability to a limited set of problems, and implementation complexity.

To discuss semiparametric approaches to AFAPE, we first remind the reader of the semiparametric theory review in Appendix \ref{app_semiparametric_theory}. The following section draws extensively on the foundational framework of Tsiatis et al. \cite{tsiatis_semiparametric_2006} to establish essential context.

Let the observed data influence function be denoted as $\varphi \equiv \varphi(A, G_A(X_{(1)}), Y)$, and the observed data tangent space as $\Lambda$, with the corresponding nuisance tangent space denoted by $\Lambda_{\textit{nuis}}$. The full data influence function - an influence function given the counterfactual variables $X_{(1)}$ - is denoted $\varphi^F \equiv \varphi^F(X_{(1)},Y)$, with full data tangent space $\Lambda^F$ and nuisance tangent space $\Lambda^F_{\textit{nuis}}$.
The two key relationships between these spaces and the influence functions are as follows: 
\begin{itemize} 
    \item An observed data influence function must lie in the orthocomplement of the observed data nuisance tangent space: $\varphi \in \Lambda_{\textit{nuis}}^\perp$. 
    \item The observed data efficient influence function must be in the observed data tangent space: $\varphi_{\textit{eff}} \in \Lambda$. 
\end{itemize}

\noindent 
Known semiparametric theory for MAR missing data methods can be applied to the AFAPE problem under the NUC assumption. We begin by defining the space of full data influence functions. Since we assume no restrictions on the full data, the full data influence function is unique, efficient, and given by the online RL part of the missing data + online RL view:
\begin{flalign}
\label{eq:full_data_if}
    \varphi^F(X_{(1)},Y) 
    = 
    \underbrace{\mathbb{E}[C_{(\pi_{\alpha})}|X_{(1)},Y]}_{
    \text{online RL}
    }  - J
    = 
    \sum_{a \in \mathcal{A}}
     f_C(a, G_{a}(X_{(1)}), Y ) 
     \pi_{\alpha}(
    a|
    G_{a}(X_{(1)}))
    - J.
\end{flalign}


\noindent 
While the full data influence function is unique, the space of observed data influence functions is generally not. To find it, we first define the orthogonal complement of the nuisance tangent space, which can be expressed as (Theorem 8.3 from \cite{tsiatis_semiparametric_2006}):
\begin{flalign}
\label{eq:space_of_ifs}
    \Lambda_{\textit{nuis}}^\perp = 
    \biggl{\{}  
    \left[ 
    h^*(A,G_A(X_{(1)}), Y)
    \oplus 
    \Lambda_2
    \right]  
    - 
    \Pi \biggl{(}
    \left[ 
    h^*(A,G_A(X_{(1)}), Y)
    \oplus 
    \Lambda_2
    \right]  \Bigl{\vert} 
    \Lambda_{\textit{nuis},\psi} 
    \biggl{)}
    \biggl{\}},
\end{flalign}
\noindent
where $h^*$ belongs to the inverse probability weighting (IPW) space $\Lambda_{\textit{IPW}}^*$, $\Lambda_2$ is the augmentation space, and $\Lambda_{\textit{nuis},\psi} = \Lambda_{\psi}$ is the nuisance tangent space, or equivalently the tangent space of the acquisition process. We explain these three spaces now in more detail. 
\newline 

\noindent 
\textbf{The inverse probability weighting space $\Lambda_\textit{IPW}^*$:}

The function $h^*(A,G_A(X_{(1)}),Y)$ in Eq. \ref{eq:space_of_ifs} can be any function in the IPW space $\Lambda_\textit{IPW}^*$ which is defined as: 
\begin{flalign*}
    \Lambda_\textit{IPW}^* 
    \equiv 
    \biggl{\{}
    & h^*(A,G_A(X_{(1)}), Y) \in \mathcal{H}: 
    \mathbb{E}[h^*(A,G_A(X_{(1)}), Y)|X_{(1)}, Y] 
    = 
    \varphi^{*F}(X_{(1)}, Y) 
    \biggl{\}}.
\end{flalign*} 
where $\mathcal{H}$ denotes the space of random functions with zero mean and finite variance and $\varphi^{*F}(X_{(1)}, Y)$ denotes an element of the orthocomp of the full data nuisance tangent space:  $\varphi^{*F} \in \Lambda_{\textit{nuis}}^\perp$. 

In fact, if we further restrict the IPW space such that we don't allow any element of the orthocomp of the full data nuisance tangent space $\varphi^{*F}(X_{(1)}, Y)$, but only the full data influence function $\varphi^{F}(X_{(1)}, Y)$, then we also obtain only observed data influence functions by Eq. \ref{eq:space_of_ifs} (Theorem 8.3 from \cite{tsiatis_semiparametric_2006}). In the following, we thus restrict the IPW space to:
\begin{flalign*}
    \label{eq:h_definition}
    \Lambda_\textit{IPW} 
    \equiv 
    \biggl{\{}
    & h(A,G_A(X_{(1)}), Y) 
    \in \mathcal{H}: 
    \mathbb{E}[h(A,G_A(X_{(1)}), Y)|X_{(1)}, Y] 
    = 
    \varphi^{F}(X_{(1)}, Y)
    \biggl{\}}.
\end{flalign*}

\noindent 
In that case, the IPW space contains functions that, when taken the conditional expected value with respect to the full data, equal the full data influence function. The space is denoted as the IPW space, because IPW-based identifying functions can be chosen to construct elements in this space. In fact, as will be shown in this section, all the IPW estimators - from the offline RL, missing data and semioffline RL views - are applicable and form valid elements in $\Lambda_\textit{IPW}$. 
\newline 

\noindent 
\textbf{The augmentation space 
$\Lambda_2$:}

\noindent 
The augmentation space $\Lambda_{2}$ is defined as follows (Lemma 7.4 from \cite{tsiatis_semiparametric_2006}):
\begin{flalign}
    \Lambda_{2} = \biggl\{  
    b(A, G_A(X_{(1)}),Y)\in \mathcal{H}: 
    \mathbb{E}
    \left[
    b(A, G_A(X_{(1)}),Y)|X_{(1)}, Y
    \right] = 0
    \biggl\}. 
\end{flalign}
\noindent 
This space contains functions that, when taken in conditional expectation with respect to the full data, equal zero. This provides intuition behind the decomposition of the space of observed data influence functions: the space consists of one function that, in conditional expectation, equals the full data influence function, plus all functions that become zero in conditional expectation. One must, however, still obtain the residual of the projection of these functions onto the nuisance tangent space of the acquisition process, $\Lambda_{\textit{nuis}, \psi}$, introduced below.
\newline 

\noindent 
\textbf{The nuisance tanget space of the acquisition process 
$\Lambda_{\textit{nuis}, \psi}$:}

\noindent 
The space $\Lambda_{\textit{nuis},\psi}$ corresponds to the observed data nuisance tangent space of the acquisition process and is a subspace of $\Lambda_2$ (Theorem 8.1 from \cite{tsiatis_semiparametric_2006}). In our AFA setting, future feature values do not influence past acquisition decisions, resulting in the following conditional independences:
$
A^t 
\indep
\overline{X}_{(1)}^t, Y 
| 
G_{\underline{A}^{t-1}}(\overline{X}_{(1)}^{t-1}), \underline{A}^{t-1}
$ which can be translated into tangent space restrictions such that: 
\begin{flalign}
    \label{eq:nuisance_tangent_space_psi}
    \Lambda_{\textit{nuis},\psi}
    = 
    \Lambda_{\textit{nuis},\psi}^1 
    \oplus 
    \Lambda_{\textit{nuis},\psi}^2 
    \oplus 
    ... 
    \oplus 
    \Lambda_{\textit{nuis},\psi}^T
\end{flalign}
\noindent where each subspace $\Lambda_{\textit{nuis},\psi}^t$ is defined as:
\begin{flalign*}
    \Lambda_{\textit{nuis},\psi}^t 
    \equiv 
    \biggl\{ 
    &
    \gamma^t(
    A^t,
    \underline{A}^{t-1},
    G_{\underline{A}^{t-1}}(X_{(1)}) 
    ) 
    \in 
    \mathcal{H}
    : 
    \mathbb{E} 
    \left[ 
    \gamma^t(A^t,
    \underline{A}^{t-1},
    G_{\underline{A}^{t-1}}(X_{(1)}))
    | 
    \underline{A}^{t-1},
    G_{\underline{A}^{t-1}}(X_{(1)})
    \right] = 0
    \biggl\}.
\end{flalign*}

\noindent As these subspaces are orthogonal, projections onto them are available in closed form, and are known to be (as derived in Appendix \ref{app_semiparametric_theory}): 
\begin{flalign*}
    \Pi(  
    [.] 
    | 
    \Lambda_{\textit{nuis},\psi}^t
    )
    = 
    \mathbb{E}
    \left[
    [.]
    | 
    A^t,
    \underline{A}^{t-1},
    G_{\underline{A}^{t-1}}(X_{(1)})
    \right]
    - 
    \mathbb{E}
    \left[
    [.]
    | 
    \underline{A}^{t-1},
    G_{\underline{A}^{t-1}}(X_{(1)})
    \right].
\end{flalign*}

\noindent 
Amongst the class of observed data influence functions, one may further be interested in finding the one with the smallest asymptotic variance, i.e. the efficient observed data
influence function $\varphi_{\textit{eff}}$. 
It can be found via the following projection of $h$ (Theorem 10.1 from \cite{tsiatis_semiparametric_2006}):
\begin{flalign*}
    \varphi_{\textit{eff}} = 
    h(A,G_A(X_{(1)}), Y) - 
    \Pi 
    \left(  
    h(A,G_A(X_{(1)}), Y)
    | 
    \Lambda_2
    \right).
\end{flalign*}
\noindent 
Hence, to construct an efficient influence function, one needs to find an element $h$, construct the space $\Lambda_2$, and project onto it.
We now embed the three viewpoints of this work—missing data view, offline RL view, and semi-offline RL view—into this framework. We begin with the traditional missing data approach based on complete cases. 

\subsection{Missing data view}
Now, we discuss the standard, traditional missing data approach to choosing an element $h$ of the IPW space $\Lambda_\textit{IPW}$ and constructing the augmentation space $\Lambda_2$. Traditional semiparametric estimators for missing data problems rely on the missing data positivity assumption (Assumption \ref{assump:positivity_missing_data}), which requires complete cases. The corresponding estimators are referred to as augmented inverse probability weighting complete case (AIPWCC) estimators.

These estimators are called complete case estimators because they choose the missing data IPW estimator for $h$ (see \cite{tsiatis_semiparametric_2006} for more details):
\begin{flalign}
\label{eq:h_missing_data}
    h_\textit{Miss}(A,G_A(X_{(1)}), Y) 
    =    
    \rho_\textit{Miss} \mathbb{E}[C_{(\pi_\alpha)}|X_{(1)},Y] - J 
    \in \Lambda_\textit{IPW}.
\end{flalign}
\noindent 
Furthermore, they also require the missing data positivity assumption (Assumption \ref{assump:positivity_missing_data}) for the construction of $\Lambda_2$, given as:
\begin{flalign}
    \label{eq:lambda_2_missing_data}
    \Lambda_2  = 
    \Biggl{\{} & 
    \sum_{a \in \mathcal{A} \backslash \vec{1}}
    \left[
    \mathbb{I}(A=a)
    - 
    \prod_{t=1}^T
    \frac{
    \mathbb{I}(A^t=\vec{1})
    \pi_\beta^t(A^t=a^t
    |
    G_{\underline{a}^{t-1}}(X_{(1)}),\underline{a}^{t-1})) 
    }
    {
    \pi_\beta^t(A^t=\vec{1}
    |
    G_{\underline{A}^{t-1}}(X_{(1)}),\underline{A}^{t-1} = \vec{1}
    )}
    \right] 
    b_a( 
    G_A(X_{(1)}), 
    Y): 
    \nonumber
    \\ &
     b_a( 
    G_A(X_{(1)}), 
    Y) \in \mathcal{H}
    \Biggl{\}}.
\end{flalign}
\noindent 
For the interested reader, we show how both of these choices are derived under the missing data positivity assumption in Appendix \ref{app:derivation_missing_data_semiparametrics}.

In addition to the strong positivity requirements, a key challenge is that general projections onto 
$\Lambda_2$
are not available in closed form 
\cite{tsiatis_semiparametric_2006}. 
Alternatives to still perform such projections and achieve full efficiency include iterative numerical methods. However, these methods are difficult to implement and involve significant computational challenges, which have hindered their practical application \cite{tsiatis_semiparametric_2006}.
Therefore, such methods are beyond the scope of this work, but we direct interested readers to \cite{tsiatis_semiparametric_2006} for further details.


\subsection{Offline RL view}

An alternative to applying traditional semiparametric theory for missing data problems is to start from the offline 
RL view.
In this approach, one projects the influence function $\varphi_{\textit{Off}}$—associated with the DRL estimator—onto the restricted tangent space under the NDE assumption, denoted by $\Lambda_\textit{NDE}$: 
\begin{flalign*}
    \varphi_{\textit{eff}} = \Pi \left( \varphi_{\textit{Off}}\big| \Lambda_\textit{NDE}\right) = \varphi_{\textit{Off}} - \Pi \left( \varphi_{\textit{Off}}\big| \Lambda_\textit{NDE}^\perp \right).
\end{flalign*}

\noindent 
This approach was first introduced by Liu et al. \cite{liu_efficient_2021} in the context of dynamic testing and treatment regimes, though it can be adapted to the AFAPE setting. However, their derivation was limited to a single acquirable feature per time step ($A^t \in \{0,1\}$).

The space $\Lambda_\textit{NDE}^\perp$ is, adapted to AFAPE, given as: 
\begin{flalign*}
    \Lambda_\textit{NDE}^\perp = \Lambda_{*}
- 
\Pi(
\Lambda_{*}
| 
\Lambda_{\psi})
\end{flalign*}
with 
\begin{flalign*}
     \Lambda_*  
     \equiv 
     \Bigg\{
     & b^{t}_{
     \underline{i}^{t-1}, 
     \overline{i}^{t+1}}( \underline{A}^{t-1}, G_{\underline{A}^{t-1}}(X_{(1)}), X_{(1)}^{\underline{i}^{t-1}}, X_{(1)}^{\overline{i}^{t+1}} ,  Y) 
     \left(
     \frac{A^t}{\pi^t_\beta}-1
     \right) 
     \prod_{t^{\prime}=1}^{t-1}
     \left(\frac{A^{t^{\prime}}}{\pi^{t^{\prime}}_\beta}\right)^{i^{t^{\prime}}} \prod_{t^{\prime}=t+1}^{T}
     \left(
     \frac{A^{t^{\prime}}}
     {\pi^{t^{\prime}}_\beta}
     \right)^{i^{t^{\prime}}}:
     \\  
     & b^{t}_{\underline{i}^{t-1}, \overline{i}^{t+1}}
     ( \underline{A}^{t-1}, G_{\underline{A}^{t-1}}(X_{(1)}), X_{(1)},  Y) 
     \in 
     \mathcal{H} 
     \Bigg\}
\end{flalign*} 
\noindent 
where $\underline{i}^{t-1}$
$\equiv \{ 
i^1, .. , i^{t-1}
\}$ 
and $\overline{i}^{t+1}  \equiv \{i^{t+1}, .. , i^{T}\}$ 
index subsets of $\{0,1\}^{t-1}$ and $\{0,1\}^{T-t}$ respectively and 
$
X_{(1)}^{\underline{i}^{t-1}}
\equiv 
\{ X_{(1)}^{t'} : 1 \leq t' \leq t-1, i^{t'} = 1
\}$
and 
$
X_{(1)}^{\overline{i}^{t+1}}
\equiv 
\{ X_{(1)}^{t'} : t+1 \leq t' \leq T, i^{t'} = 1\}
$. Furthermore, we denote $\pi^{t}_\beta \equiv \pi_\beta^t(A^t = 1| G_{\underline{A}^{t-1}}(X_{(1)}),\underline{A}^{t-1})$ for this one acquisition per time-point setting.

The space $\Lambda_*$ includes a term for each combination of observed features (indexed by the subsets $i$). This construction also relies on a stringent positivity assumption, which is even stronger than what is required for identification under the offline RL view (Assumption \ref{assump:positivity_offline_RL}). Specifically, it demands that $\pi^{t}_\beta > 0$ for all $t$, $G_{\underline{A}^{t-1}}(X_{(1)})$, and $\underline{A}^{t-1}$, irrespective of the target policy.
Furthermore, since the target parameter doesn't depend on the acquisition process, we have $\Lambda_\psi = \Lambda_{\textit{nuis},\psi}$. 

In the following lemmas, we demonstrate the equivalence between this offline RL approach and the semiparametric theory for MAR missing data problems:

\begin{lemma}
\label{lemma_h_offline_RL}
    (Relating the offline RL IPW estimator to  the IPW space).
    The functional $h_{\textit{Off}} \equiv \rho_{\textit{Off}} C-J$, based on the IPW estimator from the offline RL view, is a valid element of the IPW space: $h_{\textit{Off}} \in \Lambda_\textit{IPW}$.
\end{lemma}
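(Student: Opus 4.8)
The plan is to verify directly the two conditions defining membership in $\Lambda_\textit{IPW}$: that $h_{\textit{Off}} \in \mathcal{H}$ (zero mean and finite variance), and that its conditional expectation given the full data equals the full data influence function, i.e. $\mathbb{E}[h_{\textit{Off}} \mid X_{(1)}, Y] = \varphi^F(X_{(1)}, Y)$. The finite-variance requirement I would treat as a standing regularity condition, guaranteed by the offline RL positivity assumption (Assumption \ref{assump:positivity_offline_RL}) bounding the weights $\rho_{\textit{Off}}^T$ away from degeneracy. The zero-mean property then follows from the identification/consistency result of the offline RL view, namely $\mathbb{E}[\rho_{\textit{Off}}^T C] = J$, so that $\mathbb{E}[h_{\textit{Off}}] = \mathbb{E}[\rho_{\textit{Off}}^T C] - J = 0$.

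The substantive part is the conditional-expectation identity. First I would fix the conditional law of the acquisition vector $A$ given the full data $X_{(1)}, Y$. Under NDE (Assumption \ref{assump:nde}) the counterfactual features $X_{(1)}$ are unaffected by acquisitions, so $\underline{X}^{t-1} = G_{\underline{A}^{t-1}}(X_{(1)})$ is a deterministic function of $A$ and $X_{(1)}$; under NUC (Assumption \ref{assump:nuc}) acquisitions depend only on the observed past, yielding the conditional independence $A^t \indep \overline{X}_{(1)}^t, Y \mid \underline{X}^{t-1}, \underline{A}^{t-1}$ that was used to obtain Eq. \ref{eq:nuisance_tangent_space_psi}. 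Together these give the factorization $p(A \mid X_{(1)}, Y) = \prod_{t=1}^T \pi_\beta^t(A^t \mid \underline{X}^{t-1}, \underline{A}^{t-1})$.

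Next I would substitute this factorization together with the definition of $\rho_{\textit{Off}}^T$ and of $C = f_C(A, G_A(X_{(1)}), Y)$ into $\mathbb{E}[\rho_{\textit{Off}}^T C \mid X_{(1)}, Y]$, written as a sum over $a \in \mathcal{A}$. The product of $\pi_\beta^t$ factors coming from the conditional law cancels exactly against the denominator of $\rho_{\textit{Off}}^T$, leaving $\sum_{a \in \mathcal{A}} \pi_\alpha(a \mid G_a(X_{(1)})) f_C(a, G_a(X_{(1)}), Y)$, which by Eq. \ref{eq:full_data_if} equals $\varphi^F(X_{(1)}, Y) + J$. Subtracting the constant $J$ then gives $\mathbb{E}[h_{\textit{Off}} \mid X_{(1)}, Y] = \varphi^F(X_{(1)}, Y)$, completing the verification that $h_{\textit{Off}} \in \Lambda_\textit{IPW}$.

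The main obstacle is not the algebraic cancellation, which is routine, but the careful justification of the conditional factorization of $p(A \mid X_{(1)}, Y)$: one must argue that conditioning on the entire counterfactual vector $X_{(1)}$ and on $Y$ introduces no dependence in the acquisition mechanism beyond the observed history already entering $\pi_\beta$. This is precisely where NDE (so that $X_{(1)}$ is exogenous to acquisitions and $G$ is well defined) and NUC (so that no latent driver of $A^t$ correlates with the conditioned-on variables) are jointly indispensable, and I would make this dependence explicit rather than leave it implicit.
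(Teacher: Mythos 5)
Your proposal is correct and follows essentially the same route as the paper's proof: expand $\mathbb{E}[\rho_{\textit{Off}}^T C \mid X_{(1)}, Y]$ as a sum over $a \in \mathcal{A}$ weighted by the propensity factorization $\prod_t \pi_\beta^t$, cancel against the denominator of the weights, and recognize the result as $\varphi^F(X_{(1)},Y) + J$ via Eq.~\ref{eq:full_data_if}. Your explicit justification of the factorization $p(A \mid X_{(1)}, Y) = \prod_t \pi_\beta^t(A^t \mid \underline{X}^{t-1}, \underline{A}^{t-1})$ from NDE and NUC is a point the paper leaves implicit, but the substance of the argument is identical.
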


\begin{lemma}
\label{lemma_liu_lambda_2}
    ($\Lambda_*$ is equal to the augmentation space).
    The augmentation space $\Lambda_{2}$ is 
    equal to $\Lambda_*$. 
\end{lemma}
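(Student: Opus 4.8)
The plan is to prove the equality by the two inclusions $\Lambda_* \subseteq \Lambda_2$ and $\Lambda_2 \subseteq \Lambda_*$, treating both as closed linear subspaces of $\mathcal{H}$. The single fact driving both directions is that, under NUC (Assumption~\ref{assump:nuc}) together with the temporal restriction $\overline{X}_{(1)}^{t} \not\rightarrow A^t$, the acquisition mechanism factorizes sequentially given the full data, $p(A \mid X_{(1)}, Y) = \prod_{t=1}^{T} \pi_\beta^t(A^t \mid G_{\underline{A}^{t-1}}(X_{(1)}), \underline{A}^{t-1})$, so that $\mathbb{E}[A^t \mid \underline{A}^{t-1}, X_{(1)}, Y] = \pi_\beta^t$ for every $t$. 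Since each $A^t \in \{0,1\}$, this yields the two moment identities $\mathbb{E}[A^t/\pi_\beta^t \mid \underline{A}^{t-1}, X_{(1)}, Y] = 1$ and $\mathbb{E}[A^t/\pi_\beta^t - 1 \mid \underline{A}^{t-1}, X_{(1)}, Y] = 0$, which are the workhorses of the argument. Throughout I use the strong positivity requirement noted below the definition of $\Lambda_*$ (that $\pi_\beta^t > 0$ everywhere) to guarantee the inverse weights are well defined and every generator lies in $\mathcal{H}$.

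For $\Lambda_* \subseteq \Lambda_2$, I would first observe that each generator is a function of the observed data only: a factor $(A^{t'}/\pi_\beta^{t'})^{i^{t'}}$ vanishes whenever $A^{t'}=0$, so the dependence of $b^t$ on a feature $X_{(1)}^{t'}$ with $i^{t'}=1$ survives only on the event $A^{t'}=1$, where that feature is acquired and hence contained in $G_A(X_{(1)})$. I would then compute $\mathbb{E}[\,\cdot \mid X_{(1)}, Y]$ by a backward iterated expectation over $t' = T, T-1, \dots, 1$. Conditioning on $(\underline{A}^{t'-1}, X_{(1)}, Y)$, no factor other than the one carrying $A^{t'}$ depends on $A^{t'}$ (note that $\pi_\beta^{s}$ depends only on $\underline{A}^{s-1}$, and $b^t$ only on $\underline{A}^{t-1}$). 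Hence each future factor ($t'>t$) integrates to $1$ and disappears; when the induction reaches the distinguished index $t$, the remaining $\underline{A}^{t-1}$-measurable prefactor $b^t\prod_{t'<t}(\cdot)^{i^{t'}}$ multiplies $\mathbb{E}[A^t/\pi_\beta^t - 1 \mid \underline{A}^{t-1}, X_{(1)}, Y] = 0$, giving conditional mean zero. By linearity every element of $\Lambda_*$ lies in $\Lambda_2$.

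The reverse inclusion $\Lambda_2 \subseteq \Lambda_*$ is where the real work lies. The plan is to introduce the monomials $M_i \equiv \prod_{t : i^t = 1}(A^t/\pi_\beta^t)$ for $i \in \{0,1\}^T$ and argue in three steps. First, every observed-data function can be written as $\sum_i d_i M_i$ with $d_i$ depending only on $Y$ and on the features $\{X_{(1)}^{t}: i^t=1\}$: evaluating on a fixed acquisition pattern $a$ relates the pattern-components of the function to $\{d_i : i \leq a\}$ through a system that is triangular in the lattice order with nonzero (history-dependent) weights, hence a bijection respecting the stated measurability. Second, by the same backward computation as above, $\mathbb{E}[M_i \mid X_{(1)}, Y] = 1$ for every $i$, so $\sum_i d_i M_i \in \Lambda_2$ is equivalent to the single identity $\sum_i d_i = 0$. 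Third, a generator at distinguished index $t$ expands as $b^t(M_{i^{+t}} - M_{i^{-t}})$, contributing $+b^t$ to the coefficient of the monomial with $i^t=1$ and $-b^t$ to the adjacent one with $i^t=0$, where $b^t$ depends on the features at the remaining active indices but not on $X_{(1)}^t$; so the reverse inclusion reduces to showing that the linear map from generator coefficients to the tuple $\{d_i\}$ surjects onto the sum-zero, consistently-measurable subspace.

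The main obstacle I anticipate is this surjectivity, which at first glance looks obstructed: a top monomial coefficient $d_{\vec{1}}$ is an arbitrary-looking function of all features, yet the generators hitting it are sums of functions each missing one feature. The resolution, which I would make precise, is that the constraint $\sum_i d_i = 0$ together with the measurability of the \emph{lower} coefficients forces the high-order coefficients into exactly the additively separable form the generators produce, rendering the top equations dependent. One then verifies solvability by peeling coordinates from the bottom of the lattice upward: choose the $Y$-only generator coefficients to match $d_{\vec{0}}$ and the singleton coefficients, after which every higher equation is automatically consistent (the $T=1,2$ cases already exhibit this cancellation in miniature). Equivalently, one may establish $\Lambda_2 \cap \Lambda_*^{\perp} = \{0\}$ by choosing the free functions $b^t$ to isolate each pattern-component of a putative orthogonal residual. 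Either route is a finite combinatorial verification once the monomial reparametrization and the two moment identities are in hand.
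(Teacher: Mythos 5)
Your overall strategy is genuinely different from the paper's: the paper does not prove the two inclusions directly, but instead introduces the full-data space $\Lambda_\textit{*}^{AF}$ of all functions $b(A,X_{(1)},Y)$ with $\mathbb{E}[b\mid X_{(1)},Y]=0$, observes that $\Lambda_2$ is its intersection with the observed-data functions, and then cites Liu et al.\ (proof of their Remark 13) for the fact that this intersection is exactly $\Lambda_*$. Your forward inclusion $\Lambda_*\subseteq\Lambda_2$ is correct and in fact more explicit than anything in the paper: the observability argument (each factor $(A^{t'}/\pi_\beta^{t'})^{i^{t'}}$ vanishes unless $A^{t'}=1$, on which event $X_{(1)}^{t'}$ is observed) and the backward iterated-expectation computation using $\mathbb{E}[A^t/\pi_\beta^t-1\mid \underline{A}^{t-1},X_{(1)},Y]=0$ are exactly right.

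The gap is in the reverse inclusion $\Lambda_2\subseteq\Lambda_*$, which is the entire nontrivial content of the lemma. Your three-step plan correctly identifies the obstacle (surjectivity of the generator map onto the sum-zero coefficient tuples), but the resolution is asserted rather than proven: the claim that ``$\sum_i d_i=0$ together with the measurability of the lower coefficients forces the high-order coefficients into exactly the additively separable form the generators produce'' is precisely the statement that needs a proof, and it is the content of Liu et al.'s argument that the paper outsources. Calling it ``a finite combinatorial verification'' understates it, and two concrete complications are glossed over. First, the propensities $\pi_\beta^{t}$ appearing inside your monomials $M_i$ are themselves functions of the realized history $\underline{A}^{t-1}$ and $G_{\underline{A}^{t-1}}(X_{(1)})$, so $M_i$ is not a function of $\{A^{t}:i^{t}=1\}$ alone and the weights $w_{i,a}$ in your triangular system vary with the full pattern $a$; the claimed clean correspondence ``generator $=b^t(M_{i^{+t}}-M_{i^{-t}})$'' therefore only holds after further expansion. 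Second, the generator coefficients $b^{t}$ in $\Lambda_*$ are permitted to depend on $\underline{A}^{t-1}$ and $G_{\underline{A}^{t-1}}(X_{(1)})$, not merely on $(X_{(1)}^{\{t':i^{t'}=1\}},Y)$, so matching generators against your $\{d_i\}$ parametrization requires expanding indicators $\mathbb{I}(\underline{A}^{t-1}=\underline{a}^{t-1})$ back into the monomial basis, which reintroduces cross-terms your surjectivity argument does not account for. Until the peeling-up-the-lattice construction is carried out explicitly (or $\Lambda_2\cap\Lambda_*^{\perp}=\{0\}$ is verified), the reverse inclusion is not established.
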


\noindent Both lemmas are proven in Appendix \ref{app:derivation_offline_RL_semiparametrics}.

We now demonstrate that both approaches—whether derived from the offline RL or the missing data view—yield the same influence function (as expected):
\begin{flalign*}
    \varphi_{\textit{eff}} & = 
    \varphi_{\textit{Off}} 
    - 
    \Pi (\varphi_{\textit{Off}}
    | 
    \Lambda_\textit{NDE}^\perp)
    \\ 
    & 
    \overset{*_1}{=}
    h_{\textit{Off}} 
    - 
    \Pi (h_{\textit{Off}} | 
    \Lambda_{\textit{nuis},\psi})  
    - 
    \Pi( h_{\textit{Off}} - \Pi (h_{\textit{Off}} |
    \Lambda_{\textit{nuis},\psi}) |  \Lambda_{*}
    - 
    \Pi(
    \Lambda_{*}
    | 
    \Lambda_{\textit{nuis},\psi}) )
    \\ 
    & = 
    h_{\textit{Off}}  
    - 
    \Pi (h_{\textit{Off}}  | 
    \Lambda_{\textit{nuis},\psi})  
    - 
    \Pi( h_{\textit{Off}}   |
    \Lambda_2 
    - 
    \Pi \left( \Lambda_2 | 
    \Lambda_{\textit{nuis},\psi} 
    \right)) 
    \\ 
    & \overset{*_2}{=} 
    h_{\textit{Off}}  
    - 
    \Pi( h_{\textit{Off}}   |
    \Lambda_2 
    ) 
    \end{flalign*}
where we use in $*1)$ that the influence function can be decomposed: $\varphi_\textit{Off} = h_{\textit{Off}} 
- 
\Pi (h_{\textit{Off}} | 
\Lambda_{\textit{nuis},\psi})$. In $*2)$, we used that $\Lambda_{\textit{nuis}, \psi} \subset \Lambda_2$.

This shows that a separate projection onto $\Lambda_{\textit{nuis},\psi}$ is unnecessary. However, as noted earlier, a projection onto $\Lambda_\textit{NDE}^\perp$ (or $\Lambda_2$) is not available in closed form \cite{liu_efficient_2021}, as previously discussed in the missing data approach. Liu et al. \cite{liu_efficient_2021} suggest instead constructing an arbitrarily large subspace $\Omega$ onto which a projection is feasible. In this case, it becomes helpful to first project onto $\Lambda_{\textit{nuis},\psi}$ (where closed-form projections are available), ensuring that the resulting functional remains a valid influence function, even when the second projection is onto the approximated space $\Omega$.

The resulting estimator by Liu et al. \cite{liu_efficient_2021} is termed "nearly efficient". It is, however, still difficult to implement and has so far only been tested only for 
a one time point, one acquisition setting.


\subsection{Semi-offline RL view}

In this section, we explore how the semi-offline RL view integrates with semiparametric theory and derive the corresponding influence function for the semi-offline DRL estimator given in Theorem \ref{theorem_efficient_if}.

We begin by establishing that the IPW estimator derived from the semi-offline RL framework can be used to construct an element of the IPW space $\Lambda_\textit{IPW}$:
\begin{lemma}
    \label{lemma_h_semioffline_RL}
    (Relating the semi-offline RL IPW estimator to  the IPW space).
    $h_\textit{Semi} \equiv h_\textit{Semi}(A, G_A({X_{(1)}}),Y) = 
    \hat{\mathbb{E}}_{n'} [\rho_{\textit{Semi}}^T C'| A,G_A({X_{(1)}}),Y]  - J$ is an element of the IPW space $\Lambda_\textit{IPW}$. 
\end{lemma}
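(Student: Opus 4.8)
The plan is to verify directly the two defining properties of the IPW space $\Lambda_\textit{IPW}$: that $h_\textit{Semi}\in\mathcal{H}$ (zero mean and finite variance), and that its conditional expectation given the full data recovers the full data influence function, $\mathbb{E}[h_\textit{Semi}\mid X_{(1)},Y]=\varphi^F(X_{(1)},Y)$. I would work throughout with the population version of $h_\textit{Semi}$, i.e. the probability limit of the inner empirical average, replacing $\hat{\mathbb{E}}_{n'}[\,\cdot\mid A,G_A(X_{(1)}),Y]$ by $\mathbb{E}_{\pi'_\textit{sim}}[\,\cdot\mid A,G_A(X_{(1)}),Y]$. Since the additive constant $-J$ occurs in both $h_\textit{Semi}$ and $\varphi^F$, the content of the moment condition reduces to showing that the conditional mean of the weighted cost $\rho^T_\textit{Semi}C'$ equals $\mathbb{E}[C_{(\pi_{\alpha})}\mid X_{(1)},Y]$ as given in Eq.~\ref{eq_full_data_if}.

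The core computation proceeds by conditioning on $X_{(1)},Y$ and taking expectations successively over $A$ (drawn under the observed-data law $p$) and $A'$ (drawn under $\pi'_\textit{sim}$). Under NDE (Assumption~\ref{assump:nde}) and NUC (Assumption~\ref{assump:nuc}) the acquisition factorizes as $p(A\mid X_{(1)},Y)=\prod_{t=1}^T\pi_\beta^t(A^t\mid\underline{X}^{t-1},\underline{A}^{t-1})$ with $\underline{X}^{t-1}=G_{\underline{A}^{t-1}}(X_{(1)})$, and is in particular independent of $Y$, since acquisitions precede and do not depend on the label. Combining this with the factorization of the simulation policy (Remark~\ref{remark_semi_offline_RL_observational}) and the explicit form of the weights in Eq.~\ref{eq_ipw_semi-offline_RL_weights_1}, the product
\begin{align*}
p(A\mid X_{(1)},Y)\,\pi'_\textit{sim}(A'\mid G_{A'}(X_{(1)}),A)\,\rho^T_\textit{Semi}
\end{align*}
telescopes: every $\pi_\beta^t$ factor cancels against the denominator of the second fraction of $\rho^T_\textit{Semi}$, and every $\pi'^t_\textit{sim}$ factor cancels against the denominator of its first fraction, leaving exactly $\prod_{t=1}^T\pi_\alpha^t(A'^t\mid\underline{X}'^{t-1},\underline{A}'^{t-1})\prod_{t=1}^T\pi_\textit{id}^t(A^t\mid\underline{A}'^t,\underline{X}^{t-1},\underline{A}^{t-1})$. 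This is the same $q'/p'$ cancellation underlying Theorem~\ref{theorem_consistency_IPW}, examined one layer before the final expectation.

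It then remains to marginalize $A$. Because $C'=f_C(A',G_{A'}(X_{(1)}),Y)$ and the $\pi_\alpha$ factors depend only on $A',X_{(1)},Y$, I would carry out the summation over $A$ first, acting solely on $\prod_t\pi_\textit{id}^t(A^t\mid\underline{A}'^t,\underline{X}^{t-1},\underline{A}^{t-1})$. Since each $\pi_\textit{id}^t$ is by construction a valid conditional density (Theorem~\ref{theorem_identification_semi_offline_RL}), this product is a proper sequential joint law of $A$ given $\underline{A}'^T,X_{(1)},Y$, so $\sum_A\prod_t\pi_\textit{id}^t=1$. What survives is $\sum_{a\in\mathcal{A}}\pi_\alpha(a\mid G_a(X_{(1)}))\,f_C(a,G_a(X_{(1)}),Y)$, which equals $\mathbb{E}[C_{(\pi_{\alpha})}\mid X_{(1)},Y]$ by Eq.~\ref{eq_full_data_if}. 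Hence $\mathbb{E}[h_\textit{Semi}\mid X_{(1)},Y]=\mathbb{E}[C_{(\pi_{\alpha})}\mid X_{(1)},Y]-J=\varphi^F(X_{(1)},Y)$. Zero mean follows by iterated expectation and $\mathbb{E}[\mathbb{E}[C_{(\pi_{\alpha})}\mid X_{(1)},Y]]=J$, and finite variance from boundedness of the costs together with the global positivity assumption bounding $\rho^T_\textit{Semi}$.

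The hard part will be justifying the marginalization cleanly: the conditioning arguments $\underline{X}^{t-1}=G_{\underline{A}^{t-1}}(X_{(1)})$ themselves depend on $A$, so one cannot treat each $\pi_\textit{id}^t$ as if its conditioning set were fixed. I would handle this by summing in reverse time order ($A^T$ first), using that each $\pi_\textit{id}^t$ normalizes to one for every fixed history, and only then collapsing the next factor. A secondary subtlety is the support indicator $\mathbb{I}(A^t\in\tilde{\mathcal{A}}_\textit{adm}^t)$ inside $\pi_\textit{id}^t$, which restricts the range of $A$; here the global positivity assumption (Assumption~\ref{assump:positivity_global_semi_offline_RL}) must be invoked to guarantee that this range is non-empty and that the denominators cancelled against $\pi_\beta^t$ are strictly positive on the relevant support. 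Finally, I would note that the argument never uses the particular form of $\pi_\textit{id}^t$ or $\pi'_\textit{sim}$, so it applies verbatim to both the general weights of Eq.~\ref{eq_ipw_semi-offline_RL_weights_1} and the truncated-$\pi_\beta$ specialization of Eq.~\ref{eq_ipw_semi-offline_RL_weights_2}.
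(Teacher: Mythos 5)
Your proposal is correct and follows essentially the same route as the paper's proof in Appendix L: expand the conditional expectation of $\rho_{\textit{Semi}}^T C'$ given the full data, cancel the $\pi_\beta^t$ and $\pi'^t_{\textit{sim}}$ factors against the denominators of the weights, sum out $A$ using that $\prod_t\pi_{id}^t$ is a valid sequential density, and recognize the remaining sum over $a'$ as $\varphi^F(X_{(1)},Y)$ via Eq.~\ref{eq_full_data_if}. Your additional remarks on reverse-time-order marginalization, the support indicator, and finite variance are careful refinements the paper leaves implicit, but they do not change the argument.
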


\noindent 
The proof of this lemma can be found in Appendix 
\ref{app:derivation_semioffline_RL_semiparametrics}. 

Since constructing the augmentation space $\Lambda_2$ often involves strong positivity assumptions and projections onto $\Lambda_2$ are typically not available in closed form, we propose an alternative approach. Specifically, we suggest projecting onto subspaces of $\Lambda_2$ where closed-form projections are feasible. One such subspace is $\Lambda_{2,\textit{Semi}}$, a tractable subspace of $\Lambda_2$, defined as:
\begin{flalign*}
    \Lambda_{2,\textit{Semi}}(\Xi) = \Lambda_{2,\textit{Semi}}^1(\Xi) \oplus \Lambda_{2,\textit{Semi}}^2(\Xi) \oplus ... \oplus \Lambda_{2,\textit{Semi}}^T(\Xi)
\end{flalign*}
with each
\begin{flalign*}
\Lambda_{2,\textit{Semi}}^t(\Xi)
     = 
    \biggl{\{} & 
    b^t(A^t, \underline{A}^{t-1}, G_{\underline{A}^{t-1}}(X_{(1)}), \Xi)
    \in \mathcal{H} : 
    \mathbb{E}[b(A^t , \underline{A}^{t-1}, G_{\underline{A}^{t-1}}(X_{(1)}), \Xi)
    | 
    \underline{A}^{t-1}, 
    G_{\underline{A}^{t-1}}(X_{(1)}),
    \Xi 
    ] = 0
    \biggl{\}}.
\end{flalign*}



\noindent 
Notably, we have $\Lambda_{\psi} \subseteq \Lambda_{2,\textit{Semi}}(\Xi) \subset \Lambda_{2}$ with equality  $\Lambda_{\psi} = \Lambda_{2,\textit{Semi}}$ if $\Xi = \emptyset$.

This yields the following class of influence functions, proving Theorem \ref{theorem_efficient_if}: 
\begin{flalign*}
    \varphi_{\textit{Semi}}&(A, G_A{(X_{(1)}}),Y; \Xi)
     = 
    h_\textit{Semi}
    -  \Pi ( h_\textit{Semi} | \Lambda_{2,\textit{Semi}}(\Xi) )
\\ 
& 
= 
    h_\textit{Semi}  
    - 
    \sum_{t=1}^T
    \mathbb{E}
    \left[ 
    h_\textit{Semi}
    \Big| 
    A^t,  
    \underline{A}^{t-1}, 
    G_{\underline{A}^{t-1}}(X_{(1)}), 
    \Xi
    \right]
    + 
    \sum_{t=1}^T
    \mathbb{E}
    \left[ 
    h_\textit{Semi}
    \Big|    
    \underline{A}^{t-1}, 
    G_{\underline{A}^{t-1}}(X_{(1)}), 
    \Xi
    \right]
\\ 
& 
=
    \mathbb{E}
    \left[ 
    \rho_{\textit{Semi}}^T f_C(A', G_{A'}(X_{(1)}), Y)
    | Y, G_{A}(X_{(1)}),A \right]
\\ & -
    \sum_{t=1}^T
    \mathbb{E}
    \left[ 
    \rho_{\textit{Semi}}^t Q_\textit{Semi}^t 
    \Big|   
    A^t, 
    \underline{A}^{t-1}, 
    G_{\underline{A}^{t-1}}(X_{(1)}), 
    \Xi
    \right]
\\ & +
    \sum_{t=1}^T
    \mathbb{E}
    \left[ 
    \rho_{\textit{Semi}}^{t-1} V_\textit{Semi}^{t-1}  
    \Big|   
    \underline{A}^{t-1}, 
    G_{\underline{A}^{t-1}}(X_{(1)}), 
    \Xi
    \right]
  - J
\\ 
& 
=  
    \mathbb{E}
    \left[ 
     \rho_{\textit{Semi}}^T f_C(A', G_{A'}(X_{(1)}), Y)
     - 
     \sum_{t=1}^T
     \rho_{\textit{Semi}}^t Q_\textit{Semi}^t 
     +
     \sum_{t=1}^T
     \rho_{\textit{Semi}}^{t-1} V_\textit{Semi}^{t-1} 
     \Big|   
    A, 
    G_{A}(X_{(1)}), 
    Y
    \right]
  - J.
\end{flalign*}
\noindent 
We provide full details of this derivation in Appendix \ref{app:semioffline_RL_projection}.

From the semiparametric viewpoint, it holds that a larger set $\Xi$ will increase efficiency of the corresponding influence function as mentioned in Remark \ref{remark_efficiency_as_a_function_of_xi}. This is the case, since employing a larger set $\Xi$ will result in a larger subspace $\Lambda_{2,\textit{Semi}}(\Xi)\subset \Lambda_2$ which in turn implies a higher efficiency.

\section{Experiments}
\label{sec_experiments}

We evaluate the different estimators on synthetic datasets where the missingness is artificially induced to allow the comparison with the ground truth.


\subsection{Experiment Design}

We perform 5 experiments with different violations of the identifying assumptions: 
\begin{itemize}
    \item \textbf{Experiment 1: }
    Assumptions \ref{assump:nde} (NDE),
    \ref{assump:nuc} (NUC),
    \ref{assump:positivity_offline_RL} (offline RL positivity), \ref{assump:positivity_missing_data} (missing data positivity), and \ref{assump:max_positivity_global_semi_offline_RL} (maximal global positivity) all hold.
    \item  \textbf{Experiment 2: }
    Assumption \ref{assump:positivity_missing_data} (missing data positivity) is violated 
    \item  \textbf{Experiment 3: }
    Assumption \ref{assump:positivity_offline_RL} (offline RL positivity) is violated for some agents.
    \item  \textbf{Experiment 4: }
     Assumptions \ref{assump:nuc} (NUC) is violated. 
     \item  \textbf{Experiment 5: }
     Assumptions \ref{assump:nde} (NDE) is violated.
\end{itemize}
\noindent 
We evaluate random AFA policies and a proximal policy optimization (PPO)  RL agent \cite{schulman_proximal_2017} as AFA agents and use impute-then-regress classifiers \cite{le_morvan_whats_2021} with unconditional mean imputation and a logistic regression classifier. Nuisance functions ($\hat{Q}_\textit{Semi}$ and $\hat{\pi}_\beta$) are fitted using multi-layer perceptrons and logistic regression models, respectively. The assumed logistic regression model for the propensity score correctly matches the ground truth. 
We compare the following estimators:

\begin{itemize}
    \setlength{\itemsep}{0pt}
    \item \textit{Imp-Mean:} Mean imputation (biased estimator)
    \item  \textit{Blocking:} Blocks the acquisitions of not available features, but offers no correction. This corresponds to the estimate 
    $\hat{\mathbb{E}}_{n'}[C']$ (with $\pi'_{sim} = \pi'_\alpha$) which is biased. 
    \item \textit{CC:} Complete case analysis (only unbiased under MCAR). 
    \item \textit{IPW-Miss}/\textit{IPW-Miss-gt:} The IPW estimator from the missing data view. The weights were normalized to reduce the variance of the estimator. \textit{IPW-Miss-gt} uses the ground truth propensity score model $\pi_\beta$ instead of its estimate $\hat{\pi}_\beta$.
    \item \textit{IPW-Off}/\textit{IPW-Off-gt:} The IPW estimator from the offline RL view with normalized weights and with and without the ground truth propensity score model.
    \item \textit{IPW-Semi}/\textit{IPW-Semi-gt:} The IPW estimator (for the maximal global positivity assumption) from the semi-offline RL view with normalized weights and with and without the ground truth propensity score model.
    \item \textit{DM-Semi:} The semi-offline RL version of the direct method.
    \item \textit{DRL-Semi}/\textit{DRL-Semi-gt:} The semi-offline RL version of the double reinforcement learning estimator with normalized weights under the maximal global positivity assumption, with and without the ground truth propensity score model.
    \item  \textit{Ground Truth:} In the experiments where the NDE assumptions hold, the agent is evaluated on the fully observed dataset. This corresponds to estimating $J$ using a Monte Carlo estimate, $\hat{\mathbb{E}}\left[ C_{(\pi_{\alpha})}|X_{(1)},Y \right]$, derived from samples of the ground truth data without any missingness (i.e., samples from $p(X_{(1)},Y)$). Conversely, in experiments where the NDE assumption is violated, the ground truth is obtained by running the agent in an environment that is continuously sampled from the true data-generating process.
\end{itemize}
Complete experiment details are given in Appendix \ref{Appendix_experiments}.

\subsection{Results}


Figure \ref{figure_ts_convergence} shows the convergence plots of sampling-based estimators for Experiment 1, highlighting the data efficiency of all estimators when the identifying assumptions hold. As expected, the blocking and complete case estimators are biased and do not converge to the true value of $J$. Among the unbiased IPW estimators, the semi-offline RL IPW estimator is the most efficient, achieving the fastest convergence for the 'Random 50\%' agent, which acquires each feature with a 50\% probability. However, this data efficiency disappears when evaluating the 'Fixed 100\%' agent, which acquires all features every time. As noted in Remark \ref{remark_semi_offline_RL_observational}, all three IPW estimators perform identically in this scenario, as reflected by their equal convergence speeds.

\begin{figure}[h]
\centering	
\vspace{-10pt}
\includegraphics[width= 0.8\textwidth]{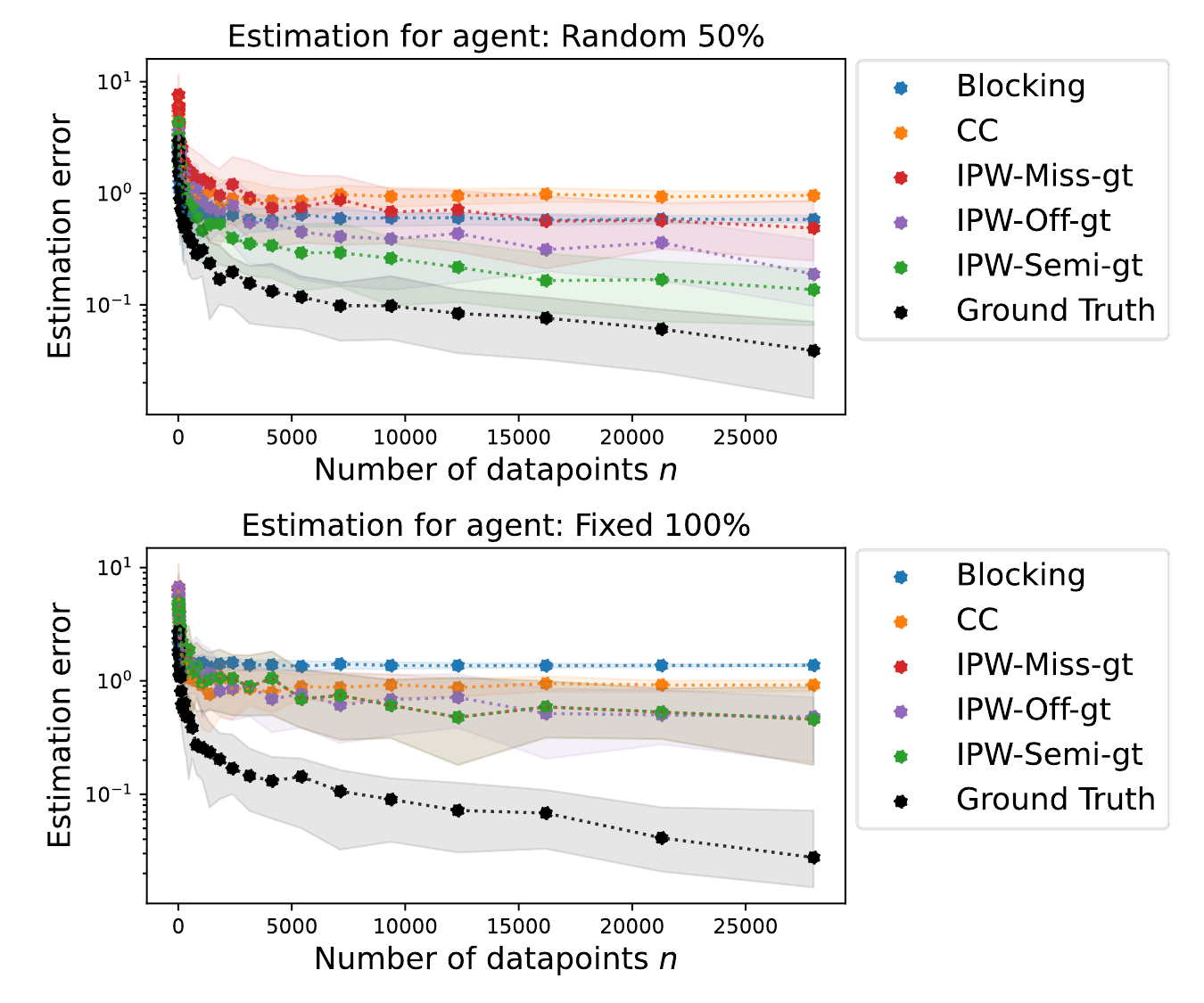}
\vspace{-1pt}
\caption{The plots depict convergence as a function of dataset size $n$ for sampling-based estimators in Experiment 1. Two agents are shown: one that acquires each costly feature with a probability of $50\%$ and another with $100\%$. Estimation error is measured as the absolute difference between the estimate and the ground truth, which is calculated on the full dataset ($n = 40,000$). The semi-offline RL IPW estimator converges the fastest for the 'Random 50\%' agent, while for the 'Fixed 100\%' agent, all IPW estimators perform identically, converging at a slower rate.
}
\vspace{-1 pt}
\label{figure_ts_convergence}
\end{figure}


\begin{figure}[ht]
\centering
\hspace*{-20pt} 
\makebox[\textwidth][l]{ 
    \includegraphics[width=1.03\textwidth]{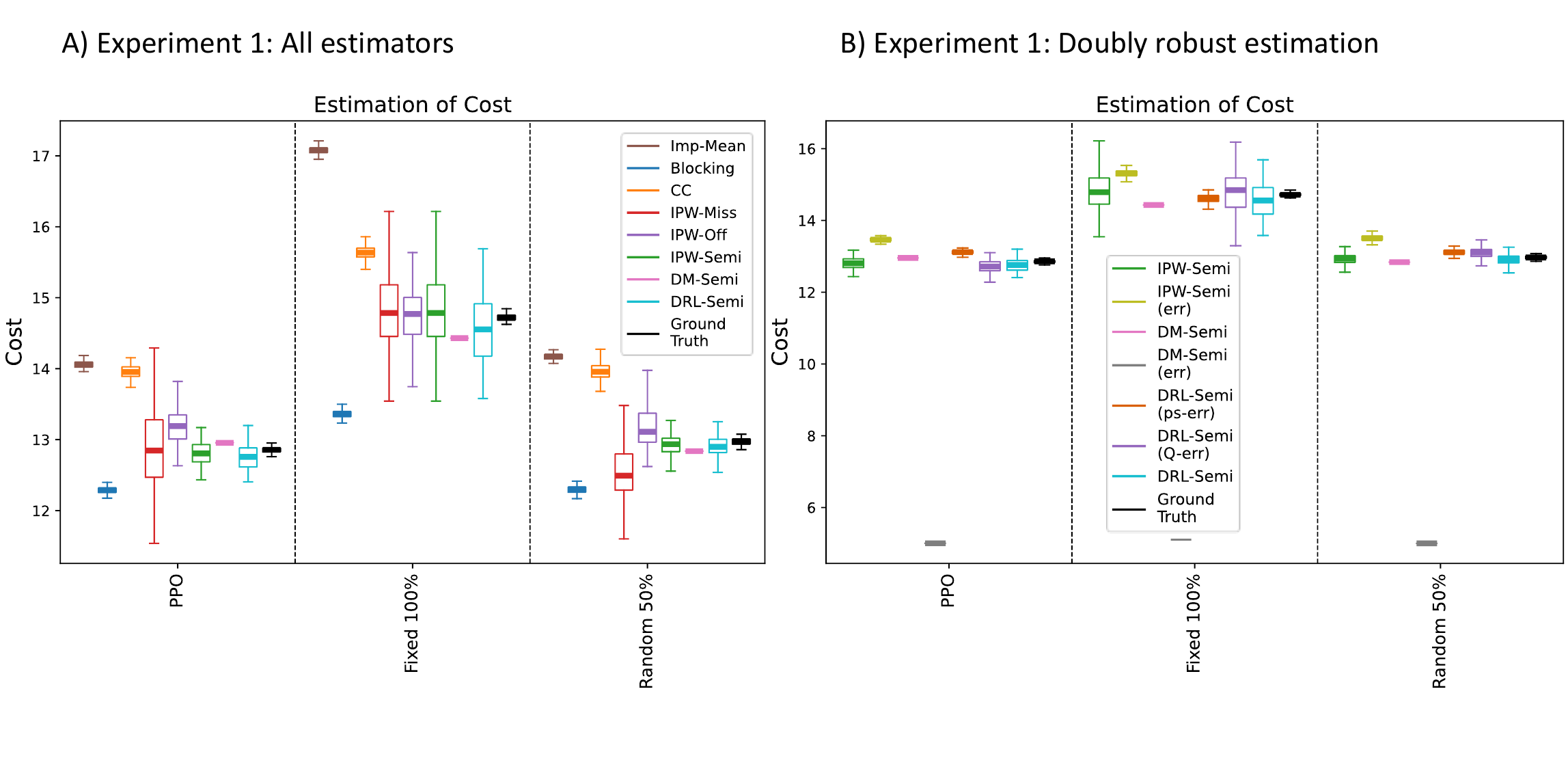}
}
\vspace{-1pt}
\caption{
A) General estimation results for Experiment 1. The \textit{Imp-Mean}, \textit{Blocking}, and \textit{CC} estimators show highly biased estimates, while the IPW, semi-offline DM, and DRL estimators align closely with the true target $J$.\\
B) Estimation results highlighting the double robustness property of the DRL estimator. The \textit{DRL-Semi} estimator continues to provide accurate estimates even when either the propensity score model $\hat{\pi}_\beta$ (\textit{DRL-Semi (ps-err)}) or the Q-model $\hat{Q}_\textit{Semi}$ (\textit{DRL-Semi (Q-err)}) is misspecified.
}
\vspace{-5pt}
\label{figure_ts_dr}
\end{figure}

Figure \ref{figure_ts_dr}A) displays the overall performance of various estimators in Experiment 1. Confidence intervals were computed using non-parametric bootstrap, excluding the retraining of nuisance functions due to high computational complexity. As a result, the confidence intervals are overly narrow, particularly for the semi-offline DM estimator. The experiment demonstrates that all semi-offline RL estimators accurately approximate the true target parameter $J$. However, the DM estimator may exhibit slight bias due to potential misspecification of the Q-function. In contrast, the biased mean imputation, blocking, and complete case analysis estimators fail to consistently estimate $J$.

Figure \ref{figure_ts_dr}B) illustrates the double robustness property of the semi-offline RL version of the DRL estimator. Even when one of the nuisance functions is misspecified, the DRL estimator still provides estimates that closely approximate the true value of $J$.


\begin{figure}[ht]
\centering
\hspace*{-20pt} 
\makebox[\textwidth][l]{ 
    \includegraphics[width=1.03\textwidth]{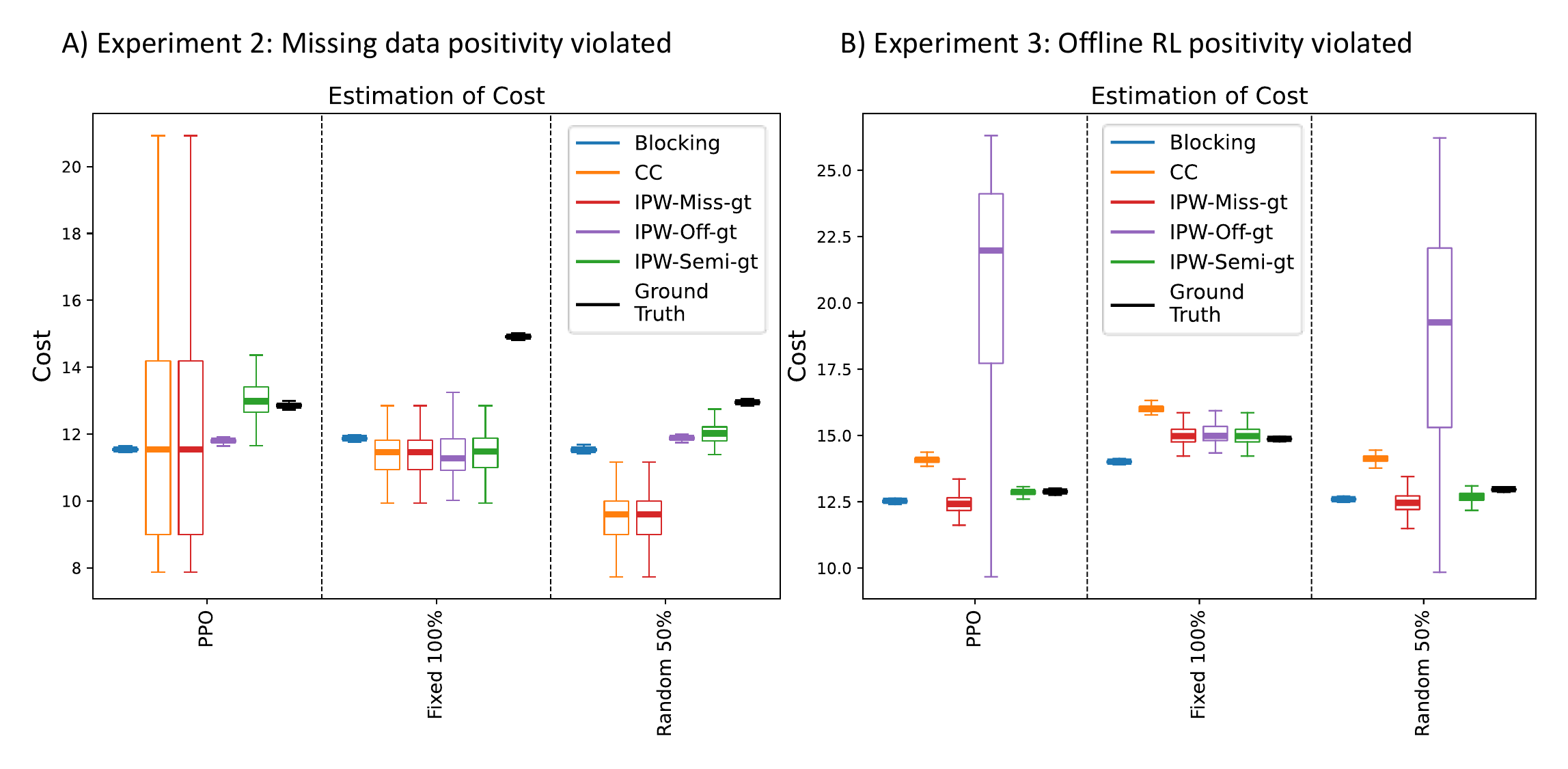}
}
\vspace{-1pt}
\caption{
A) Experiment 2: AFA setting with an extremely low fraction of complete cases (0.007\%), leading to violations of the missing data positivity assumption. For the 'Fixed 100\%' agent, the target $J$ is not identified from any view. However, the semi-offline IPW estimator is less impacted by the lack of complete cases when evaluating the other agents.
B) Experiment 3: AFA setting with positivity violations specific to the offline RL view. While other estimators continue to produce accurate estimates, the offline RL IPW estimator fails entirely for the PPO and 'Random 50\%' agents.}
\vspace{-1pt}
\label{figure_violations_positivity}
\end{figure}

Figures \ref{figure_violations_positivity}A) and B) underscore the importance of the positivity assumption. In Figure \ref{figure_violations_positivity}A), Experiment 2 shows the consequences of violating the missing data positivity assumption (Assumption \ref{assump:positivity_missing_data})—the fraction of complete cases is only 0.007\%. As a result, $J$ for the 'Fixed 100\%' agent cannot be identified from any of the views, and all IPW estimators fail to provide accurate estimates. The positivity assumption is also violated for the 'Random 50\%' agent, though the impact is less severe for the offline and semi-offline RL estimators.

In Figure \ref{figure_violations_positivity}B, Experiment 3 shows the failure of the positivity assumption required by the offline RL view (Assumption \ref{assump:positivity_offline_RL}) for two agents. In this experiment, $A_2 = \vec{1}$ for all data points, meaning trajectories where $A_{(\pi_\alpha)} = 0$ have no support. While the other IPW estimators still produce accurate estimates for $J$, the estimates from the offline RL IPW estimators fail completely.


\begin{figure}[ht]
\centering
\hspace*{-20pt} 
\makebox[\textwidth][l]{ 
    \includegraphics[width=1.03\textwidth]{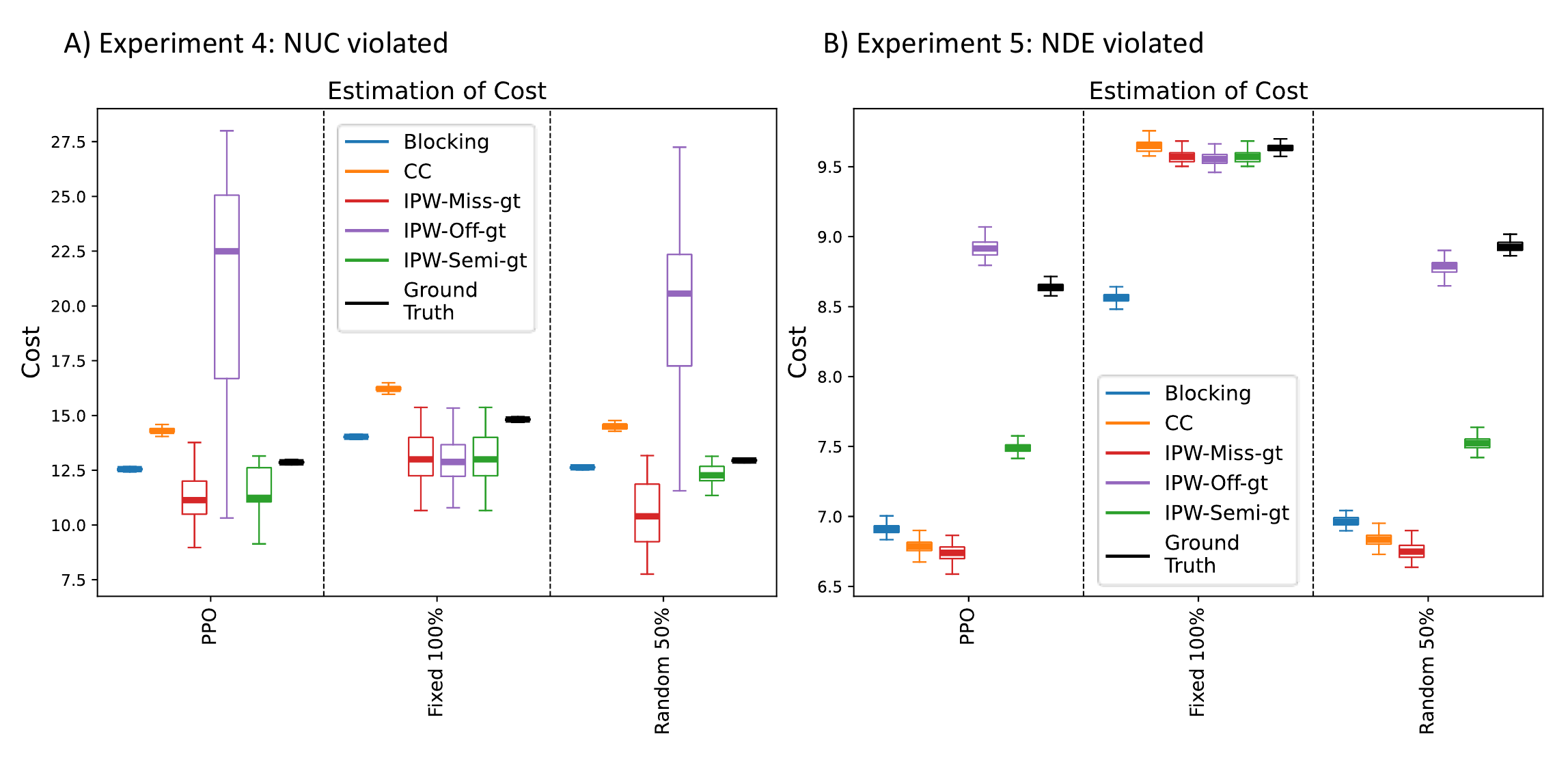}
}
\vspace{-1pt}
\caption{
A) Experiment 4: AFA setting with an MNAR acquisition process, resulting in violations of the NUC assumption. Despite this, the semi-offline RL IPW estimator still provides accurate estimates of the target $J$.
\\
B) Experiment 5: AFA setting with a violation of the NDE assumption. All estimators, except the offline RL IPW estimator, produce highly biased estimates. The offline RL IPW estimator also shows some slight deviation from the true $J$, too, potentially caused by minor positivity violations. 
}
\vspace{-1pt}
\label{figure_violations_nuc_nde}
\end{figure}

Finally, Figures \ref{figure_violations_nuc_nde}A) and B) explore the AFA setting when either the NUC or NDE assumptions are violated. In Figure \ref{figure_violations_nuc_nde}A, which depicts an MNAR scenario, all IPW estimators appear to perform well despite the violation of the NUC assumption. For the 'Fixed 100\%' agent, this is expected, as all estimators reduce to the missing data IPW estimator, which is identified. However, for the 'Random 50\%' agent, the AFAPE target $J$ is not identified from the offline or semi-offline RL views, though the estimation from the semi-offline RL IPW estimator remains relatively robust. It is worth cautioning that violating the NUC assumption may have more severe consequences in real-world applications.

Figure \ref{figure_violations_nuc_nde}B demonstrates the effect of violating the NDE assumption. In this case, only the offline RL view yields consistent estimators, as reflected in the experiment. All other estimators show significant deviations from the ground truth.
However, the offline RL IPW estimator shows slight biases too, possibly due to minor positivity violations.

\section{Discussion and Future Work}
\label{sec_discussion}

In this study, we explored the various aspects of solving the AFAPE problem. We acknowledge that there is no one-size-fits-all solution, as the choice of assumptions can vary across AFA settings. To facilitate this discussion, we propose a set of questions that data scientists should ask themselves when tackling the AFAPE problem before choosing a viewpoint and estimator.

\vspace{10pt}
\noindent 
\textit{1) What (conditional) independences hold in the data?}
The choice of conditional independence assumptions directly influences identifiability and the selection of optimal viewpoints and estimators. When both the NDE and NUC assumptions are violated, the target parameter becomes unidentifiable, making estimation infeasible. If only the NDE assumption fails, the offline RL view is still applicable, while violations of only the NUC assumption allow for the use of the missing data (+online RL) view. In each scenario, leveraging the absence of an edge in the causal graph can effectively eliminate estimation bias that would otherwise persist. 
When both assumptions are satisfied, one can select from the offline RL, missing data (+online RL), or the novel semi-offline RL view. 
In such cases, leveraging both the NDE and NUC assumptions through the semi-offline RL view can be advantageous, as it allows for relaxed positivity assumptions and a reduction in estimation variance.

\textit{Conclusion:} Under NUC, one can apply offline RL methods. Under NDE, one can apply missing data methods. Under both NUC and NDE, one can apply semi-offline RL methods.

\vspace{10pt}
\noindent
\textit{2) How much exploration was performed by the retrospective acquisition policy $\pi_\beta$?}
Positivity requirements, crucial for all viewpoints, demand certain action sequences to be present in the retrospective data. However, real-world data sets—especially in fields like medicine—often breach these assumptions due to the tendency of professionals to follow similar paths with minimal deviation. The semi-offline RL view, fortunately, imposes weaker positivity constraints than both the offline RL and missing data (+online RL) views. Nevertheless, for "data-hungry" AFA policies, the benefits may be less pronounced compared to the missing data view.

Additionally, the choice of the identifying policy $\pi_{\textit{id}}$ depends on which positivity assumptions hold in the data. We leave the adaptation of known positivity assessment methods \cite{petersen_diagnosing_2012} to the semi-offline RL setting as future work. 

\textit{Conclusion: } The semi-offline RL view requires significantly weaker positivity assumptions than the offline RL and missing data (+ online RL) viewpoints.

\vspace{10pt}
\noindent 
\textit{3) Can the nuisance models be correctly specified and trained?} 
The accuracy of different estimators hinges on the proper specification and training of nuisance functions. Despite the double robustness property of DRL estimators, achieving unbiased estimates still depends on how well these functions are modeled. Estimators like the multiple imputation (MI) method can outperform others in certain contexts, particularly when feature smoothness assumptions are reasonable and easily modeled over time. While machine learning techniques such as deep learning offer flexibility, they demand large data sets, which may not always be available.

\textit{Conclusion: } 
No single viewpoint or estimator is superior across all settings. The choice between MI and semi-offline RL estimators depends on prior knowledge and the feasibility of training the nuisance models.

\vspace{10pt}
\noindent 
\textit{4) Is the available data set size sufficient?} 
Efficient use of data is crucial for accurate estimation. Our experiments demonstrate that estimators based on the semi-offline RL view achieve greater data efficiency compared to both offline RL and missing data estimators. 

While our semiparametric analysis shows that the efficiency of all estimators can still be improved, a closed-form efficient influence function does not exist. Some computationally intensive methods, though complex to implement, may still enhance efficiency if the respective strong positivity assumptions hold \cite{tsiatis_semiparametric_2006, liu_efficient_2021}.

\textit{Conclusion: } Estimators derived from the semi-offline RL view demonstrate in experiments notably higher data efficiency compared to estimators from the offline RL and missing data (+ online RL) viewpoints. 

\vspace{10pt}
\noindent 
When the answers to the above questions are uncertain, it is advisable to use multiple views and estimators in tandem as part of a broader sensitivity analysis. This approach enhances confidence in the reliability and safety of AFA agents before deployment.

Our study assumes that feature values change over time, making the timing of measurements critical. In our companion paper \cite{von_kleist_evaluation_2023-1}, we address how a static feature assumption can be incorporated into the AFAPE problem, and we explore how the semi-offline RL and missing data views can be combined when the NUC assumption is violated (i.e., in MNAR scenarios).

Looking forward, we aim to tackle the AFA optimization problem outlined in Section \ref{sec_afa_optimization_problem}. Once the AFAPE problem is resolved and the estimation of the target parameter $J$ is successful, optimization can commence. This includes training new AFA agents and classifiers. A natural next step is adapting established DTR \cite{murphy_optimal_2003,bickel_optimal_2004} or offline RL methods—such as off-policy policy gradient methods, actor-critic techniques, and model-based RL approaches \cite{levine_offline_2020}—to the semi-offline RL framework for further development. These adapted methods could also integrate insights from the online RL literature, such as employing an adaptive exploration policy to enhance the sampling process.

\section{Conclusion}
\label{sec_conclusion}

We study the problem of active feature acquisition performance evaluation (AFAPE), which involves estimating the acquisition and misclassification costs that an AFA agent would generate after being deployed, using retrospective data. We demonstrate that, depending on the assumptions, one can apply different existing viewpoints to solve AFAPE. Under the no unobserved confounding (NUC) assumption, one can apply identification and estimation methods from the offline RL literature. Under the no direct effect (NDE) assumption, which assumes the underlying feature values are not affected by their measurement, one can instead apply missing data methods. 
For settings where both the NUC and the NDE assumptions hold, we propose a novel semi-offline RL viewpoint, which requires weaker positivity assumptions for identification.
Within the semi-offline RL viewpoint, we developed several novel estimators that correspond to semi-offline RL versions of the direct method (DM), inverse probability weighting (IPW), and double reinforcement learning (DRL).
Finally, we conducted synthetic data experiments to highlight the significance of utilizing proper unbiased estimators for AFAPE to ensure the reliability and safety of AFA systems.

\section*{Acknowledgments and Disclosure of Funding}
The present contribution is supported by the Helmholtz Association under the joint research school “HIDSS-006 - Munich School for Data Science @ Helmholtz, TUM \& LMU". Henrik von Kleist received a Carl-Duisberg Fellowship by the Bayer Foundation.

\newpage


\appendix

\section{Literature Review for Active Feature Acquisition (AFA)}
\label{app_AFA_methods}

In this appendix, we explain in more detail the difference between AFA and related fields and introduce some common approaches to training AFA agents from the literature. 

\subsection{Distinction between AFA and Related Fields}

AFA is different from active learning \cite{settles_active_2009}. In active learning, one assumes a classification task with a training dataset that contains many unlabeled data points. The active learning task is then to decide which label acquisitions will improve the training performance the most. Similar research also exists for the acquisition of features for optimal improvement of training. This task has been referred to as "active selection of classification features" \cite{kok_active_2021}, and unfortunately also as "active feature acquisition" \cite{huang_active_2018, beyer_active_2020}, but its objective differs fundamentally from ours. Huang et al. \cite{huang_active_2018} attempt to find out which missing values within the retrospective data set would improve training the most when retroactively acquired. In this paper, we are, however, interested which features, for a new data point, would improve the individual prediction for that data point the most.

\subsection{Approaches to Training AFA Agents}

The AFA setting is most generally described as a sequential decision process, which motivates the use of RL-based solutions. 
One variant, model-based RL focuses on learning a model for the state transitions. Under the NDE assumption, utilizing an imputation model to capture state transitions becomes feasible, exploiting the unique AFA structure for more straightforward learning
\cite{yoon_deep_2018, yin_reinforcement_2020, li_active_2021, li_dynamic_2021, ma_eddi_2019}.
During deployment, this imputation model can simulate potential outcomes of feature acquisitions, facilitating the derivation of optimal acquisition strategies.
Conversely, model-free RL methods do not require a state-transition function. One variant, Q-learning, involves estimating the expected cost of specific acquisition decisions \cite{chang_dynamic_2019, janisch_classification_2020, shim_joint_2018}. For instance, Shim et al. \cite{shim_joint_2018} illustrate the use of double Q-learning for the AFA agent, incorporating a deep neural network that shares network layers for the acquisition decision and classification tasks.

\section{Review of Semiparametric Theory}
\label{app_semiparametric_theory}

\noindent 
We give here a short review of basic concepts of semiparametric theory and some results for missing data problems. The review is based on work by Tsiatis et al. \cite{tsiatis_semiparametric_2006} which we recommend for more in-depth explanations. 

\subsection{General Semiparametric Theory}

Semi-parametric theory aims at finding data-efficient estimators for a target parameter $J=J(p)$ without imposing unnecessarily strict assumptions on $p$. In this review, we restrict ourselves to only scalar parameters $J$.
We let $p$ denote the distribution $p(Z)$ over a set of random variables $Z$ from which we have $n$ independent and identically distributed samples ($Z^1$,...,$Z^n$). It is possible in many cases to obtain estimators for $J$ that are consistent at a rate of $\sqrt{n}$
without imposing many assumptions. The derivation of such estimators relies on influence functions which are discussed next. 

\subsubsection{Influence functions and estimators}

A central element of semi-parametric theory are influence functions as they characterize asymptotically linear estimators in the following sense. An estimator $J_{est}$ is asymptotically linear and has an influence function $\varphi(Z) \equiv \varphi$ if it allows the following equality \cite{tsiatis_semiparametric_2006}:
\begin{align}
\label{eq_asymptotically_linear}
    J_\textit{est}(n) - J = \frac{1}{n} \sum_{i=1}^n \varphi(Z^i) + o_p(\frac{1}{\sqrt{n}})
\end{align}
where $\varphi \in \mathcal{H}$ and $\mathcal{H}$ represents the space of all random functions of zero mean and finite variance. 
The central limit theorem implies that $J_{est}$ is asymptotically normally distributed \cite{tsiatis_semiparametric_2006}: 
\begin{align*}
    \sqrt{n}(J_\textit{est}(n) - J) \rightsquigarrow \mathcal{N}\left(0,\mathbb{E}[\varphi^2]\right)
\end{align*}
where  $\rightsquigarrow$ denotes convergence in distribution. The estimation error is thus asymptotically bounded by the variance of the influence function. The efficient influence function $\varphi_\textit{eff}$ is the one with the smallest asymptotic variance.

Many influence functions, such as the ones of the DRL estimators in this work, depend linearly on the target parameter $J$, such that $\varphi = f(Z) + J$ for some function $f$. In these cases, one can very easily derive a corresponding, so called "1-step", estimator by leveraging Eq. \ref{eq_asymptotically_linear} to obtain: 
\begin{align*}
  J_{est} \equiv - J + \frac{1}{n} \sum_{i=1}^n \varphi(Z_i) = -J + \frac{1}{n} \sum_{i=1}^n f(Z_i) + J = \frac{1}{n} \sum_{i=1}^n f(Z_i).
\end{align*}

\noindent 
%

\subsubsection{Deriving influence functions}
Deriving the space of influence functions or the efficient influence function for a new target parameter or new model restrictions can be complex.
There are, however, some known properties that influence functions in general, or the efficient influence function in particular, have to fulfill and these can be used for their derivation. We examine these now in more detail: 
\newline 

\noindent 
\textbf{1) An influence function must be in the orthocomp of the nuisance tangent space: $\varphi \in \Lambda_\textit{nuis}^\perp$}

\noindent 
To clarify this condition, we first separate the space of model parameters into $J$, the target parameter, and $\eta$, the nuisance parameters.
We denote the nuisance tangent space as $\Lambda_\textit{nuis}$ and the space orthogonal to it, i.e. its orthocomplement (or orthocomp),  as $\Lambda_\textit{nuis}^\perp$. The nuisance tangent space can be seen as the collection of directions in which the nuisance parameters can vary without affecting the parameter of interest. 
It is defined as the mean square closure of parametric submodel nuisance tangent spaces, where a parametric submodel nuisance tangent space is the linear subspace spanned by the nuisance scores, which are defined as: 
\begin{flalign*}
    S_\eta =\left.\frac{\partial \text{ log } p_Z(z,\eta, J)}
    {\partial \eta}
    \right|_{\eta=\eta_0}.
\end{flalign*}
Here $\eta_0$ denotes the true value of $\eta$.

An influence function must be an element in $\Lambda_\textit{nuis}^\perp$, but must fulfill also the following normalization (Theorem 3.2 from \cite{tsiatis_semiparametric_2006}): 
\begin{flalign*}
    \mathbb{E}\left[ 
    \varphi(Z)
    S_J(Z)
    \right]
    = 
    1
\end{flalign*}
where $S_J(Z)$ denotes the scores with respect to the target parameter.

A nuisance function can thus be obtained by taking any nonzero element $h(Z) \in \mathcal{H}$, projecting it onto the orthocomp of the nuisance tangent space $\Lambda_\textit{nuis}^\perp$, and normalizing it. 
We denote the mentioned orthogonal projection by $\Pi$ such that: 
\begin{flalign*}
    \varphi^*(Z) = 
    \Pi (h(Z) 
    | \Lambda_{\textit{nuis}}^\perp) 
    = 
    h(Z)
    - 
    \Pi(h(Z) | 
    \Lambda_\textit{nuis}).
\end{flalign*}
\noindent 
where $\varphi^*(Z)$ represents an element of $\Lambda_{\textit{nuis}}^\perp$ which if normalized would be an influence function. 
\newline

\noindent 
\textbf{2) The efficient influence function must be in the tangent space: $\varphi \in \Lambda$}

\noindent 
The condition to obtain efficiency for an influence function is that we choose the influence function that is in the tangent space $\Lambda$. 
It can thus be obtained by projecting any influence function onto the tangent space: 
\begin{flalign*}
    \varphi_\textit{eff}(Z) = 
    \Pi(
    \varphi(Z) | \Lambda) 
    = 
    \varphi(Z)
    - 
    \Pi(
    \varphi(Z) | 
    \Lambda^\perp).
\end{flalign*}

\subsubsection{Constructing tangent spaces and projecting on them}
Here, we provide some useful further details about the construction of tangent spaces with specific restrictions and the projection of random variables on them. We start by decomposing the space of all zero mean, finite variance functions $\mathcal{H}$ into orthogonal subspaces for the case of a multivarate $Z$. We then examine how tangent space restrictions given by conditional independence assumptions can be incorporated. 
\newline 

\noindent 
\textbf{1) The decomposition of $\mathcal{H}$ for a multivariate variable $Z$}

\noindent 
Firstly, we look at a useful decomposition of the space of a multivariate variable $Z$ of dimensions $d$.
The space $\mathcal{H}$ of all functions $h(Z_1, Z_2, ..., Z_d)$ separates into orthogonal subspaces (Theorem 4.5 from \cite{tsiatis_semiparametric_2006}): 
\begin{flalign}
\label{eq:decomposition_H}
    \mathcal{H} = 
    \mathcal{H}_{Z_1} 
    \oplus 
    \mathcal{H}_{Z_2|Z_1} 
    \oplus 
    ... 
    \oplus 
    \mathcal{H}_{Z_d|Z_{d-1}, ..., Z_1} 
\end{flalign}
where 
$\mathcal{H}_{Z_i|Z_{i-1}, ..., Z_{1}}$ denotes the space spanned by the conditional scores: 
\begin{flalign*}
    \mathcal{H}_{
    Z_i|Z_{i-1}, ..., Z_1
    }  = 
    \biggl{\{}
    h(
    Z_i, Z_{i-1}, ..., Z_1
    ) \in 
    \mathcal{H}: 
    \mathbb{E}[h(
    Z_i, Z_{i-1}, ..., Z_1
    )|
    Z_{i-1}, ..., Z_1 
    ] = 0
    \biggl{\}}.
\end{flalign*}
Fortunately, also the projection onto such a subspace is known and is given for an arbitrary element $h^*(Z) \in \mathcal{H}$ as: 
\begin{flalign*}
    \Pi 
    \left(
    h^*(Z) | 
    \mathcal{H}_{Z_i|Z_{i-1}, ..., Z_{1}}
    \right) 
    = 
    \mathbb{E}[h^*(Z)| Z_i, Z_{i-1}, ..., Z_1]
    - 
    \mathbb{E}[h^*(Z)| Z_{i-1}, ..., Z_1].
\end{flalign*}

\noindent 
\textbf{2) Tangent space restrictions under a conditional independence assumption}

\noindent 
It is also possible to define the tangent space under conditional independence restrictions and to project on it. 
In particular, let's revisit the decomposition of the tangent space from Eq. \ref{eq:decomposition_H}. Let's assume the following independence holds: 
$Z_i \indep Z_j | Z_{i-1}, ..., Z_{j+1}, Z_{j-1}, ..., Z_1$ for $i > j$. 
We want to find $\Lambda_r$, the tangent space restricted by the conditional independence, its orthocomp   $\Lambda_r^\perp$ and projections on it.  

The independence restriction only affects the space of the respective conditional scores such that: 
\begin{flalign*}
    \Lambda_r = 
    \mathcal{H}_{Z_1} 
    \oplus 
    \mathcal{H}_{Z_2|Z_1} 
    \oplus 
    ... 
    \oplus 
    \Lambda_{r,Z_i| Z_{i-1}, ..., Z_{j+1}, Z_{j-1}, ..., Z_1} 
    \oplus 
    ... 
    \oplus 
    \mathcal{H}_{Z_d|Z_{d-1}, ..., Z_1}. 
\end{flalign*}
and     $\Lambda_{r,Z_i| Z_{i-1}, ..., Z_{j+1}, Z_{j-1}, ..., Z_1} = \mathcal{H}_{Z_i| Z_{i-1}, ..., Z_{j+1}, Z_{j-1}, ..., Z_1}$. 
The projections onto $\Lambda_r$ are thus straight-forward:
\begin{flalign*}
    \Pi & 
    \left(
    h^*(Z) | 
    \Lambda_{r,Z_i| Z_{i-1}, ..., Z_{j+1}, Z_{j-1}, ..., Z_1} 
    \right) 
    = \Pi 
    \left(
    h^*(Z) | 
    \mathcal{H}_{Z_i| Z_{i-1}, ..., Z_{j+1}, Z_{j-1}, ..., Z_1} 
    \right)
    \\ & = 
    \mathbb{E}[h^*(Z)| Z_i,Z_{i-1}, ..., Z_{j+1}, Z_{j-1}, ..., Z_1]
    - 
    \mathbb{E}[h^*(Z)| Z_{i-1}, ..., Z_{j+1}, Z_{j-1}, ..., Z_1]
\end{flalign*}
Correspondingly, we can obtain the orthocomp as 
\begin{flalign*}
\Lambda^\perp_{r} 
 & =  \mathcal{H}_{Z_i|Z_{i-1}, ..., Z_1} - \Pi ( \mathcal{H}_{Z_i|Z_{i-1}, ..., Z_1} | \Lambda_{r,Z_i| Z_{i-1}, ..., Z_{j+1}, Z_{j-1}, ..., Z_1} ) 
 \\
 & = 
    \biggl{\{} 
    h(
    Z_i, Z_{i-1}, ..., Z_1
    ) 
    - 
    \mathbb{E}[h(
    Z_i, Z_{i-1}, ..., Z_1
    ) | 
    Z_i, Z_{i-1}, ..., Z_{j+1}, Z_{j-1}, ..., Z_1]
    \\ & 
    \quad  \quad 
    + \underbrace{
    \mathbb{E}[h(
    Z_i, Z_{i-1}, ..., Z_1
    ) | 
    Z_{i-1}, ..., Z_{j+1}, Z_{j-1}, ..., Z_1]
    }_{=0}
    : 
    \\ & 
    \quad  \quad 
    \mathbb{E}[h(
    Z_i, Z_{i-1}, ..., Z_1
    )|
    Z_{i-1}, ..., Z_1 
    ] = 0;  
    h 
    \in 
    \mathcal{H}
    \biggl{\}}
 \\
 & = 
    \biggl{\{} 
    h(
    Z_i, Z_{i-1}, ..., Z_1
    ) 
    - 
    \mathbb{E}[h(
    Z_i, Z_{i-1}, ..., Z_1
    ) | 
    Z_i, Z_{i-1}, ..., Z_{j+1}, Z_{j-1}, ..., Z_1]
    : 
    \\ & 
    \quad  \quad 
    \mathbb{E}[h(
    Z_i, Z_{i-1}, ..., Z_1
    )|
    Z_{i-1}, ..., Z_1 
    ] = 0;  
    h 
    \in 
    \mathcal{H}
    \biggl{\}}.
\end{flalign*}

\subsection{Semiparametric theory for missing data under the MAR assumption}

Semiparametric methods have further been applied to missing data problems. As we take on a missing data view in this work, we now briefly introduce known results for such settings. We restrict our review to scenarios where the missingness process follows a missing-at-random (MAR) scenario. For more details, see also \cite{tsiatis_semiparametric_2006}. 

We distinguish now between observed data (with missingness) and full data (without missingness). 
Let the full data be denoted by $X_{(1)} \in \mathbb{R}^d$, the missingness indicators by $A \in \{0,1\}^d$ and the observed data by $X \equiv G_A(X_{(1)})$ such that $X_i = X_{(1),i}$ if $A_i = 1$ and $X_i = "?"$, otherwise. 

Semiparametric theory methods for missing data aim at finding observed data (efficient) influence functions from full data influence functions. Throughout this review and the whole paper, we assume no restrictions on the full data $X_{(1)}$, meaning that there is only one full data influence function. There are, however, in general multiple corresponding observed data influence functions. 
In order to find these, the observed data nuisance tangent space needs to be constructed. 
This construction is simplified in MAR scenarios, as the likelihood factorizes into two separate terms related to the acquisition process and the observed data part of the likelihood \cite{tsiatis_semiparametric_2006}:
\begin{flalign}
\label{eq:factorization_MAR}
    p_{A,G_A(X_{(1)})}(A,G_A(X_{(1)}); \psi, J, \eta) = \underbrace{p_{A|G_A(X_{(1)}) }(A|G_A(X_{(1)}); \psi)}_{\text{acquisition process}} \sum_{G_A(X_{(1)})} 
    \underbrace{p_{X_{(1)}}(X_{(1)};J, \eta)}_{\text{process of full data}}
\end{flalign}
where we let $\psi$ denote the nuisance parameter of the acquisition process, and $\eta$ denote the nuisance parameter of the full data generating process.  

This factorization allows the following decomposition of the nuisance tangent space into orthogonal subspaces (Theorem 8.2 from \cite{tsiatis_semiparametric_2006}):
\begin{flalign}
    \Lambda_{\textit{nuis}} = \Lambda_{\textit{nuis},\psi} \oplus \Lambda_{\textit{nuis},\eta}
    = 
    \Lambda_{\psi} \oplus \Lambda_{\textit{nuis},\eta}.
\end{flalign}
\noindent 
Here, we used  $\Lambda_{\psi} =  \Lambda_{\textit{nuis},\psi}$ which holds since the target parameter is not part of this acquisition process. 

We continue with the discussion for the derivation of observed data influence functions in the main text in Section \ref{sec_semiparametrics}. 

\section{Glossary of Terms and Symbols}
\label{Appendix_glossary}

\begin{longtable}{@{} p{0.3\textwidth} p{0.67\textwidth} @{}} 
\textbf{Term}           & \textbf{Description}         \\
\toprule
\textit{AFAPE} &  Active feature acquisition performance evaluation: The problem of estimating the counterfactual cost that would arise if an AFA agent was deployed.
\\
\midrule
\textit{NDE assumption} &  No direct effect assumption: States that the action of measuring a feature does not impact the values of any features or the label.
\\
\midrule
\textit{NUC assumption} &  No unobserved confounding assumption: States that acquisition decisions within the retrospective dataset were only based on measured feature values.
\\
\midrule
\textit{Semi-offline RL} & Novel framework that allows an agent to interact with the environment (the online part), but forbids the exploration of certain actions (the offline part).
\\
\midrule
\textit{DTR} &  Dynamic treatment regimes 
\\
\midrule
\textit{G-formula} &  Identification formula from causal inference \cite{robins_new_1986}
\\
\midrule
\textit{Plug-in of the G-formula} &  Estimation formula from causal inference that replaces unknown densities in the G-formula with estimated versions\cite{robins_new_1986}.
\\
\midrule
\textit{IPW} &  Inverse probability weighting: Estimator that is also known as importance sampling or the Horvitz-Thompson estimator.
\\
\midrule
\textit{DM} &  Direct method: Estimator based on a Q-function.
\\
\midrule
\textit{DRL} &  Double reinforcement learning: Double robust estimator that uses IPW weights and a Q-function.
\\
\midrule
\textit{m-graph} &  Missing data graph: Graph to visualize assumptions in missing data problems.
\\
\midrule
\textit{MI} &  Multiple imputation: Estimator for missing data problems that is a special case of the plug-in of the G-formula.
\\
\midrule
\textit{influence function} & Function of mean zero and finite variance that is used to analyze the asymptotic properties of regular and asymptotically linear (RAL) estimators.
\\
\midrule
\textit{MCAR assumption} &  Missing-completely-at-random assumption: States that the reason for missingness of certain features does not depend on any feature values. \\
\midrule
\textit{MAR assumption} &  Missing-at-random assumption: States that the reason for missingness of certain features does only depend on observed feature values.\\
\midrule
\textit{MNAR assumption} &  Missing-not-at-random assumption: States that the reason for missingness of certain features may depend on feature values that are not observed.
\\
\midrule
\textit{nuisance function} &
Function that needs to be fitted from data in order to use a corresponding estimator, but which is not of primary interest itself. Examples are the propensity score model and the Q-function. \\
\midrule
\textit{local positivity assumption } &
Positivity assumption for semi-offline RL that ensures the simulation of a desired next action is possible from the retrospective dataset.\\
\midrule
\textit{regional positivity assumption } &
Positivity assumption for semi-offline RL that ensures the simulation of all future desired actions is possible from the retrospective dataset.\\
\midrule
\textit{global positivity assumption } &
Positivity assumption for semi-offline RL that ensures the simulation of all desired actions is possible from step 1 on.
\\
\midrule
\textit{maximal regional positivity assumption } &
Special, stronger version of the regional positivity assumption. 
\\
\midrule
\textit{maximal global positivity assumption } &
Special, stronger version of the global positivity assumption. 
\\
\midrule
\textit{tangent space } &
Space of scores (i.e. derivatives of the log-likelihood) 
\\
\midrule
\textit{nuisance tangent space } &
Space of nuisance scores (i.e. the scores with respect to the nuisance parameters) 
\\
\bottomrule
\end{longtable}

\begin{longtable}{@{} p{0.25\textwidth} p{0.72\textwidth} @{}} 
\textbf{Symbol}           
& \textbf{Description}         
\\
\toprule
\endhead
$t \in (0,...,T)$ &  Time \\
\midrule
$U^t$ &  Unobserved state variables at time t \\
\midrule
$d$ &  Number of features, i.e. dimension of $U^t$ \\
\midrule
$X^t = G_{A^t}(U^t)$ &  Observed feature values at time t (retrospective dataset) \\
\midrule
$A^t$ &   Acquisition action at time t (retrospective dataset) \\
\midrule
$\mathcal{A}$ &   Space of A\\
\midrule
$Y$ &  Label \\
\midrule
$Y^* = f_\textit{cl}(\underline{X}^T, \underline{A}^T)$ &  Predicted label based on classifier $f_\textit{cl}$ \\
\midrule
$C_a^t$ &  Acquisition cost for action $A^t$  \\
\midrule
$C_{mc} = f_C(Y^*, Y)$ &  Misclassification cost (if $Y$ and $Y^*$ differ)  \\
\midrule
$\pi_\beta$ &  Retrospective acquisition policy  \\
\midrule
$\pi_\alpha$ &  AFA policy \\
\midrule
$C_{mc,(\pi_\alpha)}$ &  Counterfactual misclassification cost had  $\pi_\alpha$ instead of $\pi_\beta$ been applied  \\
\midrule
$g(.)$
& known deterministic distribution\\
\midrule
$J$ / $J_{mc}$ &  Expected misclassification cost under the AFA policy and classifier \\
\midrule
$J_a$ &  Expected acquisition cost under the AFA policy and classifier \\
\midrule
$\phi_1^*$ , $\phi_2^*$ &  Sets of parameters that parameterize the AFA policy and the classifier, respectively \\
\midrule
$q(.)$
& Counterfactual distribution\\
 \midrule
$Q^t_{\textit{Off}}$
& State-action value function from offline RL (at time t)
\\
\midrule
$V^t_{\textit{Off}}$
& State value function from offline RL (at time t)
\\
\midrule
$\pi'$ &  Blocked policy 
\\
\midrule
$\pi'_{\textit{sim}}$ &  (Blocked) simulation policy \\
\midrule
$p'(.)$
& Simulated distribution\\
\midrule
$C',Y'^*,X',A'$
& Simulated cost, predicted label, features and actions\\
 \midrule
$\mathcal{D}$
& Retrospective dataset \\
\midrule
$\mathcal{D}'$
& Simulated dataset \\
\midrule
$\mathcal{A}_\textit{adm}$
& Local admissible set \\
\midrule
$\mathcal{\tilde{A}}_\textit{adm}$
& Regional admissible set \\
\midrule
$\pi_{\text{id}}$
& Distribution for $A$ that allows identification of $J$ under the semi-offline RL view (subject to support restrictions)  \\
\midrule
$Q^t_{\textit{Semi}}$
& State-action value function from semi-offline RL (at time t)
 \\
\midrule
$\Xi \subseteq \{O,Y\}$
& Arbitrary subset of the always observed features $O \subset X_{(1)}$ and the label $Y$ \\
 \midrule
$q'(.)$
& Counterfactual simulated distribution\\
\midrule
$\varphi$
& Influence function\\
\midrule
$\Lambda$
& (Observed data) tangent space\\
\midrule
$\Lambda^\perp$
& Orthocomplement of the (observed data) tangent space\\
\midrule
$\Pi([.]|\Lambda)$
& (Orthogonal) projection onto the tangent space\\
\midrule
$\Lambda_\textit{nuis}$
& (Observed data) nuisance tangent space\\
\midrule
$\Lambda^F$
& Full data tangent space\\
\midrule
$\Lambda_\textit{nuis}^F$
& Full data nuisance tangent space\\
\midrule
$\Lambda_{\textit{nuis},\psi} = \Lambda_{\psi}$
& (Nuisance) tangent space of the acquisition process\\
\midrule
$\Lambda_{\textit{nuis},\eta}$
& Nuisance tangent space of the observed part of the full data process\\
\midrule
$\Lambda_{\textit{IPW}}$
& IPW space\\
\midrule
$\Lambda_{2}$
& Augmentation space\\
\midrule
$\Lambda_{2,\textit{Semi}}(\Xi)$
& Subspace of $\Lambda_{2}$ proposed for projection onto under the semi-offline RL view \\
\bottomrule
\end{longtable}

\section{Identification of the Block-conditional Model}
\label{app_mnar}

In this appendix, we demonstrate how identification of $p(X_{(1)},Y)$ can be achieved when the NUC assumption (Assumption \ref{assump:nuc}) is violated. This corresponds to the block-conditional model \cite{zhou_block-conditional_2010}. 
We show that the propensity score model $p(A=\vec{1}|X_{(1)},Y)$ is identified, which in turn results in identification of $p(X_{(1)},Y)$, as $p(X_{(1)},Y) = \frac{p(X_{(1)},Y,A=\vec{1})}{p(A=\vec{1}|X_{(1)},Y)}$. 

\noindent 
\textit{Identification of $p(A=\vec{1}|X_{(1)},Y)$:}

The propensity score is identified by  
\begin{align*}
    p(   A =\vec{1} |X_{(1)},Y)  
    & =  
    \prod_{t=1}^T p( A^t =\vec{1}  |X_{(1)} , Y, \underline{A}^{t-1} =\vec{1}) \nonumber 
    \\ & \overset{*_1}{=}   
    \prod_{t=1}^T \pi_\beta^t( A^t =\vec{1}  |  \underline{X}_{(1)}^{t-1}, \underline{A}^{t-1} =\vec{1}) \nonumber 
    \\ & \overset{*_2}{=}   
    \prod_{t=1}^T \pi_\beta^t( A^t =\vec{1} |  \underline{X}^{t-1}, \underline{A}^{t-1} =\vec{1}) \nonumber 
\end{align*}

\noindent 
where we used in $*1)$ the fact that future feature values do not affect current acquisition decisions: $A^t \indep \overline{X}_{(1)}^t, Y | \underline{X}_{(1)}^{t-1}, \underline{A}^{t-1}$. We further use in $*2)$ that counterfactual feature values $\underline{X}_{(1)}^{t-1}$ are equal to $\underline{X}^{t-1}$ if $\underline{A}^{t-1} =\vec{1}$. The last expression is a function of only observed variables and is thus identified.

\section{Multiple Imputation (MI) for the AFAPE Problem}
\label{app_missing_data_estimators}

In this appendix, we aim to delve deeper into the multiple imputation (MI) estimator in the AFAPE context and highlight advantages as well as some common pitfalls associated with using MI approaches in AFA.

Let us begin by emphasizing a significant advantage of the MI estimator compared to other estimators discussed in this paper. It offers an elegant solution to the temporal coarsening problem. In time-series settings, where fixed time intervals are assumed ($t \in \{0,1,...,T\}$), employing a very fine resolution of time steps would inevitably result in a considerable increase in missingness, thereby making the AFAPE problem more challenging. The MI estimator can typically overcome this issue by assuming an often justifiable temporal smoothness of the feature distributions. 

However, there are  drawbacks to MI. MI requires modeling joint distributions, which is a complex task in practice, particularly in high-dimensional settings and when dealing with complex missingness patterns. For instance, the multiple imputation by chained equations (MICE) method \cite{van_buuren_multiple_2007} necessitates fitting $d$ conditional densities for $d$ partially observed features in static settings. In comparison, IPW only requires the specification of the propensity score, which is often more feasible. This effect is especially drastic for high-dimensional features such as images, which necessitate modeling for each pixel, when using multiple imputation, but only the modeling of one joint missingness indicator when using IPW. 

Furthermore, the MI estimator implies imputation of the missing features $X_{m}$ by conditioning on the observed features $X_{o}$ and the label $Y$ (i.e., estimating $\hat{p}(X_{m}|X_{o},Y)$). This introduces the risk of data leakage, as the imputed features may carry predictive information not because of the true data generation mechanism but due to the imputation itself, resulting in potentially overoptimistic estimation of prediction performance. A common alternative, frequently employed in machine learning, is to impute the data without conditioning on $Y$. However, this assumption implies that a missing feature $X_{(1),i} \in X_\textit{m}$ is conditionally independent of the label given the observed features ($X_{(1),i} \indep Y | X_o$). Determining marginal predictive value of a feature for predicting $Y$, is however, the whole task of AFA, which renders this approach impractical.

Conditional mean imputation represents a simplified imputation approach that reduces the complexity of modeling. It has been applied in AFA settings \cite{an_reinforcement_2022, erion_coai_2021, janisch_classification_2020}. In this approach, missing values are imputed using a conditional mean model for $\hat{\mathbb{E}}[X_{m}|X_o]$ (or $\hat{\mathbb{E}}[X_{m}|X_o, Y]$). Therefore, conditional mean imputation assumes:
\begin{align*}
    J_{\textit{MI-Miss}} & = \sum_{X_{m},X_{o},Y} \mathbb{E}[C_{(\pi_\alpha)}|X_{m},X_{o},Y]  
    \hat{p}(X_{m}|X_{o},Y)
    p(X_{o},Y) 
    \\ & \approx 
    \sum_{X_{o},Y} \mathbb{E}[C_{(\pi_\alpha)}|\hat{\mathbb{E}}[X_{m}|X_o],X_{o},Y] 
    p(X_{o},Y) 
\end{align*}
which does not hold in general and can lead to strongly biased results when $\mathbb{E}[C_{(\pi_\alpha)}|X_{m},X_{o},Y]$ is nonlinear as is the case generally in AFA settings.

\section{
Proof of Lemma \ref{lemma_comparison_positivity}
}
\label{app_proof_lemma_comparison_positivity}
In this appendix, we prove Lemma \ref{lemma_comparison_positivity}, which we repeat here for readability: 

\vspace{5pt}
\begin{lma}{
\ref{lemma_comparison_positivity}}
    (Sufficiency conditions for global positivity). The global positivity assumption for semi-offline RL (Assumption \ref{assump:positivity_global_semi_offline_RL}) holds if the positivity assumption from offline RL (Assumption \ref{assump:positivity_offline_RL}) or from missing data (Assumption \ref{assump:positivity_missing_data}) holds. 
\end{lma}
\vspace{5pt}


\noindent 
We split the proof into the following propositions:

\begin{proposition}
\label{prop_positivity_comparison_offline_RL}
If the positivity assumption for offline RL (Assumption \ref{assump:positivity_offline_RL}) holds, then the global positivity assumption for semi-offline RL also holds.
\end{proposition}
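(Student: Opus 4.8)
The plan is to exhibit, for every reachable target trajectory, an explicit real (retrospective) trajectory that witnesses membership in all the admissible sets appearing in Definitions \ref{def_local_positivity} and \ref{def_regional_positivity}. The natural candidate is the \emph{matched} trajectory $a^t = a'^t$ for all $t$: under the offline RL positivity assumption every target trajectory reachable under $\pi_\alpha$ is also explored by $\pi_\beta$, so taking the behavior action to coincide exactly with the simulated action is always feasible, and this choice automatically satisfies the domination requirement $a^t \geq a'^t$ with equality. First I would establish local positivity, then lift it to regional positivity by a backward induction on $t$, and finally read off global positivity.

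For \textbf{local positivity}, fix a datapoint $\underline{x}^{t-1}, \underline{a}^{t-1}, a'^t$ and set $a^t := a'^t$. Condition (1) of Definition \ref{def_local_positivity} holds trivially since $a'^t \geq a'^t$. For condition (2), Assumption \ref{assump:positivity_offline_RL} gives $p(\underline{x}^{t-1},\underline{a}^{t-1})\,\pi_\beta^t(a'^t\mid \underline{x}^{t-1},\underline{a}^{t-1}) \geq \mathcal{O}$ whenever the corresponding target trajectory has positive support; since $p(\underline{x}^{t-1},\underline{a}^{t-1}) \leq 1$, this forces $\pi_\beta^t(a'^t\mid \underline{x}^{t-1},\underline{a}^{t-1}) \geq \mathcal{O} > 0$. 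Hence $a'^t \in \mathcal{A}_\textit{adm}^t(\underline{x}^{t-1},\underline{a}^{t-1},a'^t)$ and the local admissible set is nonempty.

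For \textbf{regional positivity} I would induct downward from $t = T$. The recursive clause in Definition \ref{def_regional_positivity} only quantifies over continuations $\underline{x}^t, \underline{a}^t, \underline{a}'^{t+1}$ that are reachable, i.e. satisfy $p(x^t\mid \underline{x}^{t-1},\underline{a}^t)\,\pi_\alpha^t(a'^{t+1}\mid \underline{x}'^{t-1},\underline{a}'^t) > 0$. Keeping the matched choice $\underline{a}^t = \underline{a}'^t$ (so that $\underline{x}^t = \underline{x}'^t$, because $X = G_A(X_{(1)})$ and $X' = G_{A'}(X_{(1)})$ coincide when $A = A'$), such continuations are exactly the prefixes that remain reachable under the $\pi_\alpha$ intervention. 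The inductive hypothesis then guarantees that $\tilde{\mathcal{A}}_\textit{adm}^{t+1}(\underline{x}^t,\underline{a}^t,\underline{a}'^{t+1})$ exists for each of them, while local positivity at step $t$ (just proved) guarantees $a'^t \in \mathcal{A}_\textit{adm}^t$; together these place $a'^t$ in $\tilde{\mathcal{A}}_\textit{adm}^t(\underline{x}^{t-1},\underline{a}^{t-1},\underline{a}'^t)$, so the regional set is nonempty. Applying this at $t = 1$ over all $x^0, a'^1$ with $p(x^0)\pi_\alpha(a'^1\mid x^0) > 0$ yields Assumption \ref{assump:positivity_global_semi_offline_RL}.

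The step I expect to be the main obstacle is the bookkeeping in the inductive step: one must verify that when the real trajectory is pinned to the simulated one, the reachability conditions appearing in the regional-positivity recursion coincide precisely with the support conditions under which Assumption \ref{assump:positivity_offline_RL} is stated — in particular that a matched prefix $\underline{a}^t = \underline{a}'^t$ reachable under $p'$ is also reachable under the $\pi_\alpha$ intervention, so that offline RL positivity can legitimately be reinvoked at the next time step. Once this correspondence between the $p'$-support, the $\pi_\alpha$-intervention support, and the observational support of matched trajectories is pinned down, the induction closes and the proposition follows.
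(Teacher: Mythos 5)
Your proposal is correct and takes essentially the same route as the paper: both exhibit the matched trajectory $a^t = a'^t$ as the witness, observe that offline RL positivity (together with $p(\underline{x}^{t-1},\underline{a}^{t-1})\leq 1$) places it in the local admissible set, note that matching forces $\underline{x}^t = \underline{x}'^t$ so the reachable continuations in the regional recursion are exactly those with positive support under the $\pi_\alpha$ intervention, and then unroll the recursive definition of $\tilde{\mathcal{A}}_\textit{adm}$ across time steps. The only difference is presentational — you phrase the recursion as a backward induction from $T$ while the paper iterates forward from $t=1$ — and the "bookkeeping" step you flag is precisely the observation both arguments rely on.
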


\begin{proposition}
\label{prop_positivity_comparison_missing_data}
If the positivity assumption for missing data (Assumption \ref{assump:positivity_missing_data}) holds, then the global positivity assumption for semi-offline RL also holds.
\end{proposition}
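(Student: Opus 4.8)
The plan is to prove the two propositions into which the paper splits Lemma~\ref{lemma_comparison_positivity} by exhibiting, in each case, an explicit \emph{witness} action sequence that realizes every desired target trajectory, and then verifying the nested conditions of regional positivity (Definition~\ref{def_regional_positivity}) by backward induction on the time index $t$. The common idea is that to simulate a desired action $a'^t$ it suffices to point to a single admissible observed action $a^t$ with $a^t \geq a'^t$ and $\pi_\beta^t(a^t \mid \cdot) \geq \mathcal{O}$; proving \emph{regional} (rather than merely \emph{local}) positivity then requires that this choice can be made consistently so that all future desired actions remain simulatable. Backward induction is the natural tool, since the regional admissible set $\tilde{\mathcal{A}}_\textit{adm}^t$ is defined recursively in terms of $\tilde{\mathcal{A}}_\textit{adm}^{t+1}$, and global positivity (Assumption~\ref{assump:positivity_global_semi_offline_RL}) is just the $t=1$ instance evaluated at all $a'^1, x^0$ with $p(x^0)\pi_\alpha(a'^1\mid x^0)>0$.

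For Proposition~\ref{prop_positivity_comparison_offline_RL} I would take the \emph{matched} witness $a^t = a'^t$. Because $A^t = A'^t$ forces $X^t = G_{A^t}(X_{(1)}^t) = G_{A'^t}(X_{(1)}^t) = X'^t$, following the matched actions reproduces exactly the observed features of the target trajectory, so the reached observed state $\underline{x}^{t-1}$ coincides with the state visited under $\pi_\alpha$ and has positive probability. The induction hypothesis $H(t)$ would assert that along any target trajectory reachable under $\pi_\alpha$, the action $a'^t$ lies in $\tilde{\mathcal{A}}_\textit{adm}^t(\underline{x}^{t-1}, \underline{a}'^{t-1}, \underline{a}'^t)$. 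The base case $t=T$ reduces $\tilde{\mathcal{A}}_\textit{adm}^T$ to the local set $\mathcal{A}_\textit{adm}^T$, and membership of $a'^T$ follows from $a'^T \geq a'^T$ together with $\pi_\beta^T(a'^T\mid\cdot)\geq\mathcal{O}$, which Assumption~\ref{assump:positivity_offline_RL} grants precisely because the target trajectory is reachable under $\pi_\alpha$ and $\pi_\alpha^T(a'^T\mid\cdot)>0$. In the inductive step the recursive clause of Definition~\ref{def_regional_positivity} is discharged by $H(t+1)$: since the matched choice keeps $\underline{a}^t = \underline{a}'^t$ and $\underline{x}^t=\underline{x}'^t$, every admissible successor state is again a reachable target state, so $\tilde{\mathcal{A}}_\textit{adm}^{t+1}$ is non-empty there. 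Setting $t=1$ then yields global positivity.

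For Proposition~\ref{prop_positivity_comparison_missing_data} I would instead use the \emph{acquire-everything} witness $a^t = \vec{1}$. The argument mirrors the previous one: $\vec{1} \geq a'^t$ holds for every desired $a'^t$, and following $\vec{1}$ at every step makes the observations complete, so $\underline{X}^{t-1} = \underline{X}_{(1)}^{t-1}$ and the conditioning event $\underline{A}^{t-1}=\vec{1}$ required by Assumption~\ref{assump:positivity_missing_data} is exactly the event being followed. The missing-data bound $\pi_\beta^t(\vec{1}\mid \underline{X}_{(1)}^{t-1}, \underline{A}^{t-1}=\vec{1})\geq\mathcal{O}$ then supplies condition~(2) of the local admissible set, while its positive-support premise $p(\underline{X}_{(1)}^{t-1}=\underline{x}^{t-1}, \underline{A}^{t-1}=\vec{1})>0$ guarantees that the complete-case trajectory is reachable, so every admissible set along it is well defined. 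The same backward induction, now with $\vec{1}$ in place of $a'^t$, shows $\vec{1}\in\tilde{\mathcal{A}}_\textit{adm}^t$ throughout, hence global positivity.

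The main obstacle I anticipate is not the algebra but the careful bookkeeping of the \emph{definedness} preconditions in Definitions~\ref{def_local_positivity} and~\ref{def_regional_positivity}: each admissible set is introduced only for states of positive probability, so I must verify that the witness policy keeps every intermediate state reachable (simultaneously under $p$, under $p'$, and under the counterfactual $\pi_\alpha$-distribution) before I may invoke non-emptiness of the set defined there. Checking that these positive-probability conditions propagate correctly through the recursion — in particular that quantifying over all $x^t$ with $p(x^t\mid\cdot)>0$ together with all desired $a'^{t+1}$ under $\pi_\alpha$ never leaves the reachable region — is the delicate point; once it is secured, the membership checks ($a^t \geq a'^t$ and the $\pi_\beta$-lower-bound) are immediate from the respective positivity assumptions.
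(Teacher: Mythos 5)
Your proposal is correct and matches the paper's own proof in substance: both exhibit the ``acquire everything'' action $a^t=\vec{1}$ as the universal witness, note that following $\vec{1}$ keeps the trajectory a complete case so that the premise of Assumption~\ref{assump:positivity_missing_data} applies at every step and supplies the $\pi_\beta$ lower bound for local admissibility, and then chain through the recursive definition of $\tilde{\mathcal{A}}_\textit{adm}$ to conclude regional and hence global positivity. The only difference is presentational --- you organize the recursion as a backward induction with base case $t=T$, whereas the paper unrolls the same nested conditions forward from $t=1$ --- and your explicit attention to the positive-probability definedness preconditions is a welcome tightening of what the paper leaves implicit.
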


\noindent 
We begin by proving Proposition \ref{prop_positivity_comparison_offline_RL}:
\newline 

\begin{proof}
Consider the initial time step $t=1$, focusing on data points $(a'^1, x^0)$ such that $p(x^0)\pi_\alpha(a'^1|x^0) > 0$. To establish the global positivity assumption, we need to ensure that the regional positivity assumption is satisfied.

We will show that $a^1 = a'^1$ belongs to the regional admissible set $\tilde{\mathcal{A}}_\textit{adm}^1(x^0,a'^1)$, which implies that the regional positivity condition is met. Two conditions must be fulfilled for $a^1 = a'^1$ to be included in $\tilde{\mathcal{A}}_\textit{adm}^1(x^0,a'^1)$. 
First, $a^1 = a'^1$ must belong to the local admissible set $\mathcal{A}_\textit{adm}^1(x^0,a'^1)$, which directly follows from the offline RL positivity assumption.
Second, we must ensure that for $a^1 = a'^1$ the regional admissible set exists at the subsequent time step. Specifically, since $a^1 = a'^1$ (and therefore $x^1 = x'^1$), the regional admissible set at time step 2, $\tilde{\mathcal{A}}_\textit{adm}^2(\underline{a}'^2,\underline{x}^1,\underline{a}^1)$, must exist for all $x^1$ and $a'^2$ such that:
\begin{align*}
p(x^{1} | x^{0}, a^{1}) \pi_{\alpha}(a'^{2} | \underline{x}^{1}, \underline{a}^{1}) > 0.
\end{align*}
For these conditions, the positivity assumption under the offline RL view again implies that $a^2 = a'^2$ is included in the local admissible set $\mathcal{A}_\textit{adm}^2(a'^2,\underline{x}^1,\underline{a}^1)$. This reasoning can be extended iteratively through all time steps up to $T$, thus proving that regional positivity holds at every prior time point, which in turn establishes global positivity.
\end{proof}

\noindent 
Next, we prove Proposition \ref{prop_positivity_comparison_missing_data}, following a similar strategy:
\newline 

\begin{proof}
Again, consider the initial time step $t=1$, focusing on data points $(a'^1, x^0)$ such that $p(x^0)\pi_\alpha(a'^1|x^0) > 0$. 
We show that  $a^1 = \vec{1}$ is included in the regional admissible set $\tilde{\mathcal{A}}_\textit{adm}^1(x^0,a'^1)$, irrespective of the value of $a'^1$. 
First, $a^1 = \vec{1}$ must be included in the local admissible set $\mathcal{A}_\textit{adm}^1(x^0,a'^1)$, a condition that directly follows from the missing data positivity assumption.

Second, $a^1 = \vec{1}$ must permit the existence of a regional admissible set at the next time step. Specifically, given $a^1 = \vec{1}$, the regional admissible set at time step 2, $\tilde{\mathcal{A}}_\textit{adm}^2(
\underline{a}'^2,\underline{x}^1,\underline{a}^1) = \tilde{\mathcal{A}}_\textit{adm}^2(
\underline{a}'^2,\underline{x}^1,\underline{a}^1=\vec{1})$, must exist for all $x^1$ and $a'^2$ such that:
\begin{align*}
p(x^{1} | x^{0}, A^{1}=\vec{1}) 
\pi_{\alpha}(a'^{2} | \underline{x}'^{1},\underline{a}'^{1}) > 0.
\end{align*}
Under these conditions, the missing data positivity assumption again directly implies that $a^2 = \vec{1}$ belongs to the local admissible set at time step 2, $ \tilde{\mathcal{A}}_\textit{adm}^2(
\underline{a}'^2,\underline{x}^1,\underline{a}^1=\vec{1})$, regardless of the values of $a'^2$ and $x'^1$. This reasoning can be extended step-by-step until $T$, thereby ensuring that regional positivity holds at each prior time point and, consequently, that global positivity is satisfied as well.
\end{proof}

\section{Proof of Theorems \ref{theorem_identification_semi_offline_RL} and \ref{theorem_Bellman_equation}}
\label{app_theorem_identification}

In this Appendix, we prove Theorems \ref{theorem_identification_semi_offline_RL} and \ref{theorem_Bellman_equation}. We also demonstrate how the positivity assumption arises. 
We restate the theorems here for clarity and ease of reference.

\vspace{5pt}
\begin{thma}{\ref{theorem_identification_semi_offline_RL}}
(Identification of $J$ for the semi-offline RL view).
The reformulated AFAPE problem of estimating $J$ under the semi-offline RL view  (Eq. \ref{eq:AFAPE_objective_semi_offline_RL}) is under Assumption \ref{assump:measurement_noise} (no measurement noise),  Assumption \ref{assump:consistency} (consistency),  Assumption \ref{assump:interference} (no interference), Assumption \ref{assump:nde} (NDE), Assumption \ref{assump:nuc} (NUC) and Assumption \ref{assump:positivity_global_semi_offline_RL} (global positivity)
identified by
\begin{align}
\tag{\ref{eq_identificiation_semi_offline_RL_1}}
    J  
     = 
    \mathbb{E}_{p'}[C'_{(\pi_\alpha)}]
     = 
    \sum_{A',A,G_A(X_{(1)}),Y} 
    f_C(             
A', 
X',        
Y) 
q'(            
A',         
A, 
X, 
Y) 
\end{align}
\noindent
with the distribution
\begin{align}
\tag{\ref{eq_identificiation_semi_offline_RL_2}}
q'(
A',A ,   
X, 
Y)
& = 
\prod_{t=1}^{T}  
\underbrace{\pi_{\text{id}}^t(
A^{t}| 
\underline{A}'^{t},
\underline{X}^{t-1}, 
\underline{A}^{t-1})}_{\text{distr. subject to constraints}}
\underbrace{
\pi_{\alpha}^t(A'^t|            
\underline{X}'^{t-1},            
\underline{A}'^{t-1})
}_{\text{target policy}}  
\prod_{t=0}^{T}
p(
X^{t}| 
\underline{X}^{t-1}, 
\underline{A}^{t}, Y) 
p(Y)
\end{align}

\noindent 
where 
\begin{flalign}
\tag{\ref{eq_pi_id}}
\pi_{id}^t(
A^t|  
\underline{A}'^t,&
\underline{X}^{t-1},
\underline{A}^{t-1}) 
=
\underbrace{\mathbb{I}(A^t \in 
\mathcal{
\tilde{A}}_\textit{adm}^t( 
\underline{X}^{t-1},
\underline{A}^{t-1}, 
\underline{A}'^t
))}
_{\text{support restriction}}
f_{id}^t(
\underline{A}'^t,
\underline{X}^{t-1},
\underline{A}^{t-1}
) 
\end{flalign}
for any function $f_{id}^t$
s.t. $\pi_{id}^t$ is a valid density.
\end{thma}

\vspace{5pt}
\begin{thma}{\ref{theorem_Bellman_equation}}
(Bellman equation for semi-offline RL). 
The semi-offline RL view admits under 
Assumption \ref{assump:measurement_noise} (no measurement noise),  Assumption \ref{assump:consistency} (consistency),  Assumption \ref{assump:interference} (no interference), Assumption \ref{assump:nde} (NDE), Assumption \ref{assump:nuc} (NUC) and the local positivity assumption at datapoint $\underline{x}^{t-1},\underline{a}^{t-1},a'^t$ (from Definition \ref{def_local_positivity}), the following semi-offline RL version of the Bellman equation:   
\begin{align}
\tag{\ref{eq_bellman_1}}
Q_{\textit{Semi}}( &
\underline{A}'^{t},         
\underline{X}^{t-1},
\underline{A}^{t-1}, 
\Xi) 
 = 
\sum_{\mathclap{X^{t}}}                
V_{\textit{Semi}}
(    
\underline{A}'^{t}, 
\underline{X}^{t},
\underline{A}^{t-1}, 
A^{t}=a^t, 
\Xi) 
p(
X^{t}
|                   
\underline{X}^{t-1},
\underline{A}^{t-1}, 
A^t = a^t, 
\Xi)
\\
& \text{ for any }
a^t 
\in \mathcal{A}^t_\textit{adm}
(          
\underline{X}^{t-1}, 
\underline{A}^{t-1},
A'^t )  
\nonumber 
\\
 V_{\textit{Semi}}&
(
\underline{A}'^{t},
\underline{X}^{t},
\underline{A}^{t}, 
\Xi)  
=
\sum_{A'^{t+1}}  
Q_{\textit{Semi}}(
\underline{A}'^{t+1},    
\underline{X}^{t},
\underline{A}^{t}, 
\Xi
)
\pi_{\alpha}^{t+1}(
A'^{t+1}|
\underline{X}'^{t},
\underline{A}'^{t})
\tag{\ref{eq_bellman_2}}
\end{align}
with semi-offline RL versions of the state-action value function $Q_{\textit{Semi}}$ and state value function $V_{\textit{Semi}}$:
\begin{align*}
 Q_{\textit{Semi}}^t 
 & \equiv 
 Q_{\textit{Semi}}(
\underline{A}'^{t},
\underline{X}^{t-1},
\underline{A}^{t-1}, 
\Xi) 
\equiv  
\mathbb{E}_{p'}[
C'_{(\overline{\pi}^{t+1}_\alpha)}|
\underline{A}'^{t},
\underline{X}^{t-1},
\underline{A}^{t-1}, 
\Xi]  
\\
V_{\textit{Semi}}^t
& \equiv 
V_{\textit{Semi}}(
\underline{A}'^{t},
\underline{X}^{t},
\underline{A}^{t}, 
\Xi) 
\equiv 
\mathbb{E}_{p'}[C'_{(\overline{\pi}^{t+1}_\alpha)}|
\underline{A}'^{t},
\underline{X}^t,
\underline{A}^{t},
\Xi]
\end{align*}
where $C'_{(\overline{\pi}^{t+1}_\alpha)}$ denotes the potential outcome of $C'$ under interventions from time step $t+1$ onwards. 
$\Xi \subseteq \{ Y, O\}$, with $O$ denoting all features that are always available, denotes an optional subset of additional variables that can be conditioned on. 
Furthermore, $Q_{\textit{Semi}}^t$ and $V_{\textit{Semi}}^t$ are identified if the regional positivity assumption (from Definition \ref{def_regional_positivity}) holds at 
$
\underline{X}^{t-1},
\underline{A}^{t-1}, 
\underline{A}'^{t}$ and $a^t \in \mathcal{\tilde{A}}_\textit{adm}^t(
\underline{X}^{t-1},
\underline{A}^{t-1}, 
\underline{A}'^{t})$.
\end{thma}
\vspace{5pt}

\vspace{10pt}
\begin{proof}
Firstly, we factorize the counterfactual distribution, denoted by $q'$, expressing it as a function of the observed (simulated) data. 
We factorize the graph in a step-by-step fashion to show how the semi-offline RL version of the Bellman equation arises. We split identification in each step into two parts to emphasize the two parts of the Bellman equation. 
To help guide the identification, we duplicate Figure \ref{graph_semi_offline_RL}
of the causal graph 
describing the simulation process in Figure \ref{graph_semi_offline_RL_with_intervention}A). Alongside it, we show the counterfactual graph (for identification step $t=1$) in Figure \ref{graph_semi_offline_RL_with_intervention}B).

\vspace{10pt}
\noindent 
\textbf{Step 0}

\noindent 
\textit{Counterfactual factorization (step $t = 0$, part 1):}
\begin{align*}                
p'   
\left( 
C'_{(\pi_\alpha)}
\right)  
\overset{*_1}{\equiv}  
p'   
\left( 
C'_{(\overline{\pi}^1_{\alpha})}
\right)  
=  & 
\sum_{X^0, \Xi} 
p'
\left(   
C'_{(   
\overline{\pi}^1_{\alpha})
}
\Big\vert                   
X^0, 
\Xi
\right)  
 p(X^0, \Xi) 
\end{align*} 

\noindent 
where we denote in $*1)$ $C'_{(\overline{\pi}^1_{\alpha})}$ as the counterfactual $C'$ under an intervention of $\pi_\alpha$ from step $t=1$ onwards. 
The extension by $X^0$ is needed for adjustment. 
The inclusion of $\Xi \subseteq \{Y, O\}$, where $O$ denotes the subset of always observed features amongst $X_{(1)}$, is optional.

\vspace{10pt}
\noindent 
\textit{Counterfactual factorization (step $t = 0$, part 2):}
\begin{align*}   
p'
\left(   
C'_{(   
\overline{\pi}^1_{\alpha})
}
\Big\vert                   
X^0  , 
\Xi
\right) 
&
=
\boldsymbol{
\sum_{a'^1} }
p'
\left(   
C'_{\boldsymbol{(   
\overline{\pi}^2_{\alpha},
a'^1)}
}        
\Big\vert                   
X^0, 
\Xi
\right) 
\boldsymbol{\pi_\alpha^1(
a'^1 | X^0)}
\\ &
\overset{*_1}{=} 
\sum_{a'^1} 
p'
\left(   
C'_{(   
\overline{\pi}^2_{\alpha},
a'^1, 
\boldsymbol{\pi_{id}^1})
}        
\Big\vert                   
X^0     ,
\Xi
\right) 
\pi_\alpha^1(
a'^1 | 
X^0
)  
\\ & 
=  
\sum_{a'^1, \boldsymbol{a^1}}
p'
\left(   
C'_{(   
\overline{\pi}^2_{\alpha},
a'^1, 
\boldsymbol{a^1})
}        
\Big\vert                   
X^0    ,
\Xi
\right) 
\boldsymbol{\pi_{id}^1(a^1|X^0, a'^1) }
\pi_\alpha^1(
a'^1 |X^0)  
 \\ & \overset{*_2}{=}  
 \sum_{a'^1,a^1}
p'
\left(   
C'_{(   
\overline{\pi}^2_{\alpha},
a'^1, 
a^1)
}        
\Big\vert                   
X^0, 
\boldsymbol{a^1}, 
\Xi
\right) 
\pi_{id}^1(a^1|X^0, a'^1) 
\pi_\alpha^1(
a'^1 |X^0)  
 \\ & 
\overset{*_3}{=}
\sum_{a'^1, a^1}
p'
\left(   
C'_{\boldsymbol{(   
\overline{\pi}^2_{\alpha},
a'^1)}
}        
\Big\vert                   
X^0   , 
a^1, 
\Xi
\right)  
\pi_{id}^1(a^1|X^0, a'^1) 
\pi_\alpha^1(
a'^1 | X^0)  
 \\ & 
\overset{*_4}{=}   
\sum_{a'^1, a^1}
p'
\left(   
C'_{(   
\overline{\pi}^2_{\alpha},
a'^1)
}        
\Big\vert                   
X^0, 
a'^1, 
\boldsymbol{a^1}, 
\Xi
\right) 
\pi_{id}^1(a^1|X^0, a'^1) 
\pi_\alpha^1(
a'^1 | X^0)  
 \\ & 
\overset{*_5}{=}   
\sum_{a'^1, a^1}
p'
\left(   
C'_{\boldsymbol{(   
\overline{\pi}^2_{\alpha})}
}        
\Big\vert                   
X^0   , 
a'^1, a^1, 
\Xi
\right) 
\pi_{id}^1(a^1|X^0, a'^1) 
\pi_\alpha^1(
a'^1 | X^0)  
 \\ & 
\overset{*_6}{=}   
\sum_{a'^1, a^1}
p'
\left(   
C'_{(   
\overline{\pi}^2_{\alpha})
}        
\Big\vert                   
\boldsymbol{X^0, 
a'^1, 
\Xi}
\right) 
\pi_{id}^1(a^1|X^0, a'^1) 
\pi_\alpha^1(
a'^1 | X^0)  
\end{align*}

\noindent 
with the following explanations:

\begin{figure}
\centering
\includegraphics[width=0.85\textwidth]
    {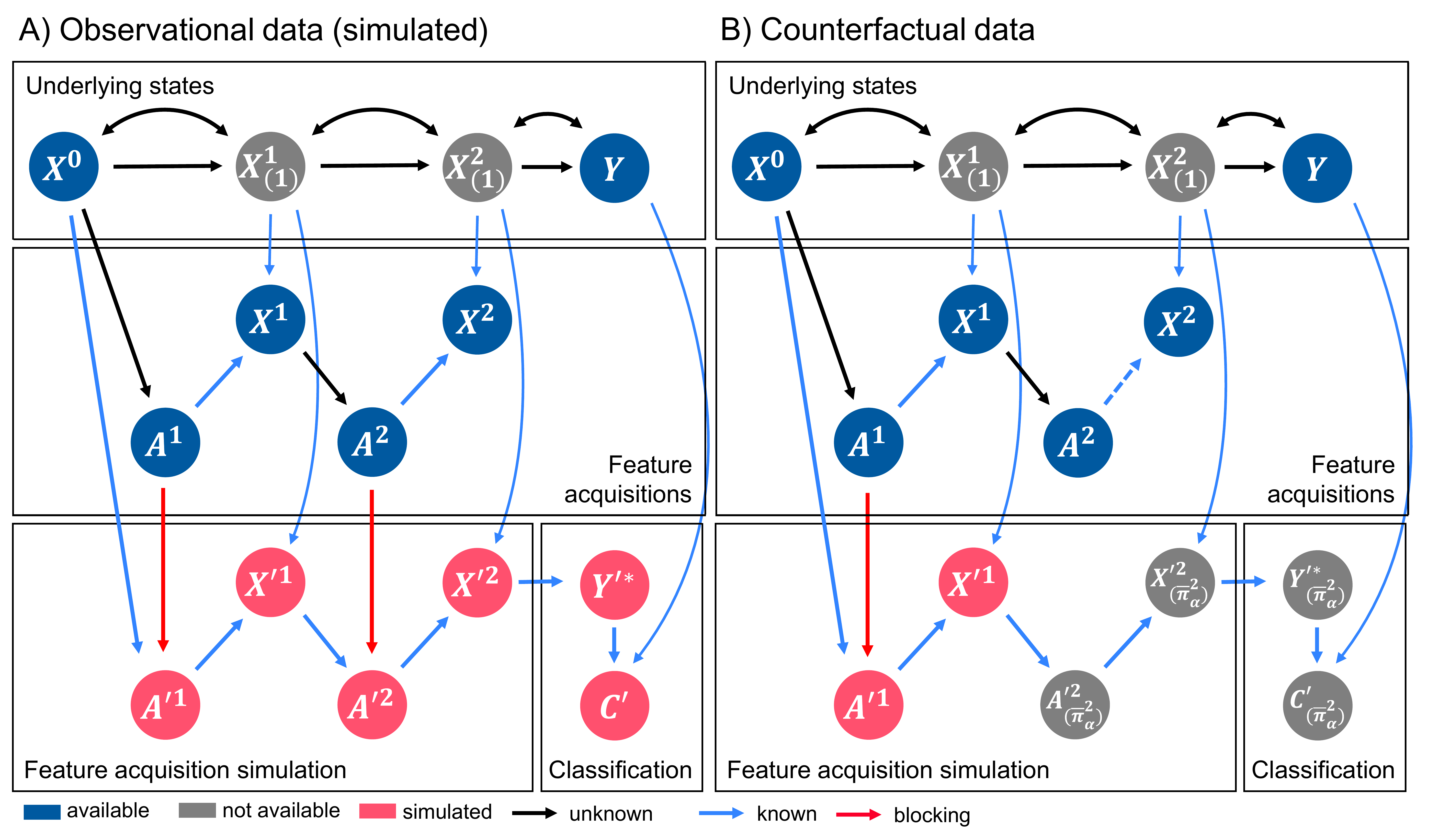}
    \caption{ 
    Causal graph for the distribution $p'$. A) Simulated ("observational") distribution. B) Counterfactual distribution under the intervention $\overline{\pi}_\alpha^2$. 
    Edges showing long-term dependencies are omitted from the graphs for visual 
    clarity. 
    These include: $\underline{X}_{(1)}^{t-1} 
    \leftrightarrow X_{(1)}^{t}$;  $\underline{X}_{(1)}^{T}
    \leftrightarrow Y$;   
 $\underline{X}^{t-1}
, \underline{A}^{t-1} 
\rightarrow 
A^{t} 
$;  
$\underline{X}'^{t-1}/
\underline{X}'^{t-1}_{(\overline{\pi}_{\alpha}^2)}
, \underline{A}'^{t-1} / 
\underline{A}'^{t-1}_{(\overline{\pi}_{\alpha}^2)}
\rightarrow 
A'^{t} / 
A'^{t}_{(\overline{\pi}_{\alpha}^t)}
$; and 
$\underline{X}'^{T}/
\underline{X}'^{T}_{(\overline{\pi}_{\alpha}^2)}
, \underline{A}'^{T} / 
\underline{A}'^{T}_{(\overline{\pi}_{\alpha}^2)}
\rightarrow 
Y'^* / 
Y'^*_{(\overline{\pi}_{\alpha}^2)}$.}
\label{graph_semi_offline_RL_with_intervention}
\vspace{-1 pt}
\end{figure}

\begin{itemize}
    \item $*1)$: We notice that $C'_{(\overline{\pi}^2_\alpha,a'^1)} $ is independent of any interventions $\pi_{id}^1$ on $A^1$. This step prevents positivity problems in subsequent steps. 
    \item $*2)$: 
    We use the exchangeability 
    $C'_{(\overline{\pi}^2_{\alpha},
    a'^1, a^1)} \perp\!\!\!\perp  
    A^1 |
    X^0, 
    \Xi$ which follows from the NUC assumption. 
    \item $*3)$: 
    We use the consistency assumption:  \\   $p'
    \left(  
    C'_{(\overline{\pi}^2_{\alpha}, 
    a'^1, a^1)}     
    \Big\vert                   
    X^0,
    a^1 ,
    \Xi
    \right) 
    = 
    p'
    \left(  
    C'_{(\overline{\pi}^2_{\alpha}, 
    a'^1)}     
    \Big\vert                   
    X^0, 
    a^1, 
    \Xi
    \right)$ 
    \item $*4)$: We use the exchangeability: 
    $C'_{(\overline{\pi}^2_{\alpha},
    a'^1)} \perp\!\!\!\perp  
    A'^1 |
    X^0, A^1, 
    \Xi$ 
    \item $*5)$: We use the consistency assumption: 
    $p'
    \left(  
    C'_{(\overline{\pi}^2_{\alpha}, 
    a'^1)}     
    \Big\vert                   
    X^0, 
    a'^1,
    a^1 , 
    \Xi
    \right) 
    = 
    p'
    \left(  
    C'_{(\overline{\pi}^2_{\alpha})}     
    \Big\vert                   
    X^0,
    a'^1,
    a^1, 
    \Xi
    \right)$ 
    \item $*6)$: We use the conditional independence 
    $C'_{(\overline{\pi}^2_{\alpha})} \perp\!\!\!\perp  
    A^1 |
    X^0, A'^1,
    \Xi$
\end{itemize}

\noindent 
We must also ensure that $p'
\left(  
C'_{(\overline{\pi}^2_{\alpha}, 
a'^1)}     
\Big\vert                   
X^0, 
a'^1,
a^1 , 
\Xi
\right) 
$, i.e. conditioning on $X^0, A'^1, A^1, \Xi $, is well specified in $*4)$. 
To understand what positivity requirements are necessary, we first factorize the "observational" (i.e. simulated) distribution for step $t=0$. By observational distribution for step $t=0$, we refer to a distribution which only contains interventions from step $t=2$ onwards: 

\vspace{10pt}
\noindent 
\textit{Observational factorization (step $t = 0$):}
\begin{align*}  
p'   
\left( 
C'_{(\overline{\pi}^2_{\alpha})}
\right)  
= 
\sum_{X^0,A^1,A'^1, \Xi}
p'
\left(   
C'_{(   
\overline{\pi}^2_{\alpha})
}        
\Big\vert                   
X^0,
A'^1, 
\Xi
\right) 
\underbrace{
\pi_{sim}'^1 (
A'^1
| 
X^0,
A^1
)}  
_{\text{simulation policy}} 
\underbrace{
\pi_{\beta}^1(
A^1 
| 
X^0)
}_{\text{retro. acq. policy}}  
p(X^0, \Xi)
\end{align*}

\noindent 
By comparing the observational and counterfactual factorizations, we see that the following positivity assumption is required: 
\begin{flalign}
\text{if }  \hspace{14 pt} \quad \quad \quad \quad \quad 
&p(x^{0}) q'( a'^1, a^1 | x^{0})  =  
\nonumber
p(x^0) 
\pi_\alpha^1(
 a'^1 | x^0)     
\pi_{{id}}^1 (
a^1 | x^0, a'^1)  
>0
\nonumber
&&
\\
\text{then } \quad \quad \quad \quad \quad 
&
p(x^{0}) p'( a'^1, a^1 | x^{0}) =  
\nonumber
p(x^0) 
\pi_{sim}'^1(
a'^1 | x^0, a^1)  
\pi_{\beta}^1 (
a^1 | x^0)  
\geq \mathcal{O}
\nonumber
&&
\\ 
&
\forall 
x^0,
a'^1,
a^1, \text{and some constant } \mathcal{O} > 0
\end{flalign}
\noindent 
Since none of the distributions $\pi_\alpha$, $\pi_\textit{id}$, $\pi'_\textit{sim}$, and $\pi_\beta$ depend on $\Xi$, the choice for $\Xi$ will not influence the positivity requirements.
We can further simplify the positivity assumption, by using knowledge about the known simulation policy $\pi'_{sim}$. 
By the construction of the blocking operation of the simulation policy $\pi_{sim}'$ (Definition \ref{def_blocked_policy}), one observes that 
\begin{flalign*}
 &\text{if } 
\pi_\alpha^1(
a'^1 | x^0) > 0, \quad \quad \quad \quad 
 \quad \quad  \text{then }
\pi_{sim}'^1(
a'^1 | x^0, 
a^1) \geq \mathcal{O}_1,  \quad \quad 
&& \text{if and only if } a'^1 \leq a^1   
\end{flalign*}

\noindent 
where, $\mathcal{O}_1$ is some constant $>0$, and, as before, we let $a'^1 \leq a^1$ denote the element-wise comparison. 
The resulting positivity violation for the case $a'^1 \not \leq  a^1$ can be avoided by restricting $\pi_{{id}}$ in the following way: 

\vspace{10pt}
\noindent
\textit{Restriction 1 for $\pi_{{id}}$ (step $t=0$):}
\begin{flalign*}
& \text{if } a'^1 \not \leq  a^1, \quad \quad \quad \quad 
\quad \quad \quad \quad \quad \text{then }
\pi_{{id}}^1(
a^1 | x^0, a'^1)   = 0 \quad \quad && \forall  x^0,a'^1, a^1. 
\end{flalign*}
A second possible positivity violation arises if $\pi_{\beta}^1(
a^1 | x^0) = 0$ for some values of $a^1$. This poses a second requirement for $\pi_{{id}}$:

\vspace{10pt}
\noindent 
\textit{Restriction 2 for $\pi_{{id}}$ (step $t=0$):}
\begin{flalign*}
& \text{if } \pi_{\beta}^1(
a^1 | x^0) = 0, \quad \quad \quad \quad \quad
\quad \quad  \text{then }
\pi_{{id}}^1(
a^1 | x^0, 
a'^1) = 0 
&& \forall  x^0, a'^1, a^1.
\end{flalign*}

\noindent 
Since $\pi_{{id}}$ is required to be a valid probability distribution (it cannot be 0 for all $a^1$), this imposes the following requirement for $\pi_\beta$:
\begin{flalign*}
& \text{if } p(x^0) \pi_\alpha^1(a'^1|x^0) > 0, 
\quad \quad 
\text{then }
\pi_{\beta}^1(
A^1 \geq a'^1 | x^0) \geq \mathcal{O} 
&& \forall  x^0, a'^1, \text{ and some constant } \mathcal{O}>0.
\end{flalign*}

\noindent 
The positivity assumption implies that for any desired action $a'^1$ by the target policy $\pi_\alpha$, that there exists at least positive support for one set of acquisitions $a^1$ that include equal or more acquisitions than what is contained in $a'^1$.
This is equivalent to the local positivity assumption at $x^0,a'^0$ (i.e. the existence of $\mathcal{A}_\textit{adm}^1$ from 
Definition \ref{def_local_positivity}).  
In the next steps, we show that these are only minimal requirements for $\pi_{id}^1(A^1|X^0, A'^1)$. To avoid running into positivity violations in later time steps, a further restriction can be necessary.

\vspace{10pt}
\noindent 
\textbf{Step 1}

\noindent 
In the following, we continue the identification for step $t=1$. 

\vspace{10pt}
\noindent 
\textit{Counterfactual factorization (step $t = 1$, part 1):}
\begin{align*}                
p'   
\biggl( 
C'_{(\overline{\pi}^2_{\alpha})} 
\Big\vert  
X^0, A'^1, \Xi
\biggl)  & = 
p'    
\biggl( 
C'_{(\overline{\pi}^2_{\alpha})} 
\Big\vert  
X^0, A'^1, 
a^1, \Xi
\biggl) 
=
\\ &  
= 
\sum_{X^1} 
p'
\left(   
C'_{(
\overline{\pi}^2_{\alpha}
)} 
\Big\vert                   
\underline{A}'^1,
\underline{X}^1, 
a^1, 
\Xi
\right) 
p(
X^1|
X^0,
a^1, 
\Xi
)   
\end{align*}   

\noindent 
which holds for any $a^1 \in \mathcal{A}_\textit{adm}^1(X^0, A'^1)$ (because local positivity must hold). 
Therefore, the term $p'
\left(   
C'_{(
\overline{\pi}^2_{\alpha}
)} 
\Big\vert                   
\underline{A}'^1,
\underline{X}^1, 
a^1, 
\Xi
\right) $ needs to be only identified for \textit{at least one value} $a^1 \in \mathcal{A}^1_\textit{adm}(X^0, A'^1)$. 

\vspace{10pt}
\noindent 
\textit{Counterfactual factorization (step $t = 1$, part 2):}
\begin{align*}    
p' \biggl(  &  
C'_{(
\overline{\pi}^2_{\alpha}
)} 
\Big\vert                   
\underline{A}'^1,
\underline{X}^1, 
\underline{A}^1, 
\Xi
\biggl) =
\\  & = 
\boldsymbol{\sum_{ a'^2} }
p'
\left(   
C'_{\boldsymbol{(\overline{\pi}^3_{\alpha}, 
a'^2)}} 
\Big\vert                   
\underline{A}'^1,
\underline{X}^1, 
\underline{A}^1, 
\Xi
\right) 
\boldsymbol{
\pi_\alpha^2(
a'^2 | 
\underline{X}'^1,
\underline{A}'^1)  
}
\\ & \overset{*_1}{=}
\sum_{ a'^2} 
p'
\left(   
C'_{(\overline{\pi}^3_{\alpha}, 
a'^2, 
\boldsymbol{\pi_{id}^2})} 
\Big\vert                   
\underline{A}'^1,
\underline{X}^1, 
\underline{A}^1 ,
\Xi
\right) 
\pi_\alpha^2(
a'^2 | 
\underline{X}'^1,
\underline{A}'^1) 
\\ & = 
\sum_{ a'^2, \boldsymbol{a^2}} 
p'
\left(   
C'_{(\overline{\pi}^3_{\alpha}, 
a'^2,
\boldsymbol{a^2} )} 
\Big\vert                   
\underline{A}'^1,
\underline{X}^1, 
\underline{A}^1, 
\Xi
\right) 
\boldsymbol{\pi_{{id}}^2( 
a^2 | 
\underline{A}'^1,
a'^2,
\underline{X}^1, 
\underline{A}^1 
)  }
\pi_\alpha^2(
a'^2 | 
\underline{X}'^1,
\underline{A}'^1)  
\\ & \overset{*_2}{=}
\sum_{ a'^2, a^2} 
p'
\left(   
C'_{\boldsymbol{(\overline{\pi}^3_{\alpha}, 
a'^2)}} 
\Big\vert                   
\underline{A}'^1,
\underline{X}^1, 
\underline{A}^1, 
\boldsymbol{a^2},
\Xi
\right) 
\pi_{{id}}^2( 
a^2 | 
\underline{A}'^1,
a'^2,
\underline{X}^1 , 
\underline{A}^1
)  
\pi_\alpha^2(
a'^2 | 
\underline{X}'^1,
\underline{A}'^1)    
\\ & \overset{*_3}{=}
\sum_{a'^2, a^2} 
p'
\left(   
C'_{\boldsymbol{(\overline{\pi}^3_{\alpha})}} 
\Big\vert                   
\underline{A}'^1,
\boldsymbol{a'^2},
\underline{X}^1, 
\underline{A}^1, 
a^2, 
\Xi
\right) 
\pi_{{id}}^2(
a^2 | 
\underline{A}'^1,
a'^2,
\underline{X}^1 , 
\underline{A}^1
) 
\pi_\alpha^2(
a'^2 | 
\underline{X}'^1,
\underline{A}'^1)   
\\ &\overset{*_4}{=}
\sum_{a'^2, a^2} 
p'
\left(   
C'_{(\overline{\pi}^3_{\alpha})} 
\Big\vert                   
\boldsymbol{
\underline{A}'^1,
a'^2,
\underline{X}^1, 
\underline{A}^1}, 
\Xi
\right) 
\pi_{{id}}^2(
a^2 | 
\underline{A}'^1,
a'^2,
\underline{X}^1 , 
\underline{A}^1
) 
\pi_\alpha^2(
a'^2 | 
\underline{X}'^1,
\underline{A}'^1)      
\end{align*}

\noindent 
where we denote $\underline{X}'^1 \equiv G_{\underline{A}'^1}(X_{(1)})$ which contains a subset of the features in $\underline{X}^1 = G_{\underline{A}^1}(X_{(1)})$. 
The derivation was based on the following arguments:

\begin{itemize}
    \item $*1)$: We use that  $C'_{(\overline{\pi}^3_\alpha,a'^2)} $ is independent of any interventions $\pi_{id}^2$ on $A^2$.
    \item $*2)$: 
    We use exchangeability :
    $C'_{(\overline{\pi}^3_{\alpha},
    a'^2, a^2)} \perp\!\!\!\perp  
    A^2 |
    \underline{A}'^1,\underline{X}^1,\underline{A}^1,\Xi$
    and consistency: \newline $p'
    \left(  
    C'_{(\overline{\pi}^3_{\alpha}, 
    a'^2, a^2)}     
    \Big\vert                   
    \underline{A}'^1,
    \underline{X}^1,
    \underline{A}^1,
    a^2, 
    \Xi
    \right) 
    = 
    p'
    \left(  
    C'_{(\overline{\pi}^3_{\alpha}, 
    a'^2)}     
    \Big\vert                   
    \underline{A}'^1,
    \underline{X}^1,
    \underline{A}^1, 
    a^2, 
    \Xi
    \right)$ for $A^2$. 
    \item $*3)$: We use the exchangeability
    $C'_{(\overline{\pi}^3_{\alpha}, a'^2)} \perp\!\!\!\perp  
    A'^2 |
    \underline{A}'^1,
    \underline{X}^1,
    \underline{A}^1, 
    a^2, 
    \Xi$ and consistency:
    $ p'
    \left(  
    C'_{(\overline{\pi}^3_{\alpha}, 
    a'^2)}     
    \Big\vert                   
    \underline{A}'^1 ,
    a'^2,
    \underline{X}^1 ,
    \underline{A}^1, 
    a^2,
    \Xi
    \right) 
    = 
    p'
    \left(  
    C'_{(\overline{\pi}^3_{\alpha})}    
    \Big\vert                   
    \underline{A}'^1 ,
    a'^2,
    \underline{X}^1 ,
    \underline{A}^1,
    a^2, 
    \Xi
    \right) $ for $A'^2$.
    \item $*4)$: We use the conditional independence 
    $C'_{(\overline{\pi}^3_{\alpha})} \perp\!\!\!\perp  
    A^2 |
    \underline{A}'^1,
    a'^2,
    \underline{X}^1,
    \underline{A}^1, 
    \Xi$
\end{itemize}

\noindent 
We must also ensure in $*3)$ that $p'
    \left(  
    C'_{(\overline{\pi}^3_{\alpha}, 
    a'^2)}     
    \Big\vert                   
    \underline{A}'^1 ,
    a'^2,
    \underline{X}^1 ,
    \underline{A}^1, 
    a^2, 
    \Xi
    \right)$, 
i.e. conditioning on $    
    \underline{A}'^1 ,
    a'^2,
    \underline{X}^1 ,
    \underline{A}^1, a^2, 
    \Xi$, is well specified. 
To understand what positivity requirements are necessary, we factorize the "observational" (i.e. simulated) distribution for step $t=1$.

\vspace{10pt}
\noindent 
\textit{Observational factorization (step $t = 1$):}
\begin{align*}  
p'   
\biggl( 
C'_{(\overline{\pi}^3_{\alpha})}
\Big\vert                  
A'^1,&
X^0,
A^1, 
\Xi
\biggl)   
=  
\sum_{X^1,a^2,a'^2} 
p'
\left(   
C'_{(   
\overline{\pi}^3_{\alpha})
}        
\Big\vert                   
\underline{A}'^1 ,
a'^2,
\underline{X}^1 ,
\underline{A}^1 , 
\Xi
\right) 
\\  & \cdot
\underbrace{
\pi_{sim}'^2(
a'^2 | 
\underline{X}'^1,
\underline{A}'^1, 
a^2) 
}  
_{\text{known simulation policy}}
\underbrace{
\pi_{\beta}^2(
a^2 | 
\underline{X}^1,
\underline{A}^1) 
}_{\text{retro. acquisition policy}} 
p(
X^1|
X^0,
A^1,
\Xi
)   
\end{align*}

\noindent 
By comparing the observational and counterfactual factorizations, we see that the following positivity assumption is required: 
\begin{flalign*}
\nonumber
\text{if }  \hspace{14 pt} \quad \quad
& q'(
\underline{a}'^1, 
a'^2, 
\underline{x}^1, 
\underline{a}^1,
a^2) = 
\\ & \quad \quad \quad \quad  = 
 q'(
\underline{a}'^1,
\underline{x}^1, 
\underline{a}^1) 
\pi_\alpha^2(
a'^2 | 
\underline{x}'^1,
\underline{a}'^1) 
\pi_{{id}}^2(
a^2 | 
\underline{a}'^1,
a'^2,
\underline{x}^1,
\underline{a}^1)   
>0
\nonumber
&&
\\
\text{then } \quad \quad
& 
p'(
\underline{x}'^1, 
a'^2, 
\underline{x}^1, 
\underline{a}^1,
a^2)  = 
\nonumber 
\\ & \quad \quad \quad \quad
= 
p'(
\underline{a}'^1,
\underline{x}^1, 
\underline{a}^1)  
\pi_{sim}'^2(
a'^2 | 
\underline{x}'^1,
\underline{a}'^1, 
a^2)   
\pi_{\beta}^2(
a^2 | 
\underline{x}^1,
\underline{a}^1) 
\geq \mathcal{O}
\nonumber
&&
\\ 
&
\forall 
\underline{x}^1, 
\underline{a}'^1,
\underline{a}^1,
a'^2, 
a^2, \text{ and some constant }\mathcal{O}>0
\end{flalign*}
with the following factorizations: 
\begin{align*}
q'(
\underline{A}'^1,
\underline{X}^1, \underline{A}^1)  
& = 
q'(
\underline{A}'^1,
\underline{X}^0,
\underline{A}^1)
p(
X^1|
\underline{X}^0,
\underline{A}^1
)  
\\
p'(
\underline{A}'^1,
\underline{X}^1, \underline{A}^1)  
& = 
p'(
\underline{A}'^1,
\underline{X}^0,
\underline{A}^1)
p(
X^1|
\underline{X}^0,
\underline{A}^1
)  
\end{align*}

\noindent
%

\noindent 
Note that we can again ignore $\Xi$ since it doesn't affect positivity requirements. 
The positivity condition can again be simplified through the two restrictions on $\pi_{id}^2$:

\vspace{10pt}
\noindent
\textit{Restrictions 1 and 2 for $\pi_{{id}}$ (step $t=2$):}
\begin{flalign*}
& \text{if }   
a'^2\not \leq  a^2 \text{ or } \pi_{\beta}^2(
a^2 | 
\underline{x}^1,
\underline{a}^1) = 0, 
\quad  \quad  \quad  \quad  \quad 
\text{then } 
\pi_{{id}}^2(
a^2 | 
\underline{a}'^1, 
a'^2,
\underline{x}^1,
\underline{a}^1 
) = 0 
&& \forall  
a^2, a'^2
\end{flalign*}

\noindent 
This imposes the requirement for $\pi_\beta^2$ that there exists at least one value $a^2$ such that $ a'^2 \leq  a^2$ and $
\pi_{\beta}^2(
a^2 | 
\underline{x}^1, 
\underline{a}^1) \geq \mathcal{O}$ (i.e. local positivity at $\underline{x}^1, 
\underline{a}^1, a'^1$). 
Notice, however, that this has to hold for \textit{all values} $a^1$
 that were "allowed" in step $t=1$ (i.e. where $\pi_{id}^1(a^1|x^0, a'^1) \geq \mathcal{O}$). 
As $\pi_{id}^1$ only needs to have support for at least one $a^1 \in \mathcal{A}^1_\textit{adm}$, we can restrict $\pi_{id}^1$ at step $t=0$ further to reduce the positivity assumption for step $t=2$. We do, however, only want to restrict $\pi_{id}^1$ as much as necessary, because if $\pi_{id}^1$ has wider support, this means that more data points are used in the analysis. 
Therefore, we introduce the notion of regional positivity and the regional admissible set $\mathcal{\tilde{A}}_\textit{adm}$ (from Definition \ref{def_regional_positivity}). 
In particular, Definition \ref{def_regional_positivity} defines $\mathcal{\tilde{A}}^1_\textit{adm}(x^0, a'^1)$ as the subset of $\mathcal{A}^1_\textit{adm}(x^0, a'^1)$ such that local positivity holds at step $t=1$ for all possible values of $x^1,$ and $a'^2$. As this has to hold for future time-steps as well (as will be shown next), the definition for $\mathcal{\tilde{A}}_\textit{adm}$ even states regional positivity has to hold recursively, i.e. also at $t=1$.

In summary, local positivity at step $t=2$ ensures that the available data allows the simulation of the currently desired action $a'^2$. Regional positivity at step $t=0$ ensures that only those simulations are used at step $t=0$ such that simulations of desired actions in the future (at step $t=1$) are possible with the data.

\vspace{10pt}
\noindent 
\textbf{Step t}

\noindent 
Now, we generalize the factorization to step $t$. 

\vspace{10pt}
\noindent 
\textit{Counterfactual factorization (step $t$, part 1):}
\begin{align} 
\label{eq_bellman_1_derivation}
p'   
\biggl( 
C'_{(\overline{\pi}^{t+1}_{\alpha})} &
 \Big\vert  
\underline{A}'^{t}, 
\underline{X}^{t-1}, 
\underline{A}^{t-1} , 
\Xi
\biggl)
 = 
p'   
\biggl( 
C'_{(\overline{\pi}^{t+1}_{\alpha})} 
 \Big\vert  
\underline{A}'^{t}, 
\underline{X}^{t-1}, 
\underline{A}^{t-1},
a^t , 
\Xi
\biggl) 
= \\ &
=   
\sum_{X'^t,X^t} 
p'
\left(   
C'_{(\overline{\pi}^{t+1}_{\alpha})} 
\Big\vert                   
\underline{A}'^t,
\underline{X}^t, 
\underline{A}^{t-1},
a^t, 
\Xi
\right) 
p(X^t|
\underline{X}^{t-1},
\underline{A}^{t-1},
a^t 
, 
\Xi) 
\nonumber
\end{align}  

\noindent 
which holds for any $a^t \in \mathcal{A}^t_\textit{adm}(\underline{X}^{t-1}, \underline{A}^{t-1}, A'^t)$ (because local positivity must hold). 
Therefore, the term 
$p'
\left(   
C'_{(
\overline{\pi}^{t+1}_{\alpha}
)} 
\Big\vert                   
\underline{A}'^t,
\underline{X}^t, 
\underline{A}^{t-1}, 
a^t,  
\Xi
\right) $ needs to be only identified for \textit{at least one value} $a^t \in \mathcal{A}_\textit{adm}^t(\underline{X}^{t-1}, \underline{A}^{t-1}, A'^t)$. 

\vspace{10pt}
\noindent 
\textit{Counterfactual factorization (step $t$, part 2):}
\begin{align}    
\label{eq_bellman_2_derivation}
& p' \biggl(   
C'_{(
\overline{\pi}^{t+1}_{\alpha}
)} 
\Big\vert                   
\underline{A}'^t,
\underline{X}^t, 
\underline{A}^t,  
\Xi
\biggl) =
\\ = &
\nonumber
\boldsymbol{\sum_{ a'^{t+1}} }
p'
\left(   
C'_{\boldsymbol{(\overline{\pi}^{t+2}_{\alpha}, 
a'^{t+1})}} 
\Big\vert                   
\underline{A}'^t,
\underline{X}^t, 
\underline{A}^t, 
\Xi
\right) 
\boldsymbol{
\pi_\alpha^{t+1}(
a'^{t+1} | 
\underline{X}'^t,
\underline{A}'^t) }  
\\ \overset{*_1}{=} &
\nonumber
\sum_{ a'^{t+1}} 
p'
\left(   
C'_{(\overline{\pi}^{t+2}_{\alpha}, 
a'^{t+1}, 
\boldsymbol{\pi_{id}^{t+1})}} 
\Big\vert                   
\underline{A}'^t,
\underline{X}^t, 
\underline{A}^t, 
\Xi
\right) 
\pi_\alpha^{t+1}(
a'^{t+1} | 
\underline{X}'^t,
\underline{A}'^t)   
\\ =  &
\nonumber
\sum_{ 
\mathclap{a'^{t+1}, \boldsymbol{a^{t+1}}} }
p'
\left(   
C'_{(\overline{\pi}^{t+2}_{\alpha}, 
a'^{t+1}, 
\boldsymbol{a^{t+1}} )} 
\Big\vert                   
\underline{A}'^t,
\underline{X}^t, 
\underline{A}^t, 
\Xi
\right) 
\boldsymbol{\pi_{{id}}^{t+1}(
a^{t+1} | 
\underline{A}'^t,
a'^{t+1},
\underline{X}^t,
\underline{A}^t
)  }
\pi_\alpha^{t+1}(
a'^{t+1} | 
\underline{X}'^t,
\underline{A}'^t)    
\\ 
\overset{*_2}{=} &
\nonumber
\sum_{\mathclap{a'^{t+1}, a^{t+1}}} 
p'
\left(   
C'_{\boldsymbol{(\overline{\pi}^{t+2}_{\alpha}, 
a'^{t+1})}} 
\Big\vert                   
\underline{A}'^t,
\underline{X}^t, 
\underline{A}^t, 
\boldsymbol{a^{t+1}}
, \Xi
\right) 
\pi_{{id}}^{t+1}(
a^{t+1} | 
\underline{A}'^t,
a'^{t+1},
\underline{X}^t,
\underline{A}^t
)  
\pi_\alpha^{t+1}(
a'^{t+1} | 
\underline{X}'^t,
\underline{A}'^t)   
\\\overset{*_3}{=} &
\nonumber
\sum_{ \mathclap{a'^{t+1}, a^{t+1}}} 
p'
\left(   
C'_{\boldsymbol{(\overline{\pi}^{t+2}_{\alpha})}} 
\Big\vert                   
\underline{A}'^t,
\boldsymbol{a'^{t+1}},
\underline{X}^t, 
\underline{A}^t, 
a^{t+1}, 
\Xi
\right) 
\pi_{{id}}^{t+1}(  
a^{t+1} | 
\underline{A}'^t,
a'^{t+1},
\underline{X}^t,
\underline{A}^t 
)  
\pi_\alpha^{t+1}(
a'^{t+1} | 
\underline{X}'^t,
\underline{A}'^t)    
\\\overset{*_4}{=} &
\nonumber
\sum_{\mathclap{a'^{t+1}, a^{t+1}}} 
p'
\left(   
C'_{(\overline{\pi}^{t+2}_{\alpha} )} 
\Big\vert                   
\boldsymbol{
\underline{A}'^t,
a'^{t+1},
\underline{X}^t, 
\underline{A}^t}, 
\Xi
\right) 
\underbrace{
\pi_{{id}}^{t+1}(
a^{t+1} | 
\underline{A}'^t,
a'^{t+1},
\underline{X}^t,
\underline{A}^t ) 
}_{\text{arbitrary dist. subject to constraints}} 
\underbrace{
\pi_\alpha^{t+1}(
a'^{t+1} | 
\underline{X}'^t,
\underline{A}'^t) 
}_{\text{target policy}}  
\end{align}

\noindent 
with the following explanations:

\begin{itemize}
    \item $*1)$: We use that  $C'_{(\overline{\pi}^{t+2}_\alpha,a'^{t+1})} $ is independent of any interventions $\pi_{id}^{t+1}$ on $A^{t+1}$.
    \item $*2)$: 
    We use exchangeability :
    $C'_{(\overline{\pi}^{t+2}_{\alpha},
    a'^{t+1}, a^{t+1})} \perp\!\!\!\perp  
    A^{t+1} |
    \underline{A}'^t,\underline{X}^t,\underline{A}^t,\Xi $
    and consistency: \newline $p'
    \left(  
    C'_{(\overline{\pi}^{t+2}_{\alpha}, 
    a'^{t+1}, a^{t+1})}     
    \Big\vert                   
    \underline{A}'^t,
    \underline{X}^t,
    \underline{A}^t,
    a^{t+1}, 
    \Xi
    \right) 
    = 
    p'
    \left(  
    C'_{(\overline{\pi}^{t+2}_{\alpha}, 
    a'^{t+1})}     
    \Big\vert                   
    \underline{A}'^t,
    \underline{X}^t,
    \underline{A}^t, 
    a^{t+1}, 
    \Xi
    \right)$ for $A^{t+1}$. 
    \item $*3)$: We use the exchangeability
    $C'_{(\overline{\pi}^{t+2}_{\alpha}, a'^{t+1})} \perp\!\!\!\perp  
    A'^{t+1} |
    \underline{A}'^t,
    \underline{X}^t,
    \underline{A}^t, 
    a^{t+1}, 
    \Xi$ and consistency:
    $ p'
    \left(  
    C'_{(\overline{\pi}^{t+2}_{\alpha}, 
    a'^{t+1})}     
    \Big\vert                   
    \underline{A}'^t ,
    a'^{t+1},
    \underline{X}^t ,
    \underline{A}^t, 
    a^{t+1}, 
    \Xi
    \right) 
    = 
    p'
    \left(  
    C'_{(\overline{\pi}^{t+2}_{\alpha})}    
    \Big\vert                   
    \underline{A}'^t ,
    a'^{t+1},
    \underline{X}^t ,
    \underline{A}^t,
    a^{t+1}, 
    \Xi
    \right) $ for $A'^{t+1}$.
    \item $*4)$: We use the conditional independence 
    $C'_{(\overline{\pi}^{t+2}_{\alpha})} \perp\!\!\!\perp  
    A^{t+1} |
    \underline{A}'^t,
    a'^{t+1},
    \underline{X}^t,
    \underline{A}^t, 
    \Xi$
\end{itemize}

\noindent 
As before, we have to make sure in $*3)$ that $p'
    \left(  
    C'_{(\overline{\pi}^{t+2}_{\alpha}, 
    a'^{t+1})}     
    \Big\vert                   
    \underline{A}'^t ,
    a'^{t+1},
    \underline{X}^t ,
    \underline{A}^t, 
    a^{t+1}, 
    \Xi
    \right)$, 
i.e. conditioning on $   
    \underline{A}'^t ,
    a'^{t+1},
    \underline{X}^t ,
    \underline{A}^t, a^{t+1}, 
    \Xi$, is well specified. 
To understand what positivity requirements are necessary, we factorize the "observational" (i.e. simulated) distribution for step $t$:

\vspace{10pt}
\noindent 
\textit{Observational factorization (step $t$):}
\begin{align*}  
p'   
\biggl( 
C'_{(\overline{\pi}^{t+2}_{\alpha})}
\Big\vert               
\underline{A}'^t ,
\underline{X}^{t-1} ,&  
\underline{A}^t , 
\Xi
\biggl)   
= 
\sum_{X^t,a^{t+1},a'^{t+1}} 
p'
\left(   
C'_{(   
\overline{\pi}^{t+2}_{\alpha})
}        
\Big\vert                 
\underline{A}'^t ,
a'^{t+1},
\underline{X}^t ,
\underline{A}^t, 
\Xi
\right) 
\\ \cdot & 
\underbrace{
\pi_{sim}'^{t+1}(
a'^{t+1} | 
\underline{X}'^t,
\underline{A}'^t, 
a^{t+1}
)
}  
_{\text{known simulation policy}}   
\underbrace{
\pi_{\beta}^{t+1}(
a^{t+1} | 
\underline{X}^t,
\underline{A}^t
) 
}_{\text{retro. acquisition policy}}  
p(X^t|
\underline{X}^{t-1},
\underline{A}^{t}, 
\Xi) 
\end{align*}

\noindent 
By comparing the observational and counterfactual factorizations, we see that the following positivity assumption is required: 
\begin{flalign*}
\nonumber
\text{if }  \hspace{14 pt} \quad \quad
& q'(
\underline{a}'^t, 
a'^{t+1}, 
\underline{x}^t, 
\underline{a}^t,
a^{t+1}) = 
\\ & \quad \quad \quad \quad  = 
 q'(
\underline{a}'^t,
\underline{x}^t, 
\underline{a}^t) 
\pi_\alpha^{t+1}(
a'^{t+1} | 
\underline{x}'^t,
\underline{a}'^t) 
\pi_{{id}}^{t+1}(
a^{t+1} | 
\underline{a}'^t,
a'^{t+1},
\underline{x}^t,
\underline{a}^t)   
>0
\nonumber
&&
\\
\text{then } \quad \quad
& 
p'(
a'^{t+1}, 
\underline{x}^t, 
\underline{a}^t,
a^{t+1})  = 
\nonumber 
\\ & \quad \quad \quad \quad
= 
p'(
\underline{a}'^t,
\underline{x}^t, 
\underline{a}^t)  
\pi_{sim}'^{t+1}(
a'^{t+1} | 
\underline{x}'^t,
\underline{a}'^t, 
a^{t+1})   
\pi_{\beta}^{t+1}(
a^{t+1} | 
\underline{x}^t,
\underline{a}^t) 
\geq \mathcal{O}
\nonumber
&&
\\ 
&
\forall 
\underline{x}^t, 
\underline{a}'^t,
\underline{a}^t,
a'^{t+1}, 
a^{t+1}, \text{ and some constant }\mathcal{O} > 0
\end{flalign*}
with the following factorizations: 
\begin{align*}
q'(
\underline{A}'^t,
\underline{X}^t, 
\underline{A}^t)  
& = 
q'(
\underline{A}'^t,
\underline{X}^{t-1},
\underline{A}^t)
p(
X^t|
\underline{X}^{t-1},
\underline{A}^t
)  
\\
p'(
\underline{A}'^t,
\underline{X}^t, 
\underline{A}^t)  
& = 
p'(
\underline{A}'^t,
\underline{X}^{t-1},
\underline{A}^t)
p(
X^t|
\underline{X}^{t-1},
\underline{A}^t
)  
\end{align*}

\noindent 
The positivity condition can again be simplified through the two restrictions on $\pi_{id}$:

\vspace{10pt}
\noindent
\textit{Restrictions 1 and 2 for $\pi_{{id}}$ (step $t$):}
\begin{flalign*}
\text{if }  \hspace{14 pt} \quad \quad \quad \quad \quad  \quad \quad \quad \quad 
&a'^{t+1}\not \leq  a^{t+1} \text{ or } \pi_{\beta}^{t+1}(
a^{t+1} | 
\underline{x}^t,
\underline{a}^t) = 0,
\nonumber
&&
\\
\text{then } \quad \quad \quad \quad \quad  \quad \quad \quad \quad 
&
\pi_{{id}}^{t+1}(
a^{t+1} | 
\underline{a}'^t, 
a'^{t+1},
\underline{x}^t,
\underline{a}^t 
) = 0 
\nonumber
&&
\\ 
&
\forall  
a^{t+1}, a'^{t+1}
\end{flalign*}

\noindent 
This imposes the requirement for $\pi_\beta$ that there exists at least one value $a^{t+1}$ such that $ a'^{t+1} \leq  a^{t+1}$ and $
\pi_{\beta}^{t+1}(
a^{t+1} | 
\underline{x}^t, 
\underline{a}^t) \geq \mathcal{O}$ (i.e. local positivity at $\underline{x}^t, 
\underline{a}^t, a'^t$). 
This has to hold for \textit{all values} $\underline{a}^{t}$
that were "allowed" in \text{all} previous steps (i.e.  all $a^t$, for all $\tau \leq t$, s.t. $\pi_{id}^t(a^\tau|\underline{a}'^{\tau-1}, a'^\tau,\underline{x}^{\tau-1},
 \underline{a}^{\tau-1}) > 0$ and which could later on have let to the current state). 
As $\pi_{id}$ only needs to have support for at least one $a^t \in \mathcal{A}^t_{\textit{adm}}$ per step, we can restrict $\pi_{id}$ at \textit{all} previous steps to reduce the positivity assumption for step $t$. Again, we do not want to restrict $\pi_{id}$ too much, because if $\pi_{id}$ has wider support, this means that more data points are used in the analysis. 
The regional positivity assumption (from Definition \ref{def_regional_positivity}) ensures in this case that only those simulations  are used (and exist) at all previous steps such that simulations of the desired actions can be performed at step $t$ (and for future steps).

\vspace{10pt}
\noindent
\textbf{Full factorization}

\noindent
Bringing all time-steps $t = 0, ... , T$ together and including $Y$ (as a part of $\Xi$) one obtains the full factorization of the identifying distribution $q'$:
\begin{align*} 
q' & (A', A, X, Y)  =   
\prod_{t=1}^{T}  
\underbrace{\pi_{id}^t(A^{t}| 
\underline{A}'^{t},
\underline{X}^{t-1},
\underline{A}^{t-1})}_{\text{arb. distr. subject to constraints}}
\underbrace{\pi_\alpha^t(A'^t|             
\underline{X}'^{t-1},             
\underline{A}'^{t-1})}_{\text{target policy}}  
\prod_{t=0}^{T}
p(X^t|   
\underline{X}^{t-1},        
\underline{A}^{t}, Y) 
p(Y)
\\
&
=   
\prod_{t=1}^{T}  
\pi_{id}^t(A^{t}| 
\underline{A}'^{t},
G_{\underline{A}^{t-1}}(X_{(1)}),
\underline{A}^{t-1})
\pi_\alpha^t(A'^t|             
G_{\underline{A}'^{t-1}}(X_{(1)}),             
\underline{A}'^{t-1})
\\ 
& \quad \cdot
\prod_{t=0}^{T}
p(G_{A^{t}}(X_{(1)})|   
G_{\underline{A}^{t-1}}(X_{(1)}),        
\underline{A}^{t}, Y) 
p(Y)
\\
& 
= q'(A',A , G_A(X_{(1)}), Y)
\end{align*}

\noindent 
In order for this expression to hold, $\pi_{id}^t$ must be restricted to have support only on $\mathcal{\tilde{A}}_\textit{adm}$. This leads to the following restriction: 
\begin{align*}
\pi_{id}^t(A^t|
\underline{A}'^t,
\underline{X}^{t-1},
\underline{A}^{t-1}) 
= 
\underbrace{\mathbb{I}(A^t \in 
\mathcal{
\tilde{A}}_\textit{adm}^t(
\underline{A}'^t,
\underline{X}^{t-1},
\underline{A}^{t-1}
))}
_{\text{support restriction}}
f_{id}^t(
\underline{A}'^t,
\underline{X}^{t-1},
\underline{A}^{t-1}
). 
\end{align*}
\noindent 
where $f_{id}^t$ is an arbitrary function that ensures that $\pi_{id}^t$ is a valid density. 
This concludes the proof of Theorem \ref{theorem_identification_semi_offline_RL}.

\vspace{10pt}
\noindent 
\textbf{Bellman equation}

\noindent 
Equations \ref{eq_bellman_1_derivation} and \ref{eq_bellman_2_derivation} correspond to the two parts of the semi-offline RL version of the Bellman equation:
\begin{align*} 
\mathbb{E}   &
\biggl[ 
C'_{(\overline{\pi}^{t+1}_{\alpha})}  
 \Big\vert  
\underline{A}'^{t}, 
\underline{X}^{t-1}, 
\underline{A}^{t-1}, 
\Xi
\biggl]
=
\sum_{X^t} 
\mathbb{E}
\left[   
C'_{(\overline{\pi}^{t+1}_{\alpha})} 
\Big\vert                   
\underline{A}'^t,
\underline{X}^t, 
\underline{A}^{t-1},
a^t, 
\Xi
\right]
p(X^t|
\underline{X}^{t-1},
\underline{A}^{t-1},
a^t, 
\Xi) 
\nonumber  
\\ 
& = 
\sum_{G_{A^t}(X_{(1)})} 
\mathbb{E}
\left[   
C'_{(\overline{\pi}^{t+1}_{\alpha})} 
\Big\vert                   
\underline{A}'^t,
G_{\underline{A}^t}(X_{(1)}),
\underline{A}^{t-1},
a^t, 
\Xi
\right]
p(
G_{A^t}(X_{(1)})|
G_{\underline{A}^{t-1}}(X_{(1)}),
\underline{A}^{t-1},
a^t, 
\Xi) 
\nonumber  
\\
& \text{ for any }
a^t 
\in \mathcal{A}^t_\textit{adm}
(          
\underline{X}^{t-1}, 
\underline{A}^{t-1},A'^t )  
\nonumber 
\end{align*}

\begin{align*}
\mathbb{E} 
\biggl[   
C'_{(
\overline{\pi}^{t+1}_{\alpha}
)} 
\Big\vert                   
\underline{A}'^t,&
\underline{X}^t,  
\underline{A}^t , 
\Xi
\biggl] 
=   
\nonumber
\sum_{
A'^{t+1}
} 
\mathbb{E}
\left[   
C'_{(\overline{\pi}^{t+2}_{\alpha} )} 
\Big\vert                   
\underline{A}'^{t+1},
\underline{X}^t, 
\underline{A}^t, 
\Xi
\right]
\pi_\alpha^{t+1}(A'^{t+1}
| 
\underline{X}'^t,
\underline{A}'^t) 
\\
& =   
\nonumber
\sum_{
A'^{t+1}
} 
\mathbb{E}
\left[   
C'_{(\overline{\pi}^{t+2}_{\alpha} )} 
\Big\vert                   
\underline{A}'^{t+1},
G_{\underline{A}^{t}}(X_{(1)}),
\underline{A}^t, 
\Xi
\right]
\pi_\alpha^{t+1}(A'^{t+1}
| 
G_{\underline{A}'^{t-1}}(X_{(1)}),
\underline{A}'^t) 
\end{align*}

\noindent 
The factorization holds under local positivity (if $\mathcal{A}^t_\textit{adm} \neq \emptyset$ exists). Furthermore, the individual terms are identified if regional positivity holds which concludes the proof of Theorem \ref{theorem_Bellman_equation}. 
\end{proof}

\section{Proof of Corollary \ref{corollary_identification_semi_offline_RL}   }
\label{app_proof_corollary_identification_semi_offline_RL}

In this appendix, we prove Corollary \ref{corollary_identification_semi_offline_RL}, stating identification under the maximal global positivity assumption. We repeat it here for ease of reference: 

\vspace{5pt}
\begin{cora}{\ref{corollary_identification_semi_offline_RL}}
(Identification of $J$ for the semi-offline RL view under maximal global positivity).
The reformulated AFAPE problem of estimating $J$ under the semi-offline RL view  (Eq. \ref{eq:AFAPE_objective_semi_offline_RL}) is under
Assumption \ref{assump:measurement_noise} (no measurement noise),  Assumption \ref{assump:consistency} (consistency),  Assumption \ref{assump:interference} (no interference), Assumption \ref{assump:nde} (NDE), Assumption \ref{assump:nuc} (NUC) and Assumption \ref{assump:max_positivity_global_semi_offline_RL} (maximum global positivity)
identified
by Eqs. \ref{eq_identificiation_semi_offline_RL_1} and \ref{eq_identificiation_semi_offline_RL_2} where 
\begin{align*}
\pi_{id}^t(A^t|
\underline{A}'^{t-1},
A'^t = a'^t,&
\underline{X}^{t-1},
\underline{A}^{t-1}
) 
 = \\ & = 
\mathbb{I}(A^t \geq a'^t)
\pi_\beta^t(
A^t
|
\underline{X}^{t-1}, 
\underline{A}^{t-1}
)  
f_{id}^t(
\underline{A}'^{t-1},
A'^t = a'^t,
\underline{X}^{t-1},
\underline{A}^{t-1}
) 
\end{align*}
for any function $f_{id}^t$
s.t. $\pi_{id}^t$ is a valid density. This holds in particular for the choice of a truncated $\pi_\beta$: 
\begin{align*}
\pi_{id}^t(A^t|
\underline{A}'^{t-1},
A'^t = a'^t,
\underline{X}^{t-1},
\underline{A}^{t-1}
) 
& = 
\pi_\beta^t(A^t
|
A^t \geq a'^t, 
\underline{X}^{t-1},
\underline{A}^{t-1}
)
=\\& = 
\frac{\mathbb{I}(A^t \geq a'^t)
\pi_\beta^t(A^t
|
\underline{X}^{t-1},
\underline{A}^{t-1}
)}{
\pi_\beta^t(A^t\geq a'^t
|
\underline{X}^{t-1},
\underline{A}^{t-1}
)}.
\end{align*}
\end{cora}

\vspace{10pt}
\begin{proof}
Under the maximal global positivity assumption, we have 
$
\mathcal{\tilde{A}}_\textit{adm}^t
(
\underline{a}'^{t},
\underline{x}^{t-1},
\underline{a}^{t-1})
=  
\mathcal{A}_\textit{adm}^t(
\underline{x}^{t-1},
\underline{a}^{t-1},
a'^{t})$. 
We can now insert this assumption into Eq. \ref{eq_pi_id} which states the identificiation of $J$ under global positivity: 

\begin{align*}
\pi_{id}^t(A^t|
\underline{A}'^t,
\underline{X}^{t-1},
\underline{A}^{t-1}) 
& = 
\mathbb{I}(A^t \in 
\mathcal{
\tilde{A}}_\textit{adm}^t(
\underline{A}'^t,
\underline{X}^{t-1},
\underline{A}^{t-1}
))
f_{id}^t(
\underline{A}'^t,
\underline{X}^{t-1},
\underline{A}^{t-1}
)
\\
& = 
\mathbb{I}(A^t \in 
\mathcal{
{A}}_\textit{adm}^t(
\underline{X}^{t-1},
\underline{A}^{t-1},
A'^t
))
f_{id}^t(
\underline{A}'^t,
\underline{X}^{t-1},
\underline{A}^{t-1}
)
\\ & 
= 
\mathbb{I}(A^t \geq a'^t)
\pi_\beta^t(
A^t
|
\underline{X}^{t-1}, 
\underline{A}^{t-1}
) 
f_{id}^{*t}(
\underline{X}'^{t-1},
\underline{A}'^t,
\underline{X}^{t-1},
\underline{A}^{t-1}
) 
\end{align*}

\noindent 
where we let $f_{id}^{*t}$ denote another arbitrary function that ensures that $\pi_{id}^t$ is a valid density. 
This concludes the proof for Corollary \ref{corollary_identification_semi_offline_RL}. 
    
\end{proof}

\section{Comparison of the Semi-offline RL IPW Estimator with Related Methods}
\label{app_comparison_caniglia}

In this appendix, we demonstrate that our proposed IPW estimator $J_\textit{IPW-Semi}$ is a more general version of an adapted version of the IPW estimator introduced by \cite{caniglia_emulating_2019}. We refer to this estimator as $\hat{J}_\textit{IPW-Cen}$ since it is derived from a censoring viewpoint.
While $\hat{J}_\textit{IPW-Cen}$ was developed for a scenario where both feature acquisition decisions and treatment decisions are made by the agent, it can be adapted to the AFA setting. 
However, $\hat{J}_\textit{IPW-Cen}$ is only applicable to simpler settings with one acquisition option per time-point ($A^t \in \{0,1\}$). $\hat{J}_\textit{IPW-Cen}$ is also only consistent if the maximal global positivity assumption holds, as will be shown. 

\cite{caniglia_emulating_2019} derived $J_\textit{IPW-Cen}$ under the NDE and NUC assumptions. 
Instead of using a semi-offline sampling policy that avoids the acquisition of non-available features as proposed in this paper, \cite{caniglia_emulating_2019} simply sample from $\pi_{\alpha}$, even without knowledge about $X_{(1)}$. As the feature revelation is not possible if a non-available feature is acquired, they treat the resulting trajectory as censored. Known missing data methods are then applied to adjust for this censoring. 
Hence, in the wording of this paper, we would describe this viewpoint as an \textit{online RL + censoring viewpoint}. 

Adapted to the AFA setting under the consideration of deterministic AFA policies $\pi_\alpha$, $\hat{J}_\textit{IPW-Cen}$ becomes:   
\begin{align}
    \hat{J}_\textit{IPW-Cen} = 
    \hat{\mathbb{E}}_{n, \textit{uncen}}
    [\hat{\rho}^T_\textit{Cen} C_{(\pi_\alpha)}]
    \text{ where }    
    \hat{\rho}^T_\textit{Cen} & = 
    \prod_{t = 1}^T 
    \left(
    \frac{
    \mathbb{I}(A^t = 1) 
    }
    {\hat{\pi}_{\beta}^t(A^t = 1| 
    \underline{X}^{t-1},
    \underline{A}^{t-1})
    }\right)^{A_{(\pi_\alpha)}^t}
\end{align}

\noindent 
where $\hat{\mathbb{E}}_{n', \textit{uncen}}[.]$ denotes the empirical average over the uncensored data points which have the known deterministic counterfactuals $A_{(\pi_\alpha)}^t$,$X_{(\pi_\alpha)}^t$ and $C_{(\pi_\alpha)}$.

Since $A^t \in \{0,1\}$, it can be observed that the propensity score for a specific time-point $t$ only appears in the factorization if the corresponding action is $A_{(\pi_\alpha)}^t = 1$. An "acquire nothing" AFA policy  (where 
$\pi_{\alpha}^t(A^t = 0| \underline{X}^{t-1}, \underline{A}^{t-1}) = 1$ $\forall t$) would thus require no adjustment ($\rho^T_\textit{Cen} = 1$). In their example, this estimator achieved a 50-fold increase in data efficiency compared to the standard offline RL IPW estimator \cite{caniglia_emulating_2019}. 

We establish the equivalence of our estimator $\hat{J}_\textit{IPW-Semi}$ and $\hat{J}_\textit{IPW-Cen}$ in the following proposition:

\begin{proposition}
\label{proposition_equivalence}
(Equivalance of $\hat{J}_\textit{IPW-Cen}$ and $\hat{J}_\textit{IPW-Semi}$).
The estimators $\hat{J}_\textit{IPW-Cen}$ and $\hat{J}_\textit{IPW-Semi}$ are equivalent for AFA settings with one action option per time-point, deterministic AFA policies $\pi_\alpha$, the maximal global positivity assumption (Assumption \ref{assump:max_positivity_global_semi_offline_RL}), and a simulation policy $\pi_{sim} = \pi_{\alpha}$.
\end{proposition}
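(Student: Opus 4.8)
The plan is to start from the semi-offline IPW estimator written with the maximal-global-positivity weights (Eq. \ref{eq_ipw_semi-offline_RL_weights_2}) and show that, under the stated hypotheses, every factor collapses to the corresponding factor of $\hat{\rho}^T_\textit{Cen}$ while the inner simulation average degenerates to a single deterministic trajectory. First I would set $\pi_{sim} = \pi_\alpha$, so that $\pi'_\textit{sim} = \pi'_\alpha$, and argue that the first ratio in Eq. \ref{eq_ipw_semi-offline_RL_weights_2}, namely $\pi_\alpha^t(A'^t \mid \underline{X}'^{t-1}, \underline{A}'^{t-1}) / \pi'^t_\alpha(A'^t \mid \underline{X}'^{t-1}, \underline{A}'^{t-1}, A^t)$, equals $1$ on every simulatable trajectory. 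This is where determinism of $\pi_\alpha$ is essential: a deterministic policy places all its mass on a single action $a'^t$ at each step; if that action is available ($a'^t \le a^t$), Definition \ref{def_blocked_policy} forces $\pi'_\alpha$ to place mass one on it as well, since there are no competing probabilities to renormalize, so the ratio is $1/1$; if the desired action is not available, the trajectory is blocked and never contributes to the inner average.

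Next I would collapse the inner expectation. Because both $\pi_\alpha$ and its blocked version are deterministic, each observed tuple $(A, X, Y)$ generates exactly one simulated action sequence $A' = A_{(\pi_\alpha)}$, truncated at the first blocked step, so $\hat{\mathbb{E}}_{n'}[\,\cdot \mid A, X, Y]$ reduces to evaluation at this single trajectory and the simulated cost satisfies $C' = f_C(A_{(\pi_\alpha)}, G_{A_{(\pi_\alpha)}}(X_{(1)}), Y) = C_{(\pi_\alpha)}$ on uncensored points. It then remains to simplify the second ratio in Eq. \ref{eq_ipw_semi-offline_RL_weights_2}. With $A^t \in \{0,1\}$ and $a'^t = A_{(\pi_\alpha)}^t \in \{0,1\}$, I would split into two cases: when $a'^t = 0$ the indicator $\mathbb{I}(A^t \ge a'^t)$ and the denominator $\hat{\pi}_\beta^t(A^t \ge a'^t \mid \cdot)$ are both identically one, giving a factor of $1$; when $a'^t = 1$ the event $\{A^t \ge 1\}$ coincides with $\{A^t = 1\}$, giving the factor $\mathbb{I}(A^t = 1)/\hat{\pi}_\beta^t(A^t = 1 \mid \cdot)$. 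These two cases combine exactly into the exponent notation $\big(\mathbb{I}(A^t=1)/\hat{\pi}_\beta^t(A^t=1\mid\cdot)\big)^{A_{(\pi_\alpha)}^t}$, which is precisely the $t$-th factor of $\hat{\rho}^T_\textit{Cen}$.

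Finally I would reconcile the outer averaging. On a blocked (censored) trajectory the indicator $\mathbb{I}(A^t \ge a'^t)$ vanishes at the first step where $\pi_\alpha$ requests an unavailable feature, so $\hat{\rho}^T_\textit{Semi} = 0$ and those data points contribute nothing to $\hat{J}_\textit{IPW-Semi}$; this matches the restriction of $\hat{J}_\textit{IPW-Cen}$ to the uncensored sample, since the indicators folded into $\hat{\rho}^T_\textit{Cen}$ likewise annihilate the censored points. Assembling the three reductions yields $\hat{J}_\textit{IPW-Semi} = \hat{J}_\textit{IPW-Cen}$. The main obstacle I anticipate is the first step: verifying rigorously that the blocked-policy ratio is exactly one requires a careful reading of the rescaling clause in Definition \ref{def_blocked_policy} for degenerate point-mass policies, confirming that no residual normalization constant survives along simulatable trajectories. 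A secondary bookkeeping point is to fix the normalization convention of $\hat{\mathbb{E}}_{n,\textit{uncen}}$ (dividing by $n$ versus $n_\textit{uncen}$) so that it agrees with the product of the outer $\hat{\mathbb{E}}_n$ and the degenerate inner $\hat{\mathbb{E}}_{n'}$; I would also emphasize that maximal global positivity (Assumption \ref{assump:max_positivity_global_semi_offline_RL}) is exactly what licenses the weight form of Eq. \ref{eq_ipw_semi-offline_RL_weights_2}, and is therefore indispensable for the equivalence.
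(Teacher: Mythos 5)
Your proposal is correct and follows essentially the same route as the paper's proof in Appendix \ref{app_comparison_caniglia}: write out the weights of Eq.~\ref{eq_ipw_semi-offline_RL_weights_2}, use the explicit form of the blocked policy in the binary deterministic setting to collapse the first ratio, and case-split on $a'^t\in\{0,1\}$ to turn the second ratio into the exponent form $\bigl(\mathbb{I}(A^t=1)/\hat{\pi}_\beta^t(A^t=1\mid\cdot)\bigr)^{A^t_{(\pi_\alpha)}}$ appearing in $\hat{\rho}^T_{\textit{Cen}}$. The one bookkeeping slip is your attribution of the vanishing weight on censored points to $\mathbb{I}(A^t\ge a'^t)$: since the blocked policy never samples an unavailable action, that indicator is identically one on simulated trajectories, and it is instead the first ratio whose numerator $\pi_\alpha^t(A'^t=0\mid\cdot)=0$ (when the deterministic policy wanted to acquire) that annihilates those points — the paper records this as the factor $\prod_t\mathbb{I}(A'^t=A^t_{(\pi_\alpha)})$.
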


\begin{proof}
Firstly, we clarify the blocking operation (from Definition \ref{def_blocked_policy}) for this setting:
\begin{align*}
\pi_{sim}'(A'^t=a'^t|  
\underline{X}'^{t-1}, 
\underline{A}'^{t-1},   A^{t}=a^t) 
& = 
\begin{cases}
    1, & \text{if $a'^{t}=0$ \& $a^{t}=0$}\\
    0, & \text{if $a'^{t}=1$ \& $a^{t}=0$}\\
    \pi_{\alpha}^t(A'^t = a'^t|  
    \underline{X}'^{t-1}, 
    \underline{A}'^{t-1}) , & \text{if $a^{t}=1$}.
\end{cases}
\end{align*}

\noindent 
The inverse probability weights of $\hat{J}_\textit{IPW-Semi}$ become: 
\begin{align*} 
    \rho^T_{\textit{Semi}} & = 
    \prod_{t=1}^T 
    \frac{
    \pi_\alpha^t(A'^{t}=a'^t| \underline{X}'^{t-1}, \underline{A}'^{t-1}) 
    }
    {
    \pi'^t_{sim}(
    A'^{t}=a'^t| \underline{X}'^{t-1}, \underline{A}'^{t-1},
    A^t=a^t
    )
    }
    \frac{
    \mathbb{I}(A^t \geq a'^t)
    }
    {
    \pi_\beta^t(
    A^t \geq a'^t| 
    \underline{X}^{t-1},
    \underline{A}^{t-1}
    )
    }
     \\ & \overset{*_1}{=} 
    \prod_{t=1}^T 
    \left(
    \pi_\alpha^t(A'^{t}=a'^t| \underline{X}'^{t-1}, \underline{A}'^{t-1}) 
    \right)^{1-a^t}
    \left(\frac{
    \mathbb{I}(A^t = 1) 
    }
    {
    \pi_{\beta}^t
    (A^t = 1| 
    \underline{X}^{t-1},
    \underline{A}^{t-1})
    }
    \right)^{a'^t}
     \\ & \overset{*_2}{=} 
    \prod_{t=1}^T 
    \mathbb{I}(A'^t = A^t_{(\pi_\alpha)}) 
    \left(
    \frac{
    \mathbb{I}(A^t = 1) 
    }
    {
    \pi_{\beta}^t
    (A^t = 1| 
    \underline{X}^{t-1},
    \underline{A}^{t-1})
    }\right)^{A^t_{(\pi_\alpha)}}
\end{align*}

\noindent 
where we used in $*1)$ the above definition of $\pi'_{sim}$ and that $\mathbb{I}(A^t \geq 0)= 1 = \pi_\beta(
A^t \geq 0| \underline{X}^{t-1},\underline{A}^{t-1})$. 
In $*2)$, we see that the first term corresponds to whether $\pi_\alpha$ could be applied without running into censoring. It thus gives 0 weights to all datapoints where blocking occured (i.e. which are censored under the online RL + censoring viewpoint). The second term then corresponds to the same weights as $\rho^T_{Cen}$ which concludes the proof for Proposition \ref{proposition_equivalence}. 
\end{proof}

\vspace{5pt}
We have demonstrated that, although the two estimators are derived from different concepts (online RL with censoring vs semi-offline RL), they are equal in this specific AFA setting of one action option per time-step, deterministic policies and under the maximal global positivity assumption. However, the key distinction lies in the generality of our estimator. Unlike $\hat{J}_\textit{IPW-Cen}$, which is limited to the described setting, we developed an IPW estimator that can be applied for multiple acquisition options (i.e. higher dimensional $A^t$), under the weaker global positivity assumption and in a modified version for static features settings as we show in our companion paper \cite{von_kleist_evaluation_2023-1}. 
It can further be combined with a Q-model to build the DRL estimator.

\section{Proof of Theorem \ref{theorem_double_robustness}}
\label{app_theorem_double_robustness}

In this appendix, we proof Theorem \ref{theorem_double_robustness}, which we repeat here: 

\vspace{5pt}
\begin{thma}{\ref{theorem_double_robustness}}
(Double robustness of $\hat{J}_{\textit{DRL-Semi}}$).
The estimator $\hat{J}_{\textit{DRL-Semi}}$ is 
doubly robust, in the sense that it is consistent if either the Q-function $\hat{Q}_\textit{Semi}$ or the propensity score model $\hat{\pi}_\beta$ is correctly specified. 
\end{thma}

\vspace{10pt}
\begin{proof}
 To prove the double robustness property of the semi-offline RL version of the DRL estimator (i.e. Theorem \ref{theorem_double_robustness}), we decompose 
$\hat{J}_{\textit{DRL-Semi}}$ in two different ways:

\vspace{10pt}
\noindent 
\textit{Scenario 1:} If $\hat{\pi}_\beta$ is correctly specified, we find
\begin{align*}                  
\hat{J}_{\textit{DRL-Semi}} 
& = 
\hat{\mathbb{E}}_{n}
     \left[
     \hat{\mathbb{E}}_{n'}
    \left[
    \rho_\textit{Semi}^T C' + 
 \sum_{t=1}^{T} 
 \left(- \hat{\rho}_{\textit{Semi}}^{t}  
 \hat{Q}_\textit{Semi}^t
 +
\rho_{\textit{Semi}}^{t-1}  
 \hat{V}_\textit{Semi}^{t-1}
 \right) 
 \Big| 
 A,X,Y
 \right]
 \right]
\\ & = 
\underbrace{
\mathbb{E}[\rho_\textit{Semi}^T 
C'] }_{=J}  
+ 
\sum_{t=1}^{T} 
\underbrace{\mathbb{E}
\left[
-\rho_{\textit{Semi}}^{t}  
  \hat{Q}_{\textit{Semi}}^{t}
 +
 \rho_{\textit{Semi}}^{t-1}  
  \hat{V}_{\textit{Semi}}^{t-1}
 \right]}_{=0},  
\end{align*} 

\noindent 
where the first term is just the IPW estimator. As $\hat{\pi}_\beta =\pi_\beta $ is correctly specified, the IPW estimator consistently estimates $J$. 
The fact that the second term equals 0 is shown in the following: 
\begin{align*} 
    \mathbb{E}
    & 
    \left[ 
    -\rho_{\textit{Semi}}^{t}  
     \hat{Q}_{\textit{Semi}}^{t}
     +
     \rho_{\textit{Semi}}^{t-1}  
      \hat{V}_{\textit{Semi}}^{t-1}
     \right] 
= 
\\ & 
\overset{*_1}{=} 
     \mathbb{E}
    \biggl[
    \rho_{\textit{Semi}}^{\boldsymbol{t-1}}
    \boldsymbol{ \biggl(
     - 
    \frac{
    \pi_\alpha^{t}(A'^{t}| \underline{X}'^{t-1}, \underline{A}'^{t-1}) 
    }
    {
    \pi'^{t}_{sim}(
    A'^{t}| \underline{X}'^{t-1}, \underline{A}'^{t-1},
    A^t
    )
    }
    \frac{
    \pi_{id}^{t}(
    A^{t} | 
    \underline{A}'^{t},
    \underline{X}^{t-1},
    \underline{A}^{t-1})
    }
    {
    \pi_\beta^{t}(
    A^t| 
    \underline{X}^{t-1},
    \underline{A}^{t-1}
    )
    }}
     \hat{Q}_{\textit{Semi}}^t
 \\ & \quad \quad  \quad \quad \quad  \quad  +
     \boldsymbol{\sum_{A'^t}
      \pi_{\alpha}^{t}(A'^t|
     \underline{X}'^{t-1},
     \underline{A}'^{t-1})
     \hat{Q}_{\textbf{\textit{Semi}}}^t
     \biggl)}
     \biggl] 
\\ & 
     \overset{*_2}{=} 
     \mathbb{E}
     \biggl[
    \rho_{\textit{Semi}}^{t-1}
      \biggl(
     - \boldsymbol{\sum_{A'^t,A^t} 
   \pi'^{t}_{sim}(
    A'^{t}| \underline{X}'^{t-1}, \underline{A}'^{t-1},
    A^t
    )
    \pi_\beta^{t}(
    A^t| 
    \underline{X}^{t-1},
    \underline{A}^{t-1}
    )
    }
    \frac{
    \pi_\alpha^{t}(A'^{t}| \underline{X}'^{t-1}, \underline{A}'^{t-1}) 
    }
    {
    \pi'^{t}_{sim}(
    A'^{t}| \underline{X}'^{t-1}, \underline{A}'^{t-1},
    A^t
    )
    }
\cdot  \\ & \quad \quad \quad \quad \quad \quad \cdot
    \frac{
    \pi_{id}^{t}(
    A^{t} | 
    \underline{A}'^{t},
    \underline{X}^{t-1},
    \underline{A}^{t-1})
    }
    {
    \pi_\beta^t(
    A^t| 
    \underline{X}^{t-1},
    \underline{A}^{t-1}
    )
    }
     \hat{Q}_{\textit{Semi}}^t +
     \sum_{A'^t}
      \pi_{\alpha}^{t}(A'^t|
     \underline{X}'^{t-1},
     \underline{A}'^{t-1})
     \hat{Q}_{\textit{Semi}}^t
     \biggl)
      \biggl] 
\\ & 
     \overset{*_3}{=} 
     \mathbb{E}
     \biggl[
    \rho_{\textit{Semi}}^{t-1}
      \biggl(
     - \boldsymbol{\sum_{A'^t} 
     \pi_{\alpha}^{t}(A'^t|
     \underline{X}'^{t-1},
     \underline{A}'^{t-1})}
     \hat{Q}_{\textit{Semi}}^t
      + 
     \sum_{A'^t}
      \pi_{\alpha}^{t}(A'^t|
     \underline{X}'^{t-1},
     \underline{A}'^{t-1})
     \hat{Q}_{\textit{Semi}}^t
     \biggl)
      \biggl] = 0
\end{align*} 
with the following explanations: 
\begin{itemize}
    \item $*1)$: We use the relationship  $ \hat{V}_{\textit{Semi}}^{t-1} = \mathbb{E}_{\pi_{\alpha}}[ \hat{Q}_{\textit{Semi}}^{t}]$ and the decomposition of $\rho_{\textit{Semi}}$.
    \item 
    $*2)$: We use the fact that one can pull the expected value with respect to \newline 
    $\pi'^t_{sim}(
    A'^{t}| \underline{X}'^{t-1}, \underline{A}'^{t-1},
    A^t
    )
    \pi_\beta^t(
    A^t| 
    \underline{X}^{t-1},
    \underline{A}^{t-1}
    )$ inside.
    \item $*3)$: We use the fact that $\hat{Q}^t_{Semi}$ is independent of $\pi_{id}^t(
    A^{t} | 
    \underline{A}'^{t},
    \underline{X}^{t-1},
    \underline{A}^{t-1})
    $ as long as it fulfills the positivity assumption. 
\end{itemize}

\vspace{10pt}
\noindent 
\textit{Scenario 2:} 
If $\hat{Q}_{\textit{Semi}}$ is correctly specified, we find
\begin{align*}                  
\hat{J}_{\textit{DRL-Semi}} & =              
\mathbb{E}[ 
V_{\textit{Semi}}^{0}]           
+                   
\mathbb{E}
 \left[ 
\hat{\rho}_{\textit{Semi}}^{T}
\left(C'-  
 Q_{\textit{Semi}}^{T}
 \right)\right]                   
 +                               
 \mathbb{E}
 \left[
 \sum_{t=1}^{T-1} 
\hat{\rho}_{\textit{Semi}}^{t}  
 \left(-Q_{\textit{Semi}}^{t}
 +  
 V_{\textit{Semi}}^{t}
 \right)]\right]   
\\ 
&
=                           
\underbrace{                
\mathbb{E}
\left[ 
V_{\textit{Semi}}^{0}
\right]
}_{=J}           
+                   
\mathbb{E}
\left[ 
\hat{\rho}_{\textit{Semi}}^{T}
\underbrace{
\left(
f_C(A', X', Y)
-  
 Q_{\textit{Semi}}^{T}
 \right)}_{=0}
 \right]
\\ &            
 +                                
 \mathbb{E}
 \biggl[
 \sum_{t=1}^{T-1} 
\hat{\rho}_{
\textit{Semi}}^{t}  
 \underbrace{
 \biggl(
 -
 Q_{\textit{Semi}}^{t}
+
 \sum_{
 X^t 
 }
 V_{\textit{Semi }}^{t}
 p(X^t|
 \underline{X}^{t-1},
 \underline{A}^{t}, \Xi
 ) 
 \biggl)}_{=0}
 \biggl]   
 \end{align*} 

\noindent 
where $\mathbb{E} \left[  V_{\textit{Semi}}^{0}
\right]$ corresponds to the DM estimator which is consistent if $\hat{Q}_{\textit{Semi}}  = Q_{\textit{Semi}}$ is correctly specified. For the last term, we used that 
we can pull in the expected value with respect to the conditional distributions of $X^t$. The resulting term equals the first part of the semi-offline RL version of Bellman's equation. 
This concludes the proof of Theorem \ref{theorem_double_robustness}. 
\end{proof}

\section{Estimation of Other Target Parameters from the Semi-offline RL View}
\label{app_other_variants}

In this appendix, we extend the target parameter to include time-dependent costs $C^t$ (s.t. $C = \overline{C}^1$). The newly defined target parameter becomes $J = \mathbb{E}\left[\sum_{t=1}^T C^t_{(\pi_\alpha)}\right]$.  In particular, these costs may include acquisition costs or misclassifications costs for predictions at each time-step. The acquisition costs are given by the known deterministic $f_{C_a}^t(A^t)$. When considering misclassification costs, we assume a per-step label $Y^t$ to be available at each time step (s.t. $Y = \overline{Y}^1$). The per-step misclassification costs can be computed by: 
\begin{align*}
f^t_{C_\textit{mc}}(Y^{*t}, Y^t) = 
f^t_{C_\textit{mc}}(f_\textit{cl}(\underline{A}^t,\underline{X}^t), Y^t).
\end{align*}
We combine both costs such that the target parameter is redefined to be: 
\begin{align*}
    J & = \mathbb{E}\left[
    \sum_{t=1}^T 
    \left( C^t_{a,(\pi_\alpha)} 
    + 
    C^t_{mc,(\pi_\alpha)}\right) \right]
    = 
    \mathbb{E}\left[\sum_{t=1}^T 
    \left(f_{C_a}^t(A^t_{(\pi_\alpha)})
    +
    f^t_{C_\textit{mc}}(\underline{A}^{t}_{(\pi_\alpha)}, \underline{X}^{t}_{(\pi_\alpha)}, Y^{t})
    \right) \right]
    \\ & 
    \equiv 
    \mathbb{E}
    \left[
    \sum_{t=1}^T f_C^t(\underline{A}^{t}_{(\pi_\alpha)}, \underline{X}^{t}_{(\pi_\alpha)}, Y^{t})
    \right]
    = 
    \mathbb{E}\left[\sum_{t=1}^T C^t_{(\pi_\alpha)}\right].
\end{align*}

\noindent 
The reformulation, identification and estimation steps from the semi-offline RL view can be extended to per-step costs. We provide corollaries of the identification and estimation theorems from the main body for this setting. We do not provide additional proofs, as the extensions are straightforward. 

\subsection{Identification}

We start with a corollary that extends Theorem \ref{theorem_identification_semi_offline_RL} for the per-step costs setting.

\begin{corollary}
\label{corollary_perstep_identification_semi_offline_RL} 
(Identification of $J$ (for per-step costs) for the semi-offline RL view).
The reformulated AFAPE problem of estimating $J$ (for per-step costs) under the semi-offline RL view  (Eq. \ref{eq:AFAPE_objective_semi_offline_RL}) is under Assumption \ref{assump:measurement_noise} (no measurement noise),  Assumption \ref{assump:consistency} (consistency),  Assumption \ref{assump:interference} (no interference), Assumption \ref{assump:nde} (NDE), Assumption \ref{assump:nuc} (NUC) and Assumption \ref{assump:positivity_global_semi_offline_RL} (global positivity)
identified by
\begin{align}
\label{eq_identificiation_semi_offline_RL_1_perstep}
    J  
     = 
    \mathbb{E}_{p'}\left[\sum_{t=1}^T C'_{(\pi_\alpha)}\right]
     = 
    \sum_{A',A,G_A(X_{(1)}),Y} 
    \sum_{t=1}^T f_C^t(            
\underline{A}'^{t},
\underline{X}'^{t},         
Y^{t}) 
q'(            
A',         
A, 
X, 
Y) 
\end{align}
\noindent
where $q'$ is given by Eq. \ref{eq_identificiation_semi_offline_RL_2}.
\end{corollary}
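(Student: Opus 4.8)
The plan is to reduce the per-step statement to the terminal-cost result of Theorem~\ref{theorem_identification_semi_offline_RL}, exploiting the fact that the identifying distribution $q'$ constructed there describes the full counterfactual trajectory and is therefore agnostic to the particular cost functional integrated against it. Since the appendix already redefines $J = \mathbb{E}[\sum_{t=1}^T C^t_{(\pi_\alpha)}]$ with each $C^t$ a deterministic function $f_C^t(\underline{A}^t, \underline{X}^t, Y^t)$ of the trajectory prefix, the whole argument is a term-by-term application of the existing machinery.

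First I would apply linearity of expectation to write $J = \sum_{t=1}^T \mathbb{E}_{p'}[C'^t_{(\pi_\alpha)}]$, where each per-step potential outcome $C'^t_{(\pi_\alpha)} = f_C^t(\underline{A}'^t_{(\pi_\alpha)}, \underline{X}'^t_{(\pi_\alpha)}, Y^t)$ depends only on the counterfactual trajectory up to time $t$. The reformulation of each summand under the semi-offline sampling distribution $p'$ follows exactly as in Theorem~\ref{theorem_problem_semi_offline_RL}, so that $J = \mathbb{E}_{p'}[\sum_{t=1}^T C'^t_{(\pi_\alpha)}]$ is legitimate.

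The central observation is that the proof of Theorem~\ref{theorem_identification_semi_offline_RL} factorizes the \emph{entire} counterfactual distribution $q'(A', A, X, Y)$ into observed quantities through its step-by-step consistency, exchangeability, and positivity arguments; the cost function $f_C$ enters only at the very last step, when one multiplies by the cost and sums over trajectories. Hence the same $q'$ identifies the joint law of $(\underline{A}'^T, \underline{X}'^T, Y)$, and \emph{a fortiori} the marginal law of every prefix $(\underline{A}'^t, \underline{X}'^t, Y^t)$. Consequently, for each $t$ I would invoke the identification formula of Theorem~\ref{theorem_identification_semi_offline_RL} with $f_C$ replaced by $f_C^t(\underline{A}'^t, \underline{X}'^t, Y^t)$, giving $\mathbb{E}_{p'}[C'^t_{(\pi_\alpha)}] = \sum_{A', A, G_A(X_{(1)}), Y} f_C^t(\underline{A}'^t, \underline{X}'^t, Y^t)\, q'(A', A, X, Y)$. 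Summing over $t$ and exchanging the two finite sums yields precisely Eq.~\ref{eq_identificiation_semi_offline_RL_1_perstep}, with $q'$ still given by Eq.~\ref{eq_identificiation_semi_offline_RL_2}.

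There is essentially no genuine obstacle here, which is exactly why the paper states no separate proof is required. The only point worth verifying is that each $f_C^t$ depends solely on a prefix of the trajectory, so that no positivity requirement beyond the global positivity already assumed for the full trajectory is introduced; since identification of $q'$ already subsumes identification of all trajectory prefixes, the argument closes without any new assumption. The same prefix-locality remark also makes it immediate that the Bellman recursion of Theorem~\ref{theorem_Bellman_equation} and the derived DM, IPW, and DRL estimators extend verbatim to the per-step setting.
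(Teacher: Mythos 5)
Your proof is correct and follows exactly the route the paper intends: the paper omits the proof, stating the extension is straightforward, and your argument — linearity of expectation plus the observation that $q'$ identifies the joint law of the full counterfactual trajectory and hence every prefix functional $f_C^t(\underline{A}'^{t},\underline{X}'^{t},Y^{t})$ integrated against it — is precisely that straightforward extension, with no new positivity requirement introduced. One small caution on your closing remark: the per-step Bellman recursion does not extend \emph{verbatim}, since the state-value function acquires an additional immediate-cost term $\sum_{Y^t} f_C^t(\underline{X}'^t,\underline{A}'^t,Y^t)\,p(Y^t\mid \underline{X}^t,\underline{A}^t,\Xi)$ in the paper's per-step version, though this does not affect the identification corollary you were asked to prove.
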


\noindent 

\noindent 
Next, we continue with a corollary that extends Theorem \ref{theorem_Bellman_equation} for the per-step costs setting.
\begin{corollary}
\label{corollary_perstep_Bellman_equation} 
(Bellman equation for semi-offline RL (for per-step costs)). 
The semi-offline RL view admits under 
Assumption \ref{assump:measurement_noise} (no measurement noise),  Assumption \ref{assump:consistency} (consistency),  Assumption \ref{assump:interference} (no interference), Assumption \ref{assump:nde} (NDE), Assumption \ref{assump:nuc} (NUC) and the local positivity assumption at datapoint $\underline{x}^{t-1},\underline{a}^{t-1},a'^t$ (from Definition \ref{def_local_positivity}), the following semi-offline RL version of the Bellman equation  for per-step costs:     
\begin{align}
\label{eq_bellman_1_perstep}
Q_{\textit{Semi}}( &
\underline{A}'^{t},         
\underline{X}^{t-1},
\underline{A}^{t-1}, 
\Xi) 
 = 
\sum_{\mathclap{X^{t}}}                
V_{\textit{Semi}}
(    
\underline{A}'^{t}, 
\underline{X}^{t},
\underline{A}^{t-1}, 
A^{t}=a^t, 
\Xi) 
p(
X^{t}
|                   
\underline{X}^{t-1},
\underline{A}^{t-1}, 
A^t = a^t, 
\Xi)
\\
& \text{ for any }
a^t 
\in \mathcal{A}^t_\textit{adm}
(          
\underline{X}^{t-1}, 
\underline{A}^{t-1},
A'^t )  
\nonumber 
\end{align}
\begin{align}
 V_{\textit{Semi}}
(
\underline{A}'^{t},
\underline{X}^{t},
\underline{A}^{t}, 
\Xi)  
&=
\sum_{Y^t} f_C^t(\underline{X}'^t, \underline{A}'^t, Y^t) 
p(Y^t|\underline{X}^t, \underline{A}^t, \Xi) 
+
\sum_{A'^{t+1}}  
Q_{\textit{Semi}}(
\underline{A}'^{t+1},    
\underline{X}^{t},
\underline{A}^{t}, 
\Xi
)
\pi_{\alpha}^{t+1}(
A'^{t+1}|
\underline{X}'^{t},
\underline{A}'^{t})
\label{eq_bellman_2_perstep}
\end{align}
with semi-offline RL versions of the state-action value function $Q_{\textit{Semi}}$ and state value function $V_{\textit{Semi}}$:
\begin{align*}
 Q_{\textit{Semi}}^t 
 & \equiv 
 Q_{\textit{Semi}}(
\underline{A}'^{t},
\underline{X}^{t-1},
\underline{A}^{t-1}, 
\Xi) 
\equiv  
\mathbb{E}_{p'}\left[
\sum_{\tau = t}^T
C'^\tau_{(\overline{\pi}^{t+1}_\alpha)}
\Big|
\underline{A}'^{t},
\underline{X}^{t-1},
\underline{A}^{t-1}, 
\Xi\right]  
\\
V_{\textit{Semi}}^t
& \equiv 
V_{\textit{Semi}}(
\underline{A}'^{t},
\underline{X}^{t},
\underline{A}^{t}, 
\Xi) 
\equiv 
\mathbb{E}_{p'}
\left[
\sum_{\tau = t}^T
C'^\tau_{(\overline{\pi}^{t+1}_\alpha)}
\Big|
\underline{A}'^{t},
\underline{X}^t,
\underline{A}^{t},
\Xi
\right].
\end{align*}

\noindent 
Furthermore, $Q_{\textit{Semi}}^t$ and $V_{\textit{Semi}}^t$ are identified if the regional positivity assumption holds at $
\underline{A}'^{t},
\underline{X}^{t-1},
\underline{A}^{t-1}$ and 
$a^t \in \mathcal{\tilde{A}}_\textit{adm}^t(
\underline{A}'^{t},
\underline{X}^{t-1},
\underline{A}^{t-1})$.
\end{corollary}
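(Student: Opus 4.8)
The plan is to reduce the claim to Theorem~\ref{theorem_Bellman_equation} by exploiting the linearity of the cumulative cost $\sum_{\tau=t}^{T} C'^{\tau}$ and the telescoping structure of the value functions. First I would observe that Eq.~\ref{eq_bellman_1_perstep} is literally the same statement as Eq.~\ref{eq_bellman_1}: passing from $Q_{\textit{Semi}}^{t}$ (which conditions on $\underline{X}^{t-1}$) to $V_{\textit{Semi}}^{t}$ (which conditions on $\underline{X}^{t}$) only requires integrating over the freshly revealed observation $X^{t}$ under the transition $p(X^{t}\mid \underline{X}^{t-1},\underline{A}^{t-1},A^{t}=a^{t},\Xi)$, and the summation range $\sum_{\tau=t}^{T}$ is identical on both sides. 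Hence the derivation of Eq.~\ref{eq_bellman_1_derivation} in the proof of Theorem~\ref{theorem_Bellman_equation} carries over verbatim, and local positivity (existence of $\mathcal{A}^{t}_\textit{adm}$) supplies the freedom to pick any admissible $a^{t}$.

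The work happens in Eq.~\ref{eq_bellman_2_perstep}, where the per-step cost enters. Here I would split off the $\tau=t$ term from the cumulative sum defining $V_{\textit{Semi}}^{t}$:
\begin{align*}
V_{\textit{Semi}}^{t}
&= \mathbb{E}_{p'}\!\left[ C'^{t} \,\middle|\, \underline{A}'^{t},\underline{X}^{t},\underline{A}^{t},\Xi \right]
+ \mathbb{E}_{p'}\!\left[ \textstyle\sum_{\tau=t+1}^{T} C'^{\tau}_{(\overline{\pi}^{t+1}_\alpha)} \,\middle|\, \underline{A}'^{t},\underline{X}^{t},\underline{A}^{t},\Xi \right].
\end{align*}
For the immediate term I would use four facts: $C'^{t}=f_C^{t}(\underline{A}'^{t},\underline{X}'^{t},Y^{t})$ is a known deterministic function; since $A'\le A$ and there is no measurement noise (Assumption~\ref{assump:measurement_noise}), $\underline{X}'^{t}=G_{\underline{A}'^{t}}(\underline{X}^{t})$ is fixed by the conditioning set; the step-$t$ cost is unaffected by interventions applied only from $t+1$ onwards, so the counterfactual superscript drops; and the simulated acquisitions $\underline{A}'^{t}$ carry no information about the label beyond the real data (a consequence of NUC and the blocking construction), so they may be dropped from the label distribution. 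Together these collapse the immediate term to $\sum_{Y^{t}} f_C^{t}(\underline{X}'^{t},\underline{A}'^{t},Y^{t})\,p(Y^{t}\mid \underline{X}^{t},\underline{A}^{t},\Xi)$, the new first summand in Eq.~\ref{eq_bellman_2_perstep}. The remaining future term is exactly the object treated in the part-2 derivation of Theorem~\ref{theorem_Bellman_equation} (Eq.~\ref{eq_bellman_2_derivation}): introducing the next simulated action $A'^{t+1}\sim\pi_\alpha^{t+1}$ and applying the same exchangeability and consistency steps yields $\sum_{A'^{t+1}} Q_{\textit{Semi}}^{t+1}\,\pi_\alpha^{t+1}(A'^{t+1}\mid \underline{X}'^{t},\underline{A}'^{t})$.

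For identification I would argue term by term. The future term inherits identifiability directly from Theorem~\ref{theorem_Bellman_equation} whenever the regional positivity assumption (Definition~\ref{def_regional_positivity}) holds at $\underline{X}^{t-1},\underline{A}^{t-1},\underline{A}'^{t}$ with $a^{t}\in\tilde{\mathcal{A}}^{t}_\textit{adm}$. The immediate term is identified with no extra positivity burden, since $f_C^{t}$ is known and $p(Y^{t}\mid \underline{X}^{t},\underline{A}^{t},\Xi)$ is a functional of the observed (simulated) data distribution, $Y$ being always available. Combining the two reconstructions gives both halves of the Bellman recursion together with their identification.

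The main obstacle I anticipate is bookkeeping rather than conceptual: one must verify carefully that the counterfactual cost $C'^{t}_{(\overline{\pi}^{t+1}_\alpha)}$ genuinely reduces to $C'^{t}$ (no intervention dependence) and that, after conditioning on $\underline{A}'^{t},\underline{X}^{t},\underline{A}^{t}$, the only remaining randomness in the step-$t$ cost is $Y^{t}$. Keeping the intervention superscripts consistent across the split — so that the $\tau=t$ term is intervention-free while the $\tau\ge t+1$ term still carries $\overline{\pi}^{t+1}_\alpha$ — is the one place where care is needed; everything else is a direct transcription of the terminal-cost proof.
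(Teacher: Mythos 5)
Your proposal is correct and matches the route the paper intends: the paper gives no explicit proof of this corollary, stating only that the per-step extensions of the main-body theorems are straightforward, and your reduction---carrying Eq.~\ref{eq_bellman_1_perstep} over verbatim from the part-1 derivation of Theorem~\ref{theorem_Bellman_equation}, then splitting the cumulative cost in $V_{\textit{Semi}}^{t}$ into the intervention-free immediate term $\sum_{Y^{t}} f_C^{t}(\underline{X}'^{t},\underline{A}'^{t},Y^{t})\,p(Y^{t}\mid\underline{X}^{t},\underline{A}^{t},\Xi)$ plus the future term handled exactly as in Eq.~\ref{eq_bellman_2_derivation}---is precisely that omitted argument. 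Your term-by-term identification discussion (regional positivity for the future term, none needed for the immediate term since $Y$ is always observed) is also consistent with the corollary's statement.
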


\subsection{Estimation}

The estimation formulas can be extended to the per-step setting as follows: 

\vspace{10pt}
\noindent 
\textit{1) Inverse probability weighting (IPW):} 

\nopagebreak

\noindent 
The target cost (for per-step costs) that is estimated by the semi-offline IPW estimator is
\begin{align*} 
    \hat{J}_{\textit{IPW-Semi}}
    & = 
    \hat{\mathbb{E}}_n\left[
    \hat{\mathbb{E}}_{n'}
    \left[ 
    \sum_{t=1}^T
    \hat{\rho}_{
    \textit{Semi}}^t
    \text{ } 
    C'^t
    \big| 
    A, X,Y
    \right]
    \right],
\end{align*} 
with the same options for $\rho_\textit{Semi}^t$ as in the setting described in the main body.

\vspace{10pt}
\noindent
\textit{2) Direct method (DM):} 

\noindent
The target cost (for per-step costs) that is estimated by the semi-offline DM estimator is 
\begin{align*} 
    \hat{J}_{\textit{DM-Semi}} = 
    \hat{\mathbb{E}}_{n'}[\hat{V}_{\textit{Semi}}^0]
\end{align*}

\noindent 
with the adapted per-step cost version of $V_{\textit{Semi}}$ from Corollary \ref{corollary_perstep_Bellman_equation}.

\vspace{10pt}
\noindent 
\textit{3) Double reinforcement learning (DRL):} 

\noindent
The target cost (for per-step costs) that is estimated by the semi-offline DRL estimator is 
\begin{align*}
    \hat{J}_{\textit{DRL-Semi}} = 
     \hat{\mathbb{E}}_{n}
     \left[
     \hat{\mathbb{E}}_{n'}
    \left[
     \sum_{t=1}^{T} 
      \left(
    \hat{\rho}_\textit{Semi}^t C'^t  
- \hat{\rho}_{\textit{Semi}}^{t}  
 \hat{Q}_\textit{Semi}^t
 +
 \hat{\rho}_{\textit{Semi}}^{t-1}  
 \hat{V}_\textit{Semi}^{t-1}
 \right) 
 \Big| 
 A,X,Y
 \right]
 \right].
\end{align*}

\noindent 
with the adapted per-step cost version of $V_{\textit{Semi}}$ and $Q_{\textit{Semi}}$.

\section{Derivation of the Missing Data Semiparametric Theory Approach}
\label{app:derivation_missing_data_semiparametrics}

In this appendix, we show why Eq.~\ref{eq:h_missing_data} and Eq.~\ref{eq:lambda_2_missing_data} constitute valid choices for an element of the IPW space $\Lambda_\textit{IPW}$ and the augmentation space $\Lambda_2$, but rely on the positivity assumption for missing data (Assumption \ref{assump:positivity_missing_data}). This is a simplified derivation from \cite{tsiatis_semiparametric_2006}.
\newline

\noindent 
\textbf{IPW space:}

Clearly, $h_\textit{Miss} = \rho_\textit{Miss} \mathbb{E}[C_{(\pi_\alpha)}|X_{(1)},Y] - J$ is a function of the observed data, since $\rho_\textit{Miss}$ is non-zero only for complete cases. It thus remains to be shown that 
\begin{flalign*}
    \mathbb{E}
    \left[
    h_\textit{Semi}(A, G_A(X_{(1)}),Y) 
    | 
    X_{(1)},Y
    \right] 
    = 
    \varphi^F(X_{(1)},Y).
\end{flalign*}
This is shown in the following: 
\begin{flalign*}
    \mathbb{E}  
    [
    h_\textit{Semi}  (A, G_A(X_{(1)}),Y)  
    | 
    X_{(1)},Y
    ] 
    &= 
    \mathbb{E}
    \left[
    \rho_\textit{Miss} \mathbb{E}[C_{(\pi_\alpha)}|X_{(1)},Y]
    \Big|X_{(1)},Y
    \right]- J
\\ & = 
     \mathbb{E}
    \left[
    \prod_{t=1}^T
    \frac{\mathbb{I}(A^t= \vec{1}) }{
    \pi_{\beta}^t(
    A^t=\vec{1}| 
    \underline{X}_{(1)}^{t-1},
    \underline{A}^{t-1}= \vec{1})} 
    \mathbb{E}[C_{(\pi_\alpha)}|X_{(1)},Y]
    \Big|X_{(1)},Y
    \right]- J
\\ & = 
     \mathbb{E}
    \left[
    \prod_{t=1}^T
    \frac{
    \mathbb{E}
    [ \mathbb{I}(A^t= \vec{1})
    |\underline{X}_{(1)}^{t-1},
    \underline{A}^{t-1}= \vec{1})]
    }{
    \pi_{\beta}^t(
    A^t=\vec{1}| 
    \underline{X}_{(1)}^{t-1},
    \underline{A}^{t-1}= \vec{1})} 
    \mathbb{E}[C_{(\pi_\alpha)}|X_{(1)},Y]
    \Big|X_{(1)},Y
    \right]- J
\\ & = 
    \mathbb{E}
    \left[
    1 \cdot 
    \mathbb{E}
    \left[
    C_{(\pi_\alpha)}
    \Big|
    X_{(1)},Y
    \right]     
    \Big|
    X_{(1)},Y
    \right] 
    - J
\\ & = 
    \mathbb{E}
    \left[
    C_{(\pi_\alpha)}
    \Big|
    X_{(1)},Y
    \right] 
    - J
    =\varphi^F(X_{(1)},Y).
\end{flalign*}

\noindent 
\textbf{Augmentation space $\Lambda_2$: }

To derive $\Lambda_2$ under the missing data view, one redefines any function $b(A, G_A(X_{(1)}), Y)$ using the fact that $A$ is a categorical variable:  
\begin{flalign}
\label{eq:factorization_b_missing_data}
b(A, G_A(X_{(1)}), Y) 
= 
\sum_{a \in \mathcal{A}} \mathbb{I}(A = a) b_a(G_a(X_{(1)}), Y)
\end{flalign}
where $b_a(G_a(X_{(1)}), Y)$ is any mean zero, finite variance function of $G_a(X_{(1)}), Y$. 
This allows the enforcement of the zero conditional mean condition that defines $\Lambda_2$:
\begin{flalign*}
\mathbb{E}[b(A, G_A(X_{(1)}), Y)|X_{(1)}, Y] = 
\sum_{a \in \mathcal{A}}
p(A=a|G_a(X_{(1)}),Y) b_a(G_a(X_{(1)}), Y) = 0.
\end{flalign*}
Under the missing data positivity assumption, one can now solve for $b_{\vec{1}}$:
\begin{flalign*}
b_{\vec{1}}(G_{\vec{1}}(X_{(1)}),Y) & = 
- \frac{1}
{p(A=\vec{1}|G_{\vec{1}}(X_{(1)}),Y)}
\sum_{a \in \mathcal{A} \backslash \vec{1}}
p(A=a|G_a(X_{(1)}),Y) b_a(G_a(X_{(1)}), Y)
\end{flalign*}

\noindent 
Substituting $b_{\vec{1}}(G_{\vec{1}}(X_{(1)}),Y)$ 
into Eq. \ref{eq:factorization_b_missing_data} and applying the known factorization of the propensity score model for our AFA setting gives the desired space $\Lambda_2$ consisting of all 
\begin{flalign*}
    b(A,  G_A&(X_{(1)}), Y) 
     =    
    \sum_{a \in \mathcal{A} \backslash \vec{1}}
    \left[
    \mathbb{I}(A=a)
    - 
    \frac{
    \mathbb{I}(A=\vec{1})
    p(A=a
    |
    G_{a}(X_{(1)}),Y) 
    }
    {
    p(A=\vec{1}
    |
    G_{\vec{1}}(X_{(1)}),Y
    )}
    \right] 
    b_a( 
    G_a(X_{(1)}), 
    Y) 
    \\
    & =    
    \sum_{a \in \mathcal{A} \backslash \vec{1}}
    \left[
    \mathbb{I}(A=a)
    - 
    \prod_{t=1}^T
    \frac{
    \mathbb{I}(A^t=\vec{1})
    \pi_\beta^t(A^t=a^t
    |
    G_{\underline{a}^{t-1}}(X_{(1)}),\underline{a}^{t-1} ) 
    }
    {
    \pi_\beta^t(A^t=\vec{1}
    |
    G_{\underline{a}^{t-1}}(X_{(1)}),\underline{a}^{t-1}
    )}
    \right] 
    b_a( 
    G_a(X_{(1)}), 
    Y). 
\end{flalign*}

\section{Proof of Lemma \ref{lemma_h_offline_RL} and Lemma \ref{lemma_liu_lambda_2}}
\label{app:derivation_offline_RL_semiparametrics}

In this appendix, we prove Lemma \ref{lemma_h_offline_RL} and Lemma \ref{lemma_liu_lambda_2} which establish the equivalence of the missing data and offline RL semiparametric theory approaches to AFAPE under the NUC and NDE assumptions. 
We start with Lemma \ref{lemma_h_offline_RL}. 

For ease of reference we repeat it here:
\vspace{5pt}
\begin{lma}{
\ref{lemma_h_offline_RL}}
    (Relating the offline RL IPW estimator to  the IPW space).
The functional $h_{\textit{Off}} \equiv \rho_{\textit{Off}} C-J$, based on the IPW estimator from the offline RL view, is a valid element of the IPW space: $h_{\textit{Off}} \in \Lambda_\textit{IPW}$.
\end{lma}
\vspace{5pt}

\vspace{5pt}
\begin{proof}
    We need to show that $h_\textit{Off} = h_\textit{Off}(A, G_A(X_{(1)}),Y) = \rho_{\textit{Off}}^T C - J \in \Lambda_\textit{IPW}$.
    Clearly, $h_\textit{Off}$ is a function of the observed data. 
    It thus remains to be shown that 
    \begin{flalign*}
    \mathbb{E}
    \left[
    h_\textit{Off}(A, G_A(X_{(1)}),Y) 
    | 
    X_{(1)},Y
    \right] 
    = 
    \varphi^F(X_{(1)},Y).
    \end{flalign*}
This is shown in the following:
\begin{flalign*}
    \mathbb{E} & \left[
    h_\textit{Off}(A,G_A(X_{(1)}), Y)
    | X_{(1)}, Y  \right] 
    =
    \mathbb{E}\left[ 
    \rho_{\textit{Off}}^T 
    C - J
    \Big| 
    X_{(1)}, Y  \right]  
    \\
    & 
    = 
    \mathbb{E}\left[ 
    \prod_{t=1}^T 
    \frac{\pi_\alpha^{t}(A^{t}|
    G_{\underline{A}^{{t-1}}}(X_{(1)}),
    \underline{A}^{{t-1}})}
    {
    \pi_\beta^{t}(A^{t}|
    G_{\underline{A}^ {{t-1}}}(X_{(1)}),
    \underline{A}^ {{t-1}})} 
    f_C(Y, G_{A}(X_{(1)}), A)
    \Big| 
    X_{(1)}, Y  \right] - J
    \\ 
    &=  
    \sum_{a \in \mathcal{A}}
    \prod_{t=1}^T
    \frac{
    \pi_\alpha^{t}(a^{t}|G_{\underline{a}^ {{t-1}}}(X_{(1)}),\underline{a}^ {{t-1}})}
    {
    \pi_\beta^{t}(a^{t}|G_{\underline{a}^{{t-1}}}(X_{(1)}), \underline{a}^{{t-1}})
    }
    f_C(Y, G_{a}(X_{(1)}), a)
    \pi_\beta^{t}(a^{t}|G_{\underline{a}^{{t-1}}}(X_{(1)}), \underline{a}^{{t-1}})
    - J
    \\ 
    &=  
    \sum_{a \in \mathcal{A}} 
    \prod_{t = 1}^T 
    \pi_\alpha^{t}(a^{t}|G_{\underline{a}^ {{t-1}}}(X_{(1)}),\underline{a}^ {{t-1}})
    f_C(Y, G_{a}(X_{(1)}), a) 
      - J 
    = \varphi^F(X_{(1)},Y)
\end{flalign*}
which completes the proof.
\end{proof}
\vspace{5pt}

\noindent 
Next, we prove Lemma \ref{lemma_liu_lambda_2}. We also repeat it here: 

\vspace{5pt}
\begin{lma}{
\ref{lemma_liu_lambda_2}}
($\Lambda_*$ is equal to the augmentation space).
 The augmentation space $\Lambda_{2}$ is 
    equal to $\Lambda_*$. 
\end{lma}
\vspace{5pt}

\noindent 
We reuse properties about $\Lambda_*$ shown by Liu et al. \cite{liu_efficient_2021} and do not repeat the corresponding proofs for these properties as they involve cumbersome notation. 

\vspace{5pt}
\begin{proof}
To demonstrate that $\Lambda_* = \Lambda_2$, we first introduce a new space, denoted as $\Lambda_\textit{*}^{AF}$,  containing all functions of $b(A, X_{(1)},Y)$ for which $\mathbb{E}[b(A, X_{(1)},Y)|X_{(1)},Y] = 0$ holds:
\begin{flalign*}
\label{eq:efficient_if_liu}
 \Lambda_\textit{*}^{AF} 
 \equiv & 
 \Bigg\{
 b^{t}( 
 \underline{A}^{t-1}, 
 X_{(1)},  
 Y) 
 \left(
 \frac{A^t}{\pi^t_\beta}-1
 \right) 
 :
 b^{t}
 (  \underline{A}^{t-1}, X_{(1)},  Y) 
 \in 
 \mathcal{H}; t \in \{ 1, ..., T\} 
 \Bigg\}.
\end{flalign*}
\noindent 
The space $\Lambda_\textit{*}^{AF}$ indeed includes all functions $b(A, X_{(1)}, Y)$ with mean zero and finite variance for which $\mathbb{E}[b(A, X_{(1)},Y)|X_{(1)},Y] = 0$ holds. Specifically, the elements $b^{t}(\underline{A}^{t-1}, X_{(1)}, Y)$ can represent any function of $\underline{A}^{t-1}, X_{(1)}, Y$. 
Since $A^t$ is binary (taking values 0 or 1), adding a term $\left(\frac{A^t}{c(\underline{A}^{t-1}, X_{(1)},Y)} - 1 \right)$ (for some $c(A, X_{(1)},Y)$) generalizes the space to contain all functions of $A^t, X_{(1)}, Y$. The specific choice $c(\underline{A}^{t-1}, X_{(1)},Y) = \pi^t_\beta$ is enforced by the condition $\mathbb{E}[b(A, X_{(1)},Y)|X_{(1)},Y] = 0$:
\begin{flalign*}
    \mathbb{E}\biggl[b^t(\underline{A}^{t-1}, X_{(1)},Y) \left(\frac{A^t}{\pi^t_\beta} - 1 \right) |X_{(1)},Y\biggl] 
    & 
    = 
    \mathbb{E}\left[b^t(\underline{A}^{t-1}, X_{(1)},Y)\mathbb{E}\left[\left(\frac{A^t}{\pi^t_\beta} - 1 \right) |X_{(1)},Y, \underline{A}^{t-1}
    \right]|X_{(1)},Y\right]
    \\ & = 
    \mathbb{E}\left[b^t(\underline{A}^{t-1}, X_{(1)},Y) \underbrace{\left(\frac{\pi^t_\beta}{\pi^t_\beta} - 1 \right)}_{0} |X_{(1)},Y\right]
    = 0.
\end{flalign*} 

\noindent 
Now, in order to find $\Lambda_2$, we must find the subspace of $\Lambda_\textit{*}^{AF}$, that contains only the functions of the observed data. 
This is exactly $\Lambda_*$ as was shown by Liu et al. \cite{liu_efficient_2021} (in the proof of Remark 13). In fact, they showed that there aren't observed data elements that are in  $\Lambda_\textit{*}^{AF}$, but not in  $\Lambda_*$. They also showed that $\Lambda_*$ only contains observed data elements.
This concludes the proof. 
\end{proof}

\section{Proof of Lemma \ref{lemma_h_semioffline_RL}}
\label{app:derivation_semioffline_RL_semiparametrics}

In this appendix, we prove Lemma \ref{lemma_h_semioffline_RL}. The proof follows a similar approach as for Lemma \ref{lemma_h_offline_RL} for the offline RL IPW estimator. 
We repeat the lemma here for ease of reference. 

\vspace{5pt}
\begin{lma}{
\ref{lemma_h_semioffline_RL}}
    $h_\textit{Semi} \equiv h_\textit{Semi}(A, G_A({X_{(1)}}),Y) = 
    \hat{\mathbb{E}}_{n'} [\rho_{\textit{Semi}}^T C'| A,G_A({X_{(1)}}),Y]  - J$ is an element of the IPW space $\Lambda_\textit{IPW}$. 
\end{lma}
\vspace{5pt}

\vspace{5pt}
\begin{proof}
    We need to show that $h_\textit{Semi} = h_{\textit{Semi}}(A, G_A(X_{(1)}),Y) = 
    \hat{\mathbb{E}}_{n'} [\rho_{\textit{Semi}}^T C'| A,G_A({X_{(1)}}),Y] - J 
    \in \Lambda_\textit{IPW}  
     $. 
    Clearly, $h_\textit{Semi}$ is a function of the observed data. 
    It thus remains to be shown that 
    \begin{flalign*}
    \mathbb{E}
    \left[
    h_\textit{Semi}(A, G_A(X_{(1)}),Y) 
    | 
    X_{(1)},Y
    \right] 
    = 
    \varphi^F(X_{(1)},Y).
    \end{flalign*}
This is shown in the following:
\begin{flalign*}
    \mathbb{E} & 
    \left[
    h_\textit{Semi}(A,G_A(X_{(1)}), Y)
    \Big| X_{(1)}, Y  \right] 
    =
    \mathbb{E}\left[
    \mathbb{E} 
    \left[
    \rho_{\textit{Semi}}^T 
    f_C(A', G_{A'}(X_{(1)}), Y)
    | A,G_A({X_{(1)}}),Y
    \right]
    - J
    \Big| 
    X_{(1)}, Y  \right]  
    \\
    & 
    = 
    \mathbb{E}\left[ 
    \prod_{t=1}^T 
    \frac{
    \pi_\alpha^t(A'^{t}| \underline{X}'^{t-1}, \underline{A}'^{t-1}) 
    }
    {
    \pi'^t_{\textit{sim}}(
    A'^{t}| 
    \underline{X}'^{t-1}, \underline{A}'^{t-1},
    A^t
    )
    }
    \frac{
    \pi_{id}^t(
    A^{t } | 
    \underline{A}'^{t},
    \underline{X}^{t-1},
    \underline{A}^{t-1})
    }
    {
    \pi_\beta^t(
    A^t| 
    \underline{X}^{t-1},
    \underline{A}^{t-1}
    )
    }
    f_C(A', G_{A'}(X_{(1)}), Y)
    \Big| 
    X_{(1)}, Y  \right] - J
    \\
    & 
    = 
    \sum_{a \in \mathcal{A}}
    \sum_{a' \in \mathcal{A}}
    \prod_{t=1}^T 
    \frac{
    \pi_\alpha^t(a'^{t}| \underline{X}'^{t-1}, \underline{a}'^{t-1}) 
    }
    {
    \pi'^t_{\textit{sim}}(
    a'^{t}| 
    \underline{X}'^{t-1}, \underline{a}'^{t-1},
    a^t
    )
    }
    \frac{
    \pi_{id}^t(
    a^{t } | 
    \underline{a}'^{t},
    \underline{X}^{t-1},
    \underline{a}^{t-1})
    }
    {
    \pi_\beta^t(
    a^t| 
    \underline{X}^{t-1},
    \underline{a}^{t-1}
    )
    }
    f_C(a', G_{a'}(X_{(1)}),Y)
    \\
    & \cdot 
    \pi'^t_{\textit{sim}}(
    a'^{t}| 
    \underline{X}'^{t-1}, \underline{a}'^{t-1},
    a^t
    )
    \pi_\beta^t(
    a^t| 
    \underline{X}^{t-1},
    \underline{a}^{t-1}
    ) 
     - J
         \\
    & 
    = 
    \sum_{a \in \mathcal{A}}
    \sum_{a' \in \mathcal{A}}
    \prod_{t=1}^T 
    \pi_\alpha^t(a'^{t}| \underline{X}'^{t-1}, \underline{a}'^{t-1}) 
    \pi_{id}^t(
    a^{t } | 
    \underline{a}'^{t},
    \underline{X}^{t-1},
    \underline{a}^{t-1})
    f_C(a', G_{a'}(X_{(1)}),Y)
     - J
              \\
    & 
    = 
    \sum_{a' \in \mathcal{A}}
    \prod_{t=1}^T 
    \pi_\alpha^t(a'^{t}| \underline{X}'^{t-1}, \underline{a}'^{t-1}) 
    f_C(a', G_{a'}(X_{(1)}),Y)
     - J
    \\ 
    &=  
    \varphi^F(X_{(1)},Y)
\end{flalign*}
which completes the proof.
\end{proof}
\vspace{5pt}

\section{Derivation of the Influence Function for Semi-offline RL}
\label{app:semioffline_RL_projection}

In this Appendix, we provide the complete derivation of the projection of $h_\textit{Semi}$ onto $\Lambda_{2,\textit{Semi}}(\Xi)$, thereby completing the proof for the class of influence functions under the semi-offline RL view, as proposed in Theorem \ref{theorem_efficient_if}.

We now show all in-between steps of the following equalities shown in Section \ref{sec_semiparametrics}: 
\begin{flalign*}
    \varphi_{\textit{Semi}}(A, G_A{(X_{(1)}}),Y; \Xi)
   &  = 
    h_\textit{Semi}
    -  \Pi ( h_\textit{Semi} | \Lambda_{2,\textit{Semi}}(\Xi) )
\\ 
& 
= 
    h_\textit{Semi}  
    - 
    \sum_{t=1}^T
    \mathbb{E}
    \left[ 
    h_\textit{Semi}
    \Big| 
    A^t,  
    \underline{A}^{t-1}, 
    G_{\underline{A}^{t-1}}(X_{(1)}), 
    \Xi
    \right]
    + 
    \sum_{t=1}^T
    \mathbb{E}
    \left[ 
    h_\textit{Semi}
    \Big|    
    \underline{A}^{t-1}, 
    G_{\underline{A}^{t-1}}(X_{(1)}), 
    \Xi
    \right]
\\ 
& 
 \overset{*_1}{=} 
    \mathbb{E}
    \left[ 
    \rho_{\textit{Semi}}^T f_C(A', G_{A'}(X_{(1)}), Y)| Y, G_{A}(X_{(1)}),A \right]
\\ & -
    \sum_{t=1}^T
    \mathbb{E}
    \left[ 
    \rho_{\textit{Semi}}^t Q_\textit{Semi}^t 
    \Big|   
    A^t, 
    \underline{A}^{t-1}, 
    G_{\underline{A}^{t-1}}(X_{(1)}), 
    \Xi
    \right]
\\ & +
    \sum_{t=1}^T
    \mathbb{E}
    \left[ 
    \rho_{\textit{Semi}}^{t-1} V_\textit{Semi}^{t-1}  
    \Big|   
    \underline{A}^{t-1}, 
    G_{\underline{A}^{t-1}}(X_{(1)}), 
    \Xi
    \right]
  - J
\\ 
& 
=  
    \mathbb{E}
    \left[ 
     \rho_{\textit{Semi}}^T f_C(A', G_{A'}(X_{(1)}), Y)
     - 
     \sum_{t=1}^T
     \rho_{\textit{Semi}}^t Q_\textit{Semi}^t 
     +
     \sum_{t=1}^T
     \rho_{\textit{Semi}}^{t-1} V_\textit{Semi}^{t-1} 
     \Big|   
    A, 
    G_{A}(X_{(1)}), 
    Y
    \right]
  - J.
\end{flalign*}

\noindent 
We now go into more detail why $*1)$ holds. 
We begin with the term including $Q_\textit{Semi}$: 
\begin{flalign*}
    \mathbb{E}
    \bigg[ 
    h_\textit{Semi}
    \Big|   &
    A^t, 
    \underline{A}^{t-1}, 
    G_{\underline{A}^{t-1}}(X_{(1)}), 
    \Xi
    \bigg] 
    =
\\
& 
=  
    \mathbb{E} 
    \left[ 
    \mathbb{E}
    \left[ 
    \rho_{\textit{Semi}}^T f_C(A', G_{A'}(X_{(1)}), Y)| Y, G_{A}(X_{(1)}),A 
    \right]
    |
    A^t,  
    \underline{A}^{t-1}, 
    G_{\underline{A}^{t-1}}(X_{(1)}), 
    \Xi
    \right] - J 
\\
& 
= 
    \mathbb{E}
    \left[ 
    \rho_{\textit{Semi}}^T f_C(A', G_{A'}(X_{(1)}), Y)
    |
    A^t,  
    \underline{A}^{t-1}, 
    G_{\underline{A}^{t-1}}(X_{(1)}), 
    \Xi
    \right]- J 
\\ & = 
    \mathbb{E}
    \biggl[ 
    \prod_{\tau=1}^T 
    \frac{
    \pi_\alpha^\tau(A'^{\tau}| \underline{X}'^{\tau-1}, \underline{A}'^{\tau-1}) 
    }
    {
    \pi'^\tau_{\textit{sim}}(
    A'^{\tau}| 
    \underline{X}'^{\tau-1}, \underline{A}'^{\tau-1},
    A^\tau
    )
    }
    \frac{
    \pi_{id}^\tau(
    A^{\tau } | 
    \underline{A}'^{\tau},
    \underline{X}^{\tau-1},
    \underline{A}^{\tau-1})
    }
    {
    \pi_\beta^\tau(
    A^\tau| 
    \underline{X}^{\tau-1},
    \underline{A}^{\tau-1}
    )
    }
\\ 
& 
\quad 
\cdot 
    f_C(A', G_{A'}(X_{(1)}), Y)
    \Big|
    A^t,  
    \underline{A}^{t-1}, 
    G_{\underline{A}^{t-1}}(X_{(1)}), 
    \Xi
    \biggl] - J 
\\ 
& 
= 
    \mathbb{E}
    \Big[ 
    \prod_{\tau = 1}^t
    \frac{
    \pi_\alpha^\tau(A'^{1}| \underline{X}'^{\tau-1}, \underline{A}'^{\tau-1}) 
    }
    {
    \pi'^\tau_{\textit{sim}}(
    A'^{\tau}| 
    \underline{X}'^{\tau-1}, \underline{A}'^{\tau-1},
    A^\tau
    )
    }
    \frac{
    \pi_{id}^\tau(
    A^{\tau } | 
    \underline{A}'^{\tau},
    \underline{X}^{\tau-1},
    \underline{A}^{\tau-1})
    }
    {
    \pi_\beta^\tau(
    A^\tau| 
    \underline{X}^{\tau-1},
    \underline{A}^{\tau-1}
    )
    }
\\ 
&
\cdot
    \mathbb{E}
    \Big[ 
    \prod_{\tau=t+1}^T 
    \frac{
    \pi_\alpha^\tau(A'^{\tau}| \underline{X}'^{\tau-1}, \underline{A}'^{\tau-1}) 
    }
    {
    \pi'^\tau_{\textit{sim}}(
    A'^{\tau}| 
    \underline{X}'^{\tau-1}, \underline{A}'^{\tau-1},
    A^\tau
    )
    }
    \frac{
    \pi_{id}^\tau(
    A^{\tau } | 
    \underline{A}'^{\tau},
    \underline{X}^{\tau-1},
    \underline{A}^{\tau-1})
    }
    {
    \pi_\beta^\tau(
    A^\tau| 
    \underline{X}^{\tau-1},
    \underline{A}^{\tau-1}
    )
    }
\\ 
&
\cdot
    f_C(A', G_{A'}(X_{(1)}), Y)
    \Big|
    A'^t,
    \underline{A}'^{t-1},
    A^t,  
    \underline{A}^{t-1}, 
    G_{\underline{A}^{t-1}}(X_{(1)}), 
    \Xi
     \Big]
    \Big|
    A^t,  
    \underline{A}^{t-1}, 
    G_{\underline{A}^{t-1}}(X_{(1)}), 
    \Xi
    \Big] - J 
\\ 
& 
= 
    \mathbb{E}
    \Big[ 
    \rho^t_\textit{Semi}
    Q_\textit{Semi}(\underline{A}'^{t}, \underline{A}^{t-1}, G_{\underline{A}^{t-1}}(X_{(1)}), \Xi)
    \Big|
    A^t,  
    \underline{A}^{t-1}, 
    G_{\underline{A}^{t-1}}(X_{(1)}), 
    \Xi
    \Big]- J 
\\ 
& 
= 
    \mathbb{E}
    \Big[ 
    \rho^t_\textit{Semi}
    Q_\textit{Semi}^t
    \Big|
    A, 
    G_{A}(X_{(1)}), 
    Y
    \Big] - J 
\end{flalign*}

\noindent 
Similarly, we can show for the term including $V_\textit{Semi}$:
\begin{flalign*}
    \mathbb{E}
    \bigg[ 
    h_\textit{Semi}
    \Big|    &
    \underline{A}^{t-1}, 
    G_{\underline{A}^{t-1}}(X_{(1)}), 
    \Xi
    \bigg] = 
\\
& 
=  
    \mathbb{E} 
    \left[ 
    \mathbb{E}
    \left[ 
    \rho_{\textit{Semi}}^T f_C(A', G_{A'}(X_{(1)}), Y)| Y, G_{A}(X_{(1)}),A 
    \right]
    |
    \underline{A}^{t-1}, 
    G_{\underline{A}^{t-1}}(X_{(1)}), 
    \Xi
    \right]- J 
\\
& 
= 
    \mathbb{E}
    \left[ 
    \rho_{\textit{Semi}}^T f_C( A', G_{A'}(X_{(1)}), Y)
    |
    \underline{A}^{t-1}, 
    G_{\underline{A}^{t-1}}(X_{(1)}), 
    \Xi
    \right]- J 
\\ & = 
    \mathbb{E}
    \biggl[ 
    \prod_{\tau=1}^T 
    \frac{
    \pi_\alpha^\tau(A'^{\tau}| \underline{X}'^{\tau-1}, \underline{A}'^{\tau-1}) 
    }
    {
    \pi'^\tau_{\textit{sim}}(
    A'^{\tau}| 
    \underline{X}'^{\tau-1}, \underline{A}'^{\tau-1},
    A^\tau
    )
    }
    \frac{
    \pi_{id}^\tau(
    A^{\tau } | 
    \underline{A}'^{\tau},
    \underline{X}^{\tau-1},
    \underline{A}^{\tau-1})
    }
    {
    \pi_\beta^\tau(
    A^\tau| 
    \underline{X}^{\tau-1},
    \underline{A}^{\tau-1}
    )
    }
\\ 
& 
\quad 
\cdot 
    f_C(A', G_{A'}(X_{(1)}), Y)
    \Big|
    \underline{A}^{t-1}, 
    G_{\underline{A}^{t-1}}(X_{(1)}), 
    \Xi
    \biggl]- J 
\\ 
& 
= 
    \mathbb{E}
    \Big[ 
    \prod_{\tau = 1}^{t-1}
    \frac{
    \pi_\alpha^\tau(A'^{1}| \underline{X}'^{\tau-1}, \underline{A}'^{\tau-1}) 
    }
    {
    \pi'^\tau_{\textit{sim}}(
    A'^{\tau}| 
    \underline{X}'^{\tau-1}, \underline{A}'^{\tau-1},
    A^\tau
    )
    }
    \frac{
    \pi_{id}^\tau(
    A^{\tau } | 
    \underline{A}'^{\tau},
    \underline{X}^{\tau-1},
    \underline{A}^{\tau-1})
    }
    {
    \pi_\beta^\tau(
    A^\tau| 
    \underline{X}^{\tau-1},
    \underline{A}^{\tau-1}
    )
    }
\\ 
&
\cdot
    \mathbb{E}
    \Big[ 
    \prod_{\tau=t}^T 
    \frac{
    \pi_\alpha^\tau(A'^{\tau}| \underline{X}'^{\tau-1}, \underline{A}'^{\tau-1}) 
    }
    {
    \pi'^\tau_{\textit{sim}}(
    A'^{\tau}| 
    \underline{X}'^{\tau-1}, \underline{A}'^{\tau-1},
    A^\tau
    )
    }
    \frac{
    \pi_{id}^\tau(
    A^{\tau } | 
    \underline{A}'^{\tau},
    \underline{X}^{\tau-1},
    \underline{A}^{\tau-1})
    }
    {
    \pi_\beta^\tau(
    A^\tau| 
    \underline{X}^{\tau-1},
    \underline{A}^{\tau-1}
    )
    }
\\ 
&
\cdot
    f_C(A', G_{A'}(X_{(1)}), Y)
    \Big|
    \underline{A}'^{t-1},
    \underline{A}^{t-1}, 
    G_{\underline{A}^{t-1}}(X_{(1)}), 
    \Xi
     \Big]
    \Big|
    \underline{A}^{t-1}, 
    G_{\underline{A}^{t-1}}(X_{(1)}), 
    \Xi
    \Big] - J 
\\ 
& 
= 
    \mathbb{E}
    \Big[ 
    \rho^{t-1}_\textit{Semi}
    V_\textit{Semi}(\underline{A}'^{t-1}, \underline{A}^{t-1}, G_{\underline{A}^{t-1}}(X_{(1)}), \Xi)
    \Big| 
    \underline{A}^{t-1}, 
    G_{\underline{A}^{t-1}}(X_{(1)}), 
    \Xi
    \Big]- J 
\\ 
& 
= 
    \mathbb{E}
    \Big[ 
    \rho^{t-1}_\textit{Semi}
    V_\textit{Semi}^{t-1}
    \Big|
    A, 
    G_{A}(X_{(1)}), 
    Y
    \Big]- J .
\end{flalign*}

\section{Experiment Details}
\label{Appendix_experiments}

In this section, we describe the experiment setup in more detail. 
We also provide a detailed list of the parameters and configurations for each experiment in Tables \ref{tab:details synthetic} and \ref{tab:details synthetic missingness}.

\subsection{Data, Costs and Missingness Mechanisms}
For the experiments, we defined a "superfeature" as a feature that comprises multiple subfeatures, which are acquired jointly and which have a single cost. 
Furthermore, we assumed a subset of features is available at no cost (free features) and set fixed acquisition costs $c_{acq}$ 
for the remaining features. A prediction was to be performed at each time-step, which corresponds to the setting described in Appendix \ref{app_other_variants}. We chose misclassification costs such that good policies must find a balance between feature acquisition cost and predictive value of the features.

We evaluated and compared the described methods on synthetic datasets with and without violation of either the NDE or NUC assumption. 
In experiments where the NDE assumption holds, the features are distributed according to: 
\begin{align*}
    X_{(1),i}^t = 
    \begin{cases}
    \gamma_{i} X_{(1),i}^{t-1} + (1- 
    \gamma_{i}) \epsilon_i
    , & \text{if } t>0 \\
     \epsilon_i , & \text{if } t=0 .
\end{cases}
\end{align*}
where 
$\epsilon_i \sim \mathcal{N}(0, \sigma)$.
In experiments with a violation of the NDE assumption, the unobserved variables $U$ were distributed according to: 
\begin{align*}
    U_{i}^t = 
    \begin{cases}
    \gamma_{i} U_i^{t-1} + (1- 
    \gamma_{i}) \epsilon_i  + 0.5 \sum_i A_i^{t-1} 
     & \text{if } t>1 \\
     \gamma_{i} U_i^{t-1} + (1- 
    \gamma_{i}) \epsilon_i  , & \text{if } t=1 \\
     \epsilon_i , & \text{if } t=0 .
\end{cases}
\end{align*}

\noindent
The labels are distributed according to 
\begin{align*}
    p(Y^t=1) = 
    \begin{cases}
    1
    , & \text{if } \zeta_{1} \sum_i W_i X_{(1),i}^t +  \zeta_{2} \sum_i  W_i X_{(1),i}^{t-1} > 0  \\
   0.3 , & \text{otherwise}.
\end{cases}
\end{align*}
This choice for $Y$ simulates a scenario where not all data points are equally easy to classify. 

The retrospective policy $\pi_\beta$ follows different logistic models depending on whether a MAR assumption (NUC holds) or MNAR assumption (NUC is violated) is assumed, as specified in Table \ref{tab:details synthetic}. 
To evaluate the convergence of different estimators when the NDE assumption holds, we consider the average cost of running the AFA agent on the dataset over all data points in the ground truth test set (without missingness) as the true expected cost $J$. 
When NDE is violated, we sample the ground truth data generating process while running the agent and do so the same number of times as there are data points in the test set. 

We performed 5 different experiments: 
\begin{itemize}
    \item \textbf{Experiment 1:} 
    Standard experiment where NUC, NDE and all three positivity assumptions (Assumptions \ref{assump:positivity_offline_RL}, 
    \ref{assump:positivity_missing_data}, and \ref{assump:max_positivity_global_semi_offline_RL}) hold. 
    \item \textbf{Experiment 2:} 
    The NUC and NDE assumptions hold, but the missing data positivity assumption (Assumption \ref{assump:positivity_missing_data}) is violated. This is achieved by reducing the number of complete cases to $0.007\%$. 
    \item \textbf{Experiment 3:} 
    The NUC and NDE assumptions hold, but the offline RL (Assumption \ref{assump:positivity_offline_RL}) positivity assumption is violated. This is achieved by letting $A_2$ be always 1 under $\pi_\beta.$
    \item \textbf{Experiment 4:} 
    The NDE assumption holds, but the NUC assumption is violated. This corresponds to an MNAR missing data scenario.
    \item \textbf{Experiment 5:} 
    The NUC assumption holds, but the NDE assumption is violated. 

\end{itemize}

\noindent 
For full experiment configurations for the acquisition processes, please see Table     \ref{tab:details synthetic missingness}.

\subsection{Training}
We used an impute-then-regress classifier \cite{le_morvan_whats_2021} with unconditional mean imputation and a logistic regression classifier for the classification task and trained it on the available and further randomly subsampled data (where $p(A_i^t = 1) =  0.5$). We tested random and fixed acquisition policies that acquire each costly feature with a 50\% or 100\% probability. Furthermore, we evaluated a proximal policy optimization (PPO) RL agent \cite{schulman_proximal_2017} which was trained on the semi-offline sampling distribution $p'$ using $\pi_{\alpha}$ as the semi-offline sampling policy, but without adjustment for the blocking of actions. 
The datasets were split into training set (for the training of the agent and the classifier), nuisance function training set, and test set, where the estimators were evaluated.  
The splitting of the dataset in a nuisance function training set and a test set is necessary due to the complexity of the used nuisance model functions classes \cite{kennedy_semiparametric_2022}. The resulting loss of efficiency may, however, be avoided using a cross-fitting approach \cite{kennedy_semiparametric_2022}.

\begingroup
\setlength{\tabcolsep}{10pt} 
\renewcommand{\arraystretch}{2} 
\begin{center}
\begin{table}
\begin{tabular}{| p{0.25\textwidth} | p{0.75\textwidth} |}
    \hline
    \multicolumn{2}{| c |}{\textbf{Data and environment}} \\
    \hline\hline
    Sample size $n_{D}$ & $100'000$ divided into $30\%$ training set (for agent and classifier), $30\%$ nuisance function training set, and $40\%$ test set. \\
    \hline
    Superfeatures & super$X_0$: $[X_0]$, super$X_1$: $[X_1]$, super$X_2$: $[X_2, X_3]$\\
    \hline
    Label & $Y^t \in \{0,1\}$ and for $t \leq T= 3$. 
    \\
    \hline
    Data generation parameters & 
    $\gamma_i = 0.2$ $\forall i$, $\sigma = 1$, $\zeta_1 = 1$, $\zeta_2 = 0.3$, $W = [1,1,2,2]/6$ \\
    \hline
    Feature acquisition cost & $c_{acq}=[0 , 1,  1]$ \\
    \hline
    Misclassification cost & $c_{mc} = 12$ \\
    \hline\hline
    \multicolumn{2}{| c |}{\textbf{Models}} \\
    \hline\hline
    Classifier & Logistic regression \\
    \hline
    Agents & \makecell*[l]{Random $50\%$, Fixed $100\%$, \\
   PPO (learning rate: 0.0001, number of layers: 2, \\
   \quad\quad\quad hidden layer neurons per layer: 64, \\
   \quad\quad\quad
   hidden layer activation function: tanh) } \\
    \hline
    Nuisance functions & \makecell*[l]{$\hat{\pi}_\beta$ (logistic regression),   \\
    $\hat{Q}_{Semi}$ ($\Xi = \emptyset$, learning rate: 0.001, number of layers: 2, 
    \\ \quad\quad\quad
    hidden layer neurons per layer: 16, \\ 
    \quad\quad\quad hidden layer activation function: ReLU) } \\
    \hline
\end{tabular}
\vspace{5pt}
    \caption{Full experiment details except for the acquisition process}
    \label{tab:details synthetic}
\end{table}
\end{center}

\begingroup
\setlength{\tabcolsep}{10pt} 
\renewcommand{\arraystretch}{2} 
\begin{center}
\begin{table}
\begin{tabular}{| p{0.25\textwidth} | p{0.75\textwidth} |}
   \hline
    \multicolumn{2}{| c |}{\textbf{Missingness mechanisms}} \\
    \hline\hline
    Exp 1 & \makecell*[l]{
        $p(A_0^t = 1) = 1.0,$ \\
        $p(A_1^t = 1) = \sigma(0.8 - 3.0 X_{0}^{t-1} + 0.02 X_{1}^{t-1}- 0.02 X_{2}^{t-1}),$ \\
        $p(A_2^t = 1) = \sigma(0.8 - 3.0 X_{0}^{t-1} + 0.02 X_{1}^{t-1}- 0.02 X_{2}^{t-1})$ \\
        Complete cases ratio: $p(A = \vec{1}) = 11.71\%$} \\
        \hline
    Exp 2 & \makecell*[l]{
        $p(A_0^t = 1) = 1.0,$ \\
        $p(A_1^t = 1) = 0.2,$ \\
        $p(A_2^t = 1) = 0.2$ \\
        Complete cases ratio: $p(A= \vec{1}) = 0.007\%$} \\
        \hline
    Exp 3 & \makecell*[l]{
        $p(A_0^t = 1) = 1.0,$ \\
        $p(A_1^t = 1) = 1.0,$ \\
        $p(A_2^t = 1) = \sigma(-0.5 - 2.0 X_{0}^{t-1} - 0.1 X_{1}^{t-1} - 0.1 X_{2}^{t-1})$ \\
        Complete cases ratio: $p(A = \vec{1}) = 7.09\%$} \\
        \hline
    Exp 4 & \makecell*[l]{
        $p(A_0^t = 1) = 1.0,$ \\
        $p(A_1^t = 1) = 1.0,$ \\
        $p(A_2^t = 1) = \sigma(-0.6 - 1.5 X_{(1),2}^{t-1} - 1.5 X_{(1),3}^{t-1})$ \\
        Complete cases ratio: $p(A = \vec{1}) = 9.63\%$} \\
      \hline
    Exp 5 & \makecell*[l]{
        $p(A_0^t = 1) = 1.0,$ \\
        $p(A_1^t = 1) = \sigma(0.8 - 0.2 X_{0}^{t-1} - 0.1 X_{1}^{t-1}+ 0.5 X_{2}^{t-1}),$ \\
        $p(A_2^t = 1) = \sigma(0.8 - 0.2 X_{0}^{t-1} - 0.1 X_{1}^{t-1}+ 0.5 X_{2}^{t-1})$ \\
        Complete cases ratio: $p(A = \vec{1}) = 11.71\%$} \\
    \hline
\end{tabular}
\vspace{5pt}
    \caption{Acquisition process details for all five experiments}
    \label{tab:details synthetic missingness}
\end{table}
\end{center}

\clearpage
\newpage

\bibliographystyle{plain}
\bibliography{references}  


\end{document}